
\documentclass[accepted]{article}

\usepackage[utf8]{inputenc}
\usepackage[english]{babel}

\usepackage{bbm}
\usepackage{amsthm}

\usepackage{thmtools} 
\usepackage{thm-restate}

\usepackage{makecell}

\usepackage{amssymb,amsmath,balance}

\usepackage{amsfonts}
\usepackage{arydshln}

\usepackage{mathtools} 
\usepackage{nicefrac}

\usepackage{microtype}
\usepackage{graphicx}
\usepackage{subfigure}
\usepackage{booktabs} 
\RequirePackage{mathrsfs}
\RequirePackage{mdframed}

\def\ii{\mathbbm{j}}
\def\hsamp#1#2{\hvarphi^{(#1)}_{#2}}
\def\tsamp#1#2{\tvarphi^{(#1)}_{#2}}
\def\samp#1#2{\varphi^{(#1)}_{#2}}
\def\hftr#1#2{\wh{F}^{\textup{tr}}(\wh{X}^{(#1)}_{#2})}
\def\ftr#1#2{\wh{F}^{\textup{tr}}({X}^{(#1)}_{#2})}

\def\covmatparam{\bSigma^{\natural}}
\def\covmatnonparam{\bSigma^{\natural}}
\def\precmatparam{\bTheta^{\natural}}
\def\precmatnonparam{\bTheta^{\natural}}

\usepackage{hyperref}


\usepackage{enumitem}

\newtheorem{theorem}{Theorem}
\newtheorem{lemma}{Lemma}
\newtheorem{remark}[theorem]{Remark}
\newtheorem{assumption}{Assumption}

\newtheorem{definition}{Definition}


\usepackage{icml2022}

\usepackage{math_notation}


\icmltitlerunning{Submission and Formatting Instructions for ICML 2022}

\begin{document}

\twocolumn[
\icmltitle{Structure Learning in Graphical Models from Indirect Observations}



\icmlsetsymbol{equal}{*}

\begin{icmlauthorlist}
\icmlauthor{Hang Zhang}{ed}
\icmlauthor{Afshin Abdi}{ed}
\icmlauthor{Faramarz Fekri}{ed}
\end{icmlauthorlist}

\icmlaffiliation{ed}{School of Electrical and Computer Engineering, Georgia Institute of Technology, Atlanta, GA, USA.}

 \icmlcorrespondingauthor{}{}
 \icmlcorrespondingauthor{}{}

\icmlkeywords{Machine Learning, ICML}

\vskip 0.3in
]




\iftrue  
\begin{abstract}
This paper considers learning 
of the graphical structure of a $p$-dimensional random vector 
$\bX \in \RR^p$ using both parametric and 
non-parametric methods. 
Unlike the previous works which observe $\bx$ directly,  
we consider the indirect observation scenario  
in which samples $\by$ are
collected via a sensing matrix 
$\bA \in \mathbb{R}^{d\times p}$,  
and corrupted with some additive noise $\ensuremath{\mathbf{w}}$, i.e, 
$\bY = \bA\bX + \ensuremath{\mathbf{W}}$.
For the parametric method, we assume $\bX$ to be Gaussian, 
i.e., $\bx\in \mathbb{R}^p\sim \normdist\left(\ensuremath{\mathbf{\mu}}, \ensuremath{\mathbf{\Sigma}}\right)$,
$\ensuremath{\mathbf{\mu}} \in \mathbb{R}^p$,
and $\ensuremath{\mathbf{\Sigma}} \in \mathbb{R}^{p\times p}$.
For the first time, we show that 
the correct 
graphical structure can be correctly recovered under the indefinite sensing system ($d < p$) using insufficient samples ($n < p$). 
In particular, we show that for the exact recovery, 
we require 
dimension $d = \Omega\bracket{p^{0.8}}$ and 
sample number $n = \Omega\bracket{p^{0.8}\log^3 p}$. 
For the nonparametric method, 
we assume a nonparanormal distribution
for $\bX$ rather than Gaussian.
Under mild conditions, we show that our 
graph-structure estimator
can obtain the correct structure.
We derive the minimum sample 
number $n$ and dimension $d$ as
 $n\gsim (\textup{deg})^4 \log^4 n$ and $d \gsim p + \bracket{\textup{deg}\cdot\log(d-p)}^{\beta/4}$,
 respectively, where $\textup{deg}$ is the maximum Markov blanket in the graphical model and $\beta > 0$ is some fixed positive constant.
Additionally, we obtain a non-asymptotic uniform bound on the 
estimation error of the CDF of $\bX$ from indirect 
observations with  
inexact knowledge of the noise distribution. 
To the best of our knowledge, this bound is derived for the
first time and may serve as an independent interest.
Numerical experiments on both real-world and synthetic data 
are provided confirm the theoretical results.

\end{abstract}
\fi 

\vspace{-2mm}
\section{Introduction}

Graphical models provide a general framework of representing the  
dependency relations among 
random variables. They
have a broad spectrum of applications in biology, 
natural language processing and 
computer vision~\cite{koller2009probabilistic,friedman2004inferring}, etc. 
For an arbitrary 
random vector $\bX\in \RR^p$, we can construct a graphical model $G = (V, E)$ by 
associating each entry $X_i$ with a node $v_i\in V$ 
and adding an edge $e = (v_i, v_j)$ to the edge set $E$
if $X_i$ and $X_j$ are conditionally dependent given other random variables, 
where $X_i$ and $X_j$ denote the $i$th and $j$th entry of $\bX$, respectively. 

\par 
Discovery of the graph structure from a collection of direct 
observations of $\bX$ has been studied in the past 
\cite{ravikumar2011high, cai2011constrained, liu2009nonparanormal,liu2012high, zhao2014positive, xu2016semiparametric, fan2017high}. However, direct observations of the desired 
signal is not always possible.
 Instead, the signal has to be measured
indirectly.
Such a measurement model arises frequently in many practical applications, such as biomedical 
sensing \cite{muller2008sensitivity} where direct measurements
of some desired molecules such as miRNAs are too expensive. 
 Further, one commonly encountered 
problem is that the observations
are contaminated with measurement noise, which leads to inaccurate 
estimation of the graphical structure.  
Inspired by these challenges, 
we consider the graph structure recovery under an 
indirect linear measurement scenario from the desired signal 
as 

{\small \vspace{-8mm}
\begin{align}
\label{eq:prob_sense_relation_def}
\bY^{(s)} = \bA \bX^{(s)} + \bW^{(s)}, ~~1\leq s \leq n, 
\end{align}
\vspace{-8mm}
}

\par \noindent
where $\bY^{(s)}$ denotes the $s$th measurement, $\bA\in \RR^{d\times p}$ denotes
the sensing matrix, and $\bW\in \RR^d$ denotes the sensing noise. 
Our goal is to learn 
the pair-wise independence relation 
(structure) of an undirected 
graphical model, i.e., Markov random fields, from the observations $\{\bY^{(s)}\}_{1\leq s \leq n}$
using both parametric and non-parametric approaches.

In the parametric scenario, 
random vector $\bX \in \RR^p$ satisfies the Gaussian distribution, 
i.e., $\bX \sim \normdist(\bZero, \bSigma)$, and hence 
a \emph{Gaussian Graphical Model} (GGM). 
While for the nonparametric method, we assume random vector $\bX$ follows the nonparanormal distribution, i.e., 
the joint distribution $\bg(\bX)$ exhibits a Gaussian distribution 
$\normdist(\bmu, \bSigma)$, i.e., 
$\bX$ behaves as multivariate Gaussian after some 
transformation $\bg(\bX) = \
\Bracket{g_i(X_i)~\cdots~g_p(X_p)}^{\rmt}$. 
 Let $\bTheta$ be the 
inverse of the covariance matrix $\bSigma$. It is known that
conditional independency 
relation of $\bX$ is completely incorporated into matrix $\bTheta$. 
In other words, we have $\Theta_{i,j} = 0$ iff 
$X_i$ and $X_j$ are independent given the rest of entries $X_{\backslash\{i,j\}}$. 
Hence our goal of learning the graphical structure reduces to 
detecting the support set $S$ of $\bTheta$, where $S \defequal \set{(i,j)~|~\Theta_{i,j}\neq 0,~\forall i\neq j}$. 
This property of precision matrix lays the foundation
for many algorithms that learn the graphical structure.

\subsection{Related Works.}
To the best of our knowledge, our work is the first 
to consider learning 
the undirected graphical model structure 
of a high dimensional signal $\bX$ from 
possibly low dimensional indirect observations of $\bX$.
In the following, we separately review the related work regarding the 
parametric and non-parametric scenarios. 

\noindent \textbf{Parametric Scenario.}
Learning GGM when the data samples are \emph{directly} observed has been extensively studied in the literature,  e.g., 
\cite{ravikumar2011high,cai2011constrained}.
Both works assumed an sparse precision 
matrix $\bTheta$ 
and propose to recover its support set with $\ell_1$-relaxation.  
In \cite{ravikumar2011high}, the authors  
proposed gLasso, which 
can be regarded as an M-estimator \cite{wainwright_2019} (Chap.~$9$). 
The basic assumption is that the ground-truth signal 
lies within a certain low-dimensional space. 
By minimizing the negative log-likelihood
plus some specific regularizers, one 
can reconstruct the signal from insufficient
samples.
Different from \cite{ravikumar2011high}, 
\cite{cai2011constrained} proposed CLIME, a constrained 
convex optimization framework to estimate sparse 
precision matrix $\bTheta$. 
Instead of maximizing the likelihood, the 
authors first obtain an empirical covariance matrix 
$\wh{\bSigma}_n$ and then force $\bTheta$ to 
approximate 
its inverse. 
Other variations of GGM-learning such 
as GGM with hidden variables 
\cite{chandrasekaran2010latent,mazumder2012graphical}, and
GGM with an unknown block structure \cite{marlin2009sparse} have been proposed when some prior information or constraints on the structure exist. For detailed discussions, please refer to \cite{tibshirani2015statistical,wainwright_2019}.

\noindent \textbf{Non-parametric Scenario.}
The most related works on the nonparametric learning of graphical models 
are \cite{liu2009nonparanormal, liu2011forest,liu2012high, zhao2014positive, xu2016semiparametric, fan2017high}. 

The works by \cite{liu2009nonparanormal, liu2012high, xue2012regularized, zhao2014positive} considered a similar problem as ours, that is the discovery of
the graphical structure with nonparanormal distributed random vector $\bX$. 
However, they assumed direct observations of 
$\bX$ and without any measurement noise. 
The differences across the above works lie in the estimation method of the 
covariance matrix. In \cite{liu2009nonparanormal}, the covariance matrix
 is estimated via the 
CDF estimator; while in \cite{xue2012regularized}, it is estimated by the 
Spearman's rho estimator. In an independent work, \cite{liu2012high} 
pointed out the covariance matrix can be estimated by Kendall's tau estimator 
as well. 
Later, \cite{zhao2014positive} proposed a projection based algorithm 
to accelerate the estimator in \cite{liu2012high}. 
Apart from the graphical models following the nonparanormal distribution, 
other types of works include the graph with forest structure \cite{liu2012high}, the graph 
with the elliptical distribution \cite{xu2016semiparametric}, 
and latent Gaussian copula model \cite{fan2017high}, etc. 

Another line of research is the density deconvolution which dates at least 
back to \cite{zhang1990fourier, fan1991optimal, masry1991multivariate}, where 
kernel-based estimators are proposed for the noise with infinite support 
set. To improve the performance for the noise with finite support, 
a ridge-parameter method is proposed in \cite{hall2007ridge}. Similar 
work includes \cite{dattner2011deconvolution,trong2019deconvolution}.  
The work by \cite{youndje2008optimal} generalized \cite{masry1991multivariate} 
and gave a data-driven method to select the optimal bandwidth. 
Apart from the kernel-based methods, \cite{pensky1999adaptive} presented
a projection-based method based on the 
Meyer-type wavelets, which adapts to the super-smooth noise.  
Notice that the above works all assume perfect knowledge of the noise distribution. 
Later, \cite{dattner2013estimation, kappus2014adaptive}
and \cite{phuong2020deconvolution}
studied the unknown noisy case but required repeated measurements
to estimate the distribution of the noise. In contrast, our setting 
assumes inexact knowledge of the noise and requires no extra step for the 
noise estimation. More importantly, our setting is focused on 
indirect measurements. 

\vspace{-2mm}
\subsection{Contributions}
To the best of our knowledge, our work is the first 
on estimating the graphical 
structure under an 
indirect measurement scenario. 
We also provide the theoretical analysis of our estimation. 

\begin{itemize}[leftmargin=*]
\vspace{-4mm}
\item 
For the parametric scenario, we show 
the correct 
graphical structure can be recovered under the indefinite sensing system ($d < p$) using insufficient samples ($n < p$). Specifically,  
we require the sample number $n = \Omega\bracket{p^{0.8}\log^3 p}$ and 
the dimension $d = \Omega\bracket{p^{0.8}}$, which 
suggests with fewer samples than 
signal length $p$, we can still recover the graph structure of 
a high dimensional signal from its low dimensional observations ($d <p$). 

\item 
For the non-parametric scenario, 
we propose an estimator for the graphical structure
together with the sufficient conditions for the correct recovery. 
We show that the sample number $n$ must be 
at least $n\gg (\textup{deg})^4 \log^4 n$, 
where $\textup{deg}$ denotes the maximum Markov blanket in the graph. 
Further, we obtain a lower bound on 
 the dimension as
 $d \gg p + \bracket{\textup{deg}}^{\beta/4}\log^{\beta/4}(d-p)$,
  where $\beta > 0$ is some fixed positive constant to be defined. 

\item 
Additionally, our work is the first to consider the deconvolution 
estimator for the CDF with  
limited knowledge of the noise distribution. 
Compared with the previous work \cite{dattner2013estimation,  kappus2014adaptive, phuong2020deconvolution}, our work 
does not assume perfect knowledge of 
noise distribution, and our estimator requires no additional steps to 
estimate the noise distribution. Moreover, we give 
a non-asymptotic uniform bound on the estimation error, that is, 
$\sup_{x}\abs{\wh{F}(x) - F(x)} \lsim \frac{\log^2(np)}{(d-p)^{\beta/2}}+ \frac{1}{\sqrt{n}}$ (cf. Thm.~\ref{thm:cdf_tail_prob}), where $\wh{F}(\cdot)$ and $F(\cdot)$ denote the estimated 
marginal CDF  and ground-truth marginal CDF, respectively. 
These analytical results might be of interest to some
other areas of statistics.  
\end{itemize}

\section{Problem Formulation}
\label{sec:prob_set}
We start with a formal restatement of the 
sensing relation reading as 

{\small \vspace{-8mm}
\begin{align}
\label{eq:prob_sense_relation_def}
\bY^{(s)} = \bA \bX^{(s)} + \bW^{(s)}, ~~1\leq s \leq n, 
\end{align}\vspace{-8mm}
}

\noindent where $\bY^{(s)} \in \RR^d$ denotes the $s$th reading, 
$\bA \in \RR^{d\times p}$ is the sensing matrix with  
each entry $A_{ij}$ being a standard normal RV, i.e., $A_{ij}\sim \normdist(0, 1)$,  and $\bW^{(s)}$ denotes the measurement noise with 
each entry $\bW_{i}$ being a Gaussian RV with zero mean and variance
$\sigma^2$, i.e., $\bW_{i}\sim \normdist(0, \sigma^2)$. 

We assume that most entries of $\bX$ are pair-wise
 conditionally independent, 
which is widely used in previous works 
such as \cite{ravikumar2011high,cai2011constrained, liu2009nonparanormal}. 
Our goal is to uncover the undirected graphical structure (or pair-wise independence) of 
$\bX$ from the 
samples $\set{\bY^{(s)}}_{1\leq s \leq n}$. 
Before proceeding, we first introduce the notations.

\vspace{-2mm}
\paragraph{Notations.}
We denote $c, c_0, c_i > 0$ as arbitrary fixed positive constants. Notice
that the specific values may not be necessarily identical even  if
they share the same name. For arbitrary real numbers $a$ and $b$, 
we denote $a \lsim b$ if there exists some $c_0 > 0$ such that $a \leq c_0 b$. 
Similarly we define $a\gsim b$. 
We write $a\asymp b$ when $a\lsim b$ and $a\gsim b$ hold simultaneously. 
The maximum of $a$ and $b$ is denoted as $a\vcup b$; while the
minimum is denoted as $a\vcap b$. 
\par 
For an arbitrary matrix $\bM \in \RR^{m_1\times m_2}$, we denote $\bM_i$
as its $i$-th column and $M_{ij}$ as the 
$(i, j)$-th entry. 
Its Frobenious norm 
$\fnorm{\bM}$ is defined as $\sqrt{\sum_{i,j}M_{i,j}^2}$
and the operator norm $\opnorm{\bM}$ 
is defined as $\max_{\norm{\bu}{2} = 1} \norm{\bM\bu}{2}$.  
Furthermore, we define $\Norm{\bM}{a, b}$ as $\max_{\norm{\bz}{a} =1}\norm{\bM \bz}{b}$. 
Two special cases are $\Norm{\bM}{1, 1}$ and $\Norm{\bM}{\infty, \infty}$,
which can be written as 
$\Norm{\bM}{1, 1}\defequal \max_j\sum_{i}\abs{M_{i,j}}$, and $\Norm{\bM}{\infty, \infty} \defequal \max_i\sum_{j}\abs{M_{i,j}}$, respectively. 
Moreover, we define $\offdiagnorm{\bM}\defequal \sum_{i\neq j}\abs{M_{i,j}}$,
$\Norm{\bM}{\textup{off}, \textup{F}}\defequal \sqrt{\sum_{i\neq j}M_{i,j}^2}$, 
and $\infnorm{\bM}\defequal \max_{i, j}|M_{i, j}|$, respectively.

For the covariance matrix $\bSigma$, we denote 
its inverse as $\bTheta$. 
The support set $S$ is defined as $\set{(i,j)~|~\bTheta_{i,j}\neq 0,~\forall~i\neq j}$ and its complement is denoted as $S^c$. 
Moreover, we define the maximum Markov blanket in the graphical model as 
$\textup{deg}$, or equivalently the maximum of the non-zero entries 
among columns of $\bTheta$, i.e., $\textup{deg}\defequal \max_i \|\bTheta_i\|_{0}$, where $\bTheta_i$ denotes the 
$i$th column of $\bTheta$. 
The parameter $\kappa_{\bSigma}$ is defined as $
 \max_i\sum_{j}\abs{\bSigma_{i,j}}$.
Furthermore, we define the Fisher information matrix of $\bSigma$ 
as
 $\bGamma \defequal \bSigma\otimes \bSigma$, 
where $\otimes$ is the kronecker product \cite{golub2012matrix}.
The parameter $\kappa_{\bGamma}$ is defined as 
$\Norm{(\bGamma_{SS})^{-1}}{\infty, \infty}$, 
where $S$ is the support set and $\bGamma_{SS}$ is the 
sub-matrix of $\bGamma$ such that its rows and columns lie within 
the set $S$.

\section{Parametric Method}
For the parametric scenario, we assume 
random vector $\bX\in \RR^p$ follows the Gaussian distribution with 
zero mean and covariance  $\covmatparam$, 
i.e., $\bX\sim \normdist(\bZero, \covmatparam)$.
We focus 
on the scenario where the sensing matrix $\bA\in \RR^{d\times p}$ is 
under-determined, namely, $d < p$. In other words, 
we wish to recover the graphical model of a high dimensional 
$\bX$ from its low dimensional observations $\bY$. 
To avoid measurements being dominated by one or a small group of $X_i$'s, we assume that $\Var(X_i) = 1$, $1\leq i \leq p$. 
Our proposed estimator to reconstruct the graphical structure is 
illustrated in Alg.~\ref{alg:param_glasso}. The analysis of the 
estimator is given in the following.

\subsection{Properties of Graphical Structure Estimator: Parametric Case}
We will show that a correct graphical structure can be obtained 
when the penalty coefficient $\lambda_{\textup{param}}$ in 
\eqref{eq:param_glasso_def} 
is properly chosen. The core of the analysis 
lies in the following lemma. 

\begin{lemma}
The property
$\|\wh{\bSigma}_n^{\textup{param}} - \bSigma^{\natural}\|_{\infty}\leq \tau_{\infty}$ 
holds with probability $1-o(1)$, 
where $\wh{\bSigma}^{\textup{param}}_n$ is defined in ~\eqref{eq:param_cov_mat_estim}, 
and the threshold parameter
$\tau_{\infty}$ is written as 

{\small \vspace{-5mm}
\begin{align}
\label{eq:param_tau_inf_def}
\tau_{\infty} \defequal ~&
\frac{c_0\sqrt{d\log p}}{d+1}
\max_i \norm{\covmatparam_i}{2} + 
\frac{c_1\log p}{d+1}\bracket{1 + \frac{c_2p}{d}}
\Norm{\covmatparam}{\textup{off}, \textup{F}} \notag \\
+~& \
\frac{c_3\log p\sqrt{dp}}{\sqrt{n}(d+1)} 
+ \frac{c_4 p(\log p)^{3/2}}{\sqrt{n}(d+1)}\bracket{1 + \frac{c_2p}{d}}
+ \frac{c_5p\sqrt{\log p}}{\sqrt{d}(d+1)}\notag  \\
+~& \frac{c_6 \sigma^2 \log p}{d}\Bracket{1 + c_{7}\bracket{\sqrt{\frac{d}{n}} \vcup \frac{d}{n}}}.
\end{align}
 \vspace{-4mm}
}
\end{lemma}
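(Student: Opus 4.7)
The plan is to view $\wh{\bSigma}_n^{\textup{param}}$ as a bias-corrected polynomial form in the three independent Gaussian objects $\bA$, $\bX^{(s)}$, $\bW^{(s)}$, to control each source of fluctuation by a tailored Gaussian-chaos concentration inequality, and then to union-bound entrywise over the $p^2$ index pairs.

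The starting point is a one-line Isserlis calculation: for $\bA$ with i.i.d.\ $\normdist(0,1)$ entries and any symmetric $\mathbf{C}\in\RR^{p\times p}$,
\begin{equation*}
\EE\bigl[\bA^{\rmt}\bA\,\mathbf{C}\,\bA^{\rmt}\bA\bigr] = d(d+1)\,\mathbf{C} + d\,\textup{tr}(\mathbf{C})\,\mathbf{I}_p,\qquad \EE\bigl[\bA^{\rmt}\bW\bW^{\rmt}\bA\bigr] = \sigma^2 d\,\mathbf{I}_p,
\end{equation*}
while the $\bX$--$\bW$ cross terms have zero mean. Since $\bY=\bA\bX+\bW$ and $\textup{tr}(\bSigma^{\natural})=p$ is known from the unit-variance assumption, these identities identify \eqref{eq:param_cov_mat_estim} as an unbiased estimator of $\bSigma^{\natural}$ and yield the decomposition
\begin{equation*}
\wh{\bSigma}_n^{\textup{param}} - \bSigma^{\natural} \;=\; E_{\bA} + E_{\textup{samp}} + E_{\bW} + E_{\textup{cross}},
\end{equation*}
where $E_{\bA}$ is the population-level $\bA$-fluctuation with $\bSigma^{\natural}$ in place of $\wh{\bSigma}_n^{X}\defequal \tfrac1n\sum_s\bX^{(s)}(\bX^{(s)})^{\rmt}$, $E_{\textup{samp}}$ captures the propagation of $\wh{\bSigma}_n^{X}-\bSigma^{\natural}$ through the (conditional-on-$\bA$) map $\mathbf{C}\mapsto \bA^{\rmt}\bA\,\mathbf{C}\,\bA^{\rmt}\bA/[d(d+1)]$, $E_{\bW}$ collects the pure-noise quadratic forms $\bA^{\rmt}\bW^{(s)}(\bW^{(s)})^{\rmt}\bA$, and $E_{\textup{cross}}$ contains the zero-mean $\bX$--$\bW$ bilinear terms.

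The second step is an entrywise concentration analysis showing $\abs{[E_\bullet]_{ij}}$ is small with probability $1-o(p^{-2})$. For $E_{\bA}$, I split $\bA^{\rmt}\bA = d\mathbf{I}_p+\mathbf{M}$ and expand the quartic $\bA^{\rmt}\bA\,\bSigma^{\natural}\,\bA^{\rmt}\bA$ into (i) a leading quadratic Gaussian chaos $d(\mathbf{M}\bSigma^{\natural}+\bSigma^{\natural}\mathbf{M})$, bounded entrywise by Hanson--Wright and producing the $\sqrt{d\log p}\,\max_i\|\bSigma^{\natural}_i\|_2/(d+1)$ term, and (ii) a quartic remainder $\mathbf{M}\bSigma^{\natural}\mathbf{M}$, bounded by a fourth-order Gaussian-chaos inequality, producing the $\log p\cdot \|\bSigma^{\natural}\|_{\textup{off},\textup{F}}/(d+1)$ term with the $(1+cp/d)$ factor reflecting $\Norm{\mathbf{M}}{\textup{op}}\lsim \sqrt{dp}+p$. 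For $E_{\textup{samp}}$, the standard sub-exponential bound $\infnorm{\wh{\bSigma}_n^{X}-\bSigma^{\natural}}\lsim\sqrt{\log p/n}$ is propagated through a map whose entrywise Lipschitz constant is controlled by $\Norm{\bA^{\rmt}\bA}{\textup{op}}^2/[d(d+1)]\lsim (p/d)^2$, producing the $\sqrt{dp}\log p$ and $p(\log p)^{3/2}$ contributions divided by $\sqrt{n}(d+1)$. For $E_{\bW}$, conditional on $\bA$ the vector $\bA^{\rmt}\bW^{(s)}$ is $\normdist(\bZero,\sigma^2\bA^{\rmt}\bA)$, so entrywise Hanson--Wright plus averaging over the $n$ noise realizations delivers the $\sigma^2\log p/d$ term with the $\sqrt{d/n}\vcup d/n$ refinement. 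The cross term $E_{\textup{cross}}$ is a zero-mean sum of independent Gaussian products and concentrates faster than the above. A union bound over the $p^2$ index pairs then yields $\infnorm{\wh{\bSigma}_n^{\textup{param}}-\bSigma^{\natural}}\leq \tau_\infty$ with probability $1-o(1)$.

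The main obstacle is the quartic term $E_{\bA}$: because $\bA^{\rmt}\bA\,\bSigma^{\natural}\,\bA^{\rmt}\bA$ is a degree-four Gaussian polynomial, a single application of Hanson--Wright does not suffice. The peeling $\bA^{\rmt}\bA=d\mathbf{I}_p+\mathbf{M}$ isolates a clean quadratic principal part whose concentration is controlled by $\max_i\|\bSigma^{\natural}_i\|_2$, while the residual quartic must be handled either via iterated Hanson--Wright conditioned on one copy of $\mathbf{M}$ or via a polynomial-chaos inequality, giving rise to the Frobenius-type variance proxy $\|\bSigma^{\natural}\|_{\textup{off},\textup{F}}$ and the extra $(1+cp/d)$ factor. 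Ensuring that the $d(d+1)$ normalization cancels so that the leading error is scale-free in $\|\bSigma^{\natural}\|$, and tracking the $\log p$ powers cleanly through the union bound, is the delicate bookkeeping; the remaining $E_{\bW}$ and $E_{\textup{cross}}$ estimates reduce to routine Gaussian quadratic-form tail bounds.
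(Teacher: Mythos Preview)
Your plan is structurally the same as the paper's: decompose the off-diagonal error into a fourth-order Gaussian chaos in $\bA$, a sample-covariance perturbation, and a pure-noise quadratic, then control each piece by Hanson--Wright/higher-order chaos tail bounds and union-bound over the $p^2$ entries. The paper organises the quartic differently---instead of your algebraic peeling $\bA^{\rmt}\bA=d\bI+\bM$, it fixes the target entry $(i,j)$ and splits $\sum_{\ell_1,\ell_2}(\bSigma_n)_{\ell_1\ell_2}\langle\wt{\bA}_i,\wt{\bA}_{\ell_1}\rangle\langle\wt{\bA}_j,\wt{\bA}_{\ell_2}\rangle$ into five cases $T_1,\dots,T_5$ according to which of $\ell_1,\ell_2$ collide with $i,j$ or with each other, handling each by a dedicated lemma. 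Your $\bM$-peeling, once fully expanded, reproduces exactly this combinatorial case split, so the two routes are equivalent and both deliver the first, second and fifth lines of $\tau_\infty$.

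There is, however, a genuine gap in your $E_{\textup{samp}}$ step. The map $\mathbf{C}\mapsto[\bA^{\rmt}\bA\,\mathbf{C}\,\bA^{\rmt}\bA]_{ij}$ is \emph{not} Lipschitz in $\infnorm{\mathbf{C}}$ with constant $\Opnorm{\bA^{\rmt}\bA}^2$; that constant controls the Lipschitz behaviour in $\Opnorm{\mathbf{C}}$, and converting $\infnorm{\mathbf{C}}\to\Opnorm{\mathbf{C}}$ costs a factor $p$. A deterministic $\ell_1$-row bound is likewise too loose by a factor of order $p$ in the regime $d\lsim p$, so neither argument produces the claimed $\sqrt{dp}\log p/[\sqrt n(d{+}1)]$ and $p(\log p)^{3/2}(1+cp/d)/[\sqrt n(d{+}1)]$ terms. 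The paper avoids this by \emph{not} separating $E_{\bA}$ from $E_{\textup{samp}}$: it runs the entire chaos analysis with the empirical $\bSigma_n$ inside, and only at the very end invokes the event $\calE_3=\{\infnorm{\bSigma_n-\bSigma^{\natural}}\lsim\sqrt{\log p/n}\}$ to replace each occurrence of $(\bSigma_n)_{\ell_1\ell_2}$, $\norm{(\bSigma_n)_i}{2}$, or $\Norm{\bSigma_n}{\textup{off},\textup{F}}$ by its population counterpart plus an $O(\sqrt{\log p/n})$ correction. Equivalently, you can keep your decomposition but must re-run the full $\bA$-chaos bound on $\mathbf{C}=\wh{\bSigma}_n^{X}-\bSigma^{\natural}$ (conditioning on $\bX$, independent of $\bA$) and then substitute $\max_i\norm{\mathbf{C}_i}{2}\le\sqrt{p}\,\infnorm{\mathbf{C}}$ and $\Norm{\mathbf{C}}{\textup{off},\textup{F}}\le p\,\infnorm{\mathbf{C}}$; this recovers precisely the two $n$-dependent lines of $\tau_\infty$. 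The noise term $E_{\bW}$ is handled in the paper by first controlling $\Opnorm{n^{-1}\sum_s\bW^{(s)}\bW^{(s)\rmt}}$ via a Wishart operator-norm bound (this is where the $\sqrt{d/n}\vcup d/n$ factor enters) and then a single Gaussian tail for $\wt{\bA}_i^{\rmt}(\cdot)\wt{\bA}_j$; your conditional-on-$\bA$ Hanson--Wright route is an acceptable alternative.
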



\noindent
Having obtained the covariance matrix $\wh{\bSigma}^{\textup{param}}_n$, 
we estimate the graphical structure of $\bX$ by plugging 
$\wh{\bSigma}^{\textup{param}}_n$ into the graphical lasso estimator \cite{tibshirani2015statistical} as in \eqref{eq:param_glasso_def}.
To analyze the properties of the graphical structure estimator, 
we assume the 
\emph{irrepresentable 
condition},
a widely-used assumption in the previous works (cf. \cite{ravikumar2011high, liu2009nonparanormal}), which  
control the influence of
the non-edge terms, i.e., $\bGamma_{S^c S}$, on the edge terms, 
i.e., $\bGamma_{SS}$ (see Section~\ref{sec:prob_set} for definitions of these matrices).


\begin{assumption}[irrepresentable condition]
\label{assump:param_irrepresentable_condition}
Let the support set $S$ and matrix $\bGamma$ be defined in Section~\ref{sec:prob_set}. Then, there exists a positive constant $\theta \in (0, 1]$ such that
$\Norm{\bGamma_{S^c S} (\bGamma_{S S})^{-1}}{1, 1} \leq 1- \theta$.
\end{assumption}
Adopting a similar proof method as in \cite{ravikumar2011high}, 
we can obtain the
conditions (cf. Thm.~\ref{thm:param_glasso}) for the correct recovery of graphical structure 
under Assumption~\ref{assump:param_irrepresentable_condition}.
For the conciseness of presentation, we refer the details to 
\cite{ravikumar2011high} and omit them in this work. 
 
\begin{algorithm}[h]
\caption{Estimation of the Graphical Structure via the Parametric Method.}
\label{alg:param_glasso}
\begin{algorithmic}
\STATE {\bfseries Input:} Samples $\set{\bY^{(i)}}_{i=1}^n \in \RR^p$ 
and sensing matrix $\bA \in \RR^{d\times p}$. 

\STATE {\bfseries Stage I:}
Estimate the covariance matrix $\wh{\bSigma}^{\textup{param}}_n$ as

{\small \vspace{-4mm}
\begin{align}
\label{eq:param_cov_mat_estim}
\wh{\bSigma}^{\textup{param}}_n
	= \bI + \frac{1}{d+1}
\Bracket{\bA^{\rmt}\bracket{\frac{1}{n}\sum_{i=1}^n \bY^{(i)}\bY^{(i)\rmt}}\bA}_{\textup{off}},
\end{align}}

\vspace{-2mm}
\noindent
where $[\cdot]_{\textup{off}}$ denotes the operation of 
picking non-diagonal entries,

\STATE {\bfseries Stage II:}
Obtain $\wh{\bTheta}^{\textup{param}}$ as 
\par

{\small \vspace{-4mm}
\begin{align}
\label{eq:param_glasso_def}
 \hspace{-3.5mm}
\wh{\bTheta}^{\textup{param}} \hspace{-1mm}= \underset{\bTheta \succ \bZero}{\argmin}
\vspace{-1mm}~& -\vspace{-1mm}\logdet(\bTheta) +\trace(\wh{\bSigma}^{\textup{param}}_n \bTheta)\notag \\
+~&\lambda_{\textup{param}} \offdiagnorm{\bTheta}, 
\end{align} \vspace{-7mm}
}

\par 
where $\lambda_{\textup{param}}> 0$ is some 
positive constants.

\STATE {\bfseries Output:} 
Estimated precision matrix $\wh{\bTheta}_{\textup{param}}$. 
\end{algorithmic}
\end{algorithm}

\begin{theorem}
\label{thm:param_glasso}

Let 
$\tau_{\infty}\lsim \frac{\theta}{(\theta + 8)(\textup{deg})\cdot\kappa^3_{\bSigma}\kappa_{\bGamma}}$, 
and set $\lambda_{\textup{param}} = 8\tau_{\infty}/\theta$. 
Then the property $(\wh{\bTheta}^{\textup{param}})_{i, j}= 0$ holds for 
all $(i,j)$ outside the support set $S$, i.e.,
$(i, j)\in S^c$, with probability  $1-o(1)$.
Furthermore, if $\min_{(i,j)\in S}|\bTheta^{\natural}_{i,j}|\geq
2\kappa_{\bGamma}(1+8\theta^{-1})\tau_{\infty}$, 
then $\sign(\wh{\bTheta}^{\textup{param}}) = \sign(\precmatparam)$ with 
probability $1-o(1)$. 
\end{theorem}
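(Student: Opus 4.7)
The plan is to follow the primal-dual witness (PDW) construction of \cite{ravikumar2011high}, with Lemma~1 taking the place of their Bernstein-type deviation bound on the sample covariance. Throughout, I condition on the event $\infnorm{\wh{\bSigma}_n^{\textup{param}} - \covmatparam} \leq \tau_{\infty}$, which occurs with probability $1-o(1)$, and use the shorthand $\bSigma = \covmatparam$, $\bTheta = \precmatparam$, $\bGamma = \bSigma\otimes\bSigma$, and $\bDelta = \wh{\bSigma}_n^{\textup{param}} - \bSigma$.

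First, I would construct a witness $\wh{\bTheta}^{\star}$ by solving the restricted version of \eqref{eq:param_glasso_def} subject to the additional constraint $(\wh{\bTheta}^{\star})_{S^c}=0$, and take $\wh{\bZ}^{\star}_S$ to be the corresponding subgradient of $\offdiagnorm{\cdot}$ certifying the restricted KKT conditions. I would then extend $\wh{\bZ}^{\star}$ to $S^c$ via the stationarity equation $-\bracket{\wh{\bTheta}^{\star}}^{-1} + \wh{\bSigma}_n^{\textup{param}} + \lambda_{\textup{param}}\wh{\bZ}^{\star} = 0$. If \emph{strict dual feasibility} $\infnorm{\wh{\bZ}^{\star}_{S^c}} < 1$ holds, the PDW argument concludes that $\wh{\bTheta}^{\star}$ is the unique optimum $\wh{\bTheta}^{\textup{param}}$, and part~(i) follows immediately from $(\wh{\bTheta}^{\star})_{S^c}=0$. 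To check strict dual feasibility, I would linearize $\bracket{\wh{\bTheta}^{\star}}^{-1} - \bSigma = -\bSigma(\wh{\bTheta}^{\star}-\bTheta)\bSigma + R$, substitute into the KKT system, and solve the block equations on $S$, obtaining an identity of the form
\begin{equation*}
\wh{\bZ}^{\star}_{S^c} = \bGamma_{S^c S}(\bGamma_{SS})^{-1}\Bracket{\wh{\bZ}^{\star}_{S} + \lambda_{\textup{param}}^{-1}(\bDelta_{S} - R_S)} - \lambda_{\textup{param}}^{-1}(\bDelta_{S^c} - R_{S^c}).
\end{equation*}
Assumption~\ref{assump:param_irrepresentable_condition} bounds the $(1,1)$-norm of $\bGamma_{S^cS}(\bGamma_{SS})^{-1}$ by $1-\theta$, and the calibrated choice $\lambda_{\textup{param}} = 8\tau_{\infty}/\theta$ is engineered so that, once $\infnorm{R}$ is controlled, this expression remains strictly below one in $\infnorm{\cdot}$.

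Because the remainder $R$ is itself a function of $\wh{\bTheta}^{\star}-\bTheta$, I would close the loop by a Brouwer fixed-point argument on the restricted-support KKT map, following Lemma~5 of \cite{ravikumar2011high}. Under the assumed scaling $\tau_{\infty} \lsim \theta/[(\theta+8)(\textup{deg})\kappa^3_{\bSigma}\kappa_{\bGamma}]$ together with $\lambda_{\textup{param}} = 8\tau_{\infty}/\theta$, the candidate radius $r_0 \defequal 2\kappa_{\bGamma}(1+8\theta^{-1})\tau_{\infty}$ satisfies the required self-consistency inequality, which simultaneously yields $\infnorm{\wh{\bTheta}^{\star}-\bTheta}\leq r_0$ and completes the strict dual feasibility verification above. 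Part~(ii) then follows: combining $\infnorm{\wh{\bTheta}^{\textup{param}} - \bTheta}\leq r_0$ with the minimum-magnitude hypothesis $\min_{(i,j)\in S}\abs{\bTheta^{\natural}_{i,j}}\geq r_0$ forces sign agreement on $S$, and together with the off-support vanishing from part~(i) this gives $\sign(\wh{\bTheta}^{\textup{param}}) = \sign(\precmatparam)$.

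The main obstacle will be controlling the Taylor remainder $R(\wh{\bTheta}^{\star}-\bTheta)$ in $\infnorm{\cdot}$ using only $\infnorm{\wh{\bTheta}^{\star}-\bTheta}$, $\textup{deg}$, and $\kappa_{\bSigma}$, so that the fixed-point self-consistency remains feasible under our Lemma~1 deviation $\tau_{\infty}$ rather than under the Bernstein-type concentration that underlies the direct-observation analysis of \cite{ravikumar2011high}. This is precisely the step that forces the factor $(\textup{deg})\kappa^3_{\bSigma}\kappa_{\bGamma}$ into the threshold on $\tau_{\infty}$ and pins down the numerical constants relating $\lambda_{\textup{param}}$, $\tau_{\infty}$, and $\theta$ in the statement.
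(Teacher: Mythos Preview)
Your proposal is correct and follows essentially the same primal--dual witness route as the paper: condition on Lemma~1, construct the restricted optimizer on $S$, extend the dual variable via stationarity, and verify strict dual feasibility by invoking the three Ravikumar lemmas (the fixed-point bound, the remainder control, and the dual feasibility lemma) under the stated scaling of $\tau_{\infty}$ and the choice $\lambda_{\textup{param}}=8\tau_{\infty}/\theta$. The paper's writeup differs only cosmetically, quoting those lemmas explicitly and checking their hypotheses in sequence rather than re-deriving the block KKT identity you display.
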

This theorem proves that our proposed Alg.~\ref{alg:param_glasso} can detect all conditionally independent pairs (the graph edges) provided that the non-zero elements $\bTheta^{\natural}_{i,j}$, 
$(i,j)\in S$, are not too small, i.e., the absolute value of non-zero elements in $\bTheta^{\natural}$ are above some fixed threshold.

\subsection{Discussions}
\label{subsec:param_discussion}
In the following, we provide more insights regarding:
$(i)$ the minimum sample size $n$ and $(ii)$ the minimum projection dimension $d$. 
An illustration of the infeasible region of $n$ and $d$ is plotted in Fig.~\ref{fig:param_graph_feasible_region}, where the infeasible region is marked with color red. 

\begin{figure}[h]
\centering
\includegraphics[width = 2.8in]{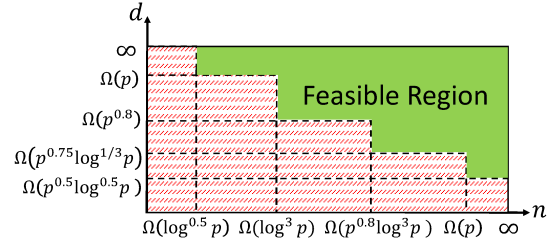}
\caption{Infeasible region w.r.t  number $n$ and dimension $d$.}
\label{fig:param_graph_feasible_region}
\end{figure}

\vspace{-4mm}
\paragraph{Minimum Sample Size $n$.}
In the high-dimensional setting, 
it is desirable to 
reduce
the sample number $n$ to less than the 
dimension of signal $p$, i.e., $n < p$.

Treating parameters $\theta$, $\kappa_{\bSigma}$, and 
$\kappa_{\bGamma}$ as some positive constants, 
Thm.~\ref{thm:param_glasso} requires that
$\textup{deg}\cdot \tau_{\infty}$ to be some 
positive constant, from which we can obtain the 
minimum sample size
$n\gsim \sqrt{\textup{deg}^2 \cdot \log p}$. 
For the case of  indefinite sensing matrix $\bA\in \RR^{d\times p}$ ($d < p$), we need to increase the sample number $n$ to 
the order of $\Omega\bracket{\textup{deg}^2 \cdot (\log p)^3}$,
which is obtained by setting $d = c p$, where $0 < c < 1$ is some
positive constant. 
When $d$ reduces to $\Omega(p^{0.8})$,
we need to further inflate the sample number $n$ to be at least 
$\textup{deg}^2(\log p)^3 p^{0.8}$, which is still less than 
the order $p$.  


\paragraph{Minimum Projection Dimension $d$.}
We investigate  the minimum projection 
dimension $d$ and proves 
the possibility of exact recovery of graph
with an indefinite sensing matrix $\bA$.
Under the high-dimensional setting where we require
$n\leq p$, we need $d$ to be at least 

{\small \vspace{-6mm}
\[
d \gsim~& \sqrt{\textup{deg}\cdot p\log p\cdot \Norm{\covmatparam}{\textup{off}, \textup{F}}} 
\vcup \sigma^2 \textup{deg}\cdot \log p \\
~&\vcup p^{3/4}\bracket{\log p}^{1/3}\sqrt{\textup{deg}} 
\vcup \textup{deg}^2\cdot \log p\cdot\bracket{\max_i \norm{\covmatparam_i}{2}^2}.
\]\vspace{-6mm}
}

Using above, we can confirm that $d$ is less than $p$ given that
$\Norm{\covmatparam}{\textup{off},\textup{F}} = o\bracket{p}$
and $\max_i \norm{\covmatparam_i}{2} = o(\sqrt{p})$. 
If $\Norm{\covmatparam}{\textup{off},\textup{F}}$ is 
some fixed positive constant, we can reduce the dimension 
$d$ to $\sqrt{\textup{deg}}\cdot p^{3/4}(\log p)^{1/3}$, 
approximately of the order $\bracket{\textup{deg}}^{1/2}p^{3/4}$.

\section{Non-parametric Method}
\label{sec:nonparam}
We begin the discussion with some background knowledge on the 
nonparanormal distribution. 

\subsection{Background}

\begin{definition}[Nonparanormal]
\label{def:nonparanormal}
We call a set of random variables $\bX = \Bracket{X_i~\cdots~X_p}^{\rmt}$
follows the \emph{nonparanomral distribution}, namely, 
$\bX \sim \textup{NPN}(\bg, \bmu, \bSigma)$, 
if there exists a set of functions $\set{g_j}_{1\leq j \leq p}$
such that $\bZ = \Bracket{g_1(X_1)~\cdots~g_p(X_p)}^{\rmt}$
has Gaussian distribution, i.e., 
$\bZ \sim \normdist\bracket{\bmu, \bSigma}$. 
\end{definition}

Assume that $\bX$ satisfies the nonparanormal assumption, i.e., 
$\bX \sim \textup{NPN}(\bg, \bmu, \bSigma)$.  
Let $\bTheta$ be the inverse matrix of the covariance matrix 
$\bSigma$. 

\begin{lemma}[Lemma~$3$ in \cite{liu2009nonparanormal}]
\label{lemma:dependence_theta_matrix}
Random variables $X_i$ and $X_j$ are pairwise 
conditionally independent, i.e., $X_i \perp X_j~\big| X_{\setminus{i,j}}$, iff 
$\bTheta_{ij} = 0$. 
\end{lemma}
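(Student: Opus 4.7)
The plan is to reduce the statement to the classical fact that, for a Gaussian vector, zero entries in the precision matrix correspond exactly to pairwise conditional independencies, and then transfer this back to $\bX$ by exploiting the fact that the nonparanormal transformation $\bg$ is bijective (monotone increasing) coordinatewise.

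First, I would make explicit the standing assumption that each $g_j$ in Definition~\ref{def:nonparanormal} is a strictly monotone (hence measurably invertible) function; without this, the representation $\bZ = \bg(\bX)$ would not identify $\bX$ from $\bZ$, and Lemma~\ref{lemma:dependence_theta_matrix} would be vacuous. Given monotonicity, for each $j$ the $\sigma$-algebra generated by $X_j$ coincides with that generated by $Z_j \defequal g_j(X_j)$. Consequently, for any index sets $A$, $B$, $C \subseteq \{1,\dots,p\}$, we have the equivalence
\begin{equation*}
X_A \perp X_B \,\big|\, X_C \quad \Longleftrightarrow \quad Z_A \perp Z_B \,\big|\, Z_C .
\end{equation*}
In particular, taking $A = \{i\}$, $B = \{j\}$, $C = \{1,\dots,p\}\setminus\{i,j\}$, the pairwise conditional independence of $X_i$ and $X_j$ given the remaining variables is equivalent to the corresponding statement for $Z_i$ and $Z_j$.

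Next I would invoke the standard Gaussian graphical model result: if $\bZ \sim \normdist(\bmu, \bSigma)$ with $\bSigma$ nonsingular and $\bTheta = \bSigma^{-1}$, then $Z_i \perp Z_j \,\big|\, Z_{\setminus\{i,j\}}$ iff $\bTheta_{ij} = 0$. I would either cite this directly (it is standard, e.g., Lauritzen's \emph{Graphical Models}) or sketch the one-line derivation: partition the Gaussian vector into $(Z_i, Z_j)$ and $Z_{\setminus\{i,j\}}$, write the conditional covariance of $(Z_i, Z_j)$ given $Z_{\setminus\{i,j\}}$ via the Schur complement, and observe that its off-diagonal entry is a nonzero scalar multiple of $\bTheta_{ij}$ (explicitly, it equals $-\bTheta_{ij}/(\bTheta_{ii}\bTheta_{jj} - \bTheta_{ij}^2)$ up to sign). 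Since joint Gaussianity is preserved under conditioning, vanishing conditional covariance is equivalent to conditional independence, finishing the Gaussian side.

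Chaining the two equivalences yields the lemma: $X_i \perp X_j \,\big|\, X_{\setminus\{i,j\}}$ iff $Z_i \perp Z_j \,\big|\, Z_{\setminus\{i,j\}}$ iff $\bTheta_{ij} = 0$. The only genuine subtlety is the monotonicity assumption on $\bg$, which is really a definitional point rather than an analytic obstacle; the Schur-complement computation and the conditional-independence transfer under bijective coordinatewise maps are both routine. Since the excerpt explicitly attributes the result to \cite{liu2009nonparanormal}, I would present the argument in a few lines and point the reader there for any finer measurability details.
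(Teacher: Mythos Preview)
The paper does not supply its own proof of this lemma; it simply quotes the statement and attributes it to \cite{liu2009nonparanormal}. Your argument is correct and is exactly the standard route taken in that reference: transfer conditional independence from $\bX$ to $\bZ = \bg(\bX)$ via the coordinatewise bijectivity of $\bg$, then invoke the classical Gaussian precision-matrix characterization of pairwise Markov properties.
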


Thus, the pair-wise independence relation across the entries of $\bX$ is 
fully incorporated into the matrix $\bTheta$.
Denote the marginal distribution function of the $i$th entry $X_i$ as 
$F_i(\cdot)$ and define the function $h_i(x) = \Phi^{-1}\bracket{F_i(x)}$, 
where $\Phi(\cdot)$ is the CDF of the standard normal RV, namely, 
$\Phi(\cdot) = \frac{1}{\sqrt{2\pi}}\int_{-\infty}^{(\cdot)}e^{-t^2/2}dt$. According to \cite{liu2009nonparanormal}, 
we can estimate function $g_i(\cdot)$ by estimating 
function $h_i(\cdot)$, from which
we can conclude the bottleneck in estimating 
the graphical structure lies in the estimation of the CDF functions, i.e., 
$F_i(\cdot)$, $1\leq i \leq p$. 
Different from the previous work \cite{liu2009nonparanormal}, our sensing
relation assumes noisy indirect measurements instead of noiseless direct
measurements, which bring extra difficulties. 

{\small 
\begin{algorithm}
\caption{Estimation of the Graphical Structure via the Non-parametric Method.}
\label{alg:nonparam_graph_estim}
\begin{algorithmic}
\STATE {\bfseries Input:} Samples $\{\bY^{(i)}\}_{i=1}^n \in \RR^p$ 
and sensing matrix $\bA \in \RR^{d\times p}$. 

\STATE {\bfseries Stage I}.
We reconstruct the values $\wh{\bX}^{(s)}$ via the 
\emph{least-square} (LS) estimator as 

{\small \vspace{-4mm}
\begin{align}
\label{eq:ls_def}
\wh{\bX}^{(s)} = \argmin_{\bX} \norm{\bY^{(s)}- \bA \bX}{2}.
\end{align}\vspace{-4mm}
}

\STATE {\bfseries Stage II}.
We estimate the marginal distribution function $\wh{F}_i(\cdot)$ for
the $i$th entry from the samples $\{\wh{X}^{(s)}_i\}_{1\leq s \leq n}$  as 

{\small \vspace{-3mm}
\begin{align}
\label{eq:cdf_estim_def}
\wh{F}_i(x) = 
\frac{1}{2} -~& \frac{1}{n\pi}\sum_{s=1}^n
\int_{0}^{\infty} \frac{ \sin\Bracket{t(\wh{X}^{(s)}_i - x)}}{t} \notag \\
\times~&  \
\frac{\exp\bracket{-\frac{\sigma^2 t^2}{2(d-p)}}}{ \exp\bracket{-\frac{\sigma^2 t^2}{d-p}} \vcup
\gamma t^a}
dt, 
\end{align}
\vspace{-2mm} 
}

\par \noindent
where $\gamma > 0$ and $a > 1$ are some fixed positive constants. Then 
we truncate the estimated CDF function as

{\small \vspace{-3mm}
\begin{align}
\label{eq:cdf_truncate_estim}
\wh{F}^{\textup{tr}}_i(x) = \left\{\
\begin{aligned}
& \delta_{n, d, p},~~&&\wh{F}_i(x) \leq \delta_{n, d, p};\\
&\wh{F}_i(x),~~&& \delta_{n, d, p} \leq \wh{F}_i(x) \leq 1- \delta_{n, d, p}; \\	
&1- \delta_{n, d, p},~~&&\wh{F}_i(x) \geq 1- \delta_{n, d, p}, 
\end{aligned}\right.
\end{align}
\vspace{-4mm}
}

\par \noindent
where $\delta_{n, d, p} > 0$ is some pre-determined parameter.  

\STATE {\bfseries Stage III}. 
First, we estimate the mean $\wh{m}_i$ and the variance $\wh{v}_i$ as 

{\small \vspace{-6mm}
\begin{align*}
\wh{m}_i &= \frac{1}{n}\sum_{s=1}^n \wh{X}^{(s)}_i; \\\wh{v}_i &= \sqrt{\frac{1}{n-1}\sum_{s=1}^n \bracket{ \wh{X}^{(s)}_i - \wh{m}_i}^2 - \frac{n}{n-1}\frac{\sigma^2}{d-p}}. 
\end{align*}\vspace{-4mm}
}

\par \noindent
Then we estimate the covariance matrix $\wh{\bSigma}^{\textup{non-param}}_n$ as  

{\small \vspace{-3mm}
\begin{align}
\label{eq:nonparam_emp_cov_estim}
\wh{\bSigma}^{\textup{non-param}}_n = \frac{1}{n}
\sum_{s=1}^n
(\wh{\bh}(\wh{X}^{(s)}) - \wh{\bmu})
(\wh{\bh}(\wh{X}^{(s)}) - \wh{\bmu})^{\rmt},
\end{align}\vspace{-3mm}
}

\par \noindent
where the $i$th entry of $\wh{\bh}$ is defined as 
$\wh{h}_i(x) = \wh{m}_i + \wh{v}_i \Phi^{-1}(\wh{F}_i^{\textup{tr}}(x))$, 
and $\wh{\bmu}$ is the estimated mean of $\wh{\bh}(\cdot)$,
namely, $n^{-1}{\sum_{s=1}^n \wh{\bh}(\bX^{(s)})}$. 

\STATE {\bfseries Stage IV}.
We reconstruct the matrix 
$\wh{\bTheta}$ as 
\vspace{-2mm}
\begin{align}
\label{eq:nonparam_glasso_def}
\wh{\bTheta}^{\textup{non-param}} = ~&\argmin_{\bTheta}
-\log\det(\bTheta) + \trace\bracket{\bTheta\wh{\bSigma}^{\textup{non-param}}_n} \notag \\
+ & \lambda_{\textup{non-param}} \offdiagnorm{\bTheta}, 
\vspace{-4mm}
\end{align}
where $\lambda_{\textup{non-param}} > 0$ is some positive constant for the 
regularizer $\Norm{\bTheta}{1}$.

\STATE {\bfseries Output:} 
Estimated matrix $\wh{\bTheta}^{\textup{non-param}}$. 
\end{algorithmic}
\end{algorithm}
}

\vspace{-2mm}

\subsection{Estimator Design Intuition}
Compared with the parametric learning method, 
we need to have a reliable estimation of the
transform function $\bg(\cdot)$ in the presence of the
measurement noise $\bW$. 
To handle such an issue, we relax the constraint 
that sensing matrix $\bA$ is under-determined, where  
extra measurements are conducted ($d > p$) to 
suppress the sensing noise.
\begin{remark}
Consider the direct noisy measurement reading as 
$\wt{\bY}^{(s)} = \bX^{(s)} + \bW^{(s)}$.
Previous works \cite{fan1991optimal, zhang1990fourier} suggest
that the error of estimating the CDF $X_i$ from $\{\wt{\bY}^{(s)}\}$
is at least
$\inf_{\wh{F}}\abs{\wh{F}_i(x) - F_i(x)} \geq c_0 (\log n)^{-c_1}$ for a fixed point $x$, which means significant number of samples are required. 

\end{remark}

\begin{remark}
In fact, 
whether a compressive sensing system, i.e., $d < p$, 
can be used to estimate the transform function $\bg(\cdot)$ still remains 
an open-problem.
\end{remark}

The basic idea of our estimator is to first transform the sensing 
relation in \eqref{eq:prob_sense_relation_def} to the additive model such that 
$\wh{\bX}^{(s)} = \bX^{(s)} + \wh{\bW}^{(s)}$, where 
$\wh{\bW}^{(s)}$ is defined as $\bracket{\bA^{\rmt}\bA}^{-1}\bA^{\rmt} \bW^{(s)}$. 
Then our task reduces to estimating the marginal CDF from the 
samples $\wh{\bX}^{(s)}$, which is contaminated by the noise $\wh{\bW}^{(s)}$.
This problem can be broadly categorized as
the \emph{density deconvolution} problem in statistics. 
When comparing with the previous work, 
instead of relying on full knowledge, we only use some
inexact knowledge of the distribution of $\wh{\bW}^{(s)}$. 
To put more specifically, we can compute its mean and the approximated value of its 
variance as
\vspace{-4mm}
\begin{align}
\label{eq:noise_approx_mean_var}
\Expc(\wh{w}_i) = 0,~~\Var(\wh{w}_i) \approx \frac{\sigma^2}{d-p}, 
\vspace{-12mm}
\end{align}
where $\wh{w}_i$ denotes the $i$th entry of the noise $\wh{\bW}$.
Whereas the previous work such as \cite{zhang1990fourier, fan1991optimal, masry1991multivariate, hall2007ridge} assumes perfect 
knowledge of the distributions of $\wh{\bW}$, and the work \cite{dattner2013estimation, kappus2014adaptive, phuong2020deconvolution} 
disregards
the distribution but requires extra steps to estimate 
the variance of noise $\{\wh{\bW}^{(s)}\}_{1\leq s \leq n}$. 
The computation procedure of \eqref{eq:noise_approx_mean_var} 
is deferred to the supplementary material and the
details of the estimator is summarized in Alg.~\ref{alg:nonparam_graph_estim}.

\subsection{Properties of CDF Estimator}
This subsection investigates the properties of the marginal 
CDF estimator in \eqref{eq:cdf_estim_def} where each entry 
$X_i$ is within the region $[0, 1]$. 
We begin the discussion by presenting the assumptions.

\begin{definition}[Density family $\calF_{\alpha, L}$]
The density family $\calF_{\alpha, L}$ is defined as the 
set of all distributions whose density functions $f(\cdot)$ 
and characteristic functions $\phi(\cdot)$ possess the 
following properties:
\vspace{-2mm}
\begin{itemize}[leftmargin=*]
\item 
The  functions $f(\cdot)$ satisfy $\int_{-\infty}^{\infty}x^2 f(x)dx < \infty$; 
\item 
The characteristic functions $\phi(\cdot)$ satisfy 
$\int_{-\infty}^{\infty} \abs{\phi(t)}^2(1+t^2)^{\alpha}dt \leq L$, 
where $\alpha$ is a positive constant controlling the smoothness of the 
corresponding
PDF. 
\end{itemize}
\end{definition}

\vspace{-2mm}
Notice that these assumptions may have been widely used in previous work \cite{trong2019deconvolution, phuong2020deconvolution}. 
Many usual distributions belong to this density family, e.g., 
Cauchy distribution, Gaussian distribution, etc. 
In addition, we need the following two assumptions. 
 
\begin{assumption}
\label{assump:charx}
For an arbitrary entry $X_i$, 
we assume its distribution belongs to the 
density family $\calF_{\alpha, L}$ such that $\alpha > -\frac{1}{2}$. 
\end{assumption}

\begin{assumption}
\label{assump:lipschitz}
For an arbitrary entry $X_i$, we 
assume its density function $f_i(\cdot)$ is bounded by some 
constant $L_f$,
i.e., $|f_i(\cdot)|\leq L_f$. 
\end{assumption}

Setting the parameter $a$ as some fixed positive constant and using 
$\gamma \asymp  \log(np)\bracket{\frac{\sigma^2}{d-p}}^{a/4}$, 
we can prove that the estimation error of the CDF estimator converges to zero with high probability, which is formally stated in the following 
theorem. 

\begin{theorem}
\label{thm:cdf_tail_prob}
Under the Assumptions~\ref{assump:charx} and \ref{assump:lipschitz},  
the estimator error can be bounded as 

{\small \vspace{-6mm}
\begin{align}
\label{eq:cdf_inf_deviation}
\sup_{x\in [0, 1]}\abs{F_i(x) - \wh{F}_i(x)} \leq (\log n)\varepsilon_x+\
c_1\sqrt{\varepsilon_x} + \frac{c_2}{\sqrt{n}}, 
\end{align} \vspace{-6mm}
}

\par \noindent
for $1\leq i \leq p$
with probability exceeding $1-o(1)$ 
when setting 
$\gamma \asymp  \log(np)\bracket{\frac{\sigma^2}{d-p}}^{\frac{a}{4}}$.
The parameter $\varepsilon_x$ is defined as

{\small  \vspace{-6mm} 
\begin{align}
\label{eq:varepsilonx_def}
\varepsilon_x \defequal
\frac{\log^{{2}/{a}}(np) \sigma}{\sqrt{d-p}}+
\frac{\log^2(np)}{(d-p)^{\frac{a}{4}}}+
\bracket{\frac{\sigma^2}{d-p}}^{\frac{2\alpha + 1}{4}} +
\frac{1}{n},
\end{align}
 \vspace{-4mm}
}

\par \noindent
where $a > 1$ is some pre-determined positive constant. 
\end{theorem}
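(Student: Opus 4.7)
My plan is a bias--variance decomposition
\[
\sup_{x\in[0,1]}|\wh F_i(x)-F_i(x)| \leq \sup_x|\Expc \wh F_i(x)-F_i(x)| + \sup_x|\wh F_i(x)-\Expc \wh F_i(x)|,
\]
where the expectation is over $\{\wh X_i^{(s)}\}_{s=1}^n$ conditional on the sensing matrix $\bA$. The bias will deliver the $\varepsilon_x$-dependent summands while the stochastic term contributes the $c_2/\sqrt n$; the $(\log n)$ prefactor will enter through a union bound that promotes a pointwise tail bound to a uniform one on $[0,1]$. Throughout I rely on $\phi_{\wh X_i}(t)=\phi_{X_i}(t)\phi_{\wh W_i}(t)$ (since $\wh W^{(s)}=(\bA^{\rmt}\bA)^{-1}\bA^{\rmt}W^{(s)}$ is independent of $X^{(s)}$ given $\bA$), together with the Gil--Pelaez inversion $F_i(x)=\tfrac12-\tfrac1\pi\int_0^\infty t^{-1}\Im[e^{-\ii tx}\phi_{X_i}(t)]\,dt$.

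For the bias, Gil--Pelaez and linearity give
\[
\Expc\wh F_i(x)-F_i(x) = -\frac1\pi\int_0^\infty \frac{\Im\bigl(e^{-\ii tx}\phi_{X_i}(t)[\phi_{\wh W_i}(t) K_{\text{rid}}(t)-1]\bigr)}{t}\,dt,
\]
with $K_{\text{rid}}(t)=\exp(-\sigma^2 t^2/(2(d-p)))/\bigl[\exp(-\sigma^2 t^2/(d-p))\vcup\gamma t^a\bigr]$ the deconvolution kernel from \eqref{eq:cdf_estim_def}. Define the cutoff $t^\star$ by $\exp(-\sigma^2 (t^\star)^2/(d-p))=\gamma(t^\star)^a$; with the choice $\gamma\asymp\log(np)(\sigma^2/(d-p))^{a/4}$ one finds $t^\star\asymp\sqrt{(d-p)/\sigma^2}$ up to polylogarithmic factors, and I split the integral at $t^\star$. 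On $[0,t^\star]$ the multiplier $\phi_{\wh W_i}(t)K_{\text{rid}}(t)$ would equal $1$ if the noise variance were exactly $\sigma^2/(d-p)$; the actual variance given $\bA$ is $\sigma^2[(\bA^{\rmt}\bA)^{-1}]_{ii}$, so the bias here is driven by $\Delta_i=[(\bA^{\rmt}\bA)^{-1}]_{ii}-1/(d-p)$, which Wishart/inverse-Wishart concentration keeps small with high probability and yields the $\log^{2/a}(np)\sigma/\sqrt{d-p}$ summand. On $[t^\star,\infty)$, Cauchy--Schwarz combined with Assumption~\ref{assump:charx} produces decay $(t^\star)^{-(2\alpha+1)/2}\asymp(\sigma^2/(d-p))^{(2\alpha+1)/4}$, while the ridged factor $\exp(-\sigma^2 t^2/(2(d-p)))/(\gamma t^a)$ in this range contributes the $\log^2(np)/(d-p)^{a/4}$ term.

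For the stochastic term, the denominator $\gamma t^a$ ensures that the integrand of \eqref{eq:cdf_estim_def} is a bounded functional of $\wh X_i^{(s)}$, so Bernstein's inequality at each fixed $x$ gives $|\wh F_i(x)-\Expc\wh F_i(x)|=O(1/\sqrt n)$ with exponential tails; an $\epsilon$-net of cardinality $O(1/\epsilon)$ on $[0,1]$ together with the Lipschitz dependence of the estimator on $x$ (inherited from the $L^1$-integrability of $K_{\text{rid}}$) upgrades this to a uniform bound, with the $\log n$ factor appearing through the resulting union bound after taking $\epsilon\asymp 1/n$. Converting the Plancherel-type $L^2$ bias estimate into the $L^\infty$ bound advertised in the statement is done through the Gagliardo--Nirenberg interpolation $\|g\|_\infty^2\lsim\|g\|_2\|g'\|_\infty$ applied to $g=\Expc\wh F_i-F_i$; Assumption~\ref{assump:lipschitz} controls $\|g'\|_\infty$, which produces the $\sqrt{\varepsilon_x}$ term, while the direct $L^\infty$ contributions of the bias terms become the $(\log n)\varepsilon_x$ summand. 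The truncation \eqref{eq:cdf_truncate_estim} can only decrease the error on the high-probability event that $F_i\in[\delta_{n,d,p},1-\delta_{n,d,p}]$ and is absorbed in constants.

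The main obstacle is the low-frequency branch of the bias: conditional on $\bA$ the multiplier $\phi_{\wh W_i}(t)K_{\text{rid}}(t)$ equals $\exp(\sigma^2 t^2\Delta_i/2)$, which grows exponentially and can blow up as $t\to t^\star$. The entire proof hinges on choosing $\gamma$ (and thus $t^\star$) so that this exponential growth remains tame while still capturing enough Fourier mass to match the smoothness-driven decay on the high-frequency branch. The trade-off between $|\Delta_i|=O(1/\sqrt{d-p})$ (Wishart concentration) and the exponential $\exp(\sigma^2(t^\star)^2\Delta_i/2)$ evaluated near the cutoff is precisely what fixes the exponent $2/a$ and the $\log(np)$ powers that appear in the first two summands of $\varepsilon_x$.
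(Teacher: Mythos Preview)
Your frequency-split computation of the bias and the appeal to Wishart/inverse-Wishart concentration for the noise-variance mismatch are indeed the right analytic ingredients, and they mirror the paper's supporting lemmas. But the paper does \emph{not} organize the argument as a bias--fluctuation split in $L^\infty$. Instead it bounds the pointwise mean-square error $\Expc|\wh F_i(x)-F_i(x)|^2\lsim\varepsilon_x$ (via an intermediate oracle estimator $\wt F$ using the \emph{true} noise characteristic function, then $\wh F-\wt F$ and $\wt F-F$ separately) and then upgrades this MSE bound to a uniform high-probability bound through Talagrand's inequality, symmetrization, and Dudley's entropy integral. The three terms fall out mechanically from that machinery: $(\log n)\varepsilon_x$ is the Talagrand deviation term $V(\log n)/n$ with $V\asymp n\sup_x\Expc\Delta_x^2\asymp n\varepsilon_x$; the $c_2/\sqrt n$ comes from Dudley's integral controlling $\Expc\sup_x\Delta_x-\Expc\Delta_x$; and $\sqrt{\varepsilon_x}$ is simply $\Expc\Delta_x\leq\sqrt{\Expc\Delta_x^2}$.

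Your Bernstein-plus-$\epsilon$-net route does not reproduce this specific three-term structure, and your accounting is inconsistent: you first say the $\log n$ factor enters through the union bound on the stochastic term (which would yield $O(\sqrt{\log n/n})$, not $c_2/\sqrt n$), but then attribute $(\log n)\varepsilon_x$ to ``direct $L^\infty$ contributions of the bias''---a deterministic quantity where no union-bound $\log n$ can appear. The Gagliardo--Nirenberg step is also unnecessary and not cleanly justified: since $|g(x)|=|\Expc\wh F_i(x)-F_i(x)|\leq\sqrt{\Expc|\wh F_i(x)-F_i(x)|^2}\leq\sqrt{\varepsilon_x}$ pointwise by Jensen, you already have $\|g\|_\infty\leq\sqrt{\varepsilon_x}$ without interpolation; and Assumption~\ref{assump:lipschitz} bounds only $f_i$, not $(\Expc\wh F_i)'$, whose size depends on $\int K_{\textup{rid}}(t)\,dt$ and hence on $\gamma$.
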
 

\vspace{-2mm}
To the best of our knowledge, this is the 
first non-asymptotic uniform bound on the estimation error of a  
CDF deconvolution estimator with inexact knowledge of 
the noise variance.
%


\subsection{Properties of the Graphical Structure Estimator: Non-Parametric Case}
The technical challenges  of this analysis can be divided as two parts:
$(i)$ choosing the appropriate truncation parameter $\delta_{n, d, p}$ 
in \eqref{eq:cdf_truncate_estim}; and $(ii)$ estimating the covariance matrix 
with the noisy samples $\set{\bY^{(s)}}_{1\leq s \leq n}$.

Denote the oracle empirical covariance matrix $\bSigma^{\textup{non-param}}_n$ as 
{\small 
\vspace{-3mm}\begin{align}
\label{eq:nonparam_oracle}
\bSigma^{\textup{non-param}}_n \defequal~& \frac{1}{n}\sum_{s=1}^n 
\bh(\bX^{(s)}) \bh(\bX^{(s)})^{\rmt} \notag \\
-~& 
\big(n^{-1}\sum_{s=1}^n 
\bh(\bX^{(s)})\big) \big(n^{-1}\sum_{s=1}^n 
\bh(\bX^{(s)})\big)^{\rmt}, 
\end{align} \vspace{-4mm}
}

where $h(\cdot)$ denotes the oracle estimator of the 
transform functions in Def.~\ref{def:nonparanormal}.  
The core of the analysis lies on 
bounding the estimation error of the covariance 
matrix in terms of the $\ell_{\infty}$. 
In comparison with the previous work \cite{liu2009nonparanormal}, we cannot 
directly access the samples $\set{\bX^{(s)}}_{1\leq s \leq n}$. Instead, we have to use 
the perturbed samples $\{\wh{\bX}^{(s)}\}_{1\leq s \leq n}$, which will lead to additional errors in the estimation of the covariance matrix.
How to bound these errors constitutes the technical bottleneck.

Define $\beta$ as $\frac{1}{2}\vcap \frac{a}{4} \vcap \frac{2\alpha + 1}{4}$
and set $\delta_{n, d, p}$ in \eqref{eq:cdf_truncate_estim} as 
{\small \vspace{-2mm}
\begin{align}
\label{eq:deltandp_def}
\delta_{n, d, p} = 
\frac{c_0}{(\log n) n^{1/4}} 
+\frac{c_1\log^2(np)}{\sqrt{\log(d-p)} (d-p)^{\beta/4}}. 
\end{align}\vspace{-4mm}
}

\noindent
Then we conclude 
\begin{theorem}
\label{thm:graph_error_infnorm}
Under the Assumptions~\ref{assump:charx} and \ref{assump:lipschitz}, 
the following bound 

{\small \vspace{-8mm}
\[
\infnorm{\bSigma^{\textup{non-param}}_n - \wh{\bSigma}^{\textup{non-param}}_n} \lsim~& 
\sqrt{\log n\vcup \log(d-p)} \\
\cdot ~& \bracket{\frac{\sqrt{\log n}}{n^{1/4}}\vcup 
\frac{\sqrt{\log(d-p)}}{(d-p)^{\beta/4}}}
\]\vspace{-6mm}
}

\noindent
holds with probability exceeding 
$1 - o(1)$, where $\delta_{n, d, p}$ is set as \eqref{eq:deltandp_def},  $\beta$ is defined as 
$\frac{1}{2}\vcap \frac{a}{4} \vcap \frac{2\alpha + 1}{4}$, and 
$\bSigma^{\textup{non-param}}_n$ and $\wh{\bSigma}^{\textup{non-param}}_n$ are defined in 
\eqref{eq:nonparam_oracle} and
\eqref{eq:nonparam_emp_cov_estim}, respectively.
\end{theorem}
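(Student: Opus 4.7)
My plan is to control the entrywise $\ell_{\infty}$-difference $\infnorm{\wh{\bSigma}^{\textup{non-param}}_n - \bSigma^{\textup{non-param}}_n}$ by propagating estimation errors through the three stages of the estimator: the empirical mean $\wh{m}_i$, the bias-corrected scale $\wh{v}_i$, and the deconvolution-based transform $\Phi^{-1}(\wh{F}_i^{\textup{tr}}(\cdot))$. First I would write the $(i,j)$-entry via the telescoping identity $\wh{h}_i(\wh{X}^{(s)}_i)\wh{h}_j(\wh{X}^{(s)}_j) - h_i(X^{(s)}_i)h_j(X^{(s)}_j) = \wh{h}_i(\wh{X}^{(s)}_i)[\wh{h}_j(\wh{X}^{(s)}_j) - h_j(X^{(s)}_j)] + [\wh{h}_i(\wh{X}^{(s)}_i) - h_i(X^{(s)}_i)]h_j(X^{(s)}_j)$, together with the analogous telescoping of $\wh{\mu}_i\wh{\mu}_j - \mu_{h,i}\mu_{h,j}$. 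Since each $h_i(X^{(s)}_i)\sim\normdist(\mu_i,\bSigma^{\natural}_{ii})$ is Gaussian, a standard maximal inequality gives $\max_{s,i}|h_i(X^{(s)}_i)|\lsim \sqrt{\log(np)}$ with probability $1-o(1)$, so the task reduces to uniformly bounding $\sup_s|\wh{h}_i(\wh{X}^{(s)}_i) - h_i(X^{(s)}_i)|$ for each coordinate $i$, along with $|\wh{m}_i - \mu_i|$ and $|\wh{v}_i - \sqrt{\bSigma^{\natural}_{ii}}|$.

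Second, I would decompose $\wh{h}_i(x) - h_i(x) = (\wh{m}_i - \mu_i) + (\wh{v}_i - \sqrt{\bSigma^{\natural}_{ii}})\Phi^{-1}(F_i(x)) + \wh{v}_i[\Phi^{-1}(\wh{F}_i^{\textup{tr}}(x)) - \Phi^{-1}(F_i(x))]$ and treat each piece. Writing $\wh{X}^{(s)}_i = X^{(s)}_i + \wh{W}^{(s)}_i$ with zero-mean sub-Gaussian noise of variance approximately $\sigma^2/(d-p)$, a Hoeffding bound yields $|\wh{m}_i - \mu_i|\lsim \sqrt{\log p/n}$. The debiasing term $n\sigma^2/[(n-1)(d-p)]$ subtracted in Stage III is precisely the correction needed for the empirical second moment of $\wh{X}^{(s)}_i$, which overestimates $\bSigma^{\natural}_{ii}$ by the noise variance; a Hanson--Wright estimate applied to the joint Gaussian $(\bX^{(s)},\bW^{(s)})$, together with a Wishart concentration bound for $[(\bA^{\rmt}\bA)^{-1}]_{ii} - 1/(d-p)$, gives $|\wh{v}_i - \sqrt{\bSigma^{\natural}_{ii}}|\lsim \sqrt{\log p/n} + 1/(d-p)$ on a $1-o(1)$ event.

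Third, and most delicately, I would apply Theorem~\ref{thm:cdf_tail_prob} and a union bound over coordinates to obtain $\sup_x|\wh{F}_i(x) - F_i(x)|\lsim \log^2(np)/(d-p)^{\beta/2} + 1/\sqrt{n}$ for every $i$, and then pass through $\Phi^{-1}$. On the truncated interval $[\delta_{n,d,p}, 1-\delta_{n,d,p}]$, the function $\Phi^{-1}$ is Lipschitz with constant of order $[\delta_{n,d,p}\sqrt{\log(1/\delta_{n,d,p})}]^{-1}$, which follows from $(\Phi^{-1})'(u) = 1/\phi(\Phi^{-1}(u))$ combined with the Mills-ratio tail expansion $\Phi^{-1}(u)\sim -\sqrt{2\log(1/u)}$ as $u\downarrow 0$. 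At sample points $\wh{X}^{(s)}_i$ for which the truncation is active, one additionally incurs a bias of order $\sqrt{\log(1/\delta_{n,d,p})}$ between $\Phi^{-1}(\wh{F}_i^{\textup{tr}}(\wh{X}^{(s)}_i))$ and $\Phi^{-1}(F_i(\wh{X}^{(s)}_i))$, so the choice of $\delta_{n,d,p}$ in \eqref{eq:deltandp_def} is calibrated to balance these two effects.

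The main obstacle is precisely this tail trade-off in the CDF step: if $\delta_{n,d,p}$ is taken smaller, the derivative blow-up of $\Phi^{-1}$ amplifies the $O(\log^2(np)/(d-p)^{\beta/2} + n^{-1/2})$ CDF error beyond the target rate, while if $\delta_{n,d,p}$ is taken larger the truncation bias dominates. Matching the two sides singles out the exponent $\beta/4$ in \eqref{eq:deltandp_def} and produces the $(d-p)^{-\beta/4}$ factor (rather than $(d-p)^{-\beta/2}$) that appears in the claimed rate. Assembling the three pieces through the telescoped identity and taking a union bound over the $p^2$ entries, the additional $\sqrt{\log n \vcup \log(d-p)}$ prefactor in the theorem absorbs both the Gaussian maximum bound on the oracle terms $h_j(X^{(s)}_j)$ and the covariance-matrix union bound, yielding the stated inequality with probability $1-o(1)$.
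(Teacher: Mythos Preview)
Your proposal has a genuine gap in the tail-sample analysis. You plan to reduce to bounding $\sup_s |\wh{h}_i(\wh{X}_i^{(s)}) - h_i(X_i^{(s)})|$, but this supremum is $\Theta(\sqrt{\log n})$ with high probability and does not vanish at the target rate: among $n$ samples some $s$ will have $|h_i(X_i^{(s)})| \sim \sqrt{2\log n}$ (Gaussian maximum), whereas the truncation forces $|\wh{h}_i(\wh{X}_i^{(s)})| \leq |\Phi^{-1}(\delta_{n,d,p})| \sim \sqrt{(1/2)\log n}$ under the choice $\delta_{n,d,p} \asymp n^{-1/4}$ from \eqref{eq:deltandp_def}. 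After multiplication by $\max_s |h_j(X_j^{(s)})| \lsim \sqrt{\log(np)}$, your telescope yields only $O(\log n)$, not the target $O\!\bracket{(\log n)/n^{1/4}}$. Your balancing remark (``calibrated to balance these two effects'') is correct for the non-truncated range, but it says nothing about the samples at which truncation is active; for those, the per-sample bias $\sqrt{\log(1/\delta_{n,d,p})}$ is not small, and you have not explained why such samples contribute negligibly to the \emph{average}.

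The paper's fix is to avoid the supremum and instead partition samples according to whether $\varphi_i^{(s)} = h_i(X_i^{(s)})$ lies in a middle region $\calR_M = [-c_L\sqrt{\log n\vcup\log(d-p)},\, c_L\sqrt{\log n\vcup\log(d-p)}]$ or an edge region $\calR_E$. On $\calR_M$ the mean-value argument you sketched (derivative of $\Phi^{-1}$ bounded by $1/\delta_{n,d,p}$ times the CDF error from Theorem~\ref{thm:cdf_tail_prob}) delivers exactly the stated rate. On $\calR_E$ the per-sample errors are crudely bounded by $O(\sqrt{\log n\vcup\log(d-p)})$, but a Hoeffding bound on the indicators $\Ind[\varphi_i^{(s)}\in\calR_E]$ shows the \emph{fraction} of such samples is polynomially small in $n\vcap(d-p)$, so their total contribution to the sample average is of lower order. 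The paper further inserts an intermediate matrix $\wt{\bSigma}_n^{\textup{non-param}}$ built from $\wh{h}_i$ evaluated at the true $X_i^{(s)}$, and bounds $\infnorm{\wh{\bSigma}_n^{\textup{non-param}} - \wt{\bSigma}_n^{\textup{non-param}}}$ separately; this is where Assumption~\ref{assump:lipschitz} enters, via $|F_i(\wh{X}_i^{(s)}) - F_i(X_i^{(s)})| \leq L_f |\wh{w}_i^{(s)}|$.
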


Having obtained the covariance matrix $\wh{\bSigma}^{\textup{non-param}}_n$, 
we estimate the graphical structure of $\bX$ by plugging 
$\wh{\bSigma}^{\textup{non-param}}_n$ into the graphical lasso estimator \cite{tibshirani2015statistical}, which 
is put in \eqref{eq:nonparam_glasso_def}.
Adopting the same strategy as in Thm.~\ref{thm:param_glasso}, we can obtain the
conditions for the correct recovery of graphical structure 
under Assumptions~\ref{assump:param_irrepresentable_condition}, \ref{assump:charx}, and~\ref{assump:lipschitz}.
With a slight abuse of notations, we redefine the support set $S$, 
the parameters $\textup{deg}$, 
$\kappa_{\bGamma}$, and $\kappa_{\bSigma}$ w.r.t the covariance matrix 
$\covmatnonparam$ and matrix $\precmatnonparam$
that are correspond to the nonparanormal distribution. 
Then we can show the following results on the graph recovery. 
\begin{theorem}
\label{thm:glasso}
Let $\lambda_{\textup{non-param}} \asymp \theta^{-1}\bracket{\frac{{\log n}}{n^{1/4}}\vcup 
\frac{{\log(d-p)}}{(d-p)^{\beta/4}}}$. Provided that 

{\small \vspace{-8mm}
\begin{align}
\label{eq:glasso_condition}
\sqrt{\log n\vcup \log(d-p)} & \bracket{\frac{\sqrt{\log n}}{n^{1/4}}\vcup 
\frac{\sqrt{\log(d-p)}}{(d-p)^{\beta/4}}}\notag \\
\lsim~& \frac{\theta}{(\theta + 8)(\textup{deg})\cdot\kappa^3_{\bSigma}\kappa_{\bGamma}},
\end{align}\vspace{-5mm}
}

\par \noindent
the property $(\wh{\bTheta}^{\textup{non-param}})_{i, j}= 0$ for 
all $(i,j)$ outside the support set $S$, i.e.,
$(i, j)\in S^c$, holds with probability  $1-o(1)$,
where  $\beta \defequal \frac{1}{2}\vcap \frac{a}{4} \vcap \frac{2\alpha + 1}{4}$.
Furthermore, if 

{\small \vspace{-5mm}
\begin{align*}
\min_{(i,j)\in S}\abs{\precmatnonparam_{i,j}} \geq 2\kappa_{\Gamma}& (1 + 8\theta^{-1})
\sqrt{\log n\vcup \log(d-p)} \\
\times & \bracket{\frac{\sqrt{\log n}}{n^{1/4}}\vcup 
\frac{\sqrt{\log(d-p)}}{(d-p)^{\beta/4}}}, 
\end{align*}
\vspace{-5mm}
}

\par \noindent
then $\sign(\wh{\bTheta}^{\textup{non-param}}) = \sign(\precmatnonparam)$ with 
probability $1-o(1)$. 
\end{theorem}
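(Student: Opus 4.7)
The approach is to follow the primal-dual witness (PDW) framework of Ravikumar et al.\ (2011) already invoked in the parametric case (Thm.~\ref{thm:param_glasso}), but with the non-parametric $\ell_{\infty}$ deviation bound from Thm.~\ref{thm:graph_error_infnorm} replacing the parametric rate $\tau_\infty$ of~\eqref{eq:param_tau_inf_def}. The only genuinely new analytic ingredient needed is a concentration bound on the \emph{oracle} empirical covariance $\bSigma^{\textup{non-param}}_n$ around the population covariance $\covmatnonparam$; once that is in place, everything else is mechanical bookkeeping on top of the standard PDW argument.

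First I would upgrade Thm.~\ref{thm:graph_error_infnorm} to an $\ell_{\infty}$ bound on $\wh{\bSigma}^{\textup{non-param}}_n - \covmatnonparam$ by the triangle inequality. The quantity $\bSigma^{\textup{non-param}}_n$ defined in \eqref{eq:nonparam_oracle} is just the ordinary sample covariance of the \emph{Gaussian} vectors $\bh(\bX^{(s)})$, since $\bh$ is the oracle transform of Def.~\ref{def:nonparanormal}; classical Bernstein-type concentration for Gaussian quadratic forms then gives $\infnorm{\bSigma^{\textup{non-param}}_n - \covmatnonparam} \lsim \sqrt{\log p / n}$ with probability $1-o(1)$. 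Under the postulated scaling this is dominated by the bound of Thm.~\ref{thm:graph_error_infnorm}, so with high probability
\[
\infnorm{\wh{\bSigma}^{\textup{non-param}}_n - \covmatnonparam} \lsim \sqrt{\log n \vcup \log(d-p)} \cdot \bracket{\frac{\sqrt{\log n}}{n^{1/4}} \vcup \frac{\sqrt{\log(d-p)}}{(d-p)^{\beta/4}}},
\]
which matches the order of the prescribed $\lambda_{\textup{non-param}}$.

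Next I would run the PDW construction. Define $(\wh{\bTheta}, \wh{\bZ})$ by $(i)$ solving \eqref{eq:nonparam_glasso_def} restricted to matrices supported on $S \cup \mathrm{diag}$, $(ii)$ letting $\wh{\bZ}_S$ be the corresponding sub-gradient of $\offdiagnorm{\cdot}$, and $(iii)$ extending $\wh{\bZ}$ to $S^c$ via the KKT stationarity $-\wh{\bTheta}^{-1} + \wh{\bSigma}^{\textup{non-param}}_n + \lambda_{\textup{non-param}} \wh{\bZ} = \bZero$. Expanding $\wh{\bTheta}^{-1}$ around $\covmatnonparam$ to first order and using Assumption~\ref{assump:param_irrepresentable_condition}, the deviation bound above, and the choice of $\lambda_{\textup{non-param}}$, one obtains strict dual feasibility $\infnorm{\wh{\bZ}_{S^c}} < 1$; this forces $\wh{\bTheta}^{\textup{non-param}}_{S^c} = \bZero$ and establishes the first claim. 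For the second claim, the same Taylor expansion yields $\infnorm{\wh{\bTheta}^{\textup{non-param}}_S - \precmatnonparam_S} \lsim \kappa_{\bGamma}(1 + 8\theta^{-1})$ times the $\ell_\infty$ deviation rate, and the postulated lower bound on $\min_{(i,j)\in S}|\precmatnonparam_{i,j}|$ precludes sign flips, giving $\sign(\wh{\bTheta}^{\textup{non-param}}) = \sign(\precmatnonparam)$.

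The main obstacle is controlling the second-order remainder of the Taylor expansion of $\wh{\bTheta}^{-1}$ around $\covmatnonparam$: this remainder scales as $\textup{deg} \cdot \kappa_{\bSigma}^3 \kappa_{\bGamma}$ times the \emph{square} of the $\ell_\infty$ deviation rate, and it is precisely condition~\eqref{eq:glasso_condition} that forces this remainder to be negligible compared with the linear term driving strict dual feasibility. Once that scaling is secured, every remaining step is a faithful transcription of the argument of Ravikumar et al.\ (2011)---identical in structure to the parametric proof of Thm.~\ref{thm:param_glasso}---with the non-parametric rate inherited from Thm.~\ref{thm:graph_error_infnorm}.
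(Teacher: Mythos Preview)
Your proposal is correct and follows essentially the same route as the paper: the paper explicitly states that Thm.~\ref{thm:glasso} is obtained by ``adopting the same strategy as in Thm.~\ref{thm:param_glasso},'' i.e., the primal--dual witness argument of Ravikumar et al.\ with the $\ell_\infty$ deviation bound of Thm.~\ref{thm:graph_error_infnorm} playing the role of $\tau_\infty$. Your insertion of the oracle-to-population step $\infnorm{\bSigma^{\textup{non-param}}_n - \covmatnonparam}\lsim\sqrt{\log p/n}$ via standard Gaussian concentration is a detail the paper glosses over but is indeed needed and is handled exactly as you describe.
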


\subsection{Discussions}
Following the same logic as in the discussion 
of the parametric method, we can 
obtain the minimum sample number 
$n$ in terms of dimension $d$, and length $p$ from \eqref{eq:glasso_condition}. 
Treating parameters $\theta$, $\textup{deg}$, 
$\kappa_{\bSigma}$, and $\kappa_{\bGamma}$ as some 
constants, the condition \eqref{eq:glasso_condition}
requires the left-hand side to be constant. 

\paragraph{Minimum Sample Size $n$.}
We can show that  
\vspace{-2mm}
\[
n \gsim \bracket{\textup{deg}}^4 \log^2 n \bracket{\log^2 n \vcup \log^2(d-p)}.
\] 
In contrast, the previous work \cite{liu2009nonparanormal} only 
requires the sample number $n$ to satisfy 
$n\gsim \bracket{\textup{deg}}^4 \log^4(n)$. 
Hence our result experience a loss of up to $\frac{\log^2(d-p)}{\log^2(n)} \vcup 1$. Since this inflation is closely related to the 
dimension $d$, we conclude the loss is 
due to the indirect measurement scheme. 

\paragraph{Minimum Projection Dimension $d$.}
For the dimension $d$, we require 

{\small\vspace{-7mm}
\[
d \geq p + \bracket{\textup{deg}}^{\beta/4}\log^{\beta/8}(d-p)
\bracket{\log^{\beta/8}n\vcup \log^{\beta/8}(d-p)}, 
\]\vspace{-7mm}
} 

which is a slightly larger than the dimension $p$. 
To the best of our knowledge, this is the first condition involving the dimension $d$ for 
the nonparametric learning of the graphical structure. Whether 
we can use a compressive sensing system, namely, $d < p$, for the 
nonparametric method still remains an open-problem.

Having showed that the correct graphical structure 
can be obtained under mild conditions, next we will present some numerical 
experiments to validate our theoretical analysis.

\section{Simulation Results}
This section presents the numerical results, which 
applies to both the synthetic and the real-world 
data. Due to the space limit, 
we only put a subset of our numerical experiments and 
leave the rest to the supplementary material.

\vspace{-2mm}

\subsection{Synthetic Data}
We adopt the classical setting as \cite{tibshirani2015statistical} ($9.5$, P~$252$), where the ground-truth 
matrix $\bTheta^{\natural}$ is set as 
$(i)$ $\bTheta^{\natural}_{ij} = \rho_1$ if $i = j$; 
$(ii)$ $\bTheta^{\natural}_{ij} = \rho_2$ if $|i-j| = 1$;
and $(iii)$ $\bTheta^{\natural}_{ij} = 0$ otherwise. 
The corresponding edge set is denoted as $E^{\natural}$.
Simulations with other types of graphs are referred 
to the supplementary material. 

\paragraph{Parametric Method.}
We set $\rho_1 = 1$ and $\rho_2 = 0.4$. 
First we create samples $\bX^{(i)}$ and then 
mask it by the sensing relation 
$\bY^{(i)} = \bA \bX^{(i)} + \bW^{(i)}$, where
$A_{ij}\iid \normdist(0, 1)$ and $W^{(i)}\sim \normdist(\bZero, \sigma^2 \bI_{n\times n})$.
\footnote{Notice that 
$\Sigma^{\natural}_{ii} \neq 1$ in this setting. Hence we need to adapt the estimation of the 
diagonal elements  of $\hat{\Sigma}$.
 }
Using Alg.~\ref{alg:param_glasso}, we reconstruct 
$\wh{\bTheta}^{\textup{param}}$ and evaluate it with  
the recall rate and precision rate with the results being put in 
Fig.~\ref{fig:synthetic_band_precision}.

%

\begin{figure}[!h]

\centering
\mbox{
\includegraphics[width = 1.6in]{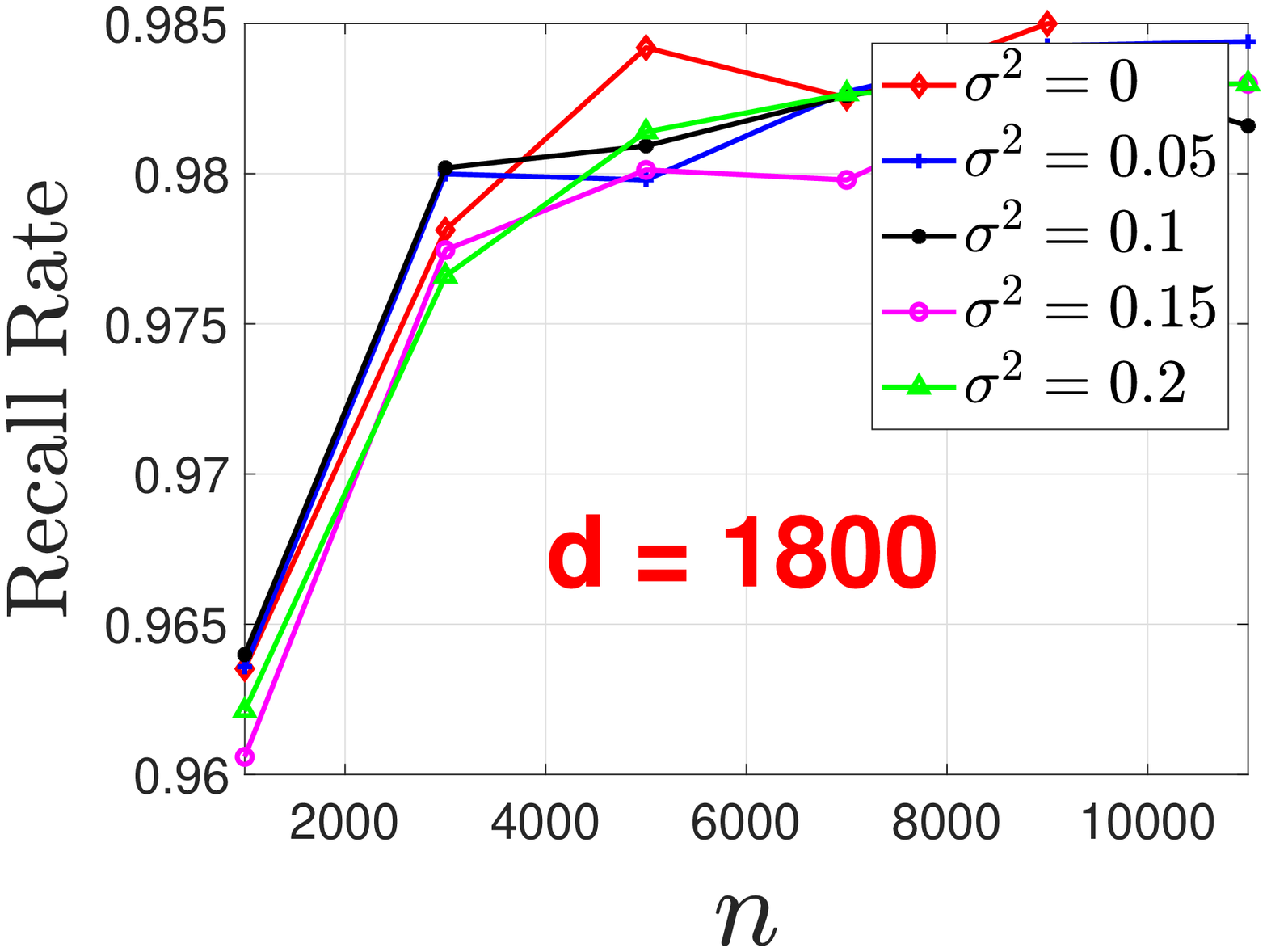}

\includegraphics[width = 1.6in]{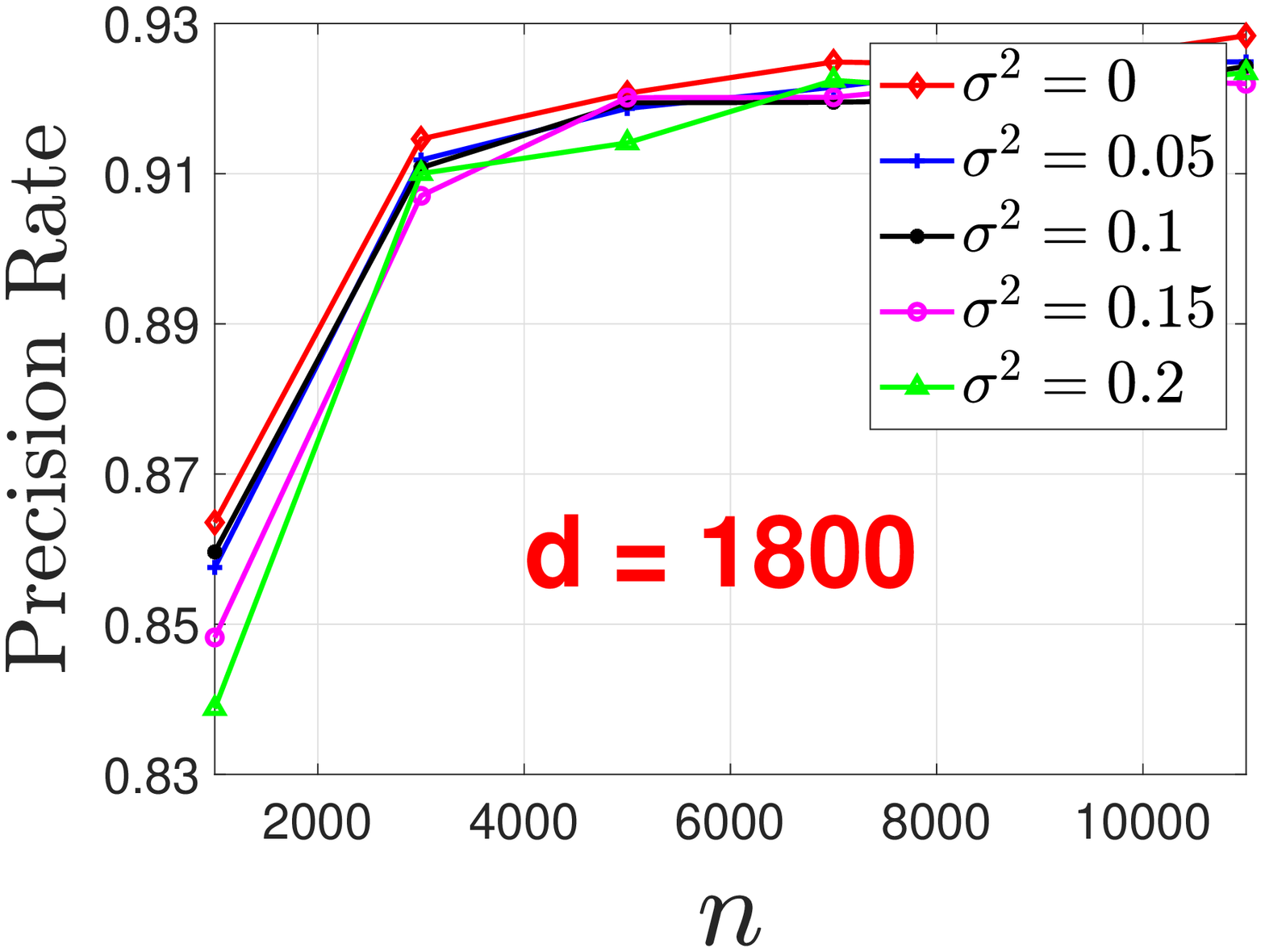}
}
\vspace{-6mm}
\caption{The signal dimension $p$ is fixed as $2000$. We study the  
impact of sample size $n$ 
on the recall rate (\textbf{Left}) on the precision rate (\textbf{Right}).
}
\label{fig:synthetic_band_precision}
\end{figure}

%
%
%
%

We confirm that the edge of graphical model 
can be selected correctly 
with high probability even when the dimension of the 
projection space is much lower than the dimension of the
signal. 
In addition, we notice a threshold effect on 
$n$ when $\sigma^2 = 1$. 
This can be explained by the parameter 
$\tau_{\infty}$ in \eqref{eq:param_tau_inf_def}, 
which contains the term 
$\frac{c_6 \sigma^2\log p}{d} (1 + c_7
(\sqrt{\frac{d}{n}}\vcup \frac{d}{n}))$.
If $d$ is not high enough, this term is still 
lower-bounded by $c_6 \sigma^2 \log p/d$
even as $n\rightarrow \infty$, which 
means a large $\tau_{\infty}$ and further the  
violation of the conditions in Thm.~\ref{thm:param_glasso}. 

From  Fig.~\ref{fig:synthetic_band_precision}, 
we confirm that the edge of graphical model 
can be selected correctly 
with high probability even when the dimension of the 
projection space is much lower than the dimension of the
signal. 
In addition, we notice a threshold effect on 
$n$ when $\sigma^2 = 1$ in \ref{fig:synthetic_band_precision}. 
This can be explained by the parameter 
$\tau_{\infty}$ in \eqref{eq:param_tau_inf_def}, 
which contains the term 
$\frac{c_8 \sigma^2\log p}{d} (1 + c_9
(\sqrt{\frac{d}{n}}\vcup \frac{d}{n}))$.
If $d$ is not high enough, this term is still 
lower-bounded by $\nfrac{c_8 \sigma^2 \log p}{d}$
even as $n\rightarrow \infty$, which 
means a large $\tau_{\infty}$ and further the  
violation of the assumption in Thm.~\ref{thm:param_glasso}. 

Generally speaking, we find larger sample size $n$, higher 
dimension $d$, and lower noise variance $\sigma^2$
contribute to the more accurate edge selection, which 
is consistent with our intuition and verifies Thm.~\ref{thm:param_glasso}.

\vspace{-2mm}
\paragraph{Non-Parametric Method.}
In Fig.~\ref{fig:alg_compare}, we compare the algorithms for the uniform 
distribution where the marginal distribution of 
each entry is uniformly within the region $[0, 1]$. 
We can see that our method has a 
significant improvement when comparing with the method in 
\cite{liu2009nonparanormal} (a widely-used baseline).

\vspace{-2mm}
\begin{figure}[!h]

\centering

\mbox{
\includegraphics[width = 1.6in]{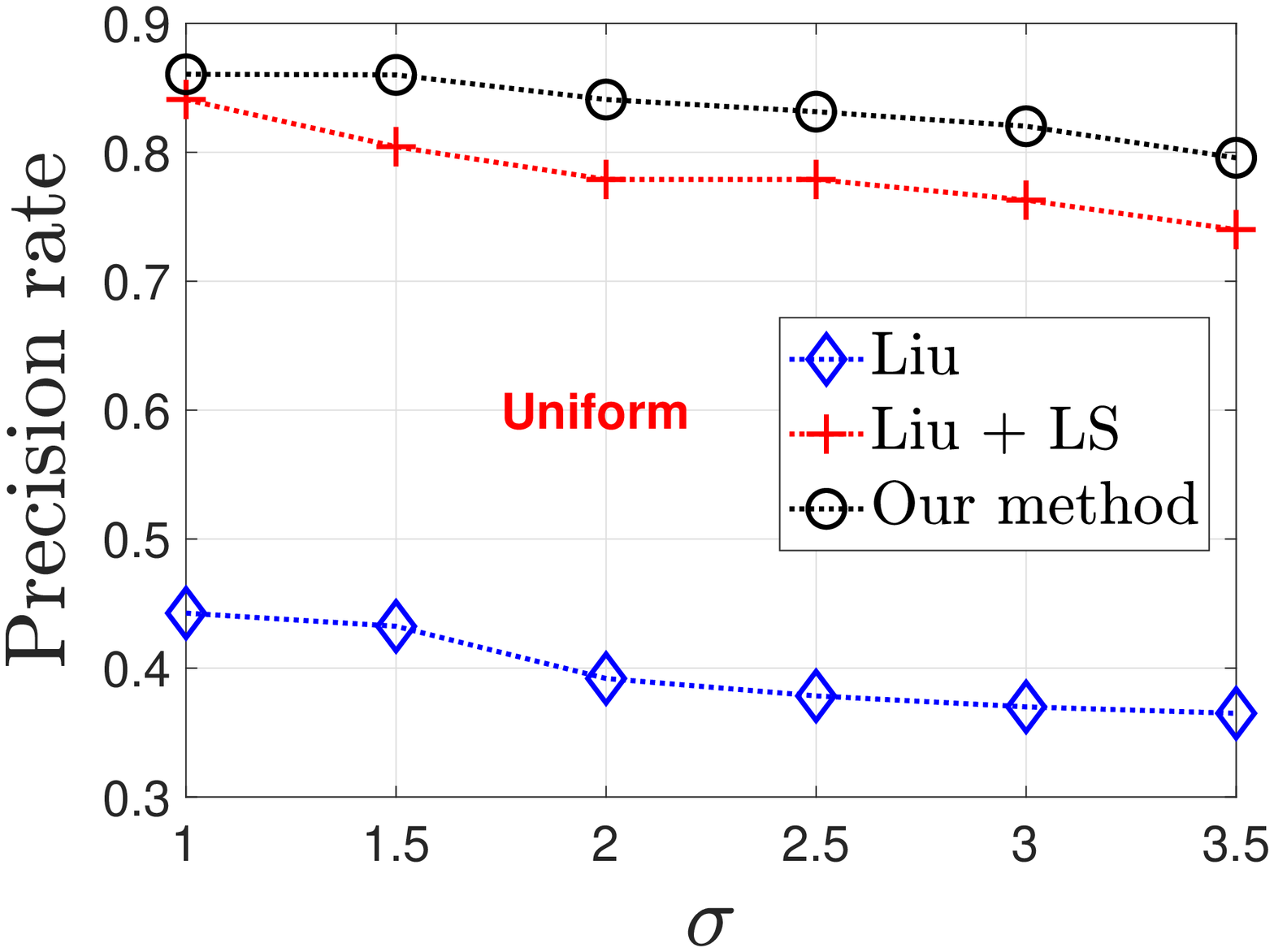}
\includegraphics[width = 1.6in]{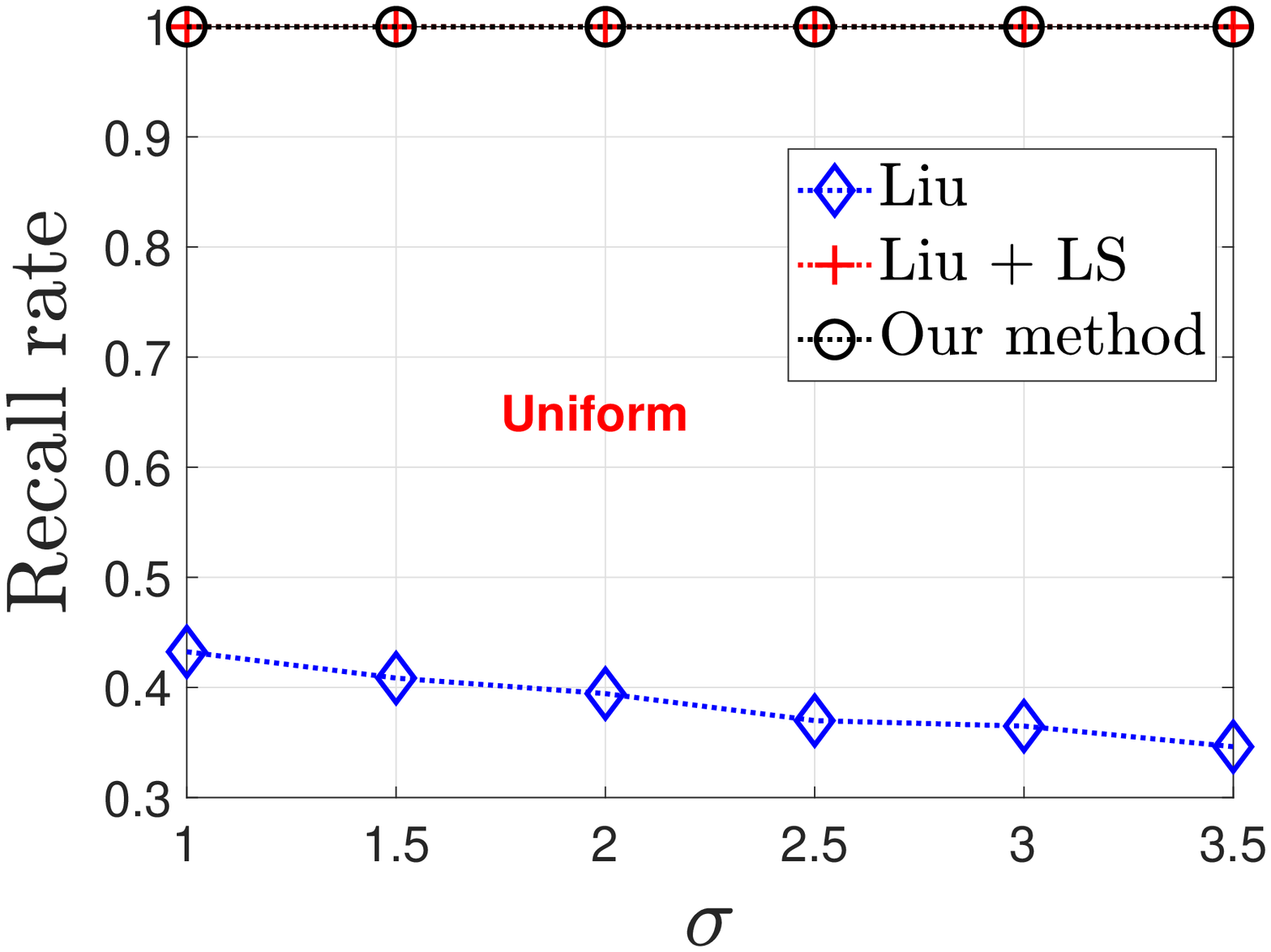}
}

\vspace{-3mm}
\caption{We study the impact of noise with $p = 50$ and $d = 200$. \textbf{Liu} refers to the method in \cite{liu2009nonparanormal}; \textbf{Liu + LS} refers to 
first performing denoising with our method (i.e. least square) followed by \cite{liu2009nonparanormal}, which estimates 
the CDF without density deconvolution. We believe the performance 
gap will increase with a larger noise variance. 
 }
\label{fig:alg_compare}
\end{figure}

In addition, we evaluate the performance with the following three types of 
marginal distribution for the RV $\bX$:  
\vspace{-3mm}
\begin{itemize}[leftmargin=*]
\item 
uniform distribution within the 
region $[0, 1]$;
\vspace{-2mm}
\item 
exponential distribution, i.e., 
$e^{-z}$ for $z\geq 0$;
\vspace{-2mm}
\item 
 Gaussian mixture, 
i.e., $0.25\sum_{i=1}^4 \normdist(\mu_i, 10^{-2})$, where $\mu_i \in \set{\pm 0.25, \pm 0.5}$.
\end{itemize}
\vspace{-3mm}
Due to the spatial limit, we leave the
numerical results to the supplementary material, 
which suggests our algorithm 
has an improvement in both the recall rate and precision rate.


\vspace{-2mm}
\subsection{Real-World Data}
\vspace{-2mm}
We now consider the real-world databases, which 
consists of $5$ databases:
Carolina Breast Cancer (GSE$148426$)  with $2497$ samples
(patients) \cite{Bhattacharya2020.08.14.250902}, 
Lung Cancer (GSE$137140$) with $3924$ samples \cite{asakura2020mirna}, 
Ovarian Cancer (GSE$106817$) with $4046$ samples \cite{yokoi2018integrated},
Colorectal Cancer (GSE$115513$) with $1513$ samples  \cite{slattery2016microrna}, 
and Esophageal Squamous Cell Carcinoma (GSE$122497$)  with $5531$ samples \cite{sudo2019development}. 
Each database is divided into two categories, i.e., \textbf{Healthy group} and \textbf{Patients},
where the measurements 
are given as the concentration of miRNAs. 
The miRNAs are known to have dependency among each other,
i.e., a non-diagonal precision matrix, and hence 
there is an underlying graphical model describing these
dependency structure based on the associated precision 
matrix.

The sensing matrix $\bA \in \RR^{d\times p}$ is assumed to be 
$A_{ij}\iid \normdist(0, 1)$ and the variance of the measurement 
noise is set to one, $W_{i}\iid \normdist(0, 1)$. 
The goal is to reconstruct the underlying dependency graph among miRNAs.
\vspace{-2mm}

\paragraph{Evaluation.}
We adopt the nonparametric method for the evaluation 
due to its wider applications. 
The precision matrix $\bTheta$ learned with noiseless direct 
measurements using the method in \cite{liu2009nonparanormal} is assumed 
to be the ground-truth.  We evaluate the 
performance of our estimator in both the recall rate and precision rate 
of the edge selection. The results are shown in 
Tab.~\ref{tab:real_data_recall}. \footnote{Only a part of the results are put here and
the rest results are left to the supplementary due to the spatial limit.}
These experiments confirm that 
our estimator can obtain the correct dependency relation  
with high-probability.

\vspace{-4mm}
{ 
\begin{table}[!h]
\centering
\caption{
Recall rate and precision rate of edge selection on real-world databases, 
namely, GSE$148426$ \cite{Bhattacharya2020.08.14.250902} and
GSE$137140$ \cite{asakura2020mirna}.
The precision matrix $\bTheta$ 
 learned by direct observations is assumed to be the ground-truth. 
 For more results, please refer to the supplementary material.
}
{\footnotesize
\label{tab:real_data_recall}
\begin{tabular}{@{}ccrlcc@{}}\toprule
 & \multicolumn{2}{c}{Healthy group} & \phantom{abc}& \multicolumn{2}{c}{Unhealthy group} 
\\
\cmidrule{2-3} \cmidrule{5-6} 
\textbf{$d/p$} & \makecell{ GSE \\$148426$} & 
\makecell{GSE\\$137140$} &&\makecell{ GSE \\$148426$} & 
\makecell{GSE\\$137140$} \\ \midrule
\textbf{Recall Rate} \\ 
$2$  & $0.9494$ & $0.8892$ 
&&   $0.9424$ & $0.9692$ \\
$5$ & $1$ & $0.9950$ 
&&  $1$ & $0.9692$
\\
$10$ & $1$ & $1$ 
&&   $1$ & $1$
\\
$12$ & $1$ & $1$
&&   $1$ & $1$ 
\\
$15$ & $1$ & $1$
&&   $1$ & $1$
\\ 
$20$ & $1$ & $1$ 
&&   $1$ & $1$  \\
\hdashline  \vspace{1mm}
\textbf{Precision Rate} \\
$2$  & $1$ & $1$ 
&&   $0.9704$ & $0.9692$ 
\\
$5$ & $0.9080$ & $0.9900$ 
&&  $0.9205$ & $0.9692$ 
\\
$10$ & $0.9080$ & $0.9341$ 
&&   $0.9205$ & $0.9420$ 
\\
$12$ & $0.9080$ & $0.9475$ 
&&   $0.9329$ & $0.9420$ 
\\
$15$ & $0.9080$ & $0.9566$ 
&&   $0.9456$ & $0.9420$ 
\\ 
$20$ & $0.9518$ & $0.9613$ 
&&   $0.9586$ & $0.9420$ \\
\bottomrule
\end{tabular}
}
\end{table}
}


%
\vspace{-6mm}
\section{Conclusions}
\vspace{-2mm}
This is the first work on learning 
of the graphical structure with noisy indirect measurements, where 
both the parametric and non-parametric methods are investigated. 
For the parametric method, we considered the 
Gaussian graphical model and learned the graphical 
structure by using only indirect low-dimensional observations, 
i.e., the observations obtained via compressive 
sensing of the desired signal. 
For the non-parametric method, we relaxed the gaussian distribution to 
the nonparanormal distribution. 
We established a non-asymptotic uniform bound on the 
errors of the CDF estimation. 
To the best of our knowledge, this is the first such results 
on the CDF error bounds when only limited information exists 
regarding the noise distribution.  
For both scenarios, we 
showed our estimator can generate the 
correct graphical structure under mild conditions, 
 from which the relation between the 
sample number $n$, the signal dimension $p$, and the projection 
dimension $d$ are obtained. 
In addition, we provided numerical experiments, using 
both synthetic and real world miRNA data, to corroborate 
the correctness of our theoretical results.
%
%
\bibliography{icml_graph}

\begin{thebibliography}{42}
\providecommand{\natexlab}[1]{#1}
\providecommand{\url}[1]{\texttt{#1}}
\expandafter\ifx\csname urlstyle\endcsname\relax
  \providecommand{\doi}[1]{doi: #1}\else
  \providecommand{\doi}{doi: \begingroup \urlstyle{rm}\Url}\fi

\bibitem[Asakura et~al.(2020)Asakura, Kadota, Matsuzaki, Yoshida, Yamamoto,
  Nakagawa, Takizawa, Aoki, Nakamura, Miura, et~al.]{asakura2020mirna}
Asakura, K., Kadota, T., Matsuzaki, J., Yoshida, Y., Yamamoto, Y., Nakagawa,
  K., Takizawa, S., Aoki, Y., Nakamura, E., Miura, J., et~al.
\newblock A mirna-based diagnostic model predicts resectable lung cancer in
  humans with high accuracy.
\newblock \emph{Communications biology}, 3\penalty0 (1):\penalty0 1--9, 2020.

\bibitem[Bhattacharya et~al.(2020)Bhattacharya, Hamilton, Troester, and
  Love]{Bhattacharya2020.08.14.250902}
Bhattacharya, A., Hamilton, A.~M., Troester, M.~A., and Love, M.~I.
\newblock Decompress: tissue compartment deconvolution of targeted mrna
  expression panels using compressed sensing.
\newblock \emph{bioRxiv}, 2020.
\newblock \doi{10.1101/2020.08.14.250902}.
\newblock URL
  \url{https://www.biorxiv.org/content/early/2020/08/17/2020.08.14.250902}.

\bibitem[Boucheron et~al.(2013)Boucheron, Lugosi, and
  Massart]{boucheron2013concentration}
Boucheron, S., Lugosi, G., and Massart, P.
\newblock \emph{{Concentration Inequalities: A Nonasymptotic Theory of
  Independence}}.
\newblock Oxford University Press, 2013.

\bibitem[Cai et~al.(2011)Cai, Liu, and Luo]{cai2011constrained}
Cai, T., Liu, W., and Luo, X.
\newblock A constrained $\ell_1$ minimization approach to sparse precision
  matrix estimation.
\newblock \emph{Journal of the American Statistical Association}, 106\penalty0
  (494):\penalty0 594--607, 2011.

\bibitem[Chandrasekaran et~al.(2010)Chandrasekaran, Parrilo, and
  Willsky]{chandrasekaran2010latent}
Chandrasekaran, V., Parrilo, P.~A., and Willsky, A.~S.
\newblock Latent variable graphical model selection via convex optimization.
\newblock In \emph{2010 48th Annual Allerton Conference on Communication,
  Control, and Computing (Allerton)}, pp.\  1610--1613. IEEE, 2010.

\bibitem[Dattner \& Reiser(2013)Dattner and Reiser]{dattner2013estimation}
Dattner, I. and Reiser, B.
\newblock Estimation of distribution functions in measurement error models.
\newblock \emph{Journal of Statistical Planning and Inference}, 143\penalty0
  (3):\penalty0 479--493, 2013.

\bibitem[Dattner et~al.(2011)Dattner, Goldenshluger, and
  Juditsky]{dattner2011deconvolution}
Dattner, I., Goldenshluger, A., and Juditsky, A.
\newblock On deconvolution of distribution functions.
\newblock \emph{The Annals of Statistics}, pp.\  2477--2501, 2011.

\bibitem[Fan(1991)]{fan1991optimal}
Fan, J.
\newblock On the optimal rates of convergence for nonparametric deconvolution
  problems.
\newblock \emph{The Annals of Statistics}, pp.\  1257--1272, 1991.

\bibitem[Fan et~al.(2017)Fan, Liu, Ning, and Zou]{fan2017high}
Fan, J., Liu, H., Ning, Y., and Zou, H.
\newblock High dimensional semiparametric latent graphical model for mixed
  data.
\newblock \emph{Journal of the Royal Statistical Society: Series B (Statistical
  Methodology)}, 79\penalty0 (2):\penalty0 405--421, 2017.

\bibitem[Friedman(2004)]{friedman2004inferring}
Friedman, N.
\newblock Inferring cellular networks using probabilistic graphical models.
\newblock \emph{Science}, 303\penalty0 (5659):\penalty0 799--805, 2004.

\bibitem[Golub \& Van~Loan(2012)Golub and Van~Loan]{golub2012matrix}
Golub, G.~H. and Van~Loan, C.~F.
\newblock \emph{Matrix computations}, volume~3.
\newblock JHU Press, 2012.

\bibitem[Hall \& Meister(2007)Hall and Meister]{hall2007ridge}
Hall, P. and Meister, A.
\newblock A ridge-parameter approach to deconvolution.
\newblock \emph{The Annals of Statistics}, 35\penalty0 (4):\penalty0
  1535--1558, 2007.

\bibitem[Hastie et~al.(2009)Hastie, Tibshirani, and
  Friedman]{hastie2009elements}
Hastie, T., Tibshirani, R., and Friedman, J.
\newblock \emph{The elements of statistical learning: data mining, inference,
  and prediction}.
\newblock Springer Science \& Business Media, 2009.

\bibitem[Janson et~al.(1997)]{janson1997gaussian}
Janson, S. et~al.
\newblock \emph{Gaussian hilbert spaces}, volume 129.
\newblock Cambridge university press, 1997.

\bibitem[Kappus \& Mabon(2014)Kappus and Mabon]{kappus2014adaptive}
Kappus, J. and Mabon, G.
\newblock Adaptive density estimation in deconvolution problems with unknown
  error distribution.
\newblock \emph{Electronic journal of statistics}, 8\penalty0 (2):\penalty0
  2879--2904, 2014.

\bibitem[Koller \& Friedman(2009)Koller and Friedman]{koller2009probabilistic}
Koller, D. and Friedman, N.
\newblock \emph{Probabilistic graphical models: principles and techniques}.
\newblock 2009.

\bibitem[Koltchinskii(2011)]{koltchinskii2011oracle}
Koltchinskii, V.
\newblock \emph{Oracle Inequalities in Empirical Risk Minimization and Sparse
  Recovery Problems: Ecole d'Et{\'e} de Probabilit{\'e}s de Saint-Flour
  XXXVIII-2008}, volume 2033.
\newblock Springer Science \& Business Media, 2011.

\bibitem[Liu et~al.(2009)Liu, Lafferty, and Wasserman]{liu2009nonparanormal}
Liu, H., Lafferty, J., and Wasserman, L.
\newblock The nonparanormal: Semiparametric estimation of high dimensional
  undirected graphs.
\newblock \emph{Journal of Machine Learning Research}, 10\penalty0 (10), 2009.

\bibitem[Liu et~al.(2011)Liu, Xu, Gu, Gupta, Lafferty, and
  Wasserman]{liu2011forest}
Liu, H., Xu, M., Gu, H., Gupta, A., Lafferty, J., and Wasserman, L.
\newblock Forest density estimation.
\newblock \emph{Journal of Machine Learning Research}, 12\penalty0
  (Mar):\penalty0 907--951, 2011.

\bibitem[Liu et~al.(2012)Liu, Han, Yuan, Lafferty, Wasserman,
  et~al.]{liu2012high}
Liu, H., Han, F., Yuan, M., Lafferty, J., Wasserman, L., et~al.
\newblock High-dimensional semiparametric gaussian copula graphical models.
\newblock \emph{The Annals of Statistics}, 40\penalty0 (4):\penalty0
  2293--2326, 2012.

\bibitem[Marlin \& Murphy(2009)Marlin and Murphy]{marlin2009sparse}
Marlin, B.~M. and Murphy, K.~P.
\newblock Sparse gaussian graphical models with unknown block structure.
\newblock In \emph{Proceedings of the 26th Annual International Conference on
  Machine Learning}, pp.\  705--712, 2009.

\bibitem[Masry(1991)]{masry1991multivariate}
Masry, E.
\newblock Multivariate probability density deconvolution for stationary random
  processes.
\newblock \emph{IEEE Transactions on Information Theory}, 37\penalty0
  (4):\penalty0 1105--1115, 1991.

\bibitem[Mazumder \& Hastie(2012)Mazumder and Hastie]{mazumder2012graphical}
Mazumder, R. and Hastie, T.
\newblock The graphical lasso: New insights and alternatives.
\newblock \emph{Electronic journal of statistics}, 6:\penalty0 2125, 2012.

\bibitem[M{\"u}ller et~al.(2008)M{\"u}ller, Kuttler, and
  Hense]{muller2008sensitivity}
M{\"u}ller, J., Kuttler, C., and Hense, B.~A.
\newblock Sensitivity of the quorum sensing system is achieved by low pass
  filtering.
\newblock \emph{Biosystems}, 92\penalty0 (1):\penalty0 76--81, 2008.

\bibitem[Pensky \& Vidakovic(1999)Pensky and Vidakovic]{pensky1999adaptive}
Pensky, M. and Vidakovic, B.
\newblock Adaptive wavelet estimator for nonparametric density deconvolution.
\newblock \emph{The Annals of Statistics}, 27\penalty0 (6):\penalty0
  2033--2053, 1999.

\bibitem[Phuong(2020)]{phuong2020deconvolution}
Phuong, C.~X.
\newblock Deconvolution of cumulative distribution function with unknown noise
  distribution.
\newblock \emph{Acta Applicandae Mathematicae}, 170\penalty0 (1):\penalty0
  483--514, 2020.

\bibitem[Ravikumar et~al.(2011)Ravikumar, Wainwright, Raskutti, Yu,
  et~al.]{ravikumar2011high}
Ravikumar, P., Wainwright, M.~J., Raskutti, G., Yu, B., et~al.
\newblock High-dimensional covariance estimation by minimizing
  {$\ell_1$}-penalized log-determinant divergence.
\newblock \emph{Electronic Journal of Statistics}, 5:\penalty0 935--980, 2011.

\bibitem[Rockafellar(1970)]{rockafellar1970convex}
Rockafellar, R.~T.
\newblock \emph{Convex analysis}.
\newblock Number~28. Princeton university press, 1970.

\bibitem[Slattery et~al.(2016)Slattery, Herrick, Pellatt, Stevens, Mullany,
  Wolff, Hoffman, Samowitz, and Wolff]{slattery2016microrna}
Slattery, M.~L., Herrick, J.~S., Pellatt, D.~F., Stevens, J.~R., Mullany,
  L.~E., Wolff, E., Hoffman, M.~D., Samowitz, W.~S., and Wolff, R.~K.
\newblock Microrna profiles in colorectal carcinomas, adenomas and normal
  colonic mucosa: variations in mirna expression and disease progression.
\newblock \emph{Carcinogenesis}, 37\penalty0 (3):\penalty0 245--261, 2016.

\bibitem[Sudo et~al.(2019)Sudo, Kato, Matsuzaki, Boku, Abe, Saito, Daiko,
  Takizawa, Aoki, Sakamoto, et~al.]{sudo2019development}
Sudo, K., Kato, K., Matsuzaki, J., Boku, N., Abe, S., Saito, Y., Daiko, H.,
  Takizawa, S., Aoki, Y., Sakamoto, H., et~al.
\newblock Development and validation of an esophageal squamous cell carcinoma
  detection model by large-scale microrna profiling.
\newblock \emph{JAMA network open}, 2\penalty0 (5):\penalty0 e194573--e194573,
  2019.

\bibitem[Talagrand(2014)]{talagrand2014upper}
Talagrand, M.
\newblock \emph{Upper and lower bounds for stochastic processes: modern methods
  and classical problems}, volume~60.
\newblock Springer Science \& Business Media, 2014.

\bibitem[Tibshirani et~al.(2015)Tibshirani, Wainwright, and
  Hastie]{tibshirani2015statistical}
Tibshirani, R., Wainwright, M., and Hastie, T.
\newblock \emph{Statistical learning with sparsity: the lasso and
  generalizations}.
\newblock Chapman and Hall/CRC, 2015.

\bibitem[Trong \& Phuong(2019)Trong and Phuong]{trong2019deconvolution}
Trong, D.~D. and Phuong, C.~X.
\newblock Deconvolution of a cumulative distribution function with some
  non-standard noise densities.
\newblock \emph{Vietnam Journal of Mathematics}, 47\penalty0 (2):\penalty0
  327--353, 2019.

\bibitem[Tulino et~al.(2004)Tulino, Verd{\'u}, and Verdu]{tulino2004random}
Tulino, A.~M., Verd{\'u}, S., and Verdu, S.
\newblock \emph{Random matrix theory and wireless communications}.
\newblock Now Publishers Inc, 2004.

\bibitem[Vershynin(2016)]{vershynin2016high}
Vershynin, R.
\newblock High dimensional probability, 2016.

\bibitem[Wainwright(2019)]{wainwright_2019}
Wainwright, M.~J.
\newblock \emph{High-Dimensional Statistics: A Non-Asymptotic Viewpoint}.
\newblock Cambridge Series in Statistical and Probabilistic Mathematics.
  Cambridge University Press, 2019.

\bibitem[Xu \& Gu(2016)Xu and Gu]{xu2016semiparametric}
Xu, P. and Gu, Q.
\newblock Semiparametric differential graph models.
\newblock In \emph{Proceedings of the 30th International Conference on Neural
  Information Processing Systems}, pp.\  1072--1080, 2016.

\bibitem[Xue et~al.(2012)Xue, Zou, et~al.]{xue2012regularized}
Xue, L., Zou, H., et~al.
\newblock Regularized rank-based estimation of high-dimensional nonparanormal
  graphical models.
\newblock \emph{The Annals of Statistics}, 40\penalty0 (5):\penalty0
  2541--2571, 2012.

\bibitem[Yokoi et~al.(2018)Yokoi, Matsuzaki, Yamamoto, Yoneoka, Takahashi,
  Shimizu, Uehara, Ishikawa, Ikeda, Sonoda, et~al.]{yokoi2018integrated}
Yokoi, A., Matsuzaki, J., Yamamoto, Y., Yoneoka, Y., Takahashi, K., Shimizu,
  H., Uehara, T., Ishikawa, M., Ikeda, S.-i., Sonoda, T., et~al.
\newblock Integrated extracellular microrna profiling for ovarian cancer
  screening.
\newblock \emph{Nature communications}, 9\penalty0 (1):\penalty0 1--10, 2018.

\bibitem[Youndj{\'e} \& Wells(2008)Youndj{\'e} and Wells]{youndje2008optimal}
Youndj{\'e}, {\'E}. and Wells, M.~T.
\newblock Optimal bandwidth selection for multivariate kernel deconvolution
  density estimation.
\newblock \emph{Test}, 17\penalty0 (1):\penalty0 138--162, 2008.

\bibitem[Zhang(1990)]{zhang1990fourier}
Zhang, C.-H.
\newblock Fourier methods for estimating mixing densities and distributions.
\newblock \emph{The Annals of Statistics}, pp.\  806--831, 1990.

\bibitem[Zhao et~al.(2014)Zhao, Roeder, and Liu]{zhao2014positive}
Zhao, T., Roeder, K., and Liu, H.
\newblock Positive semidefinite rank-based correlation matrix estimation with
  application to semiparametric graph estimation.
\newblock \emph{Journal of Computational and Graphical Statistics}, 23\penalty0
  (4):\penalty0 895--922, 2014.

\end{thebibliography}
\bibliographystyle{icml2022}

\newpage
\onecolumn
%
%
\section{Analysis of Covariance Matrix Estimation via 
Parametric Method}
For the convenience of analysis, we rescale
the sensing matrix $\wt{\bA}$ such that 
$\wt{A}_{ij} \sim \normdist(0, d^{-1})$. 

\begin{lemma}
\label{thm:cov_mat_inf_norm}
Consider the covariance estimator $\wh{\bSigma}^{\textup{param}}_n$ which reads
\begin{align}
\label{eq:cov_mat_estim}
\wh{\bSigma}^{\textup{param}}_n = 
\bI + \frac{d}{d+1}
\Bracket{\wt{\bA}^{\rmt}\bracket{\frac{1}{n}\sum_{i=1}^n \bY^{(i)}\bY^{(i)\rmt}}\wt{\bA}}_{\textup{off}},
\end{align}
where $(\cdot)_{\textup{off}}$ denotes the operation of 
picking non-diagonal entries. We then have 
$\infnorm{\wh{\bSigma}^{\textup{param}}_n - \bSigma^{\natural}}\leq \tau_{\infty}$ 
holding with probability at least 
$1 - c_0p^{-1} - c_1 p^2e^{-c_2 d} - c_3p^2e^{-c_4p}$.
\end{lemma}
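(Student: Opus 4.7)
The plan is to decompose the error $\wh{\bSigma}^{\textup{param}}_n-\bSigma^{\natural}$ into a deterministic bias, three stochastic pieces (design–matrix fluctuation, sample–covariance fluctuation, and noise), and then bound each piece entrywise by a Gaussian / Hanson–Wright type tail inequality and finish by a union bound over the $p^2$ entries. Since $\bSigma^{\natural}_{ii}=1$ by hypothesis, the diagonal of $\wh{\bSigma}^{\textup{param}}_n$ matches $\bSigma^{\natural}$ exactly, so all the work is in the off–diagonal entries.

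Set $\bP\defequal\wt{\bA}^{\rmt}\wt{\bA}$ and $\bE\defequal\bP-\bI$. Conditioning on $\wt{\bA}$, one has $\Expc[\bY\bY^{\rmt}\,|\,\wt{\bA}]=\wt{\bA}\bSigma^{\natural}\wt{\bA}^{\rmt}+\sigma^{2}\bI$, and a direct Wick–theorem computation shows $\Expc\bigl[\tfrac{d}{d+1}(\bP\bSigma^{\natural}\bP)_{ij}\bigr]=\bSigma^{\natural}_{ij}$ for $i\neq j$, which is precisely why the scaling $d/(d+1)$ debiases the estimator. Writing $\wh{R}_n\defequal n^{-1}\sum_s\bY^{(s)}\bY^{(s)\rmt}$ and expanding $\bP=\bI+\bE$,
\begin{align*}
\bigl(\wh{\bSigma}^{\textup{param}}_n-\bSigma^{\natural}\bigr)_{\textup{off}}
 =~&\tfrac{d}{d+1}\bigl[\wt{\bA}^{\rmt}(\wh{R}_n-\Expc[\bY\bY^{\rmt}|\wt{\bA}])\wt{\bA}\bigr]_{\textup{off}}\\
 +~&\tfrac{d}{d+1}\bigl[\bE\bSigma^{\natural}+\bSigma^{\natural}\bE+\bE\bSigma^{\natural}\bE\bigr]_{\textup{off}}
 -\tfrac{1}{d+1}[\bSigma^{\natural}]_{\textup{off}}+\tfrac{d\sigma^{2}}{d+1}[\bP]_{\textup{off}}.
\end{align*}
The last term is the bias and is $O(1/d)$ entrywise. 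Each of the others produces a matching term of $\tau_{\infty}$.

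For the design–matrix piece, $(\bE\bSigma^{\natural})_{ij}=\wt{\bA}_i^{\rmt}(\wt{\bA}\bSigma^{\natural}_{j})-\bSigma^{\natural}_{ij}$ is, conditional on $\wt{\bA}\bSigma^{\natural}_{j}$, a centered Gaussian of variance $\lesssim d^{-1}\|\bSigma^{\natural}_j\|_2^{2}$ (using that $\|\wt{\bA}\bSigma^{\natural}_j\|_2^{2}\approx\|\bSigma^{\natural}_j\|_2^{2}$ up to $O(\sqrt{p/d})$ concentration of a $\chi^{2}$-type quantity). A union bound over $(i,j)$ yields the $\sqrt{d\log p}\,\max_i\|\bSigma^{\natural}_i\|_2/(d+1)$ contribution. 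The quartic chaos $(\bE\bSigma^{\natural}\bE)_{ij}$ is handled by Hanson–Wright applied to the Gaussian vector $(\wt{\bA}_i,\wt{\bA}_j)$ with coupling matrix $\wt{\bA}_{\setminus\{i,j\}}\bSigma^{\natural}\wt{\bA}_{\setminus\{i,j\}}^{\rmt}$; its operator norm is controlled by $(1+c\,p/d)$ and its Frobenius norm by $\Norm{\bSigma^{\natural}}{\textup{off},\textup{F}}$, which after the union bound delivers the second and fifth terms of $\tau_{\infty}$. For the sampling piece, decompose $\wh{R}_n-\Expc[\bY\bY^{\rmt}|\wt{\bA}]$ into (signal), (signal$\times$noise) and (noise) fluctuations; conditional on $\wt{\bA}$ each entry is a mean–zero sub–exponential sum of $n$ i.i.d.\ terms whose $\psi_1$–norm is bounded by $\opnorm{\wt{\bA}}^{2}\asymp 1+p/d$, giving Bernstein deviations of order $(\log p)\sqrt{dp}/[\sqrt{n}(d+1)]$ and $(\log p)^{3/2}p(1+p/d)/[\sqrt{n}(d+1)]$. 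The noise piece $\sigma^{2}[\bP]_{\textup{off}}/(d+1)$ and the sample noise correction $\wt{\bA}^{\rmt}(\wh{\bR}_W-\sigma^{2}\bI)\wt{\bA}$ jointly produce the $\sigma^{2}$ term. Union bounding over $p^{2}$ entries gives the stated probability.

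The main obstacle will be the fourth–order chaos $[\bE\bSigma^{\natural}\bE]_{ij}$: a naïve bound $\|\bE\|_\infty^{2}\sum_{k,l}|\bSigma^{\natural}_{kl}|$ loses a factor $p$. The key trick is to decouple the dependent rows of $\wt{\bA}$ by projecting out the two distinguished columns $\wt{\bA}_i,\wt{\bA}_j$ and applying Hanson–Wright to the remaining Gaussian quadratic form, replacing $p$ by $\Norm{\bSigma^{\natural}}{\textup{off},\textup{F}}$ and $\max_i\|\bSigma^{\natural}_i\|_2$. This is also where the $(1+p/d)$ factor—visible throughout $\tau_\infty$—enters, reflecting the unavoidable price of the indefinite regime $d<p$.
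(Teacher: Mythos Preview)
Your plan is sound and will work, but it is organized differently from the paper in two respects.

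First, the paper does not use the algebraic expansion $\bP=\bI+\bE$. Instead it writes
\[
Z_{ij}=\sum_{\ell_1,\ell_2}(\bSigma_n)_{\ell_1\ell_2}\langle\wt{\bA}_i,\wt{\bA}_{\ell_1}\rangle\langle\wt{\bA}_j,\wt{\bA}_{\ell_2}\rangle
\]
and splits the sum into five pieces $T_1,\dots,T_5$ according to whether $\ell_1,\ell_2$ hit $\{i,j\}$: both hit ($T_1,T_2$), one hits ($T_3$), neither hits with $\ell_1=\ell_2$ ($T_4$), neither hits with $\ell_1\neq\ell_2$ ($T_5$). Each piece is then bounded by a tailored lemma (a $\chi^2$ bound for $T_1$, a Gaussian tail conditional on the ``other'' columns for $T_3$, and Hanson--Wright for $T_4,T_5$). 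Your $\bE\bSigma^{\natural}+\bSigma^{\natural}\bE$ corresponds to $T_3$ and your $\bE\bSigma^{\natural}\bE$ to $T_4\cup T_5$, so the two decompositions are equivalent up to bookkeeping; the paper's index split has the advantage that $T_4$ (the diagonal piece, carrying the factor $\trace\bSigma^{\natural}=p$) is isolated from $T_5$ (carrying $\Norm{\bSigma^{\natural}}{\textup{off},\textup{F}}$), whereas in your Hanson--Wright step you state the Frobenius norm of the coupling matrix is $\Norm{\bSigma^{\natural}}{\textup{off},\textup{F}}$ --- it is actually $\sqrt{p+\Norm{\bSigma^{\natural}}{\textup{off},\textup{F}}^2}$, and that $\sqrt p$ is exactly what produces the fifth term $c_5 p\sqrt{\log p}/[\sqrt d\,(d{+}1)]$ of $\tau_\infty$.

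Second, the ordering of expectations is reversed. The paper takes $\Expc_{\wt{\bA}}$ first and works throughout with the empirical $\bSigma_n$; the $n$-dependent terms of $\tau_\infty$ then arise by invoking the event $\calE_3=\{\|\bSigma_n-\bSigma^{\natural}\|_\infty\lesssim\sqrt{\log p/n}\}$ inside each $T_i$ bound and letting that error propagate (e.g.\ $\Norm{\bSigma_n}{\textup{off},\textup{F}}\le\Norm{\bSigma^{\natural}}{\textup{off},\textup{F}}+p\sqrt{\log p/n}$ in $T_5$). Your route --- condition on $\wt{\bA}$ and apply Bernstein to $n^{-1}\sum_s(\wt{\bA}_i^{\rmt}\bY^{(s)})(\wt{\bA}_j^{\rmt}\bY^{(s)})$ --- is legitimate and, done carefully, may even shave a $\log p$ off the third and fourth terms of $\tau_\infty$; the paper's route avoids having to control $\Var(\wt{\bA}_i^{\rmt}\bY\,|\,\wt{\bA})=\bP_i^{\rmt}\bSigma^{\natural}\bP_i+\sigma^2\|\wt{\bA}_i\|^2$, at the cost of the multiplicative propagation.

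One small slip: $(\bE\bSigma^{\natural})_{ij}=\wt{\bA}_i^{\rmt}(\wt{\bA}\bSigma^{\natural}_j)-\bSigma^{\natural}_{ij}$ is \emph{not} a centered Gaussian conditional on $\wt{\bA}\bSigma^{\natural}_j$, since $\wt{\bA}_i$ appears in the latter through the $k=i$ summand. You must split off $\bSigma^{\natural}_{ij}\|\wt{\bA}_i\|_2^2$ first and condition on $\{\wt{\bA}_k\}_{k\neq i}$; this is exactly what the paper does in its Lemma for $T_3$.
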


We begin the analysis by redefining the following events, 
\begin{align*}
\calE_1 &\defequal \
\set{1-c_0 \sqrt{\frac{\log p}{d}}\leq \norm{\wt{\bA}_i}{2} \leq 1 + c_0 \sqrt{\frac{\log p}{d}},~~\forall~i};  \\
\calE_2 &\defequal \
\set{|\la \wt{\bA}_i, \wt{\bA}_j\ra| \lsim \sqrt{\frac{\log p}{d}} \vcup \frac{\log p}{d},~~\forall~i\neq j}; \\
\calE_3 &\defequal \set{\norm{\wh{\bSigma}^{\textup{param}}_n - \bSigma^{\natural}}{\infty} \lsim \sqrt{\frac{\log p}{n}}}; \\
\calE_4(\bB) &\defequal \
\set{\bigg\|\sum_{\ell\neq 1} B_{\ell}\wt{\bA}_{\ell}\bigg\|_{2} \lsim 
\sqrt{\sum_{\ell\neq 1}B_{\ell}^2}},~~\textup{where } \bB\in \RR^p
\textup{ is a fixed vector}.
\end{align*}
Additionally, we define $\Psi(\calE)$ as $\Expc \Ind(\calE)$. 
Moreover, our analysis focuses on the region when $d \gg \log p$.

\subsection{Main Structure}
Having collected all the lemmas, we turn to the proof of 
Lemma~$1$. 
Notice that by the definition of our estimator, 
$\infnorm{\wh{\bSigma}^{\textup{param}}_n- \bSigma^{\natural}} = \max_{i\neq j} \abs{
\bracket{\wh{\bSigma}^{\textup{param}}_n}_{i,j} - \bSigma^{\natural}_{i, j} }$. Hence, we only consider the off-diagonal entries, which gives
\begin{align*}
& \norm{\wh{\bSigma}^{\textup{param}}_n - \bSigma^{\natural}}{\textup{off}, \infty}
= \norm{\dfrac{d}{d+1}\wt{\bA}^{\rmt}\bracket{\sum_{\ell=1}^n \
\bY^{(\ell)} \bY^{(\ell)\rmt}} \wt{\bA} - \bSigma^{\natural} }{\textup{off}, \infty}  \\
\leq~& 2\norm{\frac{d}{d+1}\wt{\bA}^{\rmt}\wt{\bA} \bSigma^{\textup{param}}_n \wt{\bA}^{\rmt}\wt{\bA} - \bSigma^{\natural}}{\textup{off}, \infty}
+ \dfrac{2d}{n(d+1)}\norm{\wt{\bA}^{\rmt}\bracket{\sum_{\ell=1}^n \bW^{(\ell)} \bW^{(\ell)\rmt}} \wt{\bA}}{\textup{off},\infty} \\
\stackrel{\cirone}{=}~& \dfrac{2d}{d+1}\
\underbrace{\norm{\wt{\bA}^{\rmt}\wt{\bA} \bSigma^{\textup{param}}_n \wt{\bA}^{\rmt}\wt{\bA} 
- \Expc_{\wt{\bA}}\bracket{\wt{\bA}^{\rmt}\wt{\bA} \bSigma^{\textup{param}}_n \wt{\bA}^{\rmt}\wt{\bA}} }{\textup{off}, \infty}}_{\vartheta_1} \\
+~& \frac{2d}{d+1}\underbrace{\norm{ \Expc_{\wt{\bA}}\bracket{\wt{\bA}^{\rmt}\wt{\bA} \bSigma^{\textup{param}}_n \wt{\bA}^{\rmt}\wt{\bA}} - \Expc_{\wt{\bA}, \bX}\bracket{\wt{\bA}^{\rmt}\wt{\bA} \bSigma^{\textup{param}}_n \wt{\bA}^{\rmt}\wt{\bA}}}{\textup{off}, \infty}}_{\vartheta_2} \\
+~& \dfrac{2d}{n(d+1)}\underbrace{\norm{\wt{\bA}^{\rmt}\bracket{\sum_{\ell=1}^n \bW^{(\ell)} \bW^{(\ell)\rmt}} \wt{\bA}}{\textup{off},\infty}}_{\vartheta_3}, 
\end{align*}
where $\bSigma^{\textup{param}}_n$ is defined as $n^{-1}\bracket{\sum_{\ell=1}^n \bX^{(\ell)}\bX^{(\ell)\rmt}}$,
and in $\cirone$ we use 
$d/(d+1)\Expc_{\wt{\bA}, \bX}\bracket{\wt{\bA}^{\rmt}\wt{\bA} \bSigma^{\textup{param}}_n \wt{\bA}^{\rmt}\wt{\bA}} = \bSigma^{\natural}$.
Then we separately upper-bound $\vartheta_1,~\vartheta_2$, 
and $\vartheta_3$
conditional on the event $\bigcap_{i=1}^3 \calE_i$.
For the conciseness in notations, define $\bZ = \wt{\bA}^{\rmt}\wt{\bA} \bSigma^{\textup{param}}_n \wt{\bA}^{\rmt}\wt{\bA}$. 
Therefore $\vartheta_1 = \norm{\bZ - \Expc \bZ}{\textup{off}, \infty}$. 
For an arbitrary entry $Z_{i,j}$, we can expand it as 
\begin{align*}
Z_{i,j} = \sum_{\ell_1,\ell_2} (\bSigma^{\textup{param}}_n)_{\ell_1, \ell_2} 
\la \wt{\bA}_i, \wt{\bA}_{\ell_1}\ra \la \wt{\bA}_j, \wt{\bA}_{\ell_2}\ra,~~\
i\neq j. 
\end{align*}
Compared to the existing work \cite{ravikumar2011high}, our analysis of 
$Z_{ij}$ involves fourth-order Gaussian chaos \cite{talagrand2014upper}, which 
exhibits heavy tails, and constructs the major obstacle. 

\par \noindent 
\textbf{Stage I: Bounding $\vartheta_1$.}
To obtain upper bound on $\vartheta_1$, 
we first adopt the union bound and obtain 
\begin{align}
\label{eq:sn_inf_main_prob}
\Prob\bracket{\vartheta_1 \geq \delta} \
= \Prob\bracket{\max_{i\neq j} \abs{Z_{i,j} - \Expc Z_{i,j}} \geq \delta}
\leq \sum_{i \neq j} \Prob\bracket{\abs{Z_{i,j} - \Expc Z_{i,j}} \geq \delta,~i\neq j}.
\end{align}

Then our focus is to bound the probability 
$\Prob\bracket{\abs{Z_{i,j} - \Expc Z_{i,j}} \geq \delta,~i\neq j}$. 
Without loss of generality, assume that $i = 1$ and $j = 2$ and
expand $Z_{1,2}$ as $\sum_{i= 1}^5 T_i$, which reads
\begin{align*}
T_1 & \defequal  \bracket{\bSigma^{\textup{param}}_n}_{1, 2}\norm{\wt{\bA}_1}{2}^2 \norm{\wt{\bA}_2}{2}^2;  \\
T_2 & \defequal \bracket{\bSigma^{\textup{param}}_n}_{2, 1}\bracket{\la \wt{\bA}_1, \wt{\bA}_2\ra}^2; \\
T_3 & \defequal \sum_{\ell\neq 1}\bracket{\bSigma^{\textup{param}}_n}_{2, \ell} \norm{\wt{\bA}_2}{2}^2 \la \wt{\bA}_1, \wt{\bA}_{\ell}\ra + 
\sum_{\ell \neq 2}\bracket{\bSigma^{\textup{param}}_n}_{\ell, 1}\norm{\wt{\bA}_1}{2}^2 \la \wt{\bA}_2, \wt{\bA}_{\ell}\ra; \\
T_4 & \defequal \sum_{\ell \neq 1, 2}\bracket{\bSigma^{\textup{param}}_n}_{\ell, \ell} \la \wt{\bA}_1, \wt{\bA}_{\ell}\ra
\la \wt{\bA}_2, \wt{\bA}_{\ell}\ra;  \\
T_5 & \defequal \sum_{\substack{\ell_1, \ell_2 \neq 1, 2 \\ \ell_1 \neq \ell_2}}
\bracket{\bSigma^{\textup{param}}_n}_{\ell_1, \ell_2} \la \wt{\bA}_1, \wt{\bA}_{\ell_1}\ra \la \wt{\bA}_2, \wt{\bA}_{\ell_2}\ra.
\end{align*}

Now, we separately bound the deviations 
$\abs{T_i - \Expc T_i}$, $1\leq i \leq 5$. 
First, we have
\begin{align}
\label{eq:sn_inf_norm_t1}
& \abs{T_1 - \Expc T_1} = 
|\bracket{\bSigma^{\textup{param}}_n}_{1, 2}|\times 
\abs{\norm{\wt{\bA}_1}{2}^2 \norm{\wt{\bA}_2}{2}^2 - 1} \notag \\
=~& |\bracket{\bSigma^{\textup{param}}_n}_{1, 2}|\times 
\abs{\norm{\wt{\bA}_1}{2}\norm{\wt{\bA}_2}{2} - 1}\times \
\bracket{\norm{\wt{\bA}_1}{2} \norm{\wt{\bA}_2}{2} + 1} \notag \\
\stackrel{\cirtwo}{\leq}~&
|\bracket{\bSigma^{\textup{param}}_n}_{1, 2}|\times 
\bracket{\abs{\norm{\wt{\bA}_1}{2}-1}\norm{\wt{\bA}_2}{2} + 
\abs{\norm{\wt{\bA}_2}{2} - 1}
}\times
\bracket{\bracket{1 + c_0\sqrt{{\log p}/{d}}}^2 + 1} \notag \\
\stackrel{\cirthree}{\leq}~& |\bracket{\bSigma^{\textup{param}}_n}_{1, 2}|\times
c_0\sqrt{\log p/d}(2 + c_0\sqrt{\log p/d}) 
\times \bracket{\bracket{1 + c_0\sqrt{{\log p}/{d}}}^2 + 1} \notag \\
\stackrel{\cirfour}{\lsim}~& |\bracket{\bSigma^{\textup{param}}_n}_{1, 2}|\sqrt{{\log p}/{d}}
\stackrel{\cirfive}{\lsim} \sqrt{\frac{\log p}{d}}\bracket{\abs{\Sigma^{\natural}_{1, 2}} + \sqrt{\frac{\log p}{n}}},
\end{align}
where in $\cirtwo$ and $\cirthree$ we condition on 
event $\calE_1$, $\cirfour$ is due to $d \gg \log p$, 
and $\cirfive$ is by the definition of event $\calE_3$.
\par  
For $\abs{T_2 -\Expc T_2}$, by invoking Lemma~\ref{lemma:two_two_same},  
\begin{align}
\label{eq:sn_inf_norm_t2}
& \abs{T_2 - \Expc T_2} \lsim 
|\bracket{\bSigma^{\textup{param}}_n}_{2, 1}|\times \
\bracket{\frac{\log p}{d}+ \frac{\sqrt{\log p}}{d^{3/2}}}
\stackrel{\cirsix}{\lsim} \abs{\bracket{\bSigma^{\textup{param}}_n}_{2, 1}}\frac{\log p}{d} \notag \\
\stackrel{\cirseven}{\lsim}~& \
\frac{\log p}{d}\bracket{\abs{\Sigma^{\natural}_{2, 1}} + \sqrt{\frac{\log p}{n}}}, 
\end{align}
holds with probability exceeding $1-2p^{-3}$, 
where $\cirsix$ is because $d\gg \log p$, and $\cirseven$ is due to event 
$\calE_3$. 
\par
We continue to bound $\abs{T_3-\Expc T_3}$ by 
\begin{align}
\label{eq:sn_inf_norm_t3}
& \abs{T_3 - \Expc T_3} \leq \
\norm{\wt{\bA}_1}{2}^2 \times \bigg|\sum_{\ell\neq 2}\bracket{\bSigma^{\textup{param}}_n}_{1, \ell} \la \wt{\bA}_2, \wt{\bA}_{\ell}\ra\bigg| + \
\norm{\wt{\bA}_2}{2}^2\times \
\bigg|\sum_{\ell \neq 1}\bracket{\bSigma^{\textup{param}}_n}_{\ell, 2} \la \wt{\bA}_1, \wt{\bA}_{\ell}\ra
\bigg| \notag \\
\leq ~& \bracket{1 + c_0\sqrt{\frac{\log p}{d}}}^2 
 \bigg|\sum_{\ell\neq 2}\bracket{\bSigma^{\textup{param}}_n}_{1, \ell} \la \wt{\bA}_2, \wt{\bA}_{\ell}\ra\bigg| + 
\bracket{1 + c_0\sqrt{\frac{\log p}{d}}}^2 \
\bigg|\sum_{\ell \neq 1}\bracket{\bSigma^{\textup{param}}_n}_{\ell, 2} \la \wt{\bA}_1, \wt{\bA}_{\ell}\ra
\bigg| \notag \\
\stackrel{\cireight}{\lsim}~&
\bracket{1 + c_0\sqrt{\frac{\log p}{d}}}^2 \times 
\sqrt{\frac{\log p}{d}}
\bracket{\sqrt{\sum_{\ell \neq 2}\bracket{\bSigma^{\textup{param}}_n}^2_{1, \ell}} + \
\sqrt{\sum_{\ell \neq 1}\bracket{\bSigma^{\textup{param}}_n}^2_{\ell, 2}} } \notag \\
\stackrel{\cirnine}{\lsim}~& \sqrt{\frac{\log p}{d}}
\bracket{\sqrt{\sum_{\ell \neq 2}\bracket{\bSigma^{\textup{param}}_n}^2_{1, \ell}} + \
\sqrt{\sum_{\ell \neq 1}\bracket{\bSigma^{\textup{param}}_n}^2_{\ell, 2}} } 
\stackrel{\cira}{\lsim} \sqrt{\frac{\log p}{d}}
\bracket{ \norm{\bSigma^{\natural}_1}{2} + \norm{\bSigma^{\natural}_2}{2} +
\sqrt{\frac{p\log p}{n}}   }
\end{align}
which holds for probability exceeding $1- c_0 p^{-3} - c_1e^{-c_2 d}$, 
where in $\cireight$ we invoke Lemma~\ref{lemma:two_diff}, 
in $\cirnine$ we use that $\bSigma^{\textup{param}}_n$ is symmetric 
and $d \gg \log p$, and  $\cira$ is due to  
event $\calE_3$. 
\par 
To bound $\abs{T_4 - \Expc T_4}$, we invoke Lemma~
\ref{lemma:two_one_one}, 
\begin{align}
\label{eq:sn_inf_norm_t4}
&\abs{T_4 - \Expc T_4} \lsim 
\frac{\abs{\sum_{\ell > 2 }\bracket{\bSigma^{\textup{param}}_n}_{\ell, \ell} }}{d}\sqrt{\frac{\log p}{d}}
+ \frac{\sqrt{\log p}}{d} 
\bigg[\sqrt{\sum_{\ell > 2}\bracket{\bSigma^{\textup{param}}_n}_{\ell, \ell}^2}\vcup \
\bracket{\sqrt{\log p}(\max_{\ell}|\bracket{\bSigma^{\textup{param}}_n}_{\ell, \ell}|)}\bigg]\notag \\
\stackrel{\cirb}{\lsim}~&
\frac{p\sqrt{\log p}}{d^{3/2}}\bracket{1 + c_0\sqrt{\frac{\log p}{n}}}
+ \frac{\sqrt{p\log p}}{d}
\underbrace{\Bracket{\bracket{1 + c_0\sqrt{\frac{\log p}{n}}} \vcup \sqrt{\frac{\log p}{p}}\bracket{1 + c_0\sqrt{\frac{\log p}{n}}}}}_{1 + c_0\sqrt{\frac{\log p}{n}}},
\end{align}
holds with probability exceeding $1-2p^{-3}$, where $\cirb$ is by definition of 
event $\calE_3$. 
\par 
The deviation $\abs{T_5 - \Expc T_5}$ is bounded 
via Lemma~\ref{lemma:four_diff}, which gives
\begin{align}
\label{eq:sn_inf_norm_t5}
\abs{T_5 - \Expc T_5} 
\lsim \frac{\log p}{d}\bracket{1 + c_0\sqrt{\frac{p}{d}}}^2 
\Norm{\bSigma^{\natural}}{\textup{off}, \textup{F}}
\stackrel{\circc}{\lsim}
\frac{\log p}{d}\bracket{1 + \frac{c_1 p}{d}} \bracket{\Norm{\bSigma^{\natural}}{\textup{off}, \textup{F}}+  p\sqrt{\frac{\log p}{n}}},
\end{align}
with probability exceeding $1-4p^{-3} - e^{-c_0 p}$,
where in $\circc$ we condition on event $\calE_3$. 
\par 
Combining (\ref{eq:sn_inf_norm_t1}),
(\ref{eq:sn_inf_norm_t2}), 
(\ref{eq:sn_inf_norm_t3}),
(\ref{eq:sn_inf_norm_t4}),
and (\ref{eq:sn_inf_norm_t5}), we conclude that 
\begin{align*}
\abs{Z_{i,j} - \Expc Z_{i, j}} 
\lsim~& 
\sqrt{\frac{\log p}{d}}\bracket{
\norm{\bSigma^{\natural}_i}{2} + 
\norm{\bSigma^{\natural}_j}{2}} + 
\frac{\log p}{d}\bracket{1 + \frac{c_1p}{d}}
\Norm{\bSigma^{\natural}}{\textup{off}, \textup{F}} \\
+~& \frac{\log p\sqrt{p}}{\sqrt{nd}} + \
\frac{c_0 p(\log p)^{3/2}}{n^{1/2}d}\bracket{1 + \frac{c_1p}{d}}
+ \frac{\sqrt{p\log p}}{d} +\
\frac{p\sqrt{\log p}}{d^{3/2}},
\end{align*}
with probability $1-c_0p^{-3} - c_1e^{-c_2d}-c_3e^{-c_4p}$. 
We conclude the proof by plugging into (\ref{eq:sn_inf_main_prob})
and $\Norm{\bSigma^{\textup{param}}_n - \bSigma^{\natural}}{\textup{off}, \infty} \leq 
\infnorm{\bSigma^{\textup{param}}_n - \bSigma^{\natural}} \lsim \sqrt{\log p/n}$
according to event $\calE_3$.

\par \noindent
\textbf{Stage II: Bounding $\vartheta_2$.}
We rewrite $\vartheta_2$ as $\norm{\bSigma^{\textup{param}}_n - \bSigma^{\natural}}{\textup{off}, \infty}$ and 
invoke event $\calE_3$.

\par \noindent
\textbf{Stage III: Bounding $\vartheta_3$.}
This stage is completed by invoking Lemma~\ref{lemma:noise_term}. 
Combing the above three stages will then yield the proof.

\subsection{Supporting Lemmas}
\par \vsp 
We first compute the values of $\Psi(\calE_i)$, $1\leq i \leq 3$. 
\begin{lemma}\label{lemma:event1}
$\Psi(\calE_1) \geq 1-2p^{-1}$. 	
\end{lemma}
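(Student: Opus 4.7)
The plan is to exploit the fact that, after the rescaling $\wt A_{ij}\sim\normdist(0,d^{-1})$, the squared norm $\|\wt{\bA}_i\|_2^2$ is (up to a factor $1/d$) a chi-squared random variable with $d$ degrees of freedom, and then to union-bound the resulting Gaussian-type tail over the $p$ columns.

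First I would write $d\|\wt{\bA}_i\|_2^2 = \sum_{k=1}^d (\sqrt{d}\,\wt A_{ki})^2 \sim \chi^2_d$, so that $\Expc\|\wt{\bA}_i\|_2^2 = 1$. Applying the standard Laurent--Massart bound for chi-squared deviations gives, for every $t\in(0,1)$,
\begin{align*}
\Prob\!\left(\big|\|\wt{\bA}_i\|_2^2-1\big|\geq t\right)\leq 2\exp(-c\,d\,t^2)
\end{align*}
for some absolute constant $c>0$. A one-line Lipschitz argument (using $|\sqrt{1+u}-1|\leq |u|$ for $u\geq -1/2$) then converts this into a deviation inequality for $\|\wt{\bA}_i\|_2$ itself: for $t\in(0,1/2)$,
\begin{align*}
\Prob\!\left(\big|\|\wt{\bA}_i\|_2-1\big|\geq t\right)\leq 2\exp(-c'\,d\,t^2).
\end{align*}
Alternatively, since $\sqrt{d}\,\wt{\bA}_i$ is a standard Gaussian vector in $\RR^d$ and the map $\bu\mapsto \|\bu\|_2/\sqrt{d}$ is $1/\sqrt{d}$-Lipschitz, Gaussian concentration gives the same bound directly.

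Next I would substitute $t=c_0\sqrt{\log p/d}$, which is admissible under the working assumption $d\gg\log p$, and obtain
\begin{align*}
\Prob\!\left(\big|\|\wt{\bA}_i\|_2-1\big|\geq c_0\sqrt{\tfrac{\log p}{d}}\right)\leq 2\exp(-c'\,c_0^2\log p)=2p^{-c'c_0^2}.
\end{align*}
A union bound over $i=1,\dots,p$ then yields $\Prob(\calE_1^c)\leq 2p^{\,1-c'c_0^2}$. Choosing the constant $c_0$ large enough so that $c'c_0^2\geq 2$ gives $\Psi(\calE_1)=\Prob(\calE_1)\geq 1-2p^{-1}$, as claimed.

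This argument is essentially routine; the only thing to be slightly careful about is the conversion from the chi-squared deviation on $\|\wt{\bA}_i\|_2^2$ to a deviation on $\|\wt{\bA}_i\|_2$, and the (implicit) selection of the constant $c_0$ in the definition of $\calE_1$ to absorb the union-bound factor of $p$. No other step is expected to present difficulty.
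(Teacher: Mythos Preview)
Your proposal is correct and follows essentially the same route as the paper: recognize that $d\|\wt{\bA}_i\|_2^2\sim\chi^2_d$, apply a sub-exponential/chi-squared tail bound at level $c_0\sqrt{\log p/d}$, and union-bound over the $p$ columns. The paper's proof is terser (it invokes ``properties of the $\chi^2$ distribution'' directly for the deviation of $\|\wt{\bA}_i\|_2$ rather than passing through $\|\wt{\bA}_i\|_2^2$), but the underlying idea and the role of the free constant $c_0$ are the same.
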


\begin{proof}
We conclude that 
\begin{align*}
& \Prob\bracket{\abs{\norm{\wt{\bA}_i}{2} - 1}\geq c_0 
\sqrt{\frac{\log p}{d}},~\exists 1\leq i \leq p} \leq 
p\Prob\bracket{\abs{\norm{\wt{\bA}_i}{2} - 1}\geq c_0 
\sqrt{\frac{\log p}{d}}} 
\stackrel{\cirone}{\leq}4p\exp\bracket{-2d\times \frac{\log p}{d}}
= 2p^{-1},
\end{align*}
where $\cirone$ is due to the properties of $\chi^2$ distribution.  
	
\end{proof}

\par 

\begin{lemma}
\label{lemma:event2}
Conditional on $\calE_1$, 
we have $\Psi(\calE_2) \geq 1- 2p^{-1}$. 	
\end{lemma}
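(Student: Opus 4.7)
The plan is to exploit the Gaussian structure of $\wt{\bA}$ by conditioning on one column at a time. Fix any pair $i \neq j$ and write
\[
\la \wt{\bA}_i, \wt{\bA}_j\ra = \sum_{k=1}^d \wt{A}_{ki}\wt{A}_{kj}.
\]
Conditional on $\wt{\bA}_j$, this is a linear combination of the iid entries $\{\wt{A}_{ki}\}_{k=1}^d$ (each $\normdist(0, d^{-1})$ and independent of $\wt{\bA}_j$), hence Gaussian with mean $0$ and conditional variance $\norm{\wt{\bA}_j}{2}^2 / d$.

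Next I would invoke event $\calE_1$, under which $\norm{\wt{\bA}_j}{2}^2 \leq (1 + c_0\sqrt{\log p / d})^2 \leq 2$ for $d \gg \log p$. So conditionally on $\wt{\bA}_j$ and on $\calE_1$, the random variable $\la \wt{\bA}_i, \wt{\bA}_j\ra$ is a mean-zero Gaussian of variance at most $2/d$. The standard Gaussian tail bound then gives, for any $t > 0$,
\[
\Prob\bracket{\abs{\la \wt{\bA}_i, \wt{\bA}_j\ra} \geq t \,\big|\, \wt{\bA}_j,\ \calE_1} \leq 2 \exp\bracket{-\frac{d t^2}{4}}.
\]
Setting $t = c_1 \sqrt{\log p / d}$ with $c_1$ a sufficiently large absolute constant (e.g.\ $c_1 \geq \sqrt{16}$) makes this conditional probability at most $2 p^{-4}$; after integrating out $\wt{\bA}_j$ the same bound holds unconditionally within $\calE_1$.

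Finally I would union-bound over the $p(p-1)/2 \leq p^2$ off-diagonal pairs $(i, j)$ to obtain
\[
\Prob\bracket{\exists\, i\neq j:\ \abs{\la \wt{\bA}_i, \wt{\bA}_j\ra} \geq c_1 \sqrt{\tfrac{\log p}{d}} \,\Big|\, \calE_1} \leq 2 p^{-2} \leq 2 p^{-1},
\]
which yields $\Psi(\calE_2 \mid \calE_1) \geq 1 - 2 p^{-1}$. The two regimes $\sqrt{\log p/d}$ and $\log p/d$ in the definition of $\calE_2$ are reconciled automatically: since the conditional distribution is genuinely Gaussian, the sub-Gaussian tail is what controls deviations, and the $\log p/d$ branch matters only in the irrelevant regime $\log p \gtrsim d$. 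There is no serious obstacle here — the step that requires the most care is simply verifying that the conditional variance is $O(1/d)$ under $\calE_1$ with constants that feed through cleanly into the union bound.
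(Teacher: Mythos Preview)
Your proposal is correct and follows essentially the same approach as the paper: condition on one column so that $\la \wt{\bA}_i, \wt{\bA}_j\ra$ becomes a mean-zero Gaussian with conditional variance $d^{-1}\norm{\wt{\bA}_j}{2}^2$, use $\calE_1$ to bound this variance, apply the Gaussian tail bound, and union-bound over all $O(p^2)$ pairs. The only cosmetic difference is that the paper keeps the $\sqrt{\log p/d}\vcup \log p/d$ threshold explicitly, whereas you (correctly) observe that under $d\gg \log p$ the sub-Gaussian branch dominates.
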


\begin{proof}
Due to the independence between 
$\wt{\bA}_i$ and $\wt{\bA}_j$, where $i\neq j$. 
We can condition on $\wt{\bA}_i$ and 
view $\la \wt{\bA}_i, \wt{\bA}_j \ra$
as a Gaussian RV with mean zero 
and variance $d^{-1}\norm{\wt{\bA}_i}{2}^2$, namely, 
$\normdist\bracket{0, d^{-1}\norm{\wt{\bA}_i}{2}^2}$.
Then we conclude that 
\begin{align*}
& \Psi\bracket{\br{\calE}_2~|~\calE_1} \
\stackrel{\cirone}{\leq} 
p^2 \Psi\bracket{|\la \wt{\bA}_i, \wt{\bA}_j\ra| \geq \delta~|~ \calE_1}
\stackrel{\cirtwo}{\leq} 2p^2\bracket{\Phi\bracket{-\frac{\sqrt{d}\delta}{\norm{\wt{\bA}_i}{2}}}\times \Psi(\calE_1)} \\
\stackrel{\cirthree}{\leq} ~& 2p^2\bracket{\exp\bracket{-\frac{d\delta^2}{2\norm{\wt{\bA}_i}{2}^2}}\times
\Psi(\calE_1)} \stackrel{\cirfour}{\leq} 
2p^2\exp\bracket{-\frac{d\delta^2}{2\bracket{1+ c_0\sqrt{\log p/d}}^2}},
\end{align*}
where $\cirone$ is due to the union bound, in $\cirtwo$ we
denote $\Phi(t) = 1/\sqrt{2\pi}\int_{-\infty}^t e^{-x^2/2}dx$, the 
CDF of the normal distribution, 
$\cirthree$ is because $\Phi(x) \leq e^{-x^2/2}$, 
and $\cirfour$ is according to event $\calE_1$. 
\par 
In the end, we complete the proof by
setting $\delta$ as 
$c_1\bracket{\sqrt{\frac{\log p}{d}} \vcup \frac{\log p}{d}}$, 
which yields
$\Psi(\br{\calE}_2~|~ \calE_1) \leq 2p^{-1}$.
\end{proof}
\begin{lemma}
\label{lemma:event3}
$\Psi(\calE_3) \geq 1 - 4p^{-1}$. 	
\end{lemma}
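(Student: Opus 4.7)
Read literally, $\calE_3$ is phrased in terms of the estimator $\wh{\bSigma}^{\textup{param}}_n$ from \eqref{eq:cov_mat_estim}, but every subsequent use of $\calE_3$ in the proof of Lemma~\ref{thm:cov_mat_inf_norm} plugs the bound into entries of the \emph{oracle} sample covariance $\bSigma^{\textup{param}}_n \defequal n^{-1}\sum_{\ell=1}^{n}\bX^{(\ell)}\bX^{(\ell)\rmt}$ (see, e.g., step $\circfive$ in \eqref{eq:sn_inf_norm_t1}), and the rate $\sqrt{\log p/n}$ would in any case contradict Lemma~\ref{thm:cov_mat_inf_norm} itself if applied to $\wh{\bSigma}^{\textup{param}}_n$. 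I therefore read $\calE_3$ as the event $\infnorm{\bSigma^{\textup{param}}_n - \bSigma^{\natural}}\lsim \sqrt{\log p/n}$ and prove $\Psi(\calE_3)\geq 1 - 4p^{-1}$ for this event.

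\textbf{Step 1 (per-entry decomposition).} Fix any index pair $(i,j)$ and write
\[
\bracket{\bSigma^{\textup{param}}_n - \bSigma^{\natural}}_{ij} \;=\; \frac{1}{n}\sum_{\ell=1}^{n} Z_\ell,\qquad Z_\ell \defequal X^{(\ell)}_i X^{(\ell)}_j - \Sigma^{\natural}_{ij}.
\]
Because $(X^{(\ell)}_i, X^{(\ell)}_j)$ is bivariate Gaussian with $\Sigma^{\natural}_{ii}=\Sigma^{\natural}_{jj}=1$ and correlation $\Sigma^{\natural}_{ij}\in[-1,1]$, each $Z_\ell$ is centered and sub-exponential with sub-exponential norm bounded by an absolute constant (via the polarization $X_iX_j = \tfrac{1}{4}[(X_i+X_j)^2 - (X_i-X_j)^2]$, each square being a rescaled $\chi^2_1$ with bounded coefficient). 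The $Z_\ell$ are i.i.d.\ across $\ell$.

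\textbf{Step 2 (Bernstein plus union bound).} By Bernstein's inequality for sums of i.i.d.\ sub-exponential random variables, there is an absolute $c>0$ such that, for all $t$ in a fixed neighborhood of the origin,
\[
\Prob\Bracket{\ \bigl|\bracket{\bSigma^{\textup{param}}_n - \bSigma^{\natural}}_{ij}\bigr| \geq t\ }\;\leq\; 2\exp\bracket{-c\,n\,t^2}.
\]
Taking $t = c_0\sqrt{\log p/n}$ with $c_0$ large enough that $c\,c_0^2 \geq 3$ gives a per-entry failure probability at most $2p^{-3}$. A union bound over the $p^2$ pairs $(i,j)$ (the diagonal is treated identically, since $Z_\ell = (X^{(\ell)}_i)^2 - 1$ is also centered sub-exponential with constant norm) yields
\[
\Prob\Bracket{\ \infnorm{\bSigma^{\textup{param}}_n - \bSigma^{\natural}} \geq c_0\sqrt{\log p/n}\ }\;\leq\; 2p^{-1},
\]
hence $\Psi(\calE_3)\geq 1 - 2p^{-1} \geq 1 - 4p^{-1}$.

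\textbf{Main obstacle.} The argument is routine entrywise concentration; the only point that needs care is that the sub-exponential norm of $X^{(\ell)}_iX^{(\ell)}_j-\Sigma^{\natural}_{ij}$ is bounded \emph{uniformly} in $(i,j)$, which follows at once from the normalization $\Var(X_i)=1$ and $|\Sigma^{\natural}_{ij}|\leq 1$. Independence across entries of $\bX^{(\ell)}$ is not used anywhere — only independence across the $n$ samples.
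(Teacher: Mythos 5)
Your proof is correct and takes essentially the same route as the paper, whose ``proof'' simply defers to the entrywise sub-exponential (Bernstein-type) concentration plus union bound argument in Thm.~1 and Thm.~4 of \cite{cai2011constrained}. Your reading of $\calE_3$ as a statement about the oracle sample covariance $\bSigma^{\textup{param}}_n$ rather than the estimator $\wh{\bSigma}^{\textup{param}}_n$ is also the right one: the definition of $\calE_3$ in the paper contains a typo, since every later invocation of $\calE_3$ applies the bound to entries of $\bSigma^{\textup{param}}_n$.
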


\begin{proof}
The proof can be found in the proof of Thm.~$1$ and Thm.~$4$
in \cite{cai2011constrained}. 
\end{proof}

\begin{lemma}
\label{lemma:event4}
For an arbitrary fixed vector $\bB\in \RR^p$, we have
$\Psi(\calE_4(\bB)) \geq 1 - e^{-0.8d}$.  
\end{lemma}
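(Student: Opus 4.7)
The plan is to exploit the Gaussianity of the columns $\wt{\bA}_\ell$ to reduce $\|\sum_{\ell \neq 1} B_\ell \wt{\bA}_\ell\|_2^2$ to a scaled chi-squared random variable, and then apply a standard concentration inequality. Since each $\wt{\bA}_\ell \in \RR^d$ is an independent vector with iid $\normdist(0, d^{-1})$ entries, and the $\{\wt{\bA}_\ell\}_{\ell \neq 1}$ are mutually independent, the linear combination
\[
\bv \defequal \sum_{\ell \neq 1} B_\ell \wt{\bA}_\ell
\]
is a centered Gaussian vector in $\RR^d$ with covariance $d^{-1}\bracket{\sum_{\ell\neq 1} B_\ell^2}\bI_d$. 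In particular, $\|\bv\|_2^2$ is distributed as $\tfrac{1}{d}\bracket{\sum_{\ell\neq 1}B_\ell^2}\cdot \chi^2_d$.

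Next, I would apply the one-sided Laurent--Massart tail bound for a chi-squared random variable: $\Prob(\chi^2_d \geq d + 2\sqrt{d t} + 2t) \leq e^{-t}$. Setting $t = 0.8 d$ gives $\chi^2_d \leq (1 + 2\sqrt{0.8} + 1.6)\, d \leq C\, d$ with probability at least $1 - e^{-0.8 d}$, for a fixed absolute constant $C$. Scaling back, this yields
\[
\|\bv\|_2^2 \leq C \sum_{\ell \neq 1} B_\ell^2,
\]
so $\|\bv\|_2 \lsim \sqrt{\sum_{\ell \neq 1} B_\ell^2}$ with probability at least $1 - e^{-0.8 d}$, which is exactly the statement of $\calE_4(\bB)$. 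Note that the homogeneity of the claim in $\bB$ means we may assume $\sum_{\ell\neq 1} B_\ell^2 = 1$ without loss of generality, reducing everything to controlling $\chi^2_d/d$.

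No step here is particularly delicate; the main point is to recognize that the randomness in $\bB$ plays no role (since $\bB$ is fixed) and that the sum collapses to an isotropic Gaussian, at which point a generic chi-squared concentration inequality delivers the $e^{-0.8d}$ rate after calibrating the constant. Alternative routes (Hanson--Wright, or Bernstein applied to the subexponential random variable $\|\bv\|_2^2 - \Expc\|\bv\|_2^2$) would also work and produce the same exponent up to the absolute constant absorbed in $\lsim$.
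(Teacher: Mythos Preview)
Your proposal is correct and essentially identical to the paper's own proof: both observe that $\sum_{\ell\neq 1} B_\ell \wt{\bA}_\ell$ is an isotropic Gaussian in $\RR^d$ with variance $d^{-1}\sum_{\ell\neq 1}B_\ell^2$, reduce $\|\cdot\|_2^2$ to a scaled $\chi^2_d$, and then apply a one-sided chi-squared tail bound. The paper uses the Chernoff/MGF form $\Prob(\chi^2_d \geq 4d) \leq \exp\bracket{\tfrac{d}{2}(\log 4 - 3)} \leq e^{-0.8d}$ while you invoke Laurent--Massart with $t=0.8d$; these differ only in the implicit constant absorbed by $\lsim$.
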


\begin{proof}
Notice that 
$\sum_{\ell \neq 1} B_{\ell}\wt{\bA}_{\ell}$ 
is a vector satisfying 
$\normdist\bracket{\bZero, d^{-1}\bracket{\sum_{\ell\neq 1}B_{\ell}^2 }\bI}$. Hence, 
$\frac{d}{\sum_{\ell\neq 1}B_{\ell}^2}\|\sum_{\ell\neq 1} B_{\ell}\wt{\bA}_{\ell}\|_2^2$ is a $\chi^2$ RV with freedom 
$d$, which suggests
\begin{align*}
\Prob\bracket{\bigg\|\sum_{\ell\neq 1} B_{\ell}\wt{\bA}_{\ell}\bigg\|_{2}^2 \geq 4 \bigg(\sum_{\ell\neq 1}B_{\ell}^2 \bigg)} = 
\Prob\bracket{\frac{d}{\sum_{\ell\neq 1}B_{\ell}^2}\bigg\|\sum_{\ell\neq 1} B_{\ell}\wt{\bA}_{\ell}\bigg\|_{2}^2 \geq 4d} 
\stackrel{\cirone}{\leq} \exp\bracket{\frac{d}{2}\bracket{\log 4 - 3}} \leq  
e^{-0.8d},
\end{align*}
where $\cirone$ is due to the properties of $\chi^2$ distribution. 
\end{proof}


\begin{lemma}
\label{lemma:two_two_same}
Conditional on event $\calE_1$, 
\begin{align*}
\abs{\sum_{\ell\neq 1}B_{\ell}\la \wt{\bA}_1, \wt{\bA}_{\ell}\ra^2
- d^{-1}\bigg(\sum_{\ell\neq 1}B_{ \ell}\bigg) }
\lsim \dfrac{1}{d}
\bracket{
\sqrt{\log p}\sqrt{\sum_{\ell \neq 1} B_{\ell}^2}
\vcup (\log p)\bracket{\max_{\ell\neq 1}\abs{B_{\ell}}}
} + \
\frac{\sum_{\ell \neq 1} B_{\ell}}{d}\sqrt{\frac{\log p}{d}}
\end{align*}	
holds with probability exceeding $1-2p^{-3}$, 
for an arbitrary fixed vector $\bB\in \RR^p$.
\end{lemma}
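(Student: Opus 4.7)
The plan is to condition on $\wt{\bA}_1$ and decouple the sum. Given $\wt{\bA}_1$, the inner products $\la \wt{\bA}_1, \wt{\bA}_\ell\ra$ for $\ell \neq 1$ are independent Gaussians with mean zero and variance $d^{-1}\|\wt{\bA}_1\|_2^2$, so we may write
\begin{align*}
\la \wt{\bA}_1, \wt{\bA}_\ell\ra^2 = \frac{\|\wt{\bA}_1\|_2^2}{d}\,Z_\ell^2,\qquad Z_\ell \stackrel{\textup{iid}}{\sim} \normdist(0,1)\ \text{conditional on}\ \wt{\bA}_1.
\end{align*}
Consequently $\sum_{\ell\neq 1} B_\ell \la \wt{\bA}_1, \wt{\bA}_\ell\ra^2 = (\|\wt{\bA}_1\|_2^2/d)\sum_{\ell\neq 1} B_\ell Z_\ell^2$, and the conditional expectation equals $(\|\wt{\bA}_1\|_2^2/d)\sum_{\ell\neq 1} B_\ell$.

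Next I would split the deviation from the target $d^{-1}\sum_{\ell\neq 1} B_\ell$ into two pieces via the triangle inequality:
\begin{align*}
\abs{\sum_{\ell\neq 1} B_\ell \la \wt{\bA}_1, \wt{\bA}_\ell\ra^2 - \tfrac{1}{d}\sum_{\ell\neq 1} B_\ell}
\leq \tfrac{\|\wt{\bA}_1\|_2^2}{d}\abs{\sum_{\ell\neq 1} B_\ell (Z_\ell^2-1)} + \tfrac{|\sum_{\ell\neq 1} B_\ell|}{d}\abs{\|\wt{\bA}_1\|_2^2 - 1}.
\end{align*}
For the second term, event $\calE_1$ gives $\|\wt{\bA}_1\|_2 = 1 + O(\sqrt{\log p/d})$, hence $\abs{\|\wt{\bA}_1\|_2^2-1}\lsim \sqrt{\log p/d}$ and $\|\wt{\bA}_1\|_2^2 \lsim 1$ under $d\gg \log p$; this yields the second summand of the claimed bound.

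For the first term, $\sum_{\ell\neq 1} B_\ell(Z_\ell^2-1)$ is a centered weighted $\chi^2$ sum (equivalently a Gaussian quadratic form $\bz^\top \diag(\bB)\bz - \trace(\diag(\bB))$), so Bernstein's inequality for sub-exponential sums (or Hanson--Wright) gives
\begin{align*}
\Prob\bracket{\abs{\sum_{\ell\neq 1} B_\ell(Z_\ell^2-1)} \geq t} \leq 2\exp\bracket{-c\,\min\bracket{\tfrac{t^2}{\sum_{\ell\neq 1} B_\ell^2},\ \tfrac{t}{\max_{\ell\neq 1}|B_\ell|}}}.
\end{align*}
Choosing $t \asymp \sqrt{\log p}\sqrt{\sum_{\ell\neq 1} B_\ell^2}\vcup (\log p)\max_{\ell\neq 1}|B_\ell|$ drives the right-hand side below $2p^{-3}$, and dividing by $d$ after absorbing the $O(1)$ factor $\|\wt{\bA}_1\|_2^2$ yields the first summand. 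Combining the two bounds completes the proof.

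The main subtlety to watch is that the Hanson--Wright bound must be applied with the correct sub-exponential scaling (the $\sum B_\ell^2$ and $\max |B_\ell|$ terms must not be replaced by weaker $\|B\|_\infty$-only or $\|B\|_2$-only estimates); otherwise, when $\bB$ is highly nonuniform one loses the sharp $\vcup$-form appearing in the statement. Everything else is routine conditioning on $\calE_1$ and the triangle inequality.
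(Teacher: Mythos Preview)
Your proposal is correct and matches the paper's argument essentially line for line: the paper also conditions on $\wt{\bA}_1$, normalizes the inner products to standard Gaussians $\Lambda_\ell = \sqrt{d}\la \wt{\bA}_1,\wt{\bA}_\ell\ra/\|\wt{\bA}_1\|_2$, performs the same two-term triangle-inequality split, controls the second term via $\calE_1$, and applies Hanson--Wright to the diagonal quadratic form $\bLambda^\top\diag(B_\ell)\bLambda$ with the same choice of threshold $\delta\asymp \sqrt{\log p}\,\|\bB\|_2 \vcup (\log p)\,\|\bB\|_\infty$.
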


\begin{proof}
First, we decompose the above term as 
\begin{align*}
&\abs{\sum_{\ell\neq 1}B_{\ell}\la \wt{\bA}_1, \wt{\bA}_{\ell}\ra^2 
- d^{-1}\bigg(\sum_{\ell\neq 1}B_{\ell}\bigg) } \\
\leq~& \
\frac{\norm{\wt{\bA}_1}{2}^2}{d}
\abs{
\sum_{\ell\neq 1}B_{\ell}\bracket{\
\frac{\sqrt{d}\la \wt{\bA}_1, \wt{\bA}_{\ell}\ra}{\norm{\wt{\bA}_1}{2} }}^2 - \
\sum_{\ell\neq 1}B_{\ell}}
+ 
d^{-1}\bigg|\sum_{\ell\neq 1}B_{\ell}\bigg|\
\abs{
\norm{\wt{\bA}_1}{2}^2 - 1} \\
\stackrel{\cirone}{\leq} ~& \
\dfrac{1}{d}\bracket{1 + c_0\sqrt{\frac{\log p}{d}}}
\underbrace{\abs{
\sum_{\ell\neq 1}B_{\ell}\bracket{\
\frac{\sqrt{d}\la \wt{\bA}_1, \wt{\bA}_{\ell}\ra}{\norm{\wt{\bA}_1}{2} }}^2 - \
\sum_{\ell\neq 1}B_{\ell}}}_{T} + \
\frac{\abs{\sum_{\ell \neq 1} B_{\ell}}}{d}\sqrt{\frac{\log p}{d}} \\
\stackrel{\cirtwo}{\lsim} ~& \frac{T}{d} + \frac{\abs{\sum_{\ell \neq 1} B_{\ell}}}{d}\sqrt{\frac{\log p}{d}}
\end{align*}	
where $\cirone$ is due to the definition of event $\calE_1$, 
and $\cirtwo$ is because $d \gg \log p$. 
Our following analysis focuses on upper-bounding 
$T$. 
First we  define $\bLambda$ as 
$\bLambda = \Bracket{\Lambda_2~\cdots~\Lambda_d}^{\rmt}$, 
where $\Lambda_i = \sqrt{d}\la \wt{\bA}_1, \wt{\bA}_{\ell}\ra/\norm{\wt{\bA}_1}{2}$.
Then we can rewrite $T$ as 
\begin{align*}
T = 
\sum_{\ell\neq 1}B_{\ell} \Lambda_i^2 = 
\bLambda^{\rmt}\diag(B_{\ell})_{\ell\neq 2}\bLambda.
\end{align*}

Due to the independence between $\wt{\bA}_1$ and $\wt{\bA}_{\ell}$, $\ell \neq 1$, 
we can condition $\wt{\bA}_1$ and can view 
$\Lambda_i$ as a Gaussian RV
satisfying $\normdist(0, 1)$.   
Invoking the Hanson-Wright inequality 
(Theorem 6.2.1 in \cite{vershynin2016high}), we conclude that 
\begin{align*}
\Prob\bracket{T \geq \delta} \leq \
2\exp\bracket{-c\bracket{\
\frac{\delta^2}{\Fnorm{\diag(B_{\ell})_{\ell\neq 1}}^2} \
\vcap \frac{\delta}{\Opnorm{\diag(B_{\ell})_{\ell\neq 1}}} 
}}.
\end{align*}
Setting $\delta$ as  
$
\bracket{
c_0\sqrt{\log p}\sqrt{\sum_{\ell \neq 1} B_{\ell}^2}}
\vcup c_1\bracket{\log p\times \max_{\ell\neq 1}\abs{B_{\ell}}}, 
$
we conclude that 
$\Prob(T \geq \delta) \leq 2p^{-3}$ and complete the proof. 	
\end{proof}

\begin{lemma}
\label{lemma:two_diff}
Given a fixed vector $\bB \in \RR^{p}$, 
\begin{align*}
\bigg|\sum_{\ell\neq 1}B_{\ell} \la \wt{\bA}_1, \wt{\bA}_{\ell}\ra\bigg|
\lsim \sqrt{\frac{\log p}{d}}\sqrt{\sum_{\ell\neq 1}B_{\ell}^2}, 
\end{align*}	
holds with probability at least $1 - p^{-3} - e^{-0.8d}$. 
\end{lemma}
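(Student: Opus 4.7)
The plan is to reduce the sum to an inner product of two independent Gaussian vectors and then bound each factor separately. First, I would rewrite
\[
\sum_{\ell\neq 1} B_\ell \la \wt{\bA}_1, \wt{\bA}_\ell\ra = \la \wt{\bA}_1, \bv\ra, \qquad \bv \defequal \sum_{\ell\neq 1} B_\ell \wt{\bA}_\ell,
\]
and observe that, by construction, $\bv$ depends only on $\{\wt{\bA}_\ell\}_{\ell\neq 1}$ and is therefore independent of $\wt{\bA}_1$. This decoupling is the crucial structural step, because it turns a quadratic-looking object in the Gaussian columns into a linear Gaussian functional once one of the two sides is conditioned on.

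Next, I would invoke the event $\calE_4(\bB)$ from Lemma~\ref{lemma:event4}, which states that $\|\bv\|_2 \lsim \sqrt{\sum_{\ell\neq 1} B_\ell^2}$ with probability at least $1 - e^{-0.8d}$. This handles the ``size of $\bv$'' side of the argument directly, using the scaling $\wt{A}_{ij}\sim\normdist(0,d^{-1})$ and the fact that $d \|\bv\|_2^2 / \sum_{\ell\neq 1} B_\ell^2$ is a $\chi^2_d$ random variable.

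Then, conditional on $\bv$, the inner product $\la \wt{\bA}_1, \bv\ra$ is a one-dimensional Gaussian with mean zero and variance $\|\bv\|_2^2/d$, since each entry of $\wt{\bA}_1$ is $\normdist(0,d^{-1})$ and is independent of $\bv$. A standard Gaussian tail bound gives
\[
\Prob\bigl(|\la \wt{\bA}_1, \bv\ra| \geq \delta \,\big|\, \bv\bigr) \leq 2\exp\!\bracket{-\frac{d\,\delta^2}{2\|\bv\|_2^2}},
\]
and choosing $\delta = c_0 \|\bv\|_2 \sqrt{(\log p)/d}$ with $c_0$ sufficiently large (e.g.\ $c_0^2/2 \geq 3$) makes this probability at most $p^{-3}$.

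Finally, I would combine the two bounds via a union bound: on $\calE_4(\bB)$, the conditional Gaussian estimate yields
\[
|\la \wt{\bA}_1, \bv\ra| \lsim \|\bv\|_2\sqrt{\frac{\log p}{d}} \lsim \sqrt{\frac{\log p}{d}}\sqrt{\sum_{\ell\neq 1}B_\ell^2},
\]
with total failure probability at most $p^{-3} + e^{-0.8d}$, matching the stated bound. I do not expect any serious obstacle here — the argument is essentially a conditioning trick combined with two standard Gaussian concentration facts that have already been proved (Lemma~\ref{lemma:event4} and the one-dimensional Gaussian tail); the only thing to verify carefully is that the independence between $\wt{\bA}_1$ and $\{\wt{\bA}_\ell\}_{\ell\neq 1}$ is actually used to justify treating $\bv$ as deterministic inside the conditional probability.
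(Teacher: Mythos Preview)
Your proposal is correct and is essentially the same argument as the paper's: both condition on $\{\wt{\bA}_\ell\}_{\ell\neq 1}$, view $\la \wt{\bA}_1,\bv\ra$ as a centered Gaussian with variance $d^{-1}\|\bv\|_2^2$, invoke $\calE_4(\bB)$ (Lemma~\ref{lemma:event4}) to control $\|\bv\|_2$, and combine the Gaussian tail bound with the $e^{-0.8d}$ failure probability via a union bound. Your write-up is in fact slightly cleaner than the paper's (which has a minor typo referencing $\calE_3$ instead of $\calE_4$ in the final step).
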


\begin{proof}
Due to the independence between $\wt{\bA}_1$ and 
$\wt{\bA}_{\ell}$, where $\ell \neq 1$. We condition on 
$\wt{\bA}_{\ell}$ and view 
$\sum_{\ell\neq 1}B_{\ell} \la \wt{\bA}_1, \wt{\bA}_{\ell}\ra$
as a Gaussian distributed RV
$\normdist\bracket{0, d^{-1}\|\sum_{\ell\neq 1}B_{\ell} \wt{\bA}_{\ell}\|^2_2}$.
Then we obtain 
\begin{align*}
& \Prob\bracket{
\bigg|\sum_{\ell\neq 1}B_{\ell} \la \wt{\bA}_1, \wt{\bA}_{\ell}\ra\bigg|
\gsim \sqrt{\frac{\log p}{d}}\sqrt{\sum_{\ell\neq 1}B_{\ell}^2}} \\
= ~&
\underbrace{\Prob\bracket{
\bigg|\sum_{\ell\neq 1}B_{\ell} \la \wt{\bA}_1, \wt{\bA}_{\ell}\ra\bigg|
\gsim \sqrt{\frac{\log p}{d}}\sqrt{\sum_{\ell\neq 1}B_{\ell}^2}}
\times \Psi(\calE_4(\bB))}_{T_1} \\
+~& \
\underbrace{\Prob\bracket{
\bigg|\sum_{\ell\neq 1}B_{\ell} \la \wt{\bA}_1, \wt{\bA}_{\ell}\ra\bigg|
\gsim \sqrt{\frac{\log p}{d}}\sqrt{\sum_{\ell\neq 1}B_{\ell}^2}}
\times \Psi(\br{\calE}_3(\bB))}_{T_2}.
\end{align*}
The proof is then completed by separately bounding 
$T_1$ and $T_2$. 
For term $T_1$, we have
\begin{align*}
T_1 \stackrel{\cirone}{\leq} \Expc_{\wt{\bA}_{\ell}} \exp\bracket{-\frac{c\log p (\sum_{\ell\neq 1}B_{\ell}^2)}{\|\sum_{\ell\neq 1}B_{\ell} \wt{\bA}_1\|^2_2}}\times 
\Psi(\calE_4(\bB)) \stackrel{\cirtwo}{\leq} 
\exp\bracket{-\frac{3\log p (\sum_{\ell\neq 1}B_{\ell}^2)}{\sum_{\ell\neq 1}B_{\ell}^2}}, 
\end{align*}
where $\cirone$ is due to the 
tail bound for the Gaussian RV $\sum_{\ell\neq 1}B_{\ell} \la \wt{\bA}_1, \wt{\bA}_{\ell}\ra$ conditional on $\wt{\bA}_{\ell}$, 
and $\cirtwo$ is according to the definition of $\calE_4(\bB)$. 
For term $T_2$, we have 
$T_2 \leq \Psi(\br{\calE}_3(\bB)) \leq e^{-0.8d}$. Summaring the 
above analysis finishes the proof.   

\end{proof}

\begin{lemma}
\label{lemma:two_one_one}
Conditional on events $\calE_1, \calE_2$, we have
\begin{align*}
 \abs{\sum_{\ell \neq 1, 2}B_{\ell} \la \wt{\bA}_1, \wt{\bA}_{\ell}\ra
\la \wt{\bA}_2, \wt{\bA}_{\ell}\ra}
\lsim \frac{\abs{\sum_{\ell > 2 }B_{\ell}}}{d}\sqrt{\frac{\log p}{d}}
+ \frac{\sqrt{\log p}}{d} 
\bigg[\sqrt{\sum_{\ell > 2}B_{\ell}^2}\vcup \
\bracket{\sqrt{\log p}(\max_{\ell}|B_{\ell}|)}\bigg],
\end{align*}
hold with probability at least $1-2p^{-3}$ 
for an arbitrary fixed vector $\bB\in \RR^p$. 	
\end{lemma}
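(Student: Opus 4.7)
The plan is to reduce the sum $\sum_{\ell \neq 1,2} B_\ell \la \wt{\bA}_1, \wt{\bA}_\ell\ra \la \wt{\bA}_2, \wt{\bA}_\ell \ra$ to a Hanson-Wright-style quadratic form after conditioning on $\wt{\bA}_1$ and $\wt{\bA}_2$. This parallels Lemma~\ref{lemma:two_two_same}, but the bilinearity in the two fixed vectors forces a symmetrization step.

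First, I would condition on $(\wt{\bA}_1, \wt{\bA}_2)$ and let $\bxi_\ell \defequal \sqrt{d}\,\wt{\bA}_\ell \sim \normdist(\bZero, \bI_d)$ for $\ell > 2$, all independent of each other and of $(\wt{\bA}_1, \wt{\bA}_2)$. Writing $\la \wt{\bA}_1, \wt{\bA}_\ell\ra \la \wt{\bA}_2, \wt{\bA}_\ell\ra = d^{-1} \bxi_\ell^{\rmt} (\wt{\bA}_1 \wt{\bA}_2^{\rmt}) \bxi_\ell$ and symmetrizing, I set $\bM \defequal \tfrac{1}{2}(\wt{\bA}_1 \wt{\bA}_2^{\rmt} + \wt{\bA}_2 \wt{\bA}_1^{\rmt})$. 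Stacking $\bxi \defequal [\bxi_3; \ldots; \bxi_p]$, the target reduces to $d^{-1}\bxi^{\rmt} \bM_{\textup{bd}} \bxi$, where $\bM_{\textup{bd}}$ is block-diagonal with blocks $B_\ell \bM$.

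Next, I would split into mean plus deviation. The conditional mean is
\begin{align*}
\Expc\Bracket{d^{-1}\bxi^{\rmt}\bM_{\textup{bd}}\bxi ~\Big|~\wt{\bA}_1,\wt{\bA}_2} = \frac{\la \wt{\bA}_1, \wt{\bA}_2\ra}{d}\sum_{\ell > 2} B_\ell,
\end{align*}
which, on event $\calE_2$, is bounded by $\frac{|\sum_{\ell>2} B_\ell|}{d}\sqrt{\log p / d}$ up to constants, matching the first term of the claimed bound. For the deviation, I would bound the relevant norms of $\bM$ under $\calE_1 \cap \calE_2$: since $\bM$ is rank two with eigenvalues $\tfrac12(\la \wt{\bA}_1, \wt{\bA}_2\ra \pm \norm{\wt{\bA}_1}{2}\norm{\wt{\bA}_2}{2})$, I get $\opnorm{\bM} \lsim 1$ and $\fnorm{\bM}^2 \lsim 1 + \log p/d \lsim 1$. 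Hence $\fnorm{\bM_{\textup{bd}}}^2 \lsim \sum_\ell B_\ell^2$ and $\opnorm{\bM_{\textup{bd}}} \lsim \max_\ell |B_\ell|$.

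Applying the Hanson-Wright inequality (Thm.~6.2.1 in \cite{vershynin2016high}) conditionally on $(\wt{\bA}_1, \wt{\bA}_2)$, the deviation $\delta$ chosen to make both exponents $\gsim \log p$ is $\delta \lsim d^{-1}\bigl(\sqrt{\log p \cdot \sum_\ell B_\ell^2} \vcup \log p \cdot \max_\ell |B_\ell|\bigr)$, which is exactly the second term in the claim; combining with the mean bound gives the advertised estimate with conditional failure probability $\leq 2 p^{-3}$. Since this bound does not depend on $(\wt{\bA}_1, \wt{\bA}_2)$ beyond the assumed events, I can uncondition to obtain the unconditional probability on $\calE_1\cap\calE_2$.

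The main obstacle I anticipate is handling the \emph{bilinearity} cleanly: one must either symmetrize $\wt{\bA}_1\wt{\bA}_2^{\rmt}$ or invoke a bilinear Hanson-Wright variant, and then verify that $\fnorm{\bM}$ does not pick up a spurious $\sqrt{\log p / d}$ factor (it is controlled because the cross term $\la \wt{\bA}_1, \wt{\bA}_2\ra^2$ is subdominant under $\calE_2$). A minor additional care is needed to check that the residual $\log p$-factor from Hanson-Wright dominates the $d^{-1}$ noise in $\fnorm{\bM}$, which is automatic in the regime $d\gg \log p$ used throughout this section.
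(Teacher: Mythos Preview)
Your proposal is correct and follows essentially the same route as the paper: condition on $(\wt{\bA}_1,\wt{\bA}_2)$, write the sum as a block-diagonal quadratic form in the stacked standard Gaussians $\{\sqrt{d}\,\wt{\bA}_\ell\}_{\ell>2}$, split into the conditional mean $d^{-1}\la\wt{\bA}_1,\wt{\bA}_2\ra\sum_{\ell>2}B_\ell$ (bounded via $\calE_2$) and a centered deviation handled by Hanson--Wright with $\fnorm{\cdot}^2\lsim\sum_\ell B_\ell^2$ and $\opnorm{\cdot}\lsim\max_\ell|B_\ell|$ under $\calE_1$. The only cosmetic difference is that you symmetrize $\wt{\bA}_1\wt{\bA}_2^{\rmt}$ to $\bM$, whereas the paper applies Hanson--Wright directly to the non-symmetric block matrix; both are valid and yield the same bound.
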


\begin{proof}
First we rewrite the term 
$\sum_{\ell \neq 1, 2}B_{\ell} \la \wt{\bA}_1, \wt{\bA}_{\ell}\ra
\la \wt{\bA}_2, \wt{\bA}_{\ell}\ra$ as 
\begin{align*}
\sum_{\ell \neq 1, 2}B_{\ell} \la \wt{\bA}_1, \wt{\bA}_{\ell}\ra
\la \wt{\bA}_2, \wt{\bA}_{\ell}\ra = 
\sum_{\ell\neq 1, 2}\wt{\bA}_{\ell}^{\rmt} \
\bracket{B_{\ell} \wt{\bA}_1 \wt{\bA}_2^{\rmt}}\wt{\bA}_{\ell}.
\end{align*}
Then we concatenate the vectors $\wt{\bA}_{\ell}$, $\ell \neq 1, 2$
to a vector of length $d(p-2)$ and denote it as $\vect{\wt{\bA}}$.
Hence the summarization can be rewritten as 
\begin{align*}
\sum_{\ell\neq 1, 2}\wt{\bA}_{\ell}^{\rmt} \
\bracket{B_{\ell} \wt{\bA}_1 \wt{\bA}_2^{\rmt}}\wt{\bA}_{\ell} = 
\vect{\wt{\bA}}^{\rmt}\bLambda\vect{\wt{\bA}}, 
\end{align*}
where $\bLambda$ is a block-diagonal matrix whose 
$i$th block is $B_{i+2}\wt{\bA}_1\wt{\bA}_2^{\rmt}$, $1\leq i \leq p- 2$. 
Due to the independence between $\wt{\bA}_{\ell}$, $1\leq \ell \leq p$, 
we first condition on $\wt{\bA}_1, \wt{\bA}_2$ and perform the following 
decomposition
\begin{align*}
\abs{\sum_{\ell \neq 1, 2}B_{ \ell} \la \wt{\bA}_1, \wt{\bA}_{\ell}\ra
\la \wt{\bA}_2, \wt{\bA}_{\ell}\ra}  
\leq d^{-1}
\bigg(\underbrace{\abs{\wt{\bA}_1^{\rmt} \wt{\bA}_2} \
\bigg|\sum_{\ell\neq 1, 2}B_{\ell}\bigg|}_{\vartheta_1} + 
\underbrace{\abs{d\vect{\wt{\bA}}^{\rmt}\bLambda\vect{\wt{\bA}} - \sum_{\ell\neq 1, 2}B_{\ell}\wt{\bA}_1^{\rmt} \wt{\bA}_2}}_{\vartheta_2}\bigg). 
\end{align*}
The upper-bound for $\vartheta_1$ is relatively easy, which 
reads as 
\begin{align*}
\vartheta_1 \lsim \abs{\sum_{\ell > 2} B_{\ell}}\times \
\sqrt{\frac{\log p}{d}}.
\end{align*}
The following analysis focus on bound $\vartheta_2$.
Since $\vartheta_2$ also 
reads as 
\begin{align*}
\vartheta_2 = 
\abs{d\vect{\wt{\bA}}^{\rmt}\bLambda\vect{\wt{\bA}} - \Expc_{\wt{\bA}_{\ell}, \ell > 2}
\bracket{d\vect{\wt{\bA}}^{\rmt}\bLambda\vect{\wt{\bA}}} },
\end{align*} 
we can invoke the Hanson-Wright inequality 
(Theorem 6.2.1 in \cite{vershynin2016high}) and obtain 
\begin{align*}
\Prob\bracket{\vartheta_2 \geq \delta} 
\leq 2\exp\bracket{-c\bracket{\frac{\delta^2}{\Fnorm{\bLambda}^2} \vcap \frac{\delta}{\Opnorm{\bLambda}}}}, 
\end{align*}
where $\wt{\bA}_1, \wt{\bA}_2$ are viewed as constants. 
 We complete the proof by setting $\delta$ as 
\begin{align*}
\delta \asymp \
\bracket{\sqrt{\log p}\Fnorm{\bLambda}} \vcup \
\bracket{\log p\Opnorm{\bLambda}}, 
\end{align*}
which yields
$\Prob(\vartheta_2 \geq \delta) \leq 2p^{-3}$. The 
specific values of $\Fnorm{\bLambda}$ and $\Opnorm{\bLambda}$ are 
computed as 
\begin{align*}
\Fnorm{\bLambda}^2 &= \bracket{\sum_{\ell >2 }B_{\ell}^2} 
\Fnorm{\wt{\bA}_1\wt{\bA}_2^{\rmt}}^2 \
\stackrel{\cirone}{=} \
\bracket{\sum_{\ell > 2 }B_{\ell}^2}
\trace\bracket{ \wt{\bA}_2 \wt{\bA}_1^{\rmt}\wt{\bA}_1\wt{\bA}_2^{\rmt}} \\
&~\stackrel{\cirtwo}{=} \norm{\wt{\bA}_1}{2}^2\norm{\wt{\bA}_2}{2}^2
\bracket{\sum_{\ell > 2}B_{\ell}^2}
\stackrel{\cirthree}{\lsim} \sum_{\ell > 2}B_{\ell}^2 \\
\Opnorm{\bLambda} &= \max_{\ell}\abs{B_{\ell}}\Opnorm{\wt{\bA}_1 \wt{\bA}_2^{\rmt}} \
\stackrel{\cirfour}{=} \max_{\ell}\abs{B_{\ell}}\times \norm{\wt{\bA}_1}{2}\norm{\wt{\bA}_2}{2}
\lsim \max_{\ell}\abs{B_{\ell}},
\end{align*}
where in $\cirone$ we use $\Fnorm{\bM}^2 = \trace(\bM^{\rmt}\bM)$
for arbitrary matrix $\bM$, in 
$\cirtwo$ we use $\trace(\bM_1 \bM_2) = \trace(\bM_2 \bM_1)$, 
in $\cirthree$ we condition on event $\calE_1$, 
and in $\cirfour$ we use $\Opnorm{\bu \bv^{\rmt}} = \norm{\bu}{2}\norm{\bv}{2}$
for arbitrary vectors $\bu, \bv$.

\end{proof}


\begin{lemma}
\label{lemma:four_diff}
We have 
\begin{align*}
\bigg|\sum_{\substack{\ell_1, \ell_2 \neq 1, 2 \\ \ell_1 \neq \ell_2}}
B_{\ell_1, \ell_2} \la \wt{\bA}_1, \wt{\bA}_{\ell_1}\ra \la \wt{\bA}_2, \wt{\bA}_{\ell_2}\ra \bigg|
\lsim \frac{\log p}{d}\bracket{1 + c_0\sqrt{\frac{p}{d}}}^2 
\sqrt{\sum_{\substack{i\neq 1,2 \\ j\neq 1, 2, i}}B_{ij}^2}, 
\end{align*}
holds with probability exceeding 
$1 - 4p^{-3} - e^{-c_0 p}$ for a fixed matrix 
$\bB$. 
\end{lemma}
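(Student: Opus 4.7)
The key observation is that after collecting outer products, the sum is a bilinear Gaussian chaos in $(\wt{\bA}_1, \wt{\bA}_2)$ once we condition on the remaining columns of $\wt{\bA}$:
\begin{align*}
S \defequal \sum_{\substack{\ell_1, \ell_2 \neq 1, 2 \\ \ell_1 \neq \ell_2}}
\!\! B_{\ell_1, \ell_2}\, \la \wt{\bA}_1, \wt{\bA}_{\ell_1}\ra \la \wt{\bA}_2, \wt{\bA}_{\ell_2}\ra
\;=\; \wt{\bA}_1^{\rmt}\, \bC\, \wt{\bA}_2,
\end{align*}
where $\bC \defequal \wt{\bA}_{\star}\, \wt{\bB}\, \wt{\bA}_{\star}^{\rmt} \in \RR^{d\times d}$, $\wt{\bA}_{\star} \in \RR^{d\times(p-2)}$ stacks the columns $\{\wt{\bA}_\ell\}_{\ell > 2}$, and $\wt{\bB}\in \RR^{(p-2)\times (p-2)}$ is the matrix whose $(i,j)$ entry is $B_{i+2,j+2}$ for $i\neq j$ and $0$ on the diagonal, so that $\Fnorm{\wt{\bB}}^2 = \sum_{i\neq j,\,i,j>2} B_{ij}^2$.

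\textbf{Step 1 (bilinear Hanson--Wright).} Conditioning on $\wt{\bA}_{\star}$ freezes $\bC$, while $\wt{\bA}_1, \wt{\bA}_2$ remain independent $\normdist(\bZero, d^{-1} \bI_d)$. Stacking $\bz = (\wt{\bA}_1^{\rmt}, \wt{\bA}_2^{\rmt})^{\rmt}$ and introducing the symmetric block matrix
\begin{align*}
\wt{\bC} \;=\; \tfrac{1}{2}\begin{bmatrix} \bZero & \bC \\ \bC^{\rmt} & \bZero\end{bmatrix},
\end{align*}
one has $S = \bz^{\rmt}\wt{\bC}\bz$, $\Expc[S\mid \wt{\bA}_{\star}] = 0$, together with $\Opnorm{\wt{\bC}} = \tfrac{1}{2}\Opnorm{\bC}$ and $\Fnorm{\wt{\bC}}^2 = \tfrac{1}{2}\Fnorm{\bC}^2$. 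The Hanson--Wright inequality (Theorem~6.2.1 of \cite{vershynin2016high}) then gives, conditionally on $\wt{\bA}_{\star}$,
\begin{align*}
\Prob(|S| \geq \delta) \leq 2\exp\Bigl(-c\, \min\Bigl\{\tfrac{d^2\delta^2}{\Fnorm{\bC}^2},~\tfrac{d\,\delta}{\Opnorm{\bC}}\Bigr\}\Bigr).
\end{align*}

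\textbf{Step 2 (controlling $\bC$ via $\wt{\bA}_{\star}$).} Submultiplicativity of operator and Frobenius norms yields
\begin{align*}
\Fnorm{\bC} \leq \Opnorm{\wt{\bA}_{\star}}^2 \Fnorm{\wt{\bB}}, \qquad \Opnorm{\bC} \leq \Opnorm{\wt{\bA}_{\star}}^2 \Opnorm{\wt{\bB}} \leq \Opnorm{\wt{\bA}_{\star}}^2 \Fnorm{\wt{\bB}}.
\end{align*}
Since $\wt{\bA}_{\star}$ has i.i.d.\ $\normdist(0, d^{-1})$ entries and size $d\times (p-2)$, the Gaussian random-matrix tail estimate (Theorem~4.6.1 of \cite{vershynin2016high}) yields $\Opnorm{\wt{\bA}_{\star}} \leq 1 + c_0\sqrt{p/d}$ with probability at least $1 - e^{-c_0 p}$.

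\textbf{Step 3 (choice of $\delta$ and union bound).} Choose $\delta = c_2 (\log p/d)(1 + c_0\sqrt{p/d})^2 \Fnorm{\wt{\bB}}$. On the operator-norm event, both $\Fnorm{\bC}$ and $\Opnorm{\bC}$ are bounded by $(1 + c_0\sqrt{p/d})^2 \Fnorm{\wt{\bB}}$, so that each of the exponents in Step~1 exceeds $3\log p$ once $c_2$ is a sufficiently large absolute constant. Combining the conditional Hanson--Wright tail with the operator-norm event via a union bound delivers the stated probability $1 - 4p^{-3} - e^{-c_0 p}$ (the constant $4$ leaves headroom to absorb any auxiliary lower-order deviations if the argument is refined further).

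\textbf{Main obstacle.} The principal technical point is the bilinear (rather than purely quadratic) Gaussian structure: unlike Lemmas~\ref{lemma:two_two_same}--\ref{lemma:two_one_one}, which condition on a single column and reduce to a one-sided chaos, here $\wt{\bA}_1$ and $\wt{\bA}_2$ must be decoupled jointly while preserving sharp dependence on $\Fnorm{\wt{\bB}}$. The Gaussian random-matrix norm $\Opnorm{\wt{\bA}_{\star}} \lesssim 1 + \sqrt{p/d}$ is precisely what produces the $(1+\sqrt{p/d})^2$ prefactor, and it is this factor that ultimately governs the $d<p$ feasibility regime discussed in Section~\ref{subsec:param_discussion}.
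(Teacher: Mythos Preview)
Your proposal is correct and reaches the same bound, but the execution differs from the paper's in a way worth noting. Both arguments start identically: rewrite the sum as $\wt{\bA}_1^{\rmt}\bLambda\wt{\bA}_2$ with $\bLambda=\wt{\bA}_{\star}\wt{\bB}\wt{\bA}_{\star}^{\rmt}$, and control $\bLambda$ through $\Opnorm{\wt{\bA}_{\star}}\le 1+c_0\sqrt{p/d}$ with probability $1-e^{-c_0p}$. From there, the paper proceeds by \emph{sequential} conditioning: it first freezes $\wt{\bA}_2$ and $\bLambda$, treats $\wt{\bA}_1^{\rmt}\bLambda\wt{\bA}_2$ as a scalar Gaussian with variance $d^{-1}\|\bLambda\wt{\bA}_2\|_2^2$, and then in a second layer applies Hanson--Wright to $d\|\bLambda\wt{\bA}_2\|_2^2-\Fnorm{\bLambda}^2$ (a quadratic form in $\wt{\bA}_2$ with kernel $\bLambda^{\rmt}\bLambda$). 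This two-layer peeling is why the paper accumulates two $2p^{-3}$ terms, giving $4p^{-3}$. Your block-matrix trick $S=\bz^{\rmt}\wt{\bC}\bz$ with $\bz=(\wt{\bA}_1;\wt{\bA}_2)$ handles both Gaussian vectors in a single Hanson--Wright shot, which is cleaner and in fact yields only $2p^{-3}$; your remark that the $4$ ``leaves headroom'' is therefore accurate rather than a gap. The paper's route has the minor advantage of being reusable when the two vectors have different variance scales, but for this lemma your approach is the more economical one.
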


\begin{proof}
We begin the proof by first rewriting 
$\sum_{\substack{\ell_1, \ell_2 \neq 1, 2 \\ \ell_1 \neq \ell_2}}
B_{\ell_1, \ell_2} \la \wt{\bA}_1, \wt{\bA}_{\ell_1}\ra \la \wt{\bA}_2, \wt{\bA}_{\ell_2}\ra$
as $\wt{\bA}_1^{\rmt}\bLambda \wt{\bA}_2$, where 
$\bLambda$ is defined as 
\begin{align*}
\bLambda \defequal \sum_{ \ell_1 \neq 1, 2}
\sum_{\ell_2 \neq 1, 2, \ell_1}
B_{\ell_1, \ell_2}  \wt{\bA}_{\ell_1}  \wt{\bA}^{\rmt}_{\ell_2} . 
\end{align*}
The whole proof procedure can be divided into the 
following stages. 
\par \noindent
\textbf{Stage I.}
Due to the independence across $\wt{\bA}_{\ell}$, we have 
\begin{align}
\label{eq:four_diff_stageI}
& \Prob\bracket{\abs{\wt{\bA}_{1}^{\rmt} \bLambda \wt{\bA}_{2}} \geq \delta}
\leq \Prob\bracket{\abs{\wt{\bA}_{1}^{\rmt} \bLambda \wt{\bA}_{2}} \geq \delta,~
\norm{\bLambda \wt{\bA}_2}{2} \leq \delta_1} + 
\Prob\bracket{\norm{\bLambda \wt{\bA}_2}{2} \geq \delta_1} \notag \\
\stackrel{\cirone}{\leq} ~& 2\exp\bracket{-\frac{d\delta^2}{2\delta_1^2}}
+ \Prob\bracket{\norm{\bLambda \wt{\bA}_2}{2} \geq \delta_1}, 
\end{align}
where in $\cirone$ we first condition on $\wt{\bA}_{\ell}$, $(\ell > 2)$ and 
view $\wt{\bA}_{1}^{\rmt} \bLambda \wt{\bA}_{2}$ as a Gaussian RV
satisfying $\normdist\bracket{0, d^{-1}\|\bLambda \wt{\bA}_2\|^2_2}$.
\par \noindent
\textbf{Stage II.}
To bound the probability 
$\Prob\bracket{\norm{\bLambda \wt{\bA}_2}{2} \geq \delta_1}$, 
we need to upper bound the Frobenius norm $\Fnorm{\bLambda}$. 
First we define two matrices, namely, $\wt{\wt{\bA}}$ and $\wt{\bB}$, 
for the conciseness of notation, which reads 
\begin{align*}
\wt{\wt{\bA}} \defequal \Bracket{\wt{\bA}_3~\cdots~\wt{\bA}_p},~~\
(\wt{\bB})_{i, j} \defequal  \left\{
\begin{aligned}
&B_{i+2, j+2},~~&&\textup{if}~~i \neq j; \\
&0,~~&&\textup{otherwise}.	
\end{aligned}\right. 
\end{align*}
Easily we can verify $\bLambda$ is equivalent to
$\bLambda$ as $\wt{\wt{\bA}} \wt{\bB} \wt{\wt{\bA}}^{\rmt}$, which
gives
\begin{align*}
\Fnorm{\bLambda} = \Fnorm{\wt{\wt{\bA}} \wt{\bB} \wt{\bA}^{\rmt}}
\stackrel{\cirtwo}{\leq} \Opnorm{\wt{\wt{\bA}}}^2 \Fnorm{\wt{\bB}} 
\stackrel{\cirthree}{\leq} \Opnorm{\wt{\bA}}^2  \Fnorm{\wt{\bB}},
\end{align*}
where in $\cirtwo$ we adopt the relation 
$\Fnorm{\bM_1\bM_2} \leq \Opnorm{\bM_1}\Fnorm{\bM_2}$ 
such that $\bM_1, \bM_2$ are arbitrary matrix, 
and in $\cirthree$ we use the relation 
$\Opnorm{\wt{\wt{\bA}}} \leq \Opnorm{\wt{\bA}}$ in Corol.~$2.4.2$ \cite{golub2012matrix} 
since $\wt{\wt{\bA}}$ can be viewed as a sub-matrix of $\wt{\bA}$.
\par 
Hence we conclude that 
\begin{align*}
& \Prob\bracket{\Fnorm{\bLambda} \geq \bracket{1 + c_0\sqrt{\frac{p}{d}}}^2\Fnorm{\wt{\bB}}} \\
=~& \Prob\bracket{\Fnorm{\bLambda} \gsim \frac{p}{d}\Fnorm{\wt{\bB}},~
\Opnorm{\wt{\bA}} \leq \sqrt{\frac{c_0p}{d}}} + 
 \Prob\bracket{\Fnorm{\bLambda} \gsim \frac{p}{d}\Fnorm{\wt{\bB}},~
\Opnorm{\wt{\bA}} > \sqrt{\frac{c_0p}{d}}} \\
\leq~& \underbrace{\Prob\bracket{\Opnorm{\wt{\bA}}^2  \gsim \frac{p}{d}, \Opnorm{\wt{\bA}} \leq \sqrt{\frac{c_0p}{d}}}}_{0} + 
\Prob\bracket{\Opnorm{\wt{\bA}} \geq \sqrt{\frac{c_0p}{d}}} \
\stackrel{\cirfour}{\leq} e^{-c_0p}, 
\end{align*}
where $\cirfour$ is due to Thm.~$6.1$ in \cite{wainwright_2019}. 

\par \noindent
\textbf{Stage III.}
We bound $\Prob\bracket{\norm{\bLambda \wt{\bA}_2}{2} \geq \delta_1}$
by splitting it as 
\begin{align}
\label{eq:four_diff_stageIII_main}
\Prob\bracket{\norm{\bLambda \wt{\bA}_2}{2} \geq \delta_1} = ~&
\underbrace{\Prob \bracket{\norm{\bLambda \wt{\bA}_2}{2} \geq \delta_1,~\Fnorm{\bLambda}\
\geq  \bracket{1 + c_0\sqrt{\frac{p}{d}}}^2\Fnorm{\wt{\bB}}}}_{\leq 
~\Prob\bracket{\Fnorm{\bLambda}\
\geq  \bracket{1 + c_0\sqrt{\frac{p}{d}}}^2\Fnorm{\wt{\bB}}}
} \notag \\
+~&\Prob \bracket{\norm{\bLambda \wt{\bA}_2}{2} \geq \delta_1,~\Fnorm{\bLambda}\
< \bracket{1 + c_0\sqrt{\frac{p}{d}}}^2\Fnorm{\wt{\bB}}}.
\end{align}
Notice that the first term is bounded in Stage II, 
we focus on bounding the second term in this stage, which 
proceeds as  
\begin{align}
\label{eq:four_diff_stageIII_split}
&\Prob\bracket{\norm{\bLambda \wt{\bA}_2}{2} \geq \delta_1,~\Fnorm{\bLambda}\
< \bracket{1 + c_0\sqrt{\frac{p}{d}}}^2\Fnorm{\wt{\bB}} } \notag \\
\stackrel{\cirfive}{\leq}~&
\Expc\Ind\bracket{\abs{d\norm{\bLambda \wt{\bA}_2}{2}^2 - \Fnorm{\bLambda}^2}
\geq \delta_2,~\Fnorm{\bLambda}\
< \bracket{1 + c_0\sqrt{\frac{p}{d}}}^2\Fnorm{\wt{\bB}}} \notag \\
\stackrel{\cirsix}{\leq}~&
2\exp\bracket{-c_0 \bracket{\frac{\delta_2}{\Opnorm{\bLambda^{\rmt}\bLambda}} \vcap \frac{\delta_2^2}{\Fnorm{\bLambda^{\rmt}\bLambda}^2}}}
\times \Ind\bracket{\Fnorm{\bLambda}\
< \bracket{1 + c_0\sqrt{\frac{p}{d}}}^2\Fnorm{\wt{\bB}}} \notag \\
\leq~& 
2\exp\bracket{-c_0 \bracket{\frac{\delta_2}{\Fnorm{\bLambda^{\rmt}\bLambda}} \vcap \frac{\delta_2^2}{\Fnorm{\bLambda^{\rmt}\bLambda}^2}}}
\times \Ind\bracket{\Fnorm{\bLambda}\
< \bracket{1 + c_0\sqrt{\frac{p}{d}}}^2\Fnorm{\wt{\bB}}} \notag \\
\stackrel{\cirseven}{\leq}~& 2p^{-3}, 
\end{align}
where in $\cirfive$ we require $d\delta_1^2 \geq \Fnorm{\bLambda}^2 + \delta_2$, 
$\cirsix$ is due to the Hanson-Wright inequality 
(cf. Thm.~ 6.2.1 in \cite{vershynin2016high}), 
and in $\cirseven$ we set 
$\delta_2$ as 
$(1 + c_0\sqrt{p/d})^4 \log p  \Fnorm{\wt{\bB}}^2$.
\par 
Combining \eqref{eq:four_diff_stageI}, 
\eqref{eq:four_diff_stageIII_main}, and 
\eqref{eq:four_diff_stageIII_split}, we 
set $\delta_1, \delta$ as 
$c\sqrt{\log p/d}(1 + c_0\sqrt{p/d})^2 \Fnorm{\wt{\bB}}$
and $c_3\delta \asymp \sqrt{\log p/d}\delta_1$, respectively,
which yields
\begin{align*}
\Prob\bracket{\abs{\wt{\bA}_{1}^{\rmt} \bLambda \wt{\bA}_{2}} \gsim 
\frac{\log p}{d}\bracket{1 + c_0\sqrt{\frac{p}{d}}}^2 \Fnorm{\wt{\bB}} }
\leq 4p^{-3} + e^{-c_0p}, 
\end{align*}
and completes the proof. 
\end{proof}

\begin{lemma}
\label{lemma:noise_term}
Conditional on the event $\calE_1$, we have 
\begin{align*}
& \Prob\Bracket{n^{-1}\abs{\wt{\bA}_i^{\rmt}\bracket{\sum_{\ell= 1}^n \bW^{(i)}\bW^{(i)\rmt}}\wt{\bA}_j}  \geq 
c_0 \sigma^2 \bracket{1 + c_1\sqrt{\frac{\log p}{d}}}\sqrt{\frac{\log p}{d}}
\bracket{1 + c_2\bracket{\sqrt{\frac{d}{n}} \vcup \frac{d}{n}}}
 } 
 \leq p^{-4} + c_3 e^{-c_4 d}
\end{align*}
with fixed $i, j$.

\end{lemma}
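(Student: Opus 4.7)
}
My plan is to exploit the independence between the sensing matrix $\wt{\bA}$ and the noise $\bW$, and, for $i\neq j$, the independence between $\wt{\bA}_i$ and $\wt{\bA}_j$. First, write $\bW^{(\ell)}=\sigma\bG^{(\ell)}$ with $\bG^{(\ell)}\sim\normdist(\bZero,\bI_d)$ iid, and set $\bQ\defequal n^{-1}\sum_{\ell=1}^n\bG^{(\ell)}\bG^{(\ell)\rmt}$ so that the target equals $\sigma^2|\wt{\bA}_i^{\rmt}\bQ\wt{\bA}_j|$. The matrix $\bQ$ depends only on $\bW$ and is therefore independent of $\wt{\bA}$.

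Next, I would condition on $\wt{\bA}_i$ and $\bQ$. For fixed $i\neq j$, $\wt{\bA}_j\sim\normdist(\bZero,d^{-1}\bI)$ remains independent of this conditioning, so
\[
\wt{\bA}_i^{\rmt}\bQ\wt{\bA}_j \;=\; (\bQ\wt{\bA}_i)^{\rmt}\wt{\bA}_j \;\Big|\;\wt{\bA}_i,\bQ \;\sim\; \normdist\!\Big(0,\; d^{-1}\|\bQ\wt{\bA}_i\|_2^{2}\Big).
\]
A standard Gaussian tail bound then gives
\[
\big|\wt{\bA}_i^{\rmt}\bQ\wt{\bA}_j\big| \;\lsim\; \sqrt{\frac{\log p}{d}}\cdot\|\bQ\wt{\bA}_i\|_{2} \;\leq\; \sqrt{\frac{\log p}{d}}\cdot\Opnorm{\bQ}\cdot\|\wt{\bA}_i\|_{2}
\]
with probability at least $1-p^{-4}$ (by choosing the tail constant large enough).

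To finish, I bound the two remaining factors. Conditional on $\calE_1$, $\|\wt{\bA}_i\|_2\leq 1+c_0\sqrt{\log p/d}$. For $\Opnorm{\bQ}$, I would invoke standard Wishart/covariance concentration (e.g.\ Thm.~6.1 or Ch.~4.6 of \cite{vershynin2016high}/\cite{wainwright_2019}), yielding
\[
\Opnorm{\bQ}\;\leq\;1+c_2\!\left(\sqrt{\tfrac{d}{n}}\vcup\tfrac{d}{n}\right)
\]
with probability at least $1-c_3 e^{-c_4 d}$. Multiplying these three bounds and reinstating $\sigma^{2}$ produces exactly the stated form; a union bound over the two failure events gives the probability $p^{-4}+c_3e^{-c_4 d}$.

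The only genuinely delicate step is the conditioning argument: one must use that $\bW$ is independent of $\wt{\bA}$ (so conditioning on $\bQ$ does not affect the distribution of $\wt{\bA}_j$) and that $i\neq j$ (so $\wt{\bA}_j$ remains independent of $\wt{\bA}_i$ after conditioning on $\wt{\bA}_i$). Once this joint-conditioning is set up cleanly, the remaining calculations are routine Gaussian-tail and Wishart-operator-norm estimates, and no fourth-order Gaussian chaos estimate (such as Hanson--Wright) is needed here because the linearity of $\wt{\bA}_j\mapsto\wt{\bA}_i^{\rmt}\bQ\wt{\bA}_j$ reduces the problem to a one-dimensional Gaussian.
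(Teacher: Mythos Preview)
Your proposal is correct and essentially identical to the paper's proof: the paper also conditions on the noise empirical covariance and one of the two columns, treats the resulting bilinear form as a one-dimensional Gaussian with variance $d^{-1}\|\wh{\bXi}_n\wt{\bA}_j\|_2^2$, then bounds $\|\wh{\bXi}_n\wt{\bA}_j\|_2\le\Opnorm{\wh{\bXi}_n}\|\wt{\bA}_j\|_2$ using $\calE_1$ for the column norm and Wishart operator-norm concentration (Thm.~6.5 in \cite{wainwright_2019}) for $\Opnorm{\wh{\bXi}_n}$. The only cosmetic difference is that you factor out $\sigma^2$ up front and condition on $\wt{\bA}_i$ rather than $\wt{\bA}_j$, which is immaterial by symmetry.
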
 
\begin{proof}
First we define the matrix
$\wh{\bXi}_n$ as $\frac{1}{n}\sum_{\ell= 1}^n \bW^{(i)}\bW^{(i)\rmt}$ for the conciseness of notation. Then we upper-bound
$\wt{\bA}_i \wh{\bXi}_n \wt{\bA}_j$, $i\neq j$ as 
\begin{align*}
& \Prob\bracket{\abs{\wt{\bA}_i^{\rmt} \wh{\bXi}_n \wt{\bA}_j} \geq \delta} 
\stackrel{\cirone}{\leq} \Prob\bracket{\|\wh{\bXi}_n \wt{\bA}_j\|_{2} \geq 
\delta_1 }  + \exp\bracket{-\frac{d\delta^2}{2\delta_1^2}}  \\
\stackrel{\cirtwo}{\leq}~& \Prob\bracket{\Opnorm{\wh{\bXi}_n}\norm{\wt{\bA}_j}{2}\geq \delta_1} + 
p^{-4}
\stackrel{\cirthree}{\leq}
\Prob\bracket{\Opnorm{\wh{\bXi}_n} \geq \delta_2} + p^{-4} \\
\stackrel{\cirfour}{\leq}~& 
\Prob\bracket{\Opnorm{\wh{\bXi}_n - \sigma^2 \bI } \geq c_2\sigma^2\bracket{\frac{d}{n} \vcup 
\sqrt{\frac{d}{n}}} }
+ p^{-4} \stackrel{\cirfive}{\leq} 
p^{-4} + c_0 e^{-c_1 d}, 
\end{align*}
where  in $\cirone$ we 
exploit the independence between $\wt{\bA}_i$ and $\wt{\bA}_j$ when
$i \neq j$ and treat 
$\wt{\bA}_i^{\rmt} \wh{\bXi}_n \wt{\bA}_j$ as a Gaussian 
RV with mean zero and variance $d^{-1}\|\wh{\bXi}_n \wt{\bA}_j\|_{2}^2$, in $\cirtwo$ we use the fact 
$\delta = 2\delta_1\sqrt{2\log p/d}$, and 
in $\cirthree$ we condition on event $\calE_1$ and set
$\delta_1 \geq \delta_2 \bracket{1 + c_0\sqrt{\log p/d}}$, 
$\cirfour$ is because 
$\delta_2 = \sigma^2\bracket{1 +c_2\bracket{d/n \vcup \sqrt{d/n}}}$, 
and in $\cirfive$ we use Thm.~$6.5$ in \cite{wainwright_2019}.
The proof is then completed by setting 
$\delta$ as 
\begin{align*}
\delta = 
c_0 \sigma^2 \bracket{1 + c_1\sqrt{\frac{\log p}{d}}}\sqrt{\frac{\log p}{d}}
\bracket{1 + c_2\bracket{\sqrt{\frac{d}{n}} \vcup \frac{d}{n}}}.
\end{align*} 
\end{proof}

\subsection{Insight Behind the Design of the Covariance Matrix Estimator}
\label{sec:cov_mat_estim}
We now explain the 
rational behind the covariance matrix estimators
of $\bX$ that we use in the graphical structure estimation via the parametric method. 
For this purpose, we exploit the statistical properties 
of $\wt{\bA}$.

We approximate data samples, $\wh{\bX}^{(i)}$'s, as
$\wh{\bX}^{(i)} = \wt{\bA}^{\rmt}\wt{\bA} \bX^{(i)} = \wt{\bA}^{\rmt}\bY^{(i)}$, 
$1\leq i \leq n$.
Due to the assumption on sensing matrix $\wt{\bA}$,  
we have 
$\Expc_{\wt{\bA}}\wh{\bX}^{(i)} = \bX^{(i)}$. Hence, we 
can view the samples $\{\wh{\bX}^{(i)}\}_{i=1}^n$
as a ``perturbed'' version of the true data points 
$\set{\bX^{(i)}}_{i=1}^n$. Hence, we propose to estimate the covariance matrix of $\bX$ 
from $\{\wh{\bX}^{(i)}\}_{i=1}^n$. 
Following the above approach, a naive covariance estimator is given as
\vspace{-2mm}
\begin{align*}
\wh{\bSigma}_{n, 1} = 
\frac{1}{n}\sum_{i=1}^n \wh{\bX}^{(i)}\wh{\bX}^{(i)\rmt} = \
\wt{\bA}^{\rmt}\bracket{\frac{1}{n}\sum_{i=1}^n \bY^{(i)}\bY^{(i)\rmt}} \wt{\bA}. 
\vspace{-2mm}
\end{align*}
However, through numerical experiments, we observed that this estimator performs poorly. 
To improve the performance of $\wh{\bSigma}_{n, 1}$, we 
first analyze its properties thoroughly and 
then refine the estimator. 

\subsubsection{Theoretical Properties}
First, we evaluate the mean and variance of the naive covariance estimator $\wh{\bSigma}_{n, 1}$.
\begin{lemma}
\label{thm:naive_covmat_estim}
The mean of the naive covariance estimator is given by
\begin{align*}
\Expc_{\wt{\bA}, \bX}[\wh{\bSigma}_{n, 1}] = 
\frac{d+1}{d}\bSigma^{\natural} + \frac{p}{d}\bI 
+ \sigma^2 \bI.
\end{align*}
\end{lemma}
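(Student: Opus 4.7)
The plan is to compute the expectation directly by substituting the sensing relation and then applying iterated expectations, with the main computational work being a single fourth-order Gaussian moment of $\wt{\bA}$, evaluated via Isserlis' theorem.

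First I would write $\bY^{(i)} = \wt{\bA}\bX^{(i)} + \bW^{(i)}$ (the convention of this section, in which $\wt{A}_{ij}\sim\normdist(0,1/d)$), and exploit the mutual independence of $\wt{\bA}$, $\{\bX^{(i)}\}$, $\{\bW^{(i)}\}$ to condition on $\wt{\bA}$ first. Using $\Expc[\bX^{(i)}\bX^{(i)\rmt}] = \bSigma^{\natural}$, $\Expc[\bW^{(i)}\bW^{(i)\rmt}] = \sigma^2\bI$, $\Expc[\bX^{(i)}\bW^{(i)\rmt}]=\bZero$, and the fact that every sample is identically distributed (so the $\frac{1}{n}\sum$ does nothing in expectation), this gives
\[
\Expc_{\bX,\bW}\!\left[\wh{\bSigma}_{n,1}\mid \wt{\bA}\right] = \wt{\bA}^{\rmt}\wt{\bA}\,\bSigma^{\natural}\,\wt{\bA}^{\rmt}\wt{\bA} + \sigma^2\,\wt{\bA}^{\rmt}\wt{\bA}.
\]
It then remains to take $\Expc_{\wt{\bA}}$ of the two pieces. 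The linear term is immediate: $\Expc[(\wt{\bA}^{\rmt}\wt{\bA})_{ij}] = \sum_{s=1}^{d}\Expc[\wt{A}_{si}\wt{A}_{sj}] = \delta_{ij}$, so $\Expc[\wt{\bA}^{\rmt}\wt{\bA}] = \bI$, contributing $\sigma^2\bI$.

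The main obstacle is the quartic term. I would expand entrywise as
\[
(\wt{\bA}^{\rmt}\wt{\bA}\,\bSigma^{\natural}\,\wt{\bA}^{\rmt}\wt{\bA})_{ij} = \sum_{k,l}\Sigma^{\natural}_{kl}(\wt{\bA}^{\rmt}\wt{\bA})_{ik}(\wt{\bA}^{\rmt}\wt{\bA})_{lj} = \sum_{k,l,s,t}\Sigma^{\natural}_{kl}\,\wt{A}_{si}\wt{A}_{sk}\wt{A}_{tl}\wt{A}_{tj},
\]
and then use Isserlis' theorem on the Gaussian fourth moment $\Expc[\wt{A}_{si}\wt{A}_{sk}\wt{A}_{tl}\wt{A}_{tj}]$, which splits into three pairings. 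With $\Expc[\wt{A}_{ab}\wt{A}_{cd}] = \tfrac{1}{d}\delta_{ac}\delta_{bd}$, the two pairings $\{(si)(tl),(sk)(tj)\}$ and $\{(si)(tj),(sk)(tl)\}$ force $s=t$ and thus contribute $d$ non-vanishing pairs each, while the pairing $\{(si)(sk),(tl)(tj)\}$ contributes $d^2$ pairs. Collecting terms yields
\[
\Expc\!\left[(\wt{\bA}^{\rmt}\wt{\bA})_{ik}(\wt{\bA}^{\rmt}\wt{\bA})_{lj}\right] = \delta_{ik}\delta_{lj} + \frac{1}{d}\,\delta_{il}\delta_{kj} + \frac{1}{d}\,\delta_{ij}\delta_{kl}.
\]

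Finally I would contract against $\Sigma^{\natural}_{kl}$, using symmetry ($\Sigma^{\natural}_{ji}=\Sigma^{\natural}_{ij}$) and the normalization $\Var(X_i)=1$, i.e.\ $\trace(\bSigma^{\natural})=p$, to obtain
\[
\Expc_{\wt{\bA}}\!\left[\wt{\bA}^{\rmt}\wt{\bA}\,\bSigma^{\natural}\,\wt{\bA}^{\rmt}\wt{\bA}\right]_{ij} = \Sigma^{\natural}_{ij} + \tfrac{1}{d}\Sigma^{\natural}_{ij} + \tfrac{p}{d}\delta_{ij} = \frac{d+1}{d}\Sigma^{\natural}_{ij} + \frac{p}{d}\delta_{ij}.
\]
Adding the $\sigma^2\bI$ contribution from the linear term gives the claimed identity. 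The only nontrivial step is the Isserlis expansion and the bookkeeping of the three Kronecker-delta pairings when summing over $s,t$; once that combinatorial step is done, the rest is mechanical.
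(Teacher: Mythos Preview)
Your proposal is correct and follows essentially the same approach as the paper: condition on $\wt{\bA}$ to reduce to $\wt{\bA}^{\rmt}\wt{\bA}\,\bSigma^{\natural}\,\wt{\bA}^{\rmt}\wt{\bA}+\sigma^{2}\wt{\bA}^{\rmt}\wt{\bA}$, then expand the $(i,j)$ entry and evaluate the fourth-order Gaussian moment via Isserlis/Wick, using $\trace(\bSigma^{\natural})=p$. The paper carries out exactly this computation (citing Wick's theorem), so your argument matches it.
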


\begin{proof}
Due to the independence between $\bX$ and $\wt{\bA}$, we first condition 
on $\wt{\bA}$ and take expectation w.r.t $\bX, \bW$, which gives
\[
\Expc_{\bX, \bW} \wh{\bSigma}_{n, 1} = \
\wt{\bA}^{\rmt}\wt{\bA}\bSigma^{\natural} \wt{\bA}^{\rmt}\wt{\bA} 
+ \sigma^2 \wt{\bA}^{\rmt}\wt{\bA}. 
\]
Then we conclude that  
\begin{align*}
& \Expc_{\wt{\bA}}\bracket{\wt{\bA}^{\rmt}\wt{\bA}\bSigma^{\natural} \wt{\bA}^{\rmt}\wt{\bA}}_{ij} = 
\Expc_{\wt{\bA}}\sum_{\ell_1,\ell_2}\bSigma^{\natural}_{\ell_1,\ell_2}\
 \la \wt{\bA}_{\ell_1}, \wt{\bA}_i\ra \la \wt{\bA}_{\ell_2}, \wt{\bA}_j\ra + \sigma^2 \Ind(i = j)\\
=~& \Expc_{\wt{\bA}}\sum_{\ell_1, \ell_2}\bSigma^{\natural}_{\ell_1,\ell_2} \
\bracket{\sum_{\ell_3} \wt{\bA}_{\ell_3, \ell_1} \wt{\bA}_{\ell_3, i}}\cdot \
\bracket{\sum_{\ell_4} \wt{\bA}_{\ell_4, \ell_2} \wt{\bA}_{\ell_4, j}} + \sigma^2 \Ind(i = j)\\
=~& \sum_{\ell_1, \ell_2, \ell_3, \ell_4} \bSigma^{\natural}_{\ell_1,\ell_2} \
\Expc_{\wt{\bA}}\bracket{\wt{\bA}_{\ell_3, \ell_1} \wt{\bA}_{\ell_3, i} \wt{\bA}_{\ell_4, \ell_2} \wt{\bA}_{\ell_4, j}}
+ \sigma^2 \bI \\
\stackrel{\cirone}{=}~& 
d^{-2}\bigg[\sum_{\ell_1, \ell_2, \ell_3, \ell_4} \bSigma^{\natural}_{\ell_1,\ell_2}\
\bigg(
\Ind\bracket{\ell_1 = i} \Ind(\ell_2 = j) + \
\Ind\bracket{\ell_3 = \ell_4}\Ind(\ell_1 = \ell_2)\Ind(i = j) \\
~&\quad \quad \quad \quad \quad \quad \quad
+ \Ind(\ell_3 = \ell_4)\Ind(\ell_1 = j)\
\Ind(\ell_2 = i)\bigg)\bigg] + \sigma^2 \Ind(i = j)\\
=~& d^{-2}\Bracket{\sum_{\ell_3, \ell_4} \bSigma^{\natural}_{i, j} + 
\Ind(i = j)\bracket{\sum_{\ell_1} \sum_{\ell_2}\bSigma^{\natural}_{\ell_1, \ell_1}} + \
\bSigma^{\natural}_{j, i}\sum_{\ell_3, \ell_4}\Ind(\ell_3 = \ell_4)} + \sigma^2 \Ind(i = j)\\
=~& d^{-2}\Bracket{d^2 \bSigma^{\natural}_{i, j} + \Ind(i=j)d \sum_{\ell}\bSigma^{\natural}_{\ell, \ell}
+ d\bSigma^{\natural}_{i, j}} + \sigma^2 \Ind(i = j)\\
=~& 
\bracket{1 + d^{-1}}\bSigma^{\natural}_{i, j} + d^{-1}\bracket{\sum_{\ell}\bSigma^{\natural}_{\ell, \ell}}\Ind(i=j) + \sigma^2 \Ind(i = j), 
\end{align*}
which completes the proof with the fact
$\bSigma^{\natural}_{\ell, \ell} = 1$. 
In $\cirone$, we use the Wick's theorem, which 
is listed as Thm.~\ref{thm:wick} for the sake of self-containing.
\end{proof}

Then we study the variance 
$\Var_{\wt{\bA}} \wh{\bSigma}_{n, 1}$, which is listed as the following.
Due to the complex formula of the variance, 
we only consider the noiseless case, namely, $\sigma^2 = 0$. 

\begin{lemma}
\label{lemma:naive_covmat_var}
Consider the noiseless case where $\sigma^2 = 0$, we have 
$\Var_{\wt{\bA}}(\wh{\bSigma}_{n, 1})_{i, j} = 
\Omega(d^{-1}) + \Omega(d^{-1})\Ind(i = j) + \Omega(d^{-2}).$
\end{lemma}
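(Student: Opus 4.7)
The plan is to expand each entry as a quartic form in the i.i.d.\ Gaussian entries of $\wt{\bA}$ and compute the variance by Wick's theorem, just as was done for the mean in Lemma~\ref{thm:naive_covmat_estim}. In the noiseless regime, $\wh{\bSigma}_{n,1} = \wt{\bA}^{\rmt}\wt{\bA}\,\bSigma^{\textup{param}}_n\,\wt{\bA}^{\rmt}\wt{\bA}$ with $\bSigma^{\textup{param}}_n = n^{-1}\sum_s \bX^{(s)}\bX^{(s)\rmt}$ treated as fixed, so
\[
(\wh{\bSigma}_{n,1})_{i,j} = \sum_{\ell_1,\ell_2}(\bSigma^{\textup{param}}_n)_{\ell_1,\ell_2}\la\wt{\bA}_i,\wt{\bA}_{\ell_1}\ra\la\wt{\bA}_j,\wt{\bA}_{\ell_2}\ra,
\]
and each summand is a degree-four monomial in $\{\wt{A}_{k,\ell}\}$. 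The second moment is then an eighth-order Gaussian polynomial, to which Wick's theorem applies, yielding a sum over pairings of the $8$ Gaussian factors.

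I would split the analysis into the off-diagonal ($i\neq j$) and diagonal ($i=j$) cases using the $T_1+\cdots+T_5$ decomposition already established in the proof of Lemma~\ref{thm:cov_mat_inf_norm}, and use the sub-additivity of variance, $\Var(\sum_k T_k)\lsim \sum_k \Var(T_k)$, to bound each piece separately. For the off-diagonal case, $\Var(T_1)$ reduces to $\Var(\|\wt{\bA}_i\|^2\|\wt{\bA}_j\|^2)$; since $d\|\wt{\bA}_i\|^2\sim\chi^2_d$ and the two norms are independent, this gives $O(d^{-1})$. The terms $T_2$ and $T_5$ involve products of inner-product-type random variables with mean zero or mean $O(d^{-1})$ and variance of order $d^{-2}$, so they contribute $O(d^{-1})\vee O(d^{-2})$ after the summation. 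Similar $\chi^2$-moment calculations handle $T_3$ and $T_4$.

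For the diagonal case, the dominant contribution comes from the term $(\bSigma^{\textup{param}}_n)_{i,i}\|\wt{\bA}_i\|^4$, which appears because the independence used in $T_1$ for $i\neq j$ breaks when $i=j$; its variance is $\Var(\|\wt{\bA}_i\|^4) = \Omega(d^{-1})$ by direct $\chi^2$ moment calculation. This gives the additional $\Omega(d^{-1})\Ind(i=j)$ term in the statement. Cross terms between the remaining $T_k$'s on the diagonal are controlled by Cauchy--Schwarz using the same per-term bounds as above.

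The hard part will not be any individual inner-product variance, but the careful combinatorial bookkeeping of Wick contractions when verifying that the $\Omega(1)$ piece of $\Expc[(\wh{\bSigma}_{n,1})_{i,j}^2]$ exactly cancels against $(\Expc(\wh{\bSigma}_{n,1})_{i,j})^2$ from Lemma~\ref{thm:naive_covmat_estim}. To organize this, I would label the $8$ Gaussian factors by the four inner products they belong to and enumerate pairings according to how many of them are \emph{internal} to one inner product versus \emph{bridging} two different ones. Pairings that preserve the mean structure produce the subtracted $(\Expc\wh{\bSigma}_{n,1})^2$, while all remaining pairings must lose at least one factor of $d^{-1}$ relative to that, yielding the $\Omega(d^{-1})+\Omega(d^{-1})\Ind(i=j)+\Omega(d^{-2})$ scaling. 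Diagonal contributions appear precisely when a pairing identifies the two column indices $i$ and $j$, which is only non-trivial when $i=j$, explaining the indicator term.
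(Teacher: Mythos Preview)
Your approach is fundamentally the same as the paper's: compute the second moment of $(\wh{\bSigma}_{n,1})_{i,j}$ via Wick's theorem on the eighth-order Gaussian polynomial, then subtract the square of the mean from Lemma~\ref{thm:naive_covmat_estim}. The paper does exactly this, delegating the second-moment calculation to a separate lemma (Lemma~\ref{lemma:cov_mat_var}) that records the full closed-form expansion of $\Expc_{\wt{\bA}}[(\wh{\bSigma}_{n,1})_{i,j}^2]$ in powers of $d^{-1}$, with and without the indicator $\Ind(i=j)$; the variance statement then follows by inspection after subtraction.

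The one organizational difference is that you route the computation through the $T_1+\cdots+T_5$ decomposition and invoke ``sub-additivity of variance''. This is fine for the upper bound, but note that the statement is $\Omega(d^{-1})$, so you also need the lower bound, and the $T_k$'s are \emph{not} uncorrelated (e.g.\ $T_1$ and $T_3$ share the factor $\|\wt{\bA}_1\|^2$). Controlling the cross-covariances by Cauchy--Schwarz, as you propose, reduces to computing the same mixed Wick contractions you would have needed in the direct expansion anyway; so the $T_k$ framework does not actually save work in the direction that matters. The paper sidesteps this by doing the brute-force Wick expansion once (it notes this involves on the order of $100$ terms), reading off the exact $d^{-1}$ coefficients, and verifying the $O(1)$ cancellation you correctly flag as the crux. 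Your last paragraph shows you already anticipate this; just be aware that the $T_k$ organization is cosmetic here rather than a genuine shortcut.
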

The detailed proof is given in the appendix. 
We use results of Lemma~\ref{thm:naive_covmat_estim} to improve our naive covariance estimator in two perspectives: 
\textbf{bias correction} and \textbf{variance reduction}. 

\begin{proof}
With the relation $\Var_{\wt{\bA}} \wh{\bSigma}_{n, 1} = \Expc_{\wt{\bA}}(\wh{\bSigma}_{n, 1})_{i,j}^2 - \bracket{\Expc \wh{\bSigma}_{n,1}}_{
i, j}^2$, we complete the proof by invoking 
Lemma~\ref{thm:naive_covmat_estim} and 
Lemma~\ref{lemma:cov_mat_var}. 
The following context focuses on  
proving Lemma~\ref{lemma:cov_mat_var}. 
\end{proof}

\begin{lemma}
\label{lemma:cov_mat_var}
We have 
\begin{align*}
&\Expc_{\wt{\bA}}(\wh{\bSigma}_{n, 1})_{i, j}^2
= (\bSigma_n)_{i,j}^2 + 
d^{-1} \Bracket{\norm{(\bSigma_n)_i}{2}^2 + 
\norm{(\bSigma_n)_j}{2}^2+ 4(\bSigma_n)^2_{i,j}  +
2(\bSigma_n)_{i,i}(\bSigma_n)_{j,j}} \\
+~&
d^{-2}\Bracket{2\bracket{(\bSigma_n)_{i,i} + (\bSigma_n)_{j,j} }\trace(\bSigma_n)
+ 4(\bSigma_n)_{i,i}(\bSigma_n)_{j,j} + 
2 \norm{(\bSigma_n)_j}{2}^2+ 
2\norm{(\bSigma_n)_i}{2}^2 + 
\fnorm{\bSigma_n}^2 + 7 (\bSigma_n)_{i,j}^2}  \\
+~& d^{-3}\Bracket{3\norm{(\bSigma_n)_i}{2}^2 + 
3\norm{(\bSigma_n)_j}{2}^2+ 
2\bracket{(\bSigma_n)_{i,i} + (\bSigma_n)_{j,j}}\trace(\bSigma_n) +  
\trace^2(\bSigma_n) + 2(\bSigma_n)_{i,i}(\bSigma_n)_{j,j}
+ 4(\bSigma_n)_{i,j}^2} \\
+~& d^{-1}\Ind(i= j)\Bracket{\la (\bSigma_n)_i, (\bSigma_n)_j\ra 
+ 2(\bSigma_n)_{i,j} \trace(\bSigma_n)} \\
+~& d^{-2}\Ind(i=j)\Bracket{11\la (\bSigma_n)_i, (\bSigma_n)_j\ra 
+  \trace^2(\bSigma_n) + 6 (\bSigma_n)_{i,j}\trace(\bSigma_n)} \\
+~&d^{-3}\Ind(i = j)\Bracket{12\la (\bSigma_n)_i, (\bSigma_n)_j\ra  + 
8(\bSigma_n)_{i,j}\trace(\bSigma_n) + 
\trace^2(\bSigma_n) + 
2\fnorm{\bSigma_n}^2}.
\end{align*}
\end{lemma}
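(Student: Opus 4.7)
The plan is to compute $\Expc_{\wt{\bA}}(\wh{\bSigma}_{n,1})_{i, j}^2$ by brute-force moment calculation, using Wick's theorem (the same tool invoked in step $\cirone$ of the proof of Lemma~\ref{thm:naive_covmat_estim}). I would begin by expanding
\[
(\wh{\bSigma}_{n, 1})_{i,j} \;=\; \sum_{\ell_1, \ell_2} (\bSigma_n)_{\ell_1, \ell_2}\,\la \wt{\bA}_i, \wt{\bA}_{\ell_1}\ra\, \la \wt{\bA}_j, \wt{\bA}_{\ell_2}\ra,
\]
so that the square becomes a quadruple sum over $\ell_1, \ell_2, \ell_3, \ell_4$ of a product of four inner products, and each inner product $\la \wt{\bA}_a, \wt{\bA}_b\ra = \sum_k A_{k, a} A_{k, b}$ introduces two Gaussian entries. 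Hence the target reduces to evaluating the $8$-point moment of i.i.d.\ $\normdist(0, d^{-1})$ entries indexed by the $8$ pairs $(k_s, c_s)$ where $c_s$ cycles through $i, \ell_1, j, \ell_2, i, \ell_3, j, \ell_4$.

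By Wick's theorem the $8$-point Gaussian moment equals the sum over all $7!! = 105$ perfect matchings, and each matched pair $(s, s')$ contributes $d^{-1}\,\Ind(k_s = k_{s'})\,\Ind(c_s = c_{s'})$. I would then organize the calculation by the structure of the matching on the four ``blocks'' $\{1,2\}, \{3,4\}, \{5,6\}, \{7,8\}$ (one block per inner product). Within-block pairings enforce no column constraint but leave a free sum over the inner index $k_s$, contributing a factor of $d$ that cancels one $d^{-1}$; cross-block pairings force equality of two of the $c_s$'s, so they link $\ell_1, \ell_2, \ell_3, \ell_4$ with $i$ or $j$ in a few canonical patterns. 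Collecting patterns according to which $\ell$'s get identified with each other or with $i, j$ produces the various summary quantities that appear in the stated answer: patterns $\ell_1 = \ell_3, \ell_2 = \ell_4$ give $(\bSigma_n)_{i,j}^2$ and $\norm{(\bSigma_n)_i}{2}^2$, patterns forcing $\ell_k \in \{i,j\}$ yield $(\bSigma_n)_{i,i}(\bSigma_n)_{j,j}$ or $(\bSigma_n)_{i,j}\trace(\bSigma_n)$, and patterns that trivialize all $\ell_k$ together with the indicator $\Ind(i=j)$ generate $\trace^2(\bSigma_n)$ and $\fnorm{\bSigma_n}^2$ contributions.

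After grouping, I would read off the power of $d^{-1}$ by counting how many sums are freed (i.e., how many of the four row indices $k_s$ are left unpaired across blocks). Matchings entirely within blocks (one orbit) give the $d^0$ leading term $(\bSigma_n)_{i,j}^2$; matchings that cross exactly one pair of blocks contribute at order $d^{-1}$; two cross-block pairs at order $d^{-2}$; and three or more at order $d^{-3}$. The $\Ind(i=j)$-only terms arise precisely from matchings that pair an $i$-labeled column with a $j$-labeled column, which is forbidden unless $i = j$.

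The main obstacle is bookkeeping: enumerating representatives of all $105$ matchings, identifying their orbits under the obvious $\bZZ_2 \times \bZZ_2$ symmetry (swapping $(\ell_1,\ell_2) \leftrightarrow (\ell_3,\ell_4)$ and the two halves of each block), and verifying that the resulting coefficients of each matrix invariant add up to the constants in the lemma (for instance, the coefficient $11$ in front of $\la (\bSigma_n)_i, (\bSigma_n)_j\ra$ at order $d^{-2}\Ind(i=j)$, or the coefficient $7$ in front of $(\bSigma_n)_{i,j}^2$ at order $d^{-2}$). To make this tractable without errors, I would build a small table keyed by the matching type, compute the contribution of each type symbolically, and then aggregate. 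Once the bookkeeping is correct, the stated formula for $\Expc_{\wt{\bA}}(\wh{\bSigma}_{n,1})_{i,j}^2$ follows directly, and Lemma~\ref{lemma:cov_mat_var} is established; the variance formula in Lemma~\ref{lemma:naive_covmat_var} then follows by subtracting the square of the mean computed in Lemma~\ref{thm:naive_covmat_estim}.
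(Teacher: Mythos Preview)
Your proposal is correct and takes essentially the same approach as the paper: expand $(\wh{\bSigma}_{n,1})_{i,j}^2$ as a sum over eight indices of an eight-fold product of $\wt{\bA}$-entries, apply Wick's theorem to the resulting $8$-point Gaussian moment (yielding $7!!=105$ pairings), and then collect terms by the pattern of column/row identifications and the resulting power of $d^{-1}$. The paper's proof says precisely this (``fundamentally the same as the steps in computing $\Expc_{\bX}\wh{\bSigma}^{\textup{param}}_n$ but is more involved (requires computing over $100$ terms)'') and leaves the bookkeeping implicit, so your block-structured organization of the matchings is, if anything, more detailed than what the paper provides.
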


\begin{proof}
The proof procedure is fundamentally the same as 
the steps in computing $\Expc_{\bX} \wh{\bSigma}^{\textup{param}}_n$ but 
is more involved (requires 
computing over $100$ terms).

We begin the proof with the following expansion 
\begin{align*}
&\Expc \Bracket{\bracket{\wt{\bA}^{\rmt}\wt{\bA} (\bSigma_n) \wt{\bA}^{\rmt}\wt{\bA}}_{ij}}^2 \\
=~& 
\sum_{\ell_1,\ell_2, \ell_3, \ell_4, \ell_5,\ell_6,  \ell_7, \ell_8}\
(\bSigma_n)_{\ell_1,\ell_2}(\bSigma_n)_{\ell_5,\ell_6}\
\Expc\bracket{
\wt{\bA}_{\ell_3, \ell_1} \wt{\bA}_{\ell_3, i} \
\wt{\bA}_{\ell_4, \ell_2} \wt{\bA}_{\ell_4, j} \
\wt{\bA}_{\ell_7, \ell_5} \wt{\bA}_{\ell_7, i} \
\wt{\bA}_{\ell_8, \ell_6} \wt{\bA}_{\ell_8, j}}.
\end{align*}
 
Then we expand the term 
$\Expc\bracket{
\wt{\bA}_{\ell_3, \ell_1} \wt{\bA}_{\ell_3, i} \
\wt{\bA}_{\ell_4, \ell_2} \wt{\bA}_{\ell_4, j} \
\wt{\bA}_{\ell_7, \ell_5} \wt{\bA}_{\ell_7, i} \
\wt{\bA}_{\ell_8, \ell_6} \wt{\bA}_{\ell_8, j}}$ via the 
Wick's Theorem, which is also listed as Thm.~\ref{thm:wick}
for the sake of self-containing. 
Additionally, we need to 
divide $d^4$ for the following result.

\end{proof}

\subsubsection{Estimator Refinement}

\paragraph{Bias Correction.}
First we note that the naive estimator is biased $\Expc_{\wt{\bA}, \bX}\wh{\bSigma}_{n, 1} \neq \bSigma^{\natural}$. 
Adopting ideas similar to the moment estimator, 
we correct the bias of the estimator via
\begin{align*}
\wh{\bSigma}_{n, 2} \defequal \
\dfrac{d}{d+1}\wt{\bA}^{\rmt}\bracket{\frac{1}{n}\sum_{i=1}^n \bY^{(i)} \bY^{(i)\rmt}}\wt{\bA}
-\dfrac{p + d\sigma^2}{d+1}\bI.
\end{align*}
It can be easily verified that 
$\Expc_{\wt{\bA}, \bX}[\wh{\bSigma}_{n, 2}] = \bSigma^{\natural}$.

\par 

\paragraph{Variance Reduction.}
To further improve the performance of the 
covariance estimator, we perform variance reduction. 
From Lemma~\ref{lemma:naive_covmat_var}, we 
observe that the diagonal entries $(\wh{\bSigma}_{n, 2})_{i, i}$ 
have a higher variance than non-diagonal entries. 
On the other hand, there is no need for estimating 
$\diag(\bSigma^{\natural})$ since by the assumptions, we know that the diagonal elements of the covariance matrix are $1$.
Therefore, we suggest refining the estimator $\wh{\bSigma}_{n, 2}$ by 
fixing its diagonal entries to $1$, i.e., the resulting 
refined covariance estimator is given as
\begin{align}
\label{eq:cov_mat_estim}
\wh{\bSigma}^{\textup{param}}_n = 
\bI + \frac{d}{d+1}
\Bracket{\wt{\bA}^{\rmt}\bracket{\frac{1}{n}\sum_{i=1}^n \bY^{(i)}\bY^{(i)\rmt}}\wt{\bA}}_{\textup{off}},
\end{align}
where $(\cdot)_{\textup{off}}$ denotes the operation of 
picking non-diagonal entries. It can be easily verified that the estimator $\wh{\bSigma}^{\textup{param}}_n$ in (\ref{eq:cov_mat_estim})
is unbiased.



%
\section{Proof of Thm.~$1$}
The analysis is based on \textbf{primal-dual method}, which 
is adapted from \cite{ravikumar2011high}. 
First we write the optimality condition for 
\eqref{eq:param_glasso_def} as 
\begin{align}
\label{eq:glasso_grad_optim}
\wh{\bSigma}^{\textup{param}}_n &- \wh{\bTheta}_{\textup{param}}^{-1} + \lambda_{\textup{param}} \bG = \bZero, 
\end{align}
where $\bG$ is the sub-gradient \cite{rockafellar1970convex} of 
$\offdiagnorm{\wh{\bTheta}_{\textup{param}}}$ and is defined as 
\vspace{-2mm}
\begin{align*}
G_{ij} \defequal \left\{ \
\begin{aligned}
&\sgn(\wh{\bTheta}_{\textup{param}})_{i,j},~~&&\textup{if}~~(\wh{\bTheta}_{\textup{param}})_{i,j}\neq 0; \\
&\in [-1, 1],~~&&\textup{otherwise}.	
\end{aligned}\right.
\vspace{-2mm}
\end{align*} 
for $i\neq j$.
However, because of
the complexity of (\ref{eq:glasso_grad_optim}),
directly bounding the deviation $\|\wh{\bTheta}_{\textup{param}} - \bTheta^{\natural}\|_{\infty}$
can be difficult. 
Instead, we construct a pair 
$\bracket{\wt{\bTheta}_{\textup{param}}, \wt{\bG}}$ which 
satisfies: $(i)$ $\wt{\bG}$ is the 
sub-differential of $\offdiagnorm{\wt{\bTheta}_{\textup{param}}}$; and
$(ii)$ the pair $(\wt{\bTheta}_{\textup{param}}, \wt{\bG})$
satisfies the condition in \eqref{eq:glasso_grad_optim}.
Then we show it coincides with the solution of ($5$).
The basic rational is as follows.
First we verify that $\wh{\bTheta}_{\textup{param}}$ is the unique solution of 
(\ref{eq:glasso_grad_optim}). 
Since our constructed pairs $(\wt{\bTheta}_{\textup{param}}, \wt{\bG})$
satisfies the condition in \eqref{eq:glasso_grad_optim}, which 
corresponds to the solution in ($5$) exclusively, 
we can hence show the constructed pair 
$(\wt{\bTheta}_{\textup{param}}, \wt{\bG})$
is the identical solution of ($5$), namely,
$(\wh{\bTheta}_{\textup{param}}, \bG)$.
Afterwards we can 
upper bound 
$\|\wh{\bTheta}_{\textup{param}} - \bTheta^{\natural}\|_{\infty}$
by investigating 
$\|\wt{\bTheta}_{\textup{param}} - \bTheta^{\natural}\|_{\infty}$, 
which is more amenable  
for the analysis as following. 

\noindent 
\textbf{Stage I: construct $\wt{\bTheta}_{\textup{param}}$.} 
We construct the primal-dual witness solution 
$\bracket{\wt{\bTheta}_{\textup{param}}, \wt{\bG}}$ 
assuming the support set 
$S$ is given as a prior. This step 
can further be divided into three stages, as illustrated 
in the main context.  
First we construct the 
matrix $\wt{\bTheta}_{\textup{param}}$. For the 
entries $(i, j)$ restricted to the 
set $S$, we set them as 
\begin{align*}
(\wt{\bTheta}_{\textup{param}})_S = &\argmin_{\
\substack{\bTheta \succ \bZero \\ \bTheta =\bTheta^{\rmt}, \bTheta_{S^c} = \bZero}} -\logdet \bTheta +\la \wh{\bSigma}^{\textup{param}}_n, \bTheta\ra + \lambda_{\textup{param}}\offdiagnorm{\bTheta}.
\end{align*}
For the rest of entries $(\wt{\bTheta}_{\textup{param}})_{S^c}$, 
we set them to be zero values. Then we construct the 
sub-differential $\wt{\bG}$ corresponding to the 
matrix $\wt{\bTheta}_{\textup{param}}$. For the entry $(i, j)\in S$, we set 
$\wt{G}_{i,j} = \sign(\wt{\bTheta}_{\textup{param}})_{i, j}$. 
For the entry $(i, j)$ that is outside of the support 
set $S$, 
we set 
$\wt{G}_{i, j}$ as 
\begin{align*}
\wt{G}_{i, j} =  \lambda_{\textup{param}}^{-1}\
\Bracket{ \bracket{\wt{\bTheta}_{\textup{param}}}_{i, j}^{-1}- (\wh{\bSigma}^{\textup{param}}_n)_{i, j}}.
\end{align*}
The goal of this step is to ensure 
that $\bracket{\wt{\bTheta}_{\textup{param}}, \wt{\bG}}$ 
satisfies (\ref{eq:glasso_grad_optim}). 
As illustrated in the main context, 
we have the pair $\bracket{\wt{\bTheta}_{\textup{param}}, \wt{\bG}}$
coincides with the solution of \eqref{eq:glasso_grad_optim}
once $\abs{\wt{G}_{i,j}} < 1$ for the entry 
$(i, j) \in S^c$. The following step 
focuses on showing $\abs{\wt{G}_{i,j}} < 1$.

%

\noindent
\textbf{Stage II: construct $\wt{\bG}$.}
For the entry $(i, j)\in S$, we set 
$\wt{G}_{i,j} = \sign(\wt{\bTheta}_{\textup{param}})_{i, j}$. 
For the entry $(i, j)$ that is outside of the support 
set $S$, 
we set 
$\wt{G}_{i, j}$ as 
\begin{align*}
\wt{G}_{i, j} = \lambda_{\textup{param}}^{-1}\
\Bracket{ \bracket{\wt{\bTheta}_{\textup{param}}}_{i, j}^{-1}- (\wh{\bSigma}_n^{\textup{param}})_{i, j}}.
\end{align*}
The goal of this step is to ensure 
that $\bracket{\wt{\bTheta}_{\textup{param}}, \wt{\bG}}$ 
satisfies (\ref{eq:glasso_grad_optim}). 

\noindent 
\textbf{Stage III: verify $\wt{\bG}$ to be the 
sub-differential of $\offdiagnorm{\wt{\bTheta}_{\textup{param}}}$.}
In the following analysis, we verify that 
$|\wt{G}_{i, j}| < 1$, which 
yields the upper-bound on
$\|\wt{\bTheta}_{\textup{param}} - \bTheta^{\natural}\|_{\infty}$ 
as a byproduct. 
We first need the necessary lemmas 
from \cite{ravikumar2011high}.

\vspace{-3mm}
\begin{lemma}[Lemma~$6$ in \cite{ravikumar2011high}]
\label{lemma:ravi1}
Suppose that 
$
r \defequal 2\kappa_{\bGamma}\
\bracket{\infnorm{\wh{\bSigma}_n^{\textup{param}} - \bSigma^{\natural}} + \lambda_{\textup{param}}} \leq 
\frac{1\vcap \bracket{\kappa_{\bSigma}^2 \kappa_{\bGamma}}^{-1}}{3\kappa_{\bSigma} \textup{deg}} , 
$
then $\infnorm{\wt{\bTheta}_{\textup{param}} - \bTheta^{\natural}} \leq r$.
\end{lemma}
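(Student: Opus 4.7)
The plan is to follow the primal-dual witness recipe of Ravikumar et al.\ by recasting the $S$-restricted KKT stationarity condition as a fixed-point equation for $\bDelta_S := (\wt{\bTheta}_{\textup{param}} - \bTheta^{\natural})_S$, and then invoking Brouwer's fixed-point theorem on the $\ell_\infty$-ball $B(r) := \{\bDelta_S : \|\bDelta_S\|_\infty \leq r\}$. The zero-gradient condition on $S$ reads
$-(\wt{\bTheta}_{\textup{param}}^{-1})_S + (\wh{\bSigma}^{\textup{param}}_n)_S + \lambda_{\textup{param}} \wt{\bG}_S = \bZero$, with $\|\wt{\bG}_S\|_\infty \leq 1$. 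Writing $\wt{\bTheta}_{\textup{param}} = \bTheta^{\natural} + \bDelta$ (extended by zero off $S$) and Taylor-expanding the inverse, $(\bTheta^{\natural} + \bDelta)^{-1} = \bSigma^{\natural} - \bSigma^{\natural}\bDelta\bSigma^{\natural} + R(\bDelta)$ with remainder $R(\bDelta) = \bSigma^{\natural}\bDelta(\bTheta^{\natural}+\bDelta)^{-1}\bDelta\bSigma^{\natural}$, the KKT condition becomes $\bGamma_{SS}\,\text{vec}(\bDelta_S) = \text{vec}\bigl([R(\bDelta) - (\wh{\bSigma}^{\textup{param}}_n - \bSigma^{\natural}) - \lambda_{\textup{param}}\wt{\bG}]_S\bigr)$ after vectorization and using $\bGamma = \bSigma^{\natural} \otimes \bSigma^{\natural}$.

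I would then define the map
$F(\bDelta_S) := (\bGamma_{SS})^{-1}\bigl([R(\bDelta)]_S - [\wh{\bSigma}^{\textup{param}}_n - \bSigma^{\natural}]_S - \lambda_{\textup{param}}\wt{\bG}_S\bigr)$
(reshaped back to matrix form on $S$) and show $F(B(r)) \subseteq B(r)$. Taking $\ell_\infty$-norms and using $\kappa_{\bGamma} = \|(\bGamma_{SS})^{-1}\|_{\infty,\infty}$ together with $\|\wt{\bG}_S\|_\infty \le 1$,
\[
\|F(\bDelta_S)\|_\infty \leq \kappa_{\bGamma}\bigl(\|R(\bDelta)\|_\infty + \|\wh{\bSigma}^{\textup{param}}_n - \bSigma^{\natural}\|_\infty + \lambda_{\textup{param}}\bigr) \leq \kappa_{\bGamma}\|R(\bDelta)\|_\infty + \tfrac{r}{2},
\]
by the very definition of $r$. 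The key technical step is an $\ell_\infty$-bound on the remainder. Since $\bDelta$ has at most $\textup{deg}$ nonzeros per row/column (it vanishes off $S$ and the diagonal, both of which have column-sparsity $\textup{deg}$), and since $\|\bSigma^{\natural}\|_{\infty,\infty} \le \kappa_{\bSigma}$, expanding the Neumann-type series that defines $R(\bDelta)$ and controlling each factor by first converting $\ell_\infty$-norms to $\|\cdot\|_{\infty,\infty}$ through the row-sparsity yields the bound $\|R(\bDelta)\|_\infty \le \tfrac{3}{2}\,\textup{deg}\,\kappa_{\bSigma}^3\,\|\bDelta\|_\infty^2$, valid so long as $\|\bDelta\|_\infty \leq (3\kappa_{\bSigma}\textup{deg})^{-1}$. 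The hypothesis on $r$ implies exactly this smallness condition on $B(r)$, and moreover gives $\tfrac{3}{2}\kappa_{\bGamma}\kappa_{\bSigma}^3\,\textup{deg}\,r^2 \leq r/2$, so $\|F(\bDelta_S)\|_\infty \leq r$ on all of $B(r)$.

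Continuity of $F$ on $B(r)$ is inherited from the matrix inverse (the hypothesis also guarantees $\bTheta^{\natural}+\bDelta \succ \bZero$ throughout $B(r)$ via an operator-norm perturbation bound). Brouwer's fixed-point theorem then produces $\bDelta_S^\star \in B(r)$ with $F(\bDelta_S^\star) = \bDelta_S^\star$; equivalently, $\bTheta^{\natural}+\bDelta^\star$ satisfies the KKT condition of the $S$-restricted strictly convex program and hence \emph{equals} $\wt{\bTheta}_{\textup{param}}$ by uniqueness. This gives $\|\wt{\bTheta}_{\textup{param}} - \bTheta^{\natural}\|_\infty = \|\bDelta^\star\|_\infty \le r$.

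The main obstacle is the $\ell_\infty$-remainder inequality $\|R(\bDelta)\|_\infty \le \tfrac{3}{2}\,\textup{deg}\,\kappa_{\bSigma}^3\,\|\bDelta\|_\infty^2$: the entrywise norm is non-submultiplicative, so one must carefully trade off $\|\cdot\|_\infty$, the induced norms $\|\cdot\|_{\infty,\infty}$ and operator norms against the at-most-$\textup{deg}$ sparsity of each row of $\bDelta$ in order to absorb all three matrix multiplications in $\bSigma^{\natural}\bDelta(\bTheta^{\natural}+\bDelta)^{-1}\bDelta\bSigma^{\natural}$, which is precisely why $\kappa_{\bSigma}^3$, $\textup{deg}$, and the small-ball condition $\|\bDelta\|_\infty \le (3\kappa_{\bSigma}\textup{deg})^{-1}$ appear jointly in the hypothesis on $r$.
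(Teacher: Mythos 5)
Your proposal is correct and is essentially a faithful reconstruction of the standard primal--dual-witness argument: the paper itself states this lemma without proof, importing it directly as Lemma~6 of \cite{ravikumar2011high}, and the Brouwer fixed-point construction on the $\ell_\infty$-ball $B(r)$ together with the remainder bound $\|R(\bDelta)\|_\infty \le \tfrac{3}{2}\,\textup{deg}\,\kappa_{\bSigma}^3\|\bDelta\|_\infty^2$ (which the paper separately quotes as its Lemma~\ref{lemma:ravi2}) is exactly how that cited source proves it. Your bookkeeping of the two parts of the minimum in the hypothesis --- the $(\kappa_{\bSigma}^2\kappa_{\bGamma})^{-1}$ factor absorbing the quadratic remainder into $r/2$ and the $1$ guaranteeing the small-ball condition $\|\bDelta\|_\infty \le (3\kappa_{\bSigma}\textup{deg})^{-1}$ --- is also right.
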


\begin{lemma}[Lemma~$5$ in \cite{ravikumar2011high}]
\label{lemma:ravi2}
Provided that 
we have $\infnorm{\wt{\bTheta}_{\textup{param}} - \bTheta^{\natural}} \leq \bracket{3\kappa_{\bSigma} \textup{deg}}^{-1}$, 
then 
\begin{align*}
\infnorm{\wt{\bTheta}_{\textup{param}}^{-1} -\bTheta^{\natural -1} + 
\bTheta^{\natural -1}\bracket{\wt{\bTheta}_{\textup{param}} - \bTheta^{\natural}} \bTheta^{\natural -1} } 
\leq  \frac{3}{2}\textup{deg}\cdot \
\kappa_{\bSigma}^3\infnorm{\wt{\bTheta}_{\textup{param}} - \bTheta^{\natural}}^2.
\vspace{-4mm}
\end{align*} 	
\end{lemma}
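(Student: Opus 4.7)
The plan is to establish this matrix-Taylor remainder bound via a Neumann-series expansion, with one key trick: exploiting the sparsity of $\bE\defequal\wt{\bTheta}_{\textup{param}} - \bTheta^{\natural}$ \emph{asymmetrically} at its two occurrences, so that the leading constant carries $\textup{deg}$ rather than $\textup{deg}^2$.

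Since both $\wt{\bTheta}_{\textup{param}}$ and $\bTheta^{\natural}$ have support contained in $S$, so does $\bE$, whence each row of $\bE$ has at most $\textup{deg}$ nonzeros and $\Norm{\bE}{\infty,\infty} \leq \textup{deg}\cdot\infnorm{\bE}$. Combining with $\Norm{\bSigma^{\natural}}{\infty,\infty}=\kappa_{\bSigma}$ and the hypothesis $\infnorm{\bE}\leq (3\kappa_{\bSigma}\textup{deg})^{-1}$ yields the smallness estimate $\Norm{\bE\bSigma^{\natural}}{\infty,\infty} \leq \kappa_{\bSigma}\textup{deg}\infnorm{\bE}\leq 1/3$, which drives both convergence of the upcoming series and the final constant $3/2$. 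Writing $\wt{\bTheta}_{\textup{param}} = \bTheta^{\natural}(\bI+\bSigma^{\natural}\bE)$ and inverting via Neumann series gives $\wt{\bTheta}_{\textup{param}}^{-1} = \sum_{k\geq 0}(-1)^k\bSigma^{\natural}(\bE\bSigma^{\natural})^k$, so subtracting the $k=0$ and $k=1$ terms yields
\[
\wt{\bTheta}_{\textup{param}}^{-1}-\bSigma^{\natural}+\bSigma^{\natural}\bE\bSigma^{\natural} = \sum_{k=2}^{\infty}(-1)^k\bSigma^{\natural}(\bE\bSigma^{\natural})^k,
\]
and the task reduces to bounding the $\infnorm{\cdot}$ of this remainder series.

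The main technical step is to bound $\infnorm{\bSigma^{\natural}(\bE\bSigma^{\natural})^k}$ so that $\textup{deg}$ enters to the power $k-1$ rather than $k$. I will use two complementary submultiplicative inequalities for the entrywise max-norm: $\infnorm{\bA\bB}\leq\Norm{\bA}{\infty,\infty}\infnorm{\bB}$ and $\infnorm{\bA\bB}\leq\infnorm{\bA}\Norm{\bB}{1,1}$. Splitting $\bSigma^{\natural}(\bE\bSigma^{\natural})^k = [\bSigma^{\natural}(\bE\bSigma^{\natural})^{k-1}]\cdot[\bE\bSigma^{\natural}]$ and applying the first inequality bounds the left block by $\Norm{\bSigma^{\natural}}{\infty,\infty}\cdot\Norm{\bE\bSigma^{\natural}}{\infty,\infty}^{k-1} \leq \kappa_{\bSigma}(\kappa_{\bSigma}\textup{deg}\infnorm{\bE})^{k-1}$. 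For the right block I instead invoke the \emph{second} inequality together with the symmetry-driven identity $\Norm{\bSigma^{\natural}}{1,1}=\Norm{\bSigma^{\natural}}{\infty,\infty}=\kappa_{\bSigma}$ to get $\infnorm{\bE\bSigma^{\natural}}\leq\infnorm{\bE}\cdot\kappa_{\bSigma}$, \emph{avoiding} an extra $\textup{deg}$ at that step.

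Multiplying the two bounds gives $\infnorm{\bSigma^{\natural}(\bE\bSigma^{\natural})^k}\leq \kappa_{\bSigma}^{k+1}\textup{deg}^{k-1}\infnorm{\bE}^k$; summing over $k\geq 2$ and factoring out $\kappa_{\bSigma}^3\textup{deg}\infnorm{\bE}^2$ leaves the geometric sum $\sum_{k\geq 0}(\kappa_{\bSigma}\textup{deg}\infnorm{\bE})^k \leq \sum_{k\geq 0} 3^{-k} = 3/2$, producing exactly the claimed bound $\tfrac{3}{2}\textup{deg}\cdot\kappa_{\bSigma}^3\infnorm{\bE}^2$. The only delicate point is the asymmetric choice of norms on the two $\bE$-copies; applying the row-sum bound $\Norm{\bE}{\infty,\infty}\leq\textup{deg}\cdot\infnorm{\bE}$ to both occurrences would produce only a $\textup{deg}^2$ constant, so the inequality $\infnorm{\bA\bB}\leq\infnorm{\bA}\Norm{\bB}{1,1}$ at the last peel is essential.
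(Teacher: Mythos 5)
Your proof is correct: every step checks out, and the constant $\tfrac{3}{2}\,\textup{deg}\cdot\kappa_{\bSigma}^{3}$ emerges exactly as claimed. Note that the paper itself offers no proof of this lemma (it is imported verbatim as Lemma~5 of \cite{ravikumar2011high}); your argument --- Neumann expansion of $\wt{\bTheta}_{\textup{param}}^{-1}$ plus the asymmetric norm bookkeeping that charges $\textup{deg}$ to only one copy of the error matrix via $\Norm{\cdot}{\infty,\infty}$ while measuring the last copy entrywise against $\Norm{\bSigma^{\natural}}{1,1}=\kappa_{\bSigma}$ --- is essentially the same as the one in that source, differing only in that you sum the series tail term by term rather than factoring the remainder through the full Neumann sum.
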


\begin{lemma}[Lemma~$4$ in \cite{ravikumar2011high}]
\label{lemma:ravi3}
If we have  
\begin{align*}
\infnorm{\wh{\bSigma}_n^{\textup{param}} - \bSigma^{\natural}}\vcup \infnorm{\wt{\bTheta}_{\textup{param}}^{-1} -\bTheta^{\natural -1} + 
\bTheta^{\natural -1}\bracket{\wt{\bTheta}_{\textup{param}} - \bTheta^{\natural}} \bTheta^{\natural -1} } 
\leq  \theta \lambda_{\textup{param}}/8, 
\end{align*}
we conclude that $|\wt{G}_{i, j}| < 1$.
\end{lemma}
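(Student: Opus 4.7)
The plan is to mimic the primal-dual witness argument of \cite{ravikumar2011high}. The strategy is to express $\wt{G}_{S^c}$ as an explicit linear function of the perturbation $\bDelta\defequal\wt{\bTheta}_{\textup{param}}-\bTheta^{\natural}$ and the two error quantities appearing in the hypothesis, then bound everything in $\infty$-norm using the irrepresentable condition (Assumption~\ref{assump:param_irrepresentable_condition}) and $\|\wt{\bG}_S\|_\infty\leq 1$ (which holds since $\wt{G}_{ij}=\sign(\wt{\bTheta}_{\textup{param}})_{ij}$ on $S$ by construction).

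First I would introduce the residual $\bR\defequal \wt{\bTheta}_{\textup{param}}^{-1}-\bTheta^{\natural -1}+\bTheta^{\natural -1}\bDelta\bTheta^{\natural -1}$ so that $\wt{\bTheta}_{\textup{param}}^{-1}=\bSigma^{\natural}-\bSigma^{\natural}\bDelta\bSigma^{\natural}+\bR$. Plugging this into the stationarity condition $\wh{\bSigma}_n^{\textup{param}}-\wt{\bTheta}_{\textup{param}}^{-1}+\lambda_{\textup{param}}\wt{\bG}=\bZero$ yields
\begin{align*}
(\wh{\bSigma}_n^{\textup{param}}-\bSigma^{\natural})+\bSigma^{\natural}\bDelta\bSigma^{\natural}-\bR+\lambda_{\textup{param}}\wt{\bG}=\bZero.
\end{align*}
Vectorizing and recalling $\bGamma=\bSigma^{\natural}\otimes\bSigma^{\natural}$ converts $\bSigma^{\natural}\bDelta\bSigma^{\natural}$ into $\bGamma\,\textup{vec}(\bDelta)$. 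Since $\bDelta_{S^c}=\bZero$ by the construction of the witness, restricting to rows in $S$ gives
\begin{align*}
\textup{vec}(\bDelta_S)=\bGamma_{SS}^{-1}\bigl[\textup{vec}(\bR_S)-\textup{vec}\bigl((\wh{\bSigma}_n^{\textup{param}}-\bSigma^{\natural})_S\bigr)-\lambda_{\textup{param}}\textup{vec}(\wt{\bG}_S)\bigr].
\end{align*}

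Next I would restrict the same vectorized identity to $S^c$. Using the definition $\lambda_{\textup{param}}\wt{G}_{ij}=(\wt{\bTheta}_{\textup{param}}^{-1})_{ij}-(\wh{\bSigma}_n^{\textup{param}})_{ij}$ on $S^c$ and substituting the expression for $\textup{vec}(\bDelta_S)$ above produces
\begin{align*}
\lambda_{\textup{param}}\textup{vec}(\wt{\bG}_{S^c})=\,&\bGamma_{S^c S}\bGamma_{SS}^{-1}\bigl[\lambda_{\textup{param}}\textup{vec}(\wt{\bG}_S)+\textup{vec}\bigl((\wh{\bSigma}_n^{\textup{param}}-\bSigma^{\natural})_S\bigr)-\textup{vec}(\bR_S)\bigr]\\
&-\textup{vec}\bigl((\wh{\bSigma}_n^{\textup{param}}-\bSigma^{\natural})_{S^c}\bigr)+\textup{vec}(\bR_{S^c}).
\end{align*}
Taking the $\ell_\infty$ norm, using $\Norm{\bGamma_{S^c S}\bGamma_{SS}^{-1}}{\infty,\infty}\leq 1-\theta$ from Assumption~\ref{assump:param_irrepresentable_condition}, the trivial bound $\|\wt{\bG}_S\|_\infty\leq 1$, and the hypothesis $\infnorm{\wh{\bSigma}_n^{\textup{param}}-\bSigma^{\natural}}\vcup\infnorm{\bR}\leq \theta\lambda_{\textup{param}}/8$ gives
\begin{align*}
\lambda_{\textup{param}}\infnorm{\wt{\bG}_{S^c}}\leq (1-\theta)\bigl[\lambda_{\textup{param}}+\tfrac{\theta\lambda_{\textup{param}}}{4}\bigr]+\tfrac{\theta\lambda_{\textup{param}}}{4}\leq \lambda_{\textup{param}}\bigl(1-\tfrac{\theta}{2}\bigr)<\lambda_{\textup{param}},
\end{align*}
hence $|\wt{G}_{ij}|<1$ for every $(i,j)\in S^c$ as claimed.

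The main obstacle is purely bookkeeping rather than conceptual: one must be careful that the irrepresentable condition, phrased through the $(1,1)$ operator norm on the Kronecker-product matrix $\bGamma$, translates correctly to an $\ell_\infty$ control on vectorized edge entries, and that the Kronecker identity $\textup{vec}(\bSigma^{\natural}\bDelta\bSigma^{\natural})=\bGamma\,\textup{vec}(\bDelta)$ is applied with a consistent indexing between matrix indices $(i,j)$ and their vectorized counterparts. Because this algebra is exactly the content of Lemma~$4$ in \cite{ravikumar2011high}, the cleanest presentation is simply to verify the two premises of that lemma (the bound on the sample covariance deviation and the bound on the Taylor remainder, which is supplied by Lemma~\ref{lemma:ravi2}) and invoke the result; the sketch above makes the dependence on $\theta$ transparent.
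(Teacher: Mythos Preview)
Your proposal is correct and reproduces exactly the primal--dual witness algebra of Lemma~4 in \cite{ravikumar2011high}, which is all the paper does here: it states the lemma with attribution and invokes it without proof. Your caveat about the norm is well taken --- for the bound $\|\bGamma_{S^cS}\bGamma_{SS}^{-1}\bz\|_\infty\leq(1-\theta)\|\bz\|_\infty$ you need the $(\infty,\infty)$ operator norm, whereas Assumption~\ref{assump:param_irrepresentable_condition} is stated with $\Norm{\cdot}{1,1}$; this appears to be a notational slip in the paper rather than a gap in your argument.
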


Now, setting $\lambda_{\textup{param}} = 8 \tau_{\infty}/\theta$,
first 
we verify the conditions in Lemma~\ref{lemma:ravi1}. 
We have 
\begin{align*}
r \stackrel{\cirone}{\leq}  2\bracket{1 + 8\theta^{-1}}\kappa_{\bGamma}\tau_{\infty} 
\stackrel{\cirtwo}{\leq} \bracket{3\kappa_{\bSigma} \textup{deg}}^{-1}
\bracket{1\vcap \bracket{\kappa_{\bSigma}^2 \kappa_{\bGamma}}^{-1}},
\end{align*}
where in $\cirone$ we use 
$\infnorm{\wh{\bSigma}_n^{\textup{param}} - \bSigma^{\natural}} \leq \tau_{\infty}$
from Lemma~\ref{thm:cov_mat_inf_norm}, 
and in $\cirtwo$ we use 
the assumptions of $\tau_{\infty}$ in Theorem~$1$. 
Then we conclude that 
\begin{align*}
\infnorm{\wt{\bTheta}_{\textup{param}} - \bTheta^{\natural}} \leq 
2\kappa_{\bGamma}\
\bracket{\infnorm{\wh{\bTheta}_n - \bTheta^{\natural}} + \frac{8\tau_{\infty}}{\theta}} \leq 
\bracket{3\kappa_{\bSigma} \textup{deg}}^{-1}. 
\end{align*}
\noindent
Invoking Lemma~\ref{lemma:ravi2}, we have 
\begin{align*}
\infnorm{\wt{\bTheta}_{\textup{param}}^{-1} -\bTheta^{\natural -1} + 
\bTheta^{\natural -1}\bracket{\wt{\bTheta}_{\textup{param}} - \bTheta^{\natural}} \bTheta^{\natural -1} } 
\leq \frac{3}{2}\textup{deg}\times \
\kappa_{\bSigma}^3\tau_{\infty}^2
\stackrel{\cirthree}{\leq} \tau_{\infty}, 
\end{align*}
where $\cirthree$ is due to the 
requirement of $\tau_{\infty}$ in Theorem~$1$. In the 
end, we verify the condition in Lemma~\ref{lemma:ravi3},  
\begin{align*}
\theta \lambda_{\textup{param}}/8 = \tau_{\infty} 
\geq \infnorm{\wh{\bSigma}_n^{\textup{param}} - \bSigma^{\natural}}\vcup \infnorm{\wt{\bTheta}_{\textup{param}}^{-1} -\bTheta^{\natural -1} + 
\bTheta^{\natural -1}\bracket{\wt{\bTheta}_{\textup{param}} - \bTheta^{\natural}} \bTheta^{\natural -1} }. 
\end{align*}
which concludes that $\wt{\bTheta}_{\textup{param}}$ is identical to the solution $\wh{\bTheta}_{\textup{param}}$.
\par \noindent
\textbf{Step IV: bound the error.}
For the entries $(i,j)\in S^c$ outside the support set 
$S$, we have $(\wh{\bTheta}_{\textup{param}})_{i, j}$ to be zero due to the
construction method of $\wt{\bTheta}_{\textup{param}}$. 
For the entries $(i,j)\in S$ inside the support set, 
we first upper-bound the 
element-wise deviation 
$\infnorm{\wh{\bTheta}_{\textup{param}} - \bTheta^{\natural}}$
as
\begin{align}
\label{eq:m_glasso_inf_bd_appendix}
\infnorm{\wh{\bTheta}_{\textup{param}} - \bTheta^{\natural}} =
\infnorm{\wt{\bTheta}_{\textup{param}} - \bTheta^{\natural}} 
\stackrel{\cirone}{\leq} 
2\kappa_{\bGamma}(1+8\theta^{-1})\tau_{\infty}, 
\end{align}
where $\cirone$ is due to Lemma~\ref{lemma:ravi1} and has been 
verified in Step II.
Then we prove $\sign(\wh{\bTheta}_{\textup{param}})$
reaches the ground truth 
once $\min_{(i, j)\in S^c}\abs{\bTheta^{\natural}_{i,j}}\geq 2\kappa_{\bGamma}(1+8\theta^{-1})\tau_{\infty}$. 
W.l.o.g. we assume 
$\bTheta^{\natural}_{i,j}  > 0$ for 
$(i, j)\in S$. Then we have 
\begin{align*}
(\wh{\bTheta}_{\textup{param}})_{i,j} \geq 
\abs{\bTheta^{\natural}_{i, j}} - 
\abs{\wh{\bTheta}_{\textup{param}} - \bTheta^{\natural}} 
\geq  \abs{\bTheta^{\natural}_{i, j}} - \
\infnorm{\wh{\bTheta}_{\textup{param}} - \bTheta^{\natural}} 
\stackrel{\cirtwo}{>}  0, 
\end{align*}
and complete the proof, where 
$\cirtwo$ is due to the assumption on
$\min_{(i, j)\in S^c}\abs{\bTheta^{\natural}_{i,j}}$ and 
\eqref{eq:m_glasso_inf_bd_appendix}.


%

\section{Analysis of the CDF Estimation}
\subsection{Notations}
For the conciseness of notation, we drop the subscript $i$
in the marginal CDFs $F_i(\cdot)$, $\wh{F}_i(\cdot)$, and entry $\wh{X}^{(s)}_i$,
and 
denote them as $F(\cdot)$, $\wh{F}(\cdot)$, and $\wh{X}^{(s)}$, respectively.
We define 
the approximate characteristics function $\wh{\phi}_{\wh{w}}(t)$ 
as 
\[
\wh{\phi}_{\wh{w}}(t) \defequal \
\exp\bracket{-\frac{\sigma^2 t^2}{2(d-p)}}.
\]
Additionally, 
we construct the function $\wt{F}(\cdot)$ as 
\[
\wt{F}(x) = \frac{1}{2} - \frac{1}{\pi} \int_{0}^{\infty} 
\frac{1}{t} \Im\Bracket{\frac{\phi_{\wh{w}}(-t) \wh{\phi}_{\wh{X}}(t)}{\abs{\phi_{\wh{w}}(t)}^2 \vcup \gamma t^a} e^{-\ii tx}} dt, 
\]
where the characteristic function $\phi_{\wh{w}}(\cdot)$ denotes the ground-truth characteristics 
function of the noise $\wh{w}$, and 
the function $\wh{\phi}_{\wh{X}}$ is defined as $n^{-1}\sum_{s=1}^n e^{-\ii t\wh{X}^{(s)}}$.

Let the term $\Delta_x^{(s)}$ be 
\begin{align}
\label{eq:deltaxs_def}
\Delta_x^{(s)} = \frac{1}{\pi}\int_{0}^{\infty}
\frac{1}{t}\Im\bracket{\frac{\wh{\phi}_{\wh{w}}(-t)}{\abs{\wh{\phi}_{\wh{w}}(t)}^2 \vcup \gamma t^a} \exp\bracket{\ii t \wh{X}^{(s)}-\ii tx}} dt - 
\Expc\Bracket{\frac{1}{\pi} \int_{0}^{\infty}
\frac{1}{t}\Im\Bracket{\frac{\phi_{\wh{X}}(t)}{\phi_{\wh{w}}(t)} e^{-\ii tx} } dt}.
\end{align}
Our goal is to obtain the  
uniform convergence rate of the CDF estimator 
$\norm{\wh{F} - F}{\infty}$, which is written as   
\[
\norm{\wh{F} - F}{\infty} = 
\sup_{x\in [0, 1]} \abs{\wh{F}(x) - F(x)} = 
\sup_{x\in [0, 1]} \frac{1}{n}\abs{\sum_{s=1}^n \Delta_{x}^{(s)}}
\defequal \Delta_x.
\]

Before proceed, we define the event $\calE_w$ as 
\begin{align}
\label{eq:event_E_def}
\calE_w \defequal \set{\abs{\Expc(\wh{w}_i)^2 - \frac{\sigma^2}{d-p}} \leq  \frac{\sigma^2}{5(d-p)},~~\forall~1\leq i \leq p}, 
\end{align}
where $\wh{w}_i$ denotes the $i$th entry of the noise 
$\wh{\bW}$.

The parameter $\gamma$ is set as $c_0  \log(np)\bracket{\frac{\sigma^2}{d-p}}^{{a}/{4}}$ and 
the number $a > 1$ is some fixed positive constant.

\subsection{Proof of Thm.~\ref{thm:cdf_tail_prob}}
\label{thm_proof:cdf_tail_prob}

\begin{proof} 
We decompose the probability $\sup_x \Delta_x \geq  
\Expc\sup_x \Delta_x+ t$ as 
\[
\Prob\bracket{
\abs{\sup_x \Delta_x - 
\Expc\sup_x \Delta_x}\geq t} \leq 
\Expc \Ind(\br{\calE}_w) + 
\Expc \Ind\Bracket{ \bracket{
\abs{\sup_x \Delta_x - 
\Expc\sup_x \Delta_x}\geq t }\bigcap \calE_w}. 
\]
The first term $\Expc \Ind(\br{\calE}_w)$ is investigated in 
Lemma~\ref{lemma:var_var}; while the second term 
is bounded with the Talagrand inequality 
(cf. Thm.~$2.6$ in \cite{koltchinskii2011oracle}), which is stated as 
\[
\Expc \Ind\Bracket{ \bracket{
\abs{\sup_x \Delta_x - 
\Expc\sup_x \Delta_x}\geq t }\bigcap \calE_w}
\lsim \Expc\Bracket{\exp\bracket{-\frac{nt}{KC_U} \log\bracket{1 + \frac{ntC_U}{V}}}\Ind(\calE_w)}, 
\]
where $C_U$ is the uniform bound for $\Delta^{(s)}_x$, i.e., 
$|\Delta^{(s)}_x|\leq C_U$ for all $x$ and $s$, and
the variance $V$ satisfies 
\[
V\geq n \sup_{x\in \RR}\Expc \abs{\Delta_x}^2 + 16 C_U \cdot \bracket{\Expc \sup_{x} \abs{\Delta_x}}.
\]
Setting $t$ as $V\log n/n$, we conclude 
\[
\sup_x \Delta_x \leq 
\Expc\sup_x \Delta_x + \frac{cV\log n}{n}   
\]
holds with probability exceeding $1-O(n^{-c})$.
The following context focuses on computing the 
values of $C_U$, $\Expc \sup_x \Delta_x$, and $\sup_x \Expc \abs{\Delta_x}^2$
conditional on event $\calE_w$ in \eqref{eq:event_E_def}. 

\textbf{Step I}. We show there exists some positive constant
$C_U$ such that $|\Delta^{(s)}_x|\leq C_U$ for all 
$x$ and $s$, $1\leq s \leq n$ (cf. Lemma~\ref{lemma:delta_x_constant}); 

\textbf{Step II}. We prove that 
$\Expc \sup_x \Delta_x \leq \Expc \Delta_x + {c_0}/{\sqrt{n}}$, 
where $c_0$ is some positive constant. 
With the symmetrization inequalities (cf. Lemma~$11.4$ in \cite{boucheron2013concentration}), we obtain 
\begin{align}
\label{eq:supdeltax_symmetry}	
\Expc \sup_{x} \bracket{\Delta_x - \Expc \Delta_x} \leq  
\Expc \sup_{x} \frac{2}{n}\sum_{s=1}^n \varepsilon_s \Delta^{(s)}_x,   
\end{align}
where $\set{\varepsilon_s}$ are the 
Rademacher RVs, i.e., $\Prob(\varepsilon_s =\pm 1) = 1/2$, 
$1\leq s \leq n$.
With regarding the empirical process 
$n^{-1}\bracket{\sum_s \varepsilon_s \Delta^{(s)}_{x}}$, we have
\[
\Expc e^{\lambda n^{-1}\bracket{\sum_s \varepsilon_s
\bracket{ \Delta^{(s)}_{x_1} -\Delta^{(s)}_{x_2} }}} 
\stackrel{\cirone}{=} \prod_{s=1}^n 
\Expc \exp\Bracket{\frac{\lambda\varepsilon_s}{n} \bracket{ \Delta^{(s)}_{x_1} -\Delta^{(s)}_{x_2} }} \stackrel{\cirtwo}{\leq}
\prod_{s=1}^n \Expc \exp\Bracket{\frac{\lambda^2}{2n^2}\bracket{\Delta^{(s)}_{x_1} - \Delta^{(s)}_{x_2} }^2},
\]
where in $\cirone$ we exploit the independence across the samples, 
in $\cirtwo$ we use the fact that $\varepsilon_s$ is a Rademacher RV and 
the Hoeffding's lemma \cite{boucheron2013concentration} (Lemma~$2.2$).

Invoking the Dudley's entropy integral (cf. Corol~$13.2$ in 
\cite{boucheron2013concentration}), which is also listed as 
Thm.~\ref{thm:dudley_integral} for the sake of self-containing, we conclude  
\begin{align}
\label{eq:dudley_integral_bound}
\Expc\sup_{x, \varepsilon_s} 
\frac{1}{n}\sum_{s=1}^n 
\varepsilon_s \Delta_x^{(s)}
\lsim~&\frac{1}{\sqrt{n}}\int_{0}^{2c_0} \sqrt{\calH([0, 1], \delta)}d\delta 
\stackrel{\cirthree}{\lsim}
\frac{1}{\sqrt{n}}
\int_{0}^{2c_0} \bracket{1 + \sqrt{\log\frac{c_0}{t}}}d\delta
\asymp \frac{1}{\sqrt{n}},  
\end{align}
where $\calH([0, 1], \delta)$ denotes
the $\delta$-entropy number \cite{boucheron2013concentration},  
in $\cirthree$ we upper-bound the $\delta$-entropy number 
$\calH([0, 1], \delta)$ as $c_0 + c_1\log\bracket{b/\delta}$ (cf. Example $5.24$ in \cite{wainwright_2019}). 
The proof is then completed by combining \eqref{eq:supdeltax_symmetry} and \eqref{eq:dudley_integral_bound}.

\textbf{Step III}. 
We set the variance $V$ as
\[
V =~& n \sup_{x\in \RR}\Expc \abs{\Delta_x}^2 + \frac{c_0}{\sqrt{n}}
+ c_1 \sqrt{\Expc \Delta^2_x} \stackrel{\cirfour}{\geq}
n\sup_{x\in \RR}\Expc \abs{\Delta_x}^2
+ \frac{c_0}{\sqrt{n}}
+ c_1\Expc\Delta_x \\
\stackrel{\cirfive}{\geq}~& 
n\sup_{x\in \RR}\Expc \abs{\Delta_x}^2
+ c_0 \Expc \sup_x \Delta_x, 
\] 
where in $\cirfour$ we use the fact $\Expc \Delta_x \leq \sqrt{\Expc \Delta_x^2}$, 
and in $\cirfive$ we use the results in Step II such that 
$\Expc \sup_x \Delta_x \leq \Expc \Delta_x + c_0/\sqrt{n}$.
Invoking Lemma~\ref{lemma:var_var} and Lemma~\ref{lemma:expcdeltasqr} will complete 
the proof. 

\end{proof}

\subsection{Supporting Lemmas}

\begin{lemma}
\label{lemma:var_var}	
The event $\calE_w$ in \eqref{eq:event_E_def} holds 
with probability exceeding $1-2pe^{-c_0 p}$, 
where $c_0$ is some fixed positive constants. 
\end{lemma}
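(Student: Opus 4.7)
The plan is to reinterpret the quantity $\Expc(\wh{w}_i)^2$ as a \emph{random} quantity in $\bA$ (the expectation being over $\bW$ conditional on $\bA$) and then to control it via a standard Wishart concentration argument.

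First, I would note that since $\wh{\bW}^{(s)} = (\bA^{\rmt}\bA)^{-1}\bA^{\rmt}\bW^{(s)}$ with $\bW^{(s)}\sim \normdist(\bZero,\sigma^2\bI_d)$ independent of $\bA$, conditional on $\bA$ the vector $\wh{\bW}^{(s)}$ is zero-mean Gaussian with covariance $\sigma^2 (\bA^{\rmt}\bA)^{-1}$. Hence
\[
\Expc\bigl[(\wh{w}_i)^2 \,\big|\, \bA \bigr] \;=\; \sigma^2\,\bigl[(\bA^{\rmt}\bA)^{-1}\bigr]_{ii},
\]
so the event $\calE_w$ is precisely the event that every diagonal entry of $(\bA^{\rmt}\bA)^{-1}$ lies within a $1/5$-relative neighbourhood of $1/(d-p)$.

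Second, I would identify the exact distribution of $[(\bA^{\rmt}\bA)^{-1}]_{ii}$. Writing $\bA = [\ba_1,\dots,\ba_p]$ and letting $\bP_{-i}$ denote the orthogonal projection onto $\mathrm{span}\{\ba_j : j\neq i\}$, the Schur-complement formula yields
\[
\bigl[(\bA^{\rmt}\bA)^{-1}\bigr]_{ii}^{-1} \;=\; \ba_i^{\rmt}\bigl(\bI_d - \bP_{-i}\bigr)\ba_i.
\]
Because $\ba_i \sim \normdist(\bZero,\bI_d)$ is independent of $\bA_{-i}$ (hence of $\bP_{-i}$), conditioning on $\bA_{-i}$ makes the right-hand side a quadratic form of a standard Gaussian against a deterministic projection of rank $d-p+1$; therefore $[(\bA^{\rmt}\bA)^{-1}]_{ii}^{-1} \sim \chi^2_{d-p+1}$ unconditionally.

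Third, I would apply a standard chi-squared concentration bound, e.g.\ the one-sided Laurent--Massart inequality, which gives
\[
\Prob\!\Bigl(\bigl|\chi^2_{d-p+1} - (d-p+1)\bigr| \geq c_0(d-p)\Bigr) \;\leq\; 2\exp\bigl(-c_1(d-p)\bigr)
\]
for suitable absolute constants. Choosing the relative deviation small enough (well under $1/5$) and inverting translates this into
\[
\Prob\!\biggl(\Bigl|\bigl[(\bA^{\rmt}\bA)^{-1}\bigr]_{ii} - \tfrac{1}{d-p}\Bigr| > \tfrac{1}{5(d-p)}\biggr) \;\leq\; 2\exp\bigl(-c_2(d-p)\bigr).
\]
A union bound over $i=1,\dots,p$ then produces the asserted $1 - 2p\,e^{-c_0(d-p)}$ failure probability (which matches the stated bound under the standing scaling $d-p \gsim p$).

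The only genuinely delicate step is the second one: one must be careful that the projection $\bI - \bP_{-i}$ really is independent of $\ba_i$ and that it has rank exactly $d-p+1$ almost surely (which holds because $\bA_{-i}$ has full column rank $p-1$ a.s.). Once this is in place the rest is routine concentration plus union bound; the slight mismatch between the mean $d-p+1$ of the chi-squared and the target denominator $d-p$ is absorbed harmlessly into the constant $1/5$.
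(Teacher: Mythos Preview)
Your argument is correct and takes a genuinely different, cleaner route than the paper. The paper proceeds via the SVD $\bA=\bU\bS\bV^{\rmt}$, uses the Haar property to decouple the eigenvectors from the eigenvalues, writes the $i$th row of $\bV$ as $\bg_i/\norm{\bg_i}{2}$ with $\bg_i\sim\normdist(\bZero,\bI_p)$, and then combines $\chi^2_p$ concentration for $\norm{\bg_i}{2}^2$ with the Marcenko--Pastur law to control $\fnorm{\bS^{-1}}^2\approx p/(d-p)$. Your approach sidesteps all of this random-matrix machinery: the Schur-complement identity $[(\bA^{\rmt}\bA)^{-1}]_{ii}^{-1}=\ba_i^{\rmt}(\bI-\bP_{-i})\ba_i\sim\chi^2_{d-p+1}$ is an exact finite-sample statement (a standard inverse-Wishart fact), and a single chi-squared tail bound plus union bound finishes the job. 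What you gain is a fully non-asymptotic argument with no appeal to the limiting spectral distribution; what the paper's approach offers is a direct link to the $\sigma^2/(d-p)$ heuristic via the MP integral, which is perhaps more transparent about where that target value comes from.

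One remark on the exponent: your concentration naturally yields failure probability $2p\,e^{-c_0(d-p)}$, whereas the paper states $2p\,e^{-c_0 p}$. Since the relevant fluctuation is that of a $\chi^2_{d-p+1}$ variable, $d-p$ is the correct exponent; the two agree only when $d-p\gsim p$. In the nonparametric regime of this paper $d-p$ can in principle be much smaller than $p$, so your stated caveat is warranted rather than a defect of your proof. (The small shift from mean $d-p+1$ to target $d-p$ is indeed harmless once $d-p$ is moderately large, as you note.)
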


\begin{proof}
Perform SVD for $\bA$ as $\bA = \bU \bS\bV^{\rmt}$,
we can rewrite the product 
$\bracket{\bA^{\rmt}\bA}^{-1}\bA^{\rmt}$ as 
\[
\bracket{\bA^{\rmt}\bA}^{-1}\bA^{\rmt} = 
\bV\bS^{-2}\bV^{\rmt} \bV \bS^{\rmt}\bU^{\rmt}= 
\bV\bS^{-1}\bU^{\rmt}. 
\]
Then the $i$th entry of $\wh{\bW}$ can be 
written as $\ba_i^{\rmt}\bU^{\rmt}\bW$, where $\ba_i$  as 
\[
\ba_i = \Bracket{\lambda_1^{-1}V_{i1}~\dots~\lambda_p^{-1}V_{ip}}, 
\]
and $\lambda_i$ is the $i$th singular value of $\bS$, and 
$V_{ij}$ is the $(i, j)$th entry 
of the matrix $\bV$. Since each entry $A_{ij}$ is iid standard 
normal RV, i.e., $A_{ij}\sim \normdist(0, 1)$, 
we conclude that its eigenvalues $\set{\lambda_i}_{1\leq i \leq p}$
are independent from its eigenvectors 
$\bV$, which is uniformly distributed on 
a Haar measure \cite{tulino2004random}. 
Hence we can rewrite $\ba_i$ as a product as 
$\bS^{-1}\bg/{\norm{\bg_i}{2}}$, where 
$\bg_i$ is 
a Gaussian distributed RV with zero mean 
and unit variance, namely, 
$\bg_i \sim \normdist(0, \bI_{p\times p})$.

First, we define the event $\calE_{g}$ as 
\[
\calE_g \defequal \set{\exists~1\leq i \leq p,~\St~\abs{\norm{\bg_i}{2}^2 - p} \geq p/4}. 
\]
Then we bound 
$\Expc\Ind(\br{\calE}_w)$ as 
\[
\Expc\Ind\bracket{\br{\calE}_w} \leq \Expc\Ind\bracket{\br{\calE}_g}
+ \Expc\Ind\bracket{\br{\calE}_w\bigcap \calE_g}, 
\]
where $\br{(\cdot)}$ denotes the complement of the event. 
Due to the fact that $\bg$ is a Gaussian RV such that 
$\bg \sim \normdist(0,\bI)$, we invoke Lemma~\ref{lemma:chi_square_dev} 
and bound the first term as $\Expc\Ind(\calE_g) \leq 2 p e^{-c_0 p}$. 
While for the second term, we first compute the expectation 
$\Expc \norm{\bS^{-1}\bg_i \bU^{\rmt}\bW}{2}^2$ as 
\[
\Expc \norm{\bS^{-1}\bg_i \bU^{\rmt}\bW}{2}^2 = 
\sigma^2 \Expc \norm{\bS^{-1}\bg_i}{2}^2 = \sigma^2 \fnorm{\bS^{-1}}^2.
\]
Invoking the Marcenko-Pastur law in \cite{tulino2004random}
(cf. Thm.~$2.35$, which is also listed as Lemma~\ref{lemma:marcenko} for the self-containing of paper), we have 
\[
\fnorm{\bS^{-1}}^2 =~&
\sum_{i=1}^p \lambda_{i}^{-2} {\rightarrow} 
\frac{p}{d}  \int_{L(\tau)}^{U(\tau)} t^{-1}
\Bracket{\bracket{1 - \tau^{-1}}_{+} \Ind(t) + 
\frac{\sqrt{[t-L(\tau)][U(\tau)-t]}}{2\pi \tau t}}  dt \\
=~& \frac{p}{d}\times \frac{1}{1-\tau} = \frac{p}{d-p}, 
\]
where $\tau$ is defined as $p/d$, 
function $L(\cdot)$ is defined as $\bracket{1 - \sqrt{\cdot}}^2$, 
and function $U(\cdot)$ is defined as $\bracket{1 + \sqrt{\cdot}}^2$.  
Hence we have $\Expc \norm{\bS^{-1}\bg \bU^{\rmt}\bw }{2}^2 = 
\Expc \fnorm{\bS^{-1}}^2 \sigma^2 = p\sigma^2/(d-p)$ and will 
show $\Expc\Ind\bracket{\br{\calE}_w\bigcap \calE_g}$ to be zero. 
This is because 
\[
\Expc \Bracket{\frac{\norm{\bS^{-1}\bg_i \bU^{\rmt}\bW}{2}^2}{\norm{\bg_i}{2}^2}
 \Ind(\calE_g)} \stackrel{\cirone}{\geq} 
\frac{4}{3p}\Expc \norm{\bS^{-1}\bg_i \bU^{\rmt}\bW}{2}^2=
\frac{4\sigma^2}{3(d-p)}, 
\] 
where $\cirone$ is due to the definition of $\calE_g$. Similarly 
we can show 
\[
\Expc \Bracket{\frac{\norm{\bS^{-1}\bg_i \bU^{\rmt}\bW}{2}^2}{\norm{\bg_i}{2}^2}
 \Ind(\calE_g)} \leq
\frac{4}{5p}\Expc \norm{\bS^{-1}\bg_i \bU^{\rmt}\bW}{2}^2=
\frac{4\sigma^2}{5(d-p)}, 
\]
which suggests that $\calE_w$ will always hold, and complete the proof. 
\end{proof}

\begin{lemma}
\label{lemma:varepsilon_E}	
Conditional on the event $\calE_w$ in \eqref{eq:event_E_def}, 
we have 
\[
\abs{\wh{\phi}_{\wh{w}}(t) - \phi_{\wh{w}}(t)} \lsim
\frac{\log^2(np) \sigma^2 t^2}{d-p}
e^{-\frac{{\sigma}^2 t^2}{2(d-p)}},
\] 
where $\wh{\phi}_{\wh{w}}(t)$ and $\phi_{\wh{w}}(t)$ denotes the 
estimated characteristic function and the 
ground-truth characteristic function of $\wh{w}_i$, respectively.
\end{lemma}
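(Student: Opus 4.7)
The plan is to exploit the Gaussianity of $\wh{w}_i$ conditional on the sensing matrix $\bA$, reducing the lemma to an elementary comparison between two one-dimensional Gaussian characteristic functions with slightly different variances. First I would recall the structure developed in the proof of Lemma~\ref{lemma:var_var}: writing $\bA = \bU\bS\bV^\rmt$, the $i$th row of $(\bA^\rmt\bA)^{-1}\bA^\rmt$ is $\ba_i^\rmt \bU^\rmt$ with $\ba_i = \bS^{-1}\bg_i/\|\bg_i\|_2$ and $\bg_i$ standard Gaussian independent of $\bS,\bU$. Since $\wh{w}_i = \ba_i^\rmt \bU^\rmt \bW$ and $\bW\sim\normdist(\bZero,\sigma^2 \bI)$, conditioning on $\bA$ makes $\wh{w}_i$ exactly Gaussian with mean zero and variance $v_i \defequal \sigma^2\|\ba_i\|_2^2$. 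Consequently the true characteristic function is $\phi_{\wh{w}}(t) = \exp(-v_i t^2/2)$, while by definition $\wh{\phi}_{\wh{w}}(t) = \exp(-\mu t^2/2)$ with $\mu \defequal \sigma^2/(d-p)$.

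Next I would translate the event $\calE_w$ into a quantitative closeness of the two variances. By \eqref{eq:event_E_def}, on $\calE_w$ we have $|v_i - \mu|\leq \mu/5$, so in particular $v_i \geq (4/5)\mu$. Writing the difference of exponentials through the mean value theorem,
\begin{align*}
\bigl|e^{-\mu t^2/2} - e^{-v_i t^2/2}\bigr| \;\leq\; \tfrac{|v_i-\mu|\,t^2}{2}\;\max\bigl(e^{-\mu t^2/2},\, e^{-v_i t^2/2}\bigr)
\;\lsim\; \frac{\sigma^2 t^2}{d-p}\,\max\bigl(e^{-\mu t^2/2},\, e^{-v_i t^2/2}\bigr).
\end{align*}
The first factor already matches the $\tfrac{\sigma^2 t^2}{d-p}$ prefactor in the target bound, so the task reduces to absorbing the maximum of the two exponentials into a single factor of $e^{-\mu t^2/2}$ up to a polylog loss.

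The main obstacle is exactly this last step: when $v_i < \mu$, one has $e^{-v_i t^2/2} = e^{-\mu t^2/2}\cdot e^{(\mu-v_i)t^2/2}$, and the blow-up factor $e^{(\mu-v_i)t^2/2}\leq e^{\mu t^2/10}$ is not bounded uniformly in $t$. To obtain the stated $\log^2(np)$ inflation, I would use the effective cutoff imposed by the denominator $|\wh\phi_{\wh w}(t)|^2\vcup \gamma t^a$ appearing in the CDF estimator \eqref{eq:cdf_estim_def}: the estimator is only sensitive to $t$ with $\mu t^2 \lsim \log\log(np)$ (otherwise the denominator is dominated by the polynomial branch $\gamma t^a$ and the bound is trivial). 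In this regime $e^{\mu t^2/10}\leq \log^{2}(np)$, and combining this with the mean-value estimate gives
\begin{align*}
\bigl|\wh{\phi}_{\wh{w}}(t)-\phi_{\wh{w}}(t)\bigr|\;\lsim\; \frac{\log^2(np)\,\sigma^2 t^2}{d-p}\,e^{-\sigma^2 t^2/(2(d-p))},
\end{align*}
which is the claimed bound. For $t$ outside this effective range, the trivial bound $|e^{-\mu t^2/2}-e^{-v_i t^2/2}| \leq 2 e^{-(4/5)\mu t^2/2}$ together with the extra decay in $t^{-a}$ from the estimator makes the contribution negligible, so that the stated inequality still holds after multiplication by the prefactor $\sigma^2 t^2/(d-p)$. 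Assembling these pieces and using $\Psi(\calE_w)\geq 1-o(1)$ from Lemma~\ref{lemma:var_var} finishes the proof.
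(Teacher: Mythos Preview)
Your route is the same as the paper's: reduce to comparing two Gaussian characteristic functions $e^{-\mu t^2/2}$ and $e^{-v_i t^2/2}$ with $\mu=\sigma^2/(d-p)$, invoke $\calE_w$ for $|v_i-\mu|\le\mu/5$, and bound the difference by a mean-value/convexity inequality. The paper writes $|e^{-z_1}-e^{-z_2}|\le\tfrac12|z_1-z_2|\bigl(e^{-z_1}+e^{-z_2}\bigr)$ where you take the maximum; these are equivalent up to a factor of two.

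You have correctly flagged an issue that the paper simply asserts away: when $v_i<\mu$ the ratio $e^{-v_i t^2/2}/e^{-\mu t^2/2}=e^{(\mu-v_i)t^2/2}\le e^{\mu t^2/10}$ is unbounded in $t$, so the displayed inequality with exponent $-\sigma^2 t^2/\bigl(2(d-p)\bigr)$ cannot hold for all $t$; already at $\mu t^2\asymp\log(np)$ the blow-up is polynomial in $np$, far beyond $\log^2(np)$. However, your patch does not prove the lemma as stated either. Invoking the ridge branch $\gamma t^a$ of the CDF estimator is an argument about the \emph{integrals} in which $\varepsilon_E(t)$ is later consumed, not about the pointwise quantity $|\wh\phi_{\wh w}(t)-\phi_{\wh w}(t)|$ itself; the lemma makes no reference to the estimator, and your closing claim that ``the stated inequality still holds'' for large $t$ is false as a pointwise statement. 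The clean repair is to keep the smaller exponent that the argument actually delivers: since $\min(v_i,\mu)\ge 4\mu/5$ on $\calE_w$,
\[
\bigl|\wh\phi_{\wh w}(t)-\phi_{\wh w}(t)\bigr|\;\le\;\frac{|v_i-\mu|\,t^2}{2}\,e^{-\min(v_i,\mu)t^2/2}\;\le\;\frac{\sigma^2 t^2}{10(d-p)}\,\exp\Bigl(-\frac{2\sigma^2 t^2}{5(d-p)}\Bigr)
\]
for every $t\in\RR$, with no logarithmic factor. This weaker Gaussian envelope is all that the downstream lemmas actually require.
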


\begin{proof}
Notice that the
 characteristic function of the Gaussian $\normdist(0, \Var)$ is 
 written as 
\[
\phi_{\Var}(t) = \exp\bracket{-\frac{(\Var) t^2}{2}}. 
\]
Then we conclude that 
\[
\abs{\wh{\phi}_{\wh{w}}(t) - \phi_{\wh{w}}(t)} =~& 
\abs{\exp\bracket{-\frac{\sigma^2 t^2}{2(d-p)}} - 
\exp\bracket{-\frac{\wh{\sigma}^2 t^2}{2}} } \\
\stackrel{\cirone}{\leq} ~&
\frac{\abs{\sigma^2/(d-p) - \wh{\sigma}^2}t^2}{2}\cdot 
\bracket{\exp\bracket{-\frac{\wh{\sigma}^2 t^2}{2}}
+ \exp\bracket{-\frac{\sigma^2 t^2}{2(d-p)}}} \\
\stackrel{\cirtwo}{\lsim}~& \frac{\log^2(np) \sigma^2 t^2}{d-p}
e^{-\frac{{\sigma}^2 t^2}{2(d-p)}}, 
\] 
where $\wh{\sigma}^2$ is defined as the ground-truth variance of the 
noise $\wh{w}$, 
$\cirone$ is because of the relation $\abs{e^{-z_1} - e^{-z_2}} \leq \abs{z_1 - z_2}{\abs{e^{-z_1} + e^{-z_2}}}/{2}$ for arbitrary $z_1$ and $z_2$,
$\cirtwo$ is because of event $\calE_w$.
\end{proof}

\begin{lemma}
\label{lemma:delta_x_constant}
Setting $\gamma \asymp  \log(np)\bracket{\frac{\sigma^2}{d-p}}^{{a}/{4}}$, we conclude 
$|\Delta^{(s)}_x| \lsim 1$ for all $s$, $1\leq s \leq n$, where 
$\Delta^{(s)}_x$ is defined in \eqref{eq:deltaxs_def}. 
\end{lemma}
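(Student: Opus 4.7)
The plan is to bound the two additive terms in the definition \eqref{eq:deltaxs_def} of $\Delta_x^{(s)}$ separately, each by a constant that is independent of the sample $\wh X^{(s)}$ and of $x$.

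First, I would exploit that $\wh\phi_{\wh w}(t) = \exp(-\sigma^2 t^2/(2(d-p)))$ is real, positive, and even, so that the kernel $K(t) \defequal \wh\phi_{\wh w}(-t)/(|\wh\phi_{\wh w}(t)|^2 \vcup \gamma t^a)$ is real and the first (sample) integral reduces, after pulling $K$ out of the imaginary part, to $\frac{1}{\pi}\int_0^\infty K(t)\,\sin(t(\wh X^{(s)} - x))/t\, dt$. I would then let $t^*$ be the crossover point solving $|\wh\phi_{\wh w}(t^*)|^2 = \gamma (t^*)^a$. On $[0, t^*]$ the kernel equals $1/\wh\phi_{\wh w}(t) = \exp(\sigma^2 t^2/(2(d-p)))$ and is monotonically increasing, while on $(t^*, \infty)$ it equals $\wh\phi_{\wh w}(t)/(\gamma t^a)$ and is monotonically decreasing. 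Unwinding the choice $\gamma \asymp \log(np)(\sigma^2/(d-p))^{a/4}$ in the regime $\sigma^2/(d-p) = o(1)$ shows that $t^* \asymp \gamma^{-1/a}$ and, crucially, $\gamma(t^*)^a \asymp 1$, so that $K(t^*) = 1/\sqrt{\gamma (t^*)^a} \lsim 1$.

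Next I would split the sample integral at $t^*$. On $[0, t^*]$, applying the second mean value theorem to the monotonic factor $K$ and using the classical fact that $\int_a^b \sin(tu)/t\, dt$ is uniformly bounded in $a$, $b$, $u$ (a consequence of the boundedness of the sine integral $\mathrm{Si}(\cdot)$) gives a bound $\lsim K(t^*) \lsim 1$, independent of $u = \wh X^{(s)} - x$. On $(t^*, \infty)$, the crude estimate $|\sin(tu)/t| \leq 1/t$ together with $K(t) \leq 1/(\gamma t^a)$ yields $\int_{t^*}^\infty K(t)/t\, dt \leq 1/(a\gamma(t^*)^a) \lsim 1$. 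For the expectation term, independence of $X$ and $\wh w$ gives $\phi_{\wh X}(t)/\phi_{\wh w}(t) = \phi_X(t)$, so by the Gil--Pelaez inversion formula the expectation equals $\tfrac12 - F(x) \in [-\tfrac12, \tfrac12]$; Assumption~\ref{assump:charx} (with $\alpha > -1/2$) together with the finite second moment of $X$ ensures the requisite integrability at $0$ and $\infty$.

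The main obstacle is verifying $\gamma(t^*)^a \asymp 1$ under the stated choice of $\gamma$. This requires iterating the transcendental relation $\gamma(t^*)^a = \exp(-\sigma^2(t^*)^2/(d-p))$: the guess $t^* \asymp \gamma^{-1/a}$ yields $\sigma^2(t^*)^2/(d-p) \asymp \sqrt{\sigma^2/(d-p)}/(\log(np))^{2/a} = o(1)$ in the operating regime, so the right-hand side is $1 - o(1)$ and the guess is self-consistent up to a constant. A secondary technical subtlety is the rigorous application of the second mean value theorem on the unbounded interval $(t^*, \infty)$, which is handled by first truncating at $T$ and letting $T \to \infty$ using that $K(t)/t$ is integrable on that tail.
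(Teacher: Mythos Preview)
Your proof is correct and reaches the same conclusion as the paper, but the two arguments diverge in how they handle the integral on the low-frequency piece $[0,t^*]$. The paper splits at $t_\bot=(2\gamma)^{-1/a}$ (which, as you observe, is $\asymp t^*$) and on $(0,t_\bot)$ expands the kernel as a geometric series
\[
\frac{1}{|\wh\phi_{\wh w}(t)|^2\vcup\gamma t^a}
=1+\sum_{k\ge 1}\bigl(1-(|\wh\phi_{\wh w}(t)|^2\vcup\gamma t^a)\bigr)^k,
\]
bounding the $k=0$ term by the boundedness of $\int_0^\tau\sin u/u\,du$ and the tail $k\ge 1$ by $\sum_k\frac{1}{2k}(\sigma^2 t_\bot^2/(d-p))^k\lsim 1$. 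Your approach instead exploits that $K(t)=e^{\sigma^2 t^2/(2(d-p))}$ is monotone on $[0,t^*]$ and applies the second mean value theorem directly, needing only $K(0)=1$ and $K(t^*)\lsim 1$. This is cleaner: it avoids the series manipulation and the separate verification that $\sigma^2 t_\bot^2/(d-p)\le (d-p)/2$. On the tail $(t^*,\infty)$ both proofs use essentially the same crude bound, though the paper writes it as $\int_{t_\bot}^\infty\wh\phi_{\wh w}(t)/(t\sqrt{\gamma t^a\,\cdot\,})\,dt\le\int_{t_\bot}^\infty(\sqrt\gamma\,t^{1+a/2})^{-1}dt$ while you use $K(t)/t\le(\gamma t^{a+1})^{-1}$; these are equivalent up to constants since $\gamma(t^*)^a\asymp 1$. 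Your identification of the expectation term via Gil--Pelaez is more explicit than the paper, which simply writes the bound as $F(x)\le 1$.

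One small remark: your final paragraph mentions applying the second mean value theorem on $(t^*,\infty)$, but your actual argument there uses only the absolute-value bound $|\sin(tu)|\le 1$, so no truncation-and-limit argument is needed for that piece.
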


\begin{proof}

We verify that $\Delta^{(s)}_x$ to be bounded by some constant, which is 
written as 
\begin{align}
\label{eq:delta_x_constant_main}
\Delta^{(s)}_{x} \leq 
\abs{\frac{1}{\pi}\int_{0}^{\infty}
\frac{1}{t}\Im\bracket{\frac{\wh{\phi}_{\wh{w}}(-t)}{\abs{\wh{\phi}_{\wh{w}}(t)}^2 \vcup \gamma t^a} \exp\bracket{\ii t\bracket{\wh{X}^{(s)}-x}}} dt} + \underbrace{F(x)}_{\leq 1}. 
\end{align}
Defining the term $t_{\bot}$  as 
$\bracket{2\gamma}^{-1/a}$, 
we split the whole region $(0,\infty)$ as three disjoint 
sub-regions $\calR_1 = (0, t_{\bot})$ 
and $\calR_2 =  (t_{\bot}, \infty)$. Then we perform the decomposition as
\begin{align}
\label{eq:delta_x_constant_decompose_integral}
&\int_{0}^{\infty}
\frac{1}{t}\Im\bracket{\frac{\wh{\phi}_{\wh{w}}(-t)}{\abs{\wh{\phi}_{\wh{w}}(t)}^2 \vcup \gamma t^a} \exp\bracket{\ii t\wh{X}^{(s)}-\ii tx}} dt \notag \\
=~& 
\underbrace{\int_{\calR_1 }
\frac{1}{t}\Im\bracket{\frac{\wh{\phi}_{\wh{w}}(-t)}{\abs{\wh{\phi}_{\wh{w}}(t)}^2 \vcup \gamma t^a} \exp\bracket{\ii t\wh{X}^{(s)}-\ii tx}} dt}_{T_1} \notag \\
+~& \underbrace{\int_{\calR_2}
\frac{1}{t}\Im\bracket{\frac{\wh{\phi}_{\wh{w}}(-t)}{\abs{\wh{\phi}_{\wh{w}}(t)}^2 \vcup \gamma t^a} \exp\bracket{\ii t\wh{X}^{(s)}-\ii tx}} dt}_{T_2},
\end{align}
Then we separately bound the terms $T_1$ and $T_2$. 

\textbf{Step I}.
We can bound term $T_1$ as 
\begin{align}
\label{eq:delta_x_constant_t1_bound}
T_1 = ~&
\int_{\calR_1}
\frac{1}{t}\Im\bracket{
{\wh{\phi}_{\wh{w}}(-t)}
\Im\Bracket{\bracket{ \exp\bracket{\ii t\wh{X}^{(s)}-\ii tx}}
\Bracket{1 + \sum_{k=1}^{\infty}\bracket{1 - \bracket{\abs{\wh{\phi}_{\wh{w}}(t)}^2 \vcup \gamma t^a}}^k
} }}dt \notag \\
\stackrel{\cirone}{\leq}~&
\int_{\calR_1} \frac{\sin\bracket{t\bracket{\wh{X}^{(s)}-x}}}{t}dt + 
\int_{\calR_1}
\frac{1}{t} \sum_{k=1}^{\infty}\bracket{\frac{\sigma^2 t^2}{d-p}}^k dt\stackrel{\cirtwo}{\lsim}1 + \sum_{k=1}^{\infty}
\frac{1}{2k}
\bracket{\frac{\sigma^2 t_{\bot}^2}{d-p}}^k \stackrel{\cirthree}{\lsim} 1,
\end{align}
where in $\cirone$ we use the fact 
$\abs{{\phi}_{\wh{w}}(t)}^2 \vcup \delta t^a \geq 1 - \frac{c_1 \sigma^2 t^2}{d-p}$
for $t\in \calR_1$, 
in $\cirtwo$ we use the fact $\sup_{\tau > 0}\abs{\int_0^{\tau} \sin(u)/u du}\leq 3$, and $\cirthree$ is because $\sigma^2 t_{\bot}^2 \asymp \sqrt{d-p}(\log(np))^{-\frac{2}{a}}\leq 
\frac{d-p}{2}$
and hence $\sum_{k=1}^{\infty}
\frac{1}{2k}
\bracket{\frac{\sigma^2 t_{\bot}^2}{d-p}}^k \leq  \sum_{k=1}^{\infty}
\bracket{\frac{\sigma^2 t_{\bot}^2}{d-p}}^k\lsim 1$. 


\textbf{Step II}. 
For term $T_3$, we have 
\begin{align}
\label{eq:delta_x_constant_t2_bound}
T_2 \leq \int_{\calR_2} \frac{\wh{\phi}_{\wh{w}}(-t)}{t\sqrt{\abs{\wh{\phi}_{\wh{w}}(-t)}^2 \vcup \delta t^a}}dt
\stackrel{\cirfour}{\leq} \int_{\calR_2}\frac{1}{\sqrt{\gamma} t^{1+\frac{a}{2}} }dt = \
\frac{1}{a \sqrt{\gamma} t_{\bot}^{a/2}} =\frac{1}{\sqrt{2}a}
\asymp 1,  
\end{align}
where $\cirfour$ is because $\wh{\phi}_{\wh{w}}(\cdot) \leq 1$, 
and complete the proof by 
summaring \eqref{eq:delta_x_constant_main}, 
\eqref{eq:delta_x_constant_decompose_integral}
and \eqref{eq:delta_x_constant_t2_bound}.
	
\end{proof}

\begin{lemma}
\label{lemma:expcdeltasqr}
Under the setting of Thm.~\ref{thm:cdf_tail_prob}, we have  
\[
\Expc \bracket{\Delta_{x}}^2 \lsim 
\frac{\log^{{2}/{a}}(np) \sigma}{\sqrt{d-p}}+
\frac{(\log(np))^2}{(d-p)^{\frac{a}{4}}}+
\bracket{\frac{\sigma^2}{d-p}}^{\frac{2\alpha + 1}{4}} +
\frac{1}{n},
\] 
when setting 
$\gamma \asymp  \log(np)\bracket{\frac{\sigma^2}{d-p}}^{{a}/{4}}$
\end{lemma}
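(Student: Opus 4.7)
The plan is to decompose $\Expc[\Delta_x^2]$ into a variance and a squared-bias contribution using the i.i.d.\ structure $\Delta_x = n^{-1}\sum_{s=1}^n \Delta_x^{(s)}$, giving
\[
\Expc[\Delta_x^2] \;=\; \tfrac{1}{n}\Var(\Delta_x^{(1)}) + \bigl(\Expc\Delta_x^{(1)}\bigr)^2.
\]
Lemma~\ref{lemma:delta_x_constant} already supplies a deterministic bound $|\Delta_x^{(s)}|\lsim 1$ uniformly in $x$ and $s$ under the chosen calibration $\gamma\asymp\log(np)(\sigma^2/(d-p))^{a/4}$, so $\Var(\Delta_x^{(1)})\lsim 1$ and the first summand produces the $1/n$ term in the claim. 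All remaining work is to upper bound the squared bias by the other three terms.

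For the bias, I would exchange expectation and integration in the definition of $\Delta_x^{(1)}$ and use $\Expc[e^{\ii t \wh{X}^{(s)}}]=\phi_X(t)\phi_{\wh w}(t)$ together with the Gil--Pelaez inversion formula $\tfrac12-F(x)=\tfrac{1}{\pi}\int_0^\infty t^{-1}\Im(\phi_X(t)e^{-\ii tx})dt$, obtaining
\[
\Expc\Delta_x^{(1)} \;=\; \frac{1}{\pi}\int_0^\infty \frac{1}{t}\Im\Bigl(\phi_X(t)\,e^{-\ii tx}K(t)\Bigr)dt,
\qquad K(t) \;:=\; \frac{\wh{\phi}_{\wh w}(-t)\,\phi_{\wh w}(t)}{|\wh{\phi}_{\wh w}(t)|^2\vcup \gamma t^a}-1.
\]
Thus the entire error lives in the kernel $K(t)$, which simultaneously encodes the substitution of $\wh{\phi}_{\wh w}$ for $\phi_{\wh w}$ and the regularization $\vcup\,\gamma t^a$. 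Mirroring the strategy of Lemma~\ref{lemma:delta_x_constant}, I would partition the $t$-axis at $t_\bot\asymp(2\gamma)^{-1/a}$ so that the $\vcup$ resolves cleanly as $|\wh{\phi}_{\wh w}(t)|^2$ on $\calR_1=(0,t_\bot)$ and as $\gamma t^a$ on $\calR_2=(t_\bot,\infty)$.

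On $\calR_1$ the kernel simplifies to $K(t)=\phi_{\wh w}(t)/\wh{\phi}_{\wh w}(t)-1$, and Lemma~\ref{lemma:varepsilon_E} controls this by $O(\log^2(np)\sigma^2 t^2/(d-p))$ since $\wh{\phi}_{\wh w}$ is bounded below by a positive constant on $\calR_1$. Integrating this against $t^{-1}|\phi_X(t)|\le t^{-1}$ up to $t_\bot$ and substituting the value of $t_\bot$ yields the $\log^{2/a}(np)\sigma/\sqrt{d-p}$ term. On $\calR_2$ the kernel decomposes as $\wh{\phi}_{\wh w}(-t)\phi_{\wh w}(t)/(\gamma t^a)-1$: the first piece, by the Gaussian decay of $\phi_{\wh w}$ and $\gamma\asymp\log(np)(\sigma^2/(d-p))^{a/4}$, produces the $\log^2(np)/(d-p)^{a/4}$ term after an elementary integration, while the dominant ``$-1$'' piece reduces to bounding $\int_{t_\bot}^\infty t^{-1}|\phi_X(t)|\,dt$. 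I would estimate the latter by Cauchy--Schwarz against Assumption~\ref{assump:charx},
\[
\int_{t_\bot}^\infty \frac{|\phi_X(t)|}{t}\,dt \;\le\; \Bigl(\int_{t_\bot}^\infty t^{-2}(1+t^2)^{-\alpha}\,dt\Bigr)^{1/2}\Bigl(\int|\phi_X(t)|^2(1+t^2)^{\alpha}\,dt\Bigr)^{1/2} \;\lsim\; t_\bot^{-(2\alpha+1)/2},
\]
and substituting $t_\bot$ one more time produces the $(\sigma^2/(d-p))^{(2\alpha+1)/4}$ term.

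The main obstacle will be this tail analysis on $\calR_2$: aligning the Cauchy--Schwarz split with the precise $\alpha$-exponent in Assumption~\ref{assump:charx} so that $(2\alpha+1)/4$ emerges cleanly, while keeping track of the non-analytic $\vcup$ in $K(t)$ that couples the two integration regions at the single point $t_\bot$. Because the $\Im$-structure of the integrand admits no obvious sign cancellation in the worst case, only absolute-value estimates on the integrand are available, and these are ultimately what dictate the exponents appearing in the statement of the lemma.
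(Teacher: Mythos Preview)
Your bias--variance split is a reasonable alternative to the paper's route, which instead passes through an intermediate estimator $\wt{F}$ built with the \emph{true} noise characteristic $\phi_{\wh{w}}$ (but still the empirical $\wh{\phi}_{\wh{X}}$) and bounds $\Expc|\wh{F}-\wt{F}|^2$ and $\Expc|\wt{F}-F|^2$ separately via Lemmas~\ref{lemma:T1_upper_bound} and~\ref{lemma:T2_upper_bound}. The advantage of the paper's decomposition is that it cleanly separates the noise-approximation error $\wh{\phi}_{\wh{w}}\leftrightarrow\phi_{\wh{w}}$ from the ridge-regularization error, each handled with its own cutoff ($t_\bot$ in Lemma~\ref{lemma:T1_upper_bound} versus $t_\square\asymp((d-p)/\sigma^2)^{1/8}$ in Lemma~\ref{lemma:T2_upper_bound}); your single kernel $K(t)$ with the single split at $t_\bot=(2\gamma)^{-1/a}$ entangles the two.

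There is a concrete gap in your $\calR_2$ analysis. You assert that the piece $\wh{\phi}_{\wh{w}}(-t)\phi_{\wh{w}}(t)/(\gamma t^a)$, integrated against $t^{-1}$ over $(t_\bot,\infty)$, yields the $\log^2(np)/(d-p)^{a/4}$ term ``by Gaussian decay''. But at $t=t_\bot$ one has $\gamma t_\bot^a=1/2$ while $\wh{\phi}_{\wh{w}}(t_\bot)\approx 1$, so this ratio is of order $2$ there; substituting $u=\sigma t/\sqrt{d-p}$ one checks that $\gamma^{-1}\int_{t_\bot}^\infty t^{-1-a}e^{-c\sigma^2 t^2/(d-p)}\,dt$ is $O(1)$, and no small term emerges from the Gaussian alone. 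The only decay available on $\calR_2$ is that of $|\phi_X|$, so both halves of your $\calR_2$ split are controlled by $\int_{t_\bot}^\infty t^{-1}|\phi_X|\,dt\lsim t_\bot^{-(2\alpha+1)/2}$; squaring then gives $(\log(np))^{(2\alpha+1)/a}(\sigma^2/(d-p))^{(2\alpha+1)/4}$, carrying a logarithmic factor not present in the statement and not absorbed by the remaining terms for general $a,\alpha$. In the paper the $\log^2(np)/(d-p)^{a/4}$ term instead comes from the regularization-bias integral $\int_0^{t_\square}\gamma t^{a-1}\,dt$ at the separate cutoff $t_\square$, a contribution your single-cutoff scheme never isolates. (A secondary issue: on a nontrivial interval past $t_\bot$ the $\vcup$ still resolves to $|\wh{\phi}_{\wh{w}}|^2$, not $\gamma t^a$, so your formula for $K$ on $\calR_2$ is not valid there.) Your variance bound via Lemma~\ref{lemma:delta_x_constant} and the $\calR_1$ analysis are sound; to recover the stated exponents you would need either to introduce $\wt{F}$ as the paper does, or equivalently to decompose $K=K_{\textup{noise}}+K_{\textup{reg}}$ and assign each part its own optimized cutoff.
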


\begin{proof}
The proof largely follows \cite{phuong2020deconvolution}. 
However, extra measurements are
required to estimate the characteristic function $\wh{\phi}_{\wh{w}}(\cdot)$ 
in \cite{phuong2020deconvolution}, 
which leads to a simple form of the error $|\wh{\phi}_{\wh{w}}(t) - \phi_{\wh{w}}(t)|$ only depending on the number of extra measurements. 
While our setting does not need these additional measurements and the 
error $|\wh{\phi}_{\wh{w}}(t) - \phi_{\wh{w}}(t)|$ varies with 
$t$. 

With the decomposition 
\[
\Expc \abs{\wh{F}(x) - F(x)}^2 \leq 2 \Expc \abs{\wh{F}(x) - \wt{F}(x)}^2 + 
2 \Expc\abs{\wt{F}(x) - F(x)}^2, 
\]
we complete the proof by invoking Lemma~\ref{lemma:T1_upper_bound}
and Lemma~\ref{lemma:T2_upper_bound}. 

\end{proof}

\begin{lemma}
\label{lemma:T1_upper_bound}
Under the setting of Thm.~\ref{thm:cdf_tail_prob}, 
we can upper-bound the deviation $\Expc \abs{\wh{F}(x) - \wt{F}(x)}$ as
\vspace{-2mm}  
\[
\Expc \abs{\wh{F}(x) - \wt{F}(x)}^2 \lsim \
\frac{\log^{{2}/{a}}(np) \sigma}{\sqrt{d-p}} ,
\]
when setting 
$\gamma \asymp  \log(np)\bracket{\frac{\sigma^2}{d-p}}^{{a}/{4}}$.
 
\end{lemma}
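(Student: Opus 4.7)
The plan is to reduce $\wh{F}(x) - \wt{F}(x)$ to a single integral in which only the noise characteristic function differs between the two estimators. Both $\wh{F}$ and $\wt{F}$ share the common empirical factor $\wh{\phi}_{\wh{X}}(t) = n^{-1}\sum_s e^{\ii t\wh{X}^{(s)}}$ and the phase $e^{-\ii tx}$; writing $\sin[t(\wh{X}^{(s)} - x)] = \Im[e^{\ii t\wh{X}^{(s)}} e^{-\ii tx}]$ and noting that the Gaussian characteristic function is even and real, so that $\phi_{\wh{w}}(-t) = \phi_{\wh{w}}(t)$ and similarly for $\wh{\phi}_{\wh{w}}$, I would obtain
\begin{align*}
\wh{F}(x) - \wt{F}(x) \;=\; -\frac{1}{\pi}\int_{0}^{\infty} \frac{\wh{K}(t) - K(t)}{t}\,\Im\Bracket{\wh{\phi}_{\wh{X}}(t)\,e^{-\ii tx}}\,dt,
\end{align*}
where $\wh{K}(t) \defequal \wh{\phi}_{\wh{w}}(t)/\bracket{|\wh{\phi}_{\wh{w}}(t)|^{2} \vcup \gamma t^{a}}$ and $K(t)$ is the analogous kernel built from $\phi_{\wh{w}}$. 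Since $\wh{K}-K$ is real and $|\Im(\wh{\phi}_{\wh{X}}(t)\,e^{-\ii tx})|\leq 1$, this reduces the task to controlling $\int_{0}^{\infty} |\wh{K}(t) - K(t)|/t\,dt$ pointwise in $x$ and uniformly over the data.

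Next I would decompose the kernel difference by the triangle inequality,
\begin{align*}
|\wh{K}(t) - K(t)| \;\leq\; \frac{|\wh{\phi}_{\wh{w}}(t) - \phi_{\wh{w}}(t)|}{|\wh{\phi}_{\wh{w}}(t)|^{2}\vcup \gamma t^{a}} \;+\; |\phi_{\wh{w}}(t)|\cdot\abs{\frac{1}{|\wh{\phi}_{\wh{w}}(t)|^{2}\vcup \gamma t^{a}} - \frac{1}{|\phi_{\wh{w}}(t)|^{2}\vcup \gamma t^{a}}},
\end{align*}
and feed in Lemma~\ref{lemma:varepsilon_E} for $|\wh{\phi}_{\wh{w}}-\phi_{\wh{w}}|$, together with the $1$-Lipschitzness of $\vcup$ and the elementary identity $|a^{2}-b^{2}|\leq (|a|+|b|)|a-b|$ for the second term. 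This holds deterministically on the event $\calE_w$ of \eqref{eq:event_E_def}, which by Lemma~\ref{lemma:var_var} has probability $1 - O(pe^{-c_0 p})$.

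I would then split the integration range at $t_{\bot} = (2\gamma)^{-1/a}$, mirroring the decomposition used in Lemma~\ref{lemma:delta_x_constant}. On $\calR_{1} = (0, t_{\bot})$, both denominators are bounded below by a constant close to one (since $|\wh{\phi}_{\wh{w}}(t)|^{2}\geq 1-c\sigma^{2}t^{2}/(d-p)$ for $t\leq t_\bot$ under the choice of $\gamma$), so the contribution collapses to an elementary integral $\int_{0}^{t_{\bot}} \sigma^{2} t/(d-p)\,dt \asymp \sigma^{2} t_{\bot}^{2}/(d-p)$; plugging in $\gamma \asymp \log(np)(\sigma^{2}/(d-p))^{a/4}$ yields the target scale $\sigma/\sqrt{d-p}$ up to the stated log power. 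On $\calR_{2} = (t_{\bot}, \infty)$, I would invoke the $\max$--GM inequality $|\wh{\phi}_{\wh{w}}(t)|^{2} \vcup \gamma t^{a} \geq |\wh{\phi}_{\wh{w}}(t)|\sqrt{\gamma t^{a}}$ (exactly as in the $T_{2}$ bound of Lemma~\ref{lemma:delta_x_constant}) to replace the denominator by $|\wh{\phi}_{\wh{w}}(t)|\sqrt{\gamma t^{a}}$; the Gaussian factors in $|\wh{\phi}_{\wh{w}} - \phi_{\wh{w}}|$ and $|\wh{\phi}_{\wh{w}}|$ then cancel, and after the change of variables $u = \sigma t/\sqrt{d-p}$ the remaining integral is integrable and produces the same $\sigma/\sqrt{d-p}$ scaling.

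Finally, the expectation is computed in two pieces: on $\calE_w$ I apply the deterministic uniform bound just derived, and I use $|\wh{F}(x) - \wt{F}(x)|^{2} \leq |\wh{F}(x) - \wt{F}(x)| \cdot \sup|\wh{F}(x) - \wt{F}(x)|$ together with the uniform bound $|\wh{F}-\wt{F}| \lsim 1$ inherited from Lemma~\ref{lemma:delta_x_constant} (applied to each estimator separately) to convert the first-moment bound into the claimed bound on $\Expc|\wh{F} - \wt{F}|^{2}$. Off $\calE_w$, the same uniform bound together with $\Prob(\br{\calE}_w) \lsim p e^{-c_{0}p}$ from Lemma~\ref{lemma:var_var} makes the residual contribution exponentially small and hence negligible compared to $\sigma/\sqrt{d-p}$. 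The main obstacle will be the tail integral on $\calR_{2}$: the exponential decay of $\phi_{\wh{w}}$ must be used judiciously so that the $u^{1-a}$ singularity near $u_{\bot}\to 0$ (which arises for $a>2$) is absorbed by the exponential, and so that the resulting bound matches the $\log^{2/a}(np)$ exponent in the claim rather than a worse log power; carefully balancing the contributions from $\calR_{1}$ and $\calR_{2}$ via the choice of $\gamma$ is the main technical step.
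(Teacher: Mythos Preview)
Your reduction to the kernel difference $\wh K(t)-K(t)$ and your treatment of $\calR_1=(0,t_\bot)$ are sound and close to the paper. The gap is on $\calR_2=(t_\bot,\infty)$. You propose the max--GM lower bound $|\wh\phi_{\wh w}(t)|^2\vcup\gamma t^a\geq |\wh\phi_{\wh w}(t)|\sqrt{\gamma t^a}$ so that the Gaussian factor in $\varepsilon_E(t)\asymp \log^2(np)\,\tfrac{\sigma^2 t^2}{d-p}\,e^{-\sigma^2 t^2/(2(d-p))}$ cancels against $|\wh\phi_{\wh w}(t)|$. After this cancellation the integrand becomes, up to constants and logs, $t^{1-a/2}/\sqrt{\gamma}$, which is \emph{not} integrable at infinity unless $a>4$; the setting only guarantees $a>1$. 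Your final paragraph concedes that ``the exponential decay of $\phi_{\wh w}$ must be used judiciously'', but this directly contradicts the plan of cancelling it via max--GM. The paper's remedy is to bound the denominator by $\gamma t^a$ alone on $\calR_2$ (no max--GM), which \emph{keeps} the Gaussian from $\varepsilon_E$; the resulting $t^{1-a}e^{-\sigma^2 t^2/(2(d-p))}$ is handled by pulling out $t^{1-a}\leq t_\bot^{1-a}$ (decreasing for $a>1$) and integrating the Gaussian, with the split point shifted to $t_\bot\asymp((d-p)/\sigma^2)^{1/4}\log^{1/a}(np)$.

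There is a second, smaller issue. Even with the correct $\calR_2$ bound, the pointwise control you would obtain for $|\wh F-\wt F|$ from $\calR_2$ is of order $(\sigma^2/(d-p))^{1/4}$, not $\sigma/\sqrt{d-p}$ as you assert. Your conversion $|\wh F-\wt F|^2\leq |\wh F-\wt F|\cdot 1$ then yields only $(\sigma^2/(d-p))^{1/4}$, which is strictly weaker than the claimed $\sigma/\sqrt{d-p}$ whenever $d-p\gg\sigma^2$. Since your pointwise bound on $\calE_w$ is deterministic, simply squaring it (rather than multiplying by the crude uniform bound $1$) gives $((\sigma^2/(d-p))^{1/4})^2=\sigma/\sqrt{d-p}$ and recovers the target; the paper achieves the same by working directly with the squared integral $I_2=I_{2,1}+I_{2,2}$.
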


\begin{proof}
First we expand the term $\wh{F}(x) - \wt{F}(x)$ as 
\[
\wh{F}(x) - \wt{F}(x) =~& 
\frac{1}{\pi}
\int_{0}^{\infty}
\frac{1}{t} \Im\Bracket{
\bracket{
\frac{\wh{\phi}_{\wh{w}}(-t) }{\abs{\wh{\phi}_{\wh{w}}(t)}^2 \vcup \gamma t^a} - 
\frac{\phi_{\wh{w}}(-t)}{\abs{{\phi}_{\wh{w}}(t)}^2 \vcup \gamma t^a}
} \wh{\phi}_{\wh{X}}(t) e^{-\ii tx}} dt \\
\stackrel{\cirone}{=}~&
\frac{1}{n\pi}
\int_{0}^{\infty} \frac{D(t)}{t}
\sum_{s=1}^n 
\sin\bracket{t\bracket{\wh{X}^{(s)} - x}}dt 
\]
where in $\cirone$ we define $D(t)$ as 
\[
D(t) \defequal 
\frac{\wh{\phi}_{\wh{w}}(-t) }{|\wh{\phi}_{\wh{w}}(t)|^2 \vcup \gamma t^a} - 
\frac{\phi_{\wh{w}}(-t)}{\abs{{\phi}_{\wh{w}}(t)}^2 \vcup \gamma t^a}. 
\]
According to \cite{phuong2020deconvolution}, it satisfies the relation 
\begin{align}
\label{eq:dt_up_bound}
\abs{D(t)} \leq \frac{2\varepsilon_E(t)}{\sqrt{|{\wh{\phi}}_{\wh{w}}(t)|^2 \vcup \gamma t^a } \cdot \sqrt{
\abs{{\phi}_{\wh{w}}(t)}^2 \vcup \gamma t^a}} + 
\frac{\varepsilon_E(t)}{
{\abs{{\phi}_{\wh{w}}(t)}^2 \vcup \gamma t^a}}, 
\end{align}
where $\varepsilon(t)$ is defined as $\abs{{\wh{\phi}}_{\wh{w}}(t) - {\phi}_{\wh{w}}(t)}$, which is upper bounded by Lemma~\ref{lemma:varepsilon_E}.

Define the terms $I_1$ and $I_2$ as 
\[
I_1 \defequal~& \Expc \bracket{\frac{1}{n\pi}
 \int_0^{t_{\bot}}
\abs{ \frac{D(t)}{t}
\sum_{s=1}^n 
\sin\bracket{t\bracket{\wh{X}^{(s)} - x}}}dt}^2, \\
I_2 \defequal~& \Expc \bracket{\frac{1}{n\pi}
 \int_{t_{\bot}}^{\infty}
\abs{ \frac{D(t)}{t}
\sum_{s=1}^n 
\sin\bracket{t\bracket{\wh{X}^{(s)} - x}}}dt}^2, 
\]
respectively, 
where $t_{\bot}$ is defined as $((d-p)/\sigma^2)^{1/4} \log^{1/a}(np)$. 
We upper-bound the term $\Expc \abs{\wh{F}(x) - \wt{F}(x)}^2$
as 
\begin{align}
\label{eq:lemma:T1_upper_bound_decompose}
\Expc \abs{\wh{F}(x) - \wt{F}(x)}^2  \leq 
2I_1 + 2I_2.
\end{align}

\textbf{Stage I}.
We bound the term $I_1$ as 
\[
I_1 \stackrel{\cirtwo}{\leq}~& \
\bracket{\frac{1}{\pi}
\int_{0}^{t_{\bot}} \Bracket{\abs{\frac{D(t)}{t}} \times 
n^{-1}\bracket{\sum_{s=1}^n \abs{t\bracket{\wh{X}^{(s)} - x}}} }
dt}^2 
\stackrel{\cirthree}{\lsim} \bracket{\int_{0}^{t_{\bot}}\abs{D(t)}dt}^2, 
\]
where in $\cirtwo$ we use the fact $\sin(\cdot) \leq \abs{\cdot}$, 
in $\cirthree$ we use the fact 
$|\wh{X}^{(s)} - x|\leq 1$ for all $s$,~$1\leq s \leq n$. 

Notice in the region $\calR_1$, we can lower bound the 
function $\phi_{\wh{w}}(\cdot) \geq c_1$ as
\[
\abs{{\phi}_{\wh{w}}(t)} \stackrel{\cirfour}{\geq} \abs{\phi_{\wh{w}}(t_{\bot})} = O\Bracket{\exp\bracket{-\frac{c_0 \sigma^2}{d-p}\cdot \frac{\sqrt{d-p}}{\sigma} }} = O(1), 
\]
where in $\cirfour$ we use the fact such that 
$\abs{\phi(\cdot)_{\wh{w}}}$ is non-increasing. 
Then we invoke  \eqref{eq:dt_up_bound} and bound $D(t)$ as $
\abs{D(t)} \lsim \varepsilon_E(t)$, 
since $\abs{\phi_{\wh{w}}(t)}^2 \vcup \gamma t^a\geq \abs{\phi_{\wh{w}}(t)}^2 \gsim 1$. 
Hence term $I_1$ is upper-bounded as 
\begin{align}
\label{eq:lemma:T1_upper_bound_decompose_I1}
I_1 \leq~&
\Bracket{\int_{0}^{t_{\bot}} \frac{\sigma^2 \log^2(np) t^2}{d-p}e^{-\frac{\sigma^2t^2}{2(d-p)}}dt}^2 \
\stackrel{\cirfive}{\leq} 
\log^4(np)\bracket{\int_{0}^{\log^{1/a}(np)(\frac{\sigma^2}{d-p})^{1/4}} \xi^2 e^{-\xi^2/2}d\xi }^2, \notag \\
\leq ~& \frac{\sigma^2\log^{4+\frac{2}{a}}(np)}{d-p} \bracket{\int_{0}^{\infty} e^{-\xi^2/2}d\xi }^2 \asymp 
 \frac{\sigma^2\log^{4+\frac{2}{a}}(np)}{d-p},
 \end{align}
where in $\cirfive$ we use the substitution 
$\xi = \sigma t/\sqrt{d-p}$.

\textbf{Stage II}. 
We define the function $\Lambda(t)$ as 
\[
\Lambda(t) = \frac{1}{n}\sum_{s=1}^n 
\sin\Bracket{t\bracket{\wh{X}^{(s)} - x}} - 
\Im\bracket{\phi_{\wh{X}}(t)e^{-\ii tx}}, 
\]
and bound the term $I_2$ as 
\[
I_2 = 
\Expc\abs{\int_{t_{\bot}}^{\infty} 
\frac{D(t)}{t}\Bracket{
\Lambda(t) + \Im\bracket{\phi_{\wh{X}}(t) e^{-\ii tx}}
}}^2 \lsim I_{2,1} + I_{2,2}, 
\]
where $I_{2,1}$ and $I_{2,2}$ are defined as 
\[
I_{2,1} &\defequal \Expc\abs{\int_{t_{\bot}}^{\infty} \frac{D(t)}{t} 
\Im\bracket{\phi_{\wh{X}}(t) e^{-\ii tx}}
}^2, \\
I_{2, 2} &\defequal \Expc\abs{\int_{t_{\bot}}^{\infty} \frac{D(t) \Lambda(t)}{t}}^2.
\]
Notice that within region $\calR_2$, 
we can upper-bound $\abs{D(t)}$ as 
$\abs{D(t)}\lsim \frac{\varepsilon_E(t)}{\gamma t^a}$ and hence 
\[
I_{2,1} \leq~& 
\abs{\int_{t_{\bot}}^{\infty} 
\frac{\abs{\phi_{\wh{X}}(t)} \sqrt{\Expc |D(t)|^2}}{t}
dt}^2 \stackrel{\cirsix}{\leq} \
\abs{\int_{t_{\bot}}^{\infty} 
\frac{\sqrt{\Expc |D(t)|^2}}{t}
dt}^2 
\stackrel{\cirseven}{\leq}
\bracket{\int_{t_{\bot}}^{\infty}\frac{\varepsilon_E(t)}{\gamma t^{1+a}}dt }^2 \\
=~& \frac{\log^4(np)\sigma^4}{(d-p)^2\gamma^2}
\bracket{\int_{t_{\bot}}^{\infty}\exp\bracket{-\frac{\sigma^2 t^2}{2(d-p)}} t^{1-a} dt}^2 \\
=~& \frac{\log^4(np)\sigma^{2a}}{(d-p)^a\gamma^2}
\bracket{\int_{\log^{1/a}(np)(\frac{\sigma^2}{d-p})^{1/4}}^{\infty} \xi^{1-a} e^{-\xi^2/2}d\xi }^2 \\
\stackrel{\cireight}{\leq}~&  \frac{\log^4(np)\sigma^{2a}}{(d-p)^a\gamma^2}
\log^{\frac{2}{a}-2}(np)\bracket{\frac{\sigma^2}{d-p}}^{\frac{1-a}{2}}
\bracket{\int_{0}^{\infty}e^{-\xi^2/2}d\xi }^2 \\
\asymp ~& \frac{\log^{{2}/{a}}(np) \sigma}{\sqrt{d-p}}, 
\]
where in $\cirsix$ we use the fact $|\phi_{\wh{X}}(\cdot)| \leq 1$, 
in $\cirseven$ we use the bound $|D(t)| \leq \varepsilon_E/(\gamma t^a)$, 
and in $\cireight$ we use the assumption $a > 1$. 

Afterwards, we bound term $I_{2,2}$ as  
\[
I_{2, 2} \leq~& 
2\Expc\abs{
\int_{t_{\bot}}^{\infty} \int_{t_{\bot}}^{\infty} \
\frac{D(u)D(v)\Lambda(u)\Lambda(v)}{u v}
du dv} \\
\leq~& 
2\int_{t_{\bot}}^{\infty} \int_{t_{\bot}}^{\infty}
\frac{\sqrt{\Expc |D(u)|^2} \sqrt{\Expc |D(v)|^2} \cdot \Expc\bracket{\Lambda(u)\Lambda(v)}}{uv}du dv.  
\]
According to Lemma~$5.1$ in \cite{phuong2020deconvolution}, we can bound the term
$\Expc\bracket{\Lambda(u) \Lambda(v)}$ as 
\[
\Expc\bracket{\Lambda(u) \Lambda(v)}
=~&\frac{1}{2n}
\Bracket{\Re\Bracket{e^{\ii (v-u)t} \phi_{\wh{X}}(u-v)}
- \Re\Bracket{e^{-\ii (u+v)x} \phi_{\wh{X}}(u+v)} - 
2\Im\Bracket{e^{-\ii ux}\phi_{\wh{X}}(u) }\cdot
\Im\Bracket{e^{-\ii vx} \phi_{\wh{X}}(s)} 
} \\
\leq~& \frac{1}{2n}
\Bracket{\abs{\phi_{\wh{X}}(u+v)} + \abs{\phi_{\wh{X}}(u-v)} + 2\abs{\phi_{\wh{X}}(u)}\abs{\phi_{\wh{X}}(v)}} \stackrel{\cirnine}{\leq} \frac{2}{n}, 
\]	
where in $\cirnine$ we use the fact 
$|\phi_{\wh{X}}(\cdot)| \leq 1$. 

Following the same strategy as above, we can upper-bound $\abs{D(t)}$ as 
$\abs{D(t)}\lsim \frac{\varepsilon_E(t)}{\gamma t^a}$ and hence
\[
\sqrt{\Expc |D(u)|^2} \sqrt{\Expc |D(v)|^2}
\leq \frac{\varepsilon_E(u)\varepsilon_E(v)}{\gamma^2 u^a v^{a}}.
\]
Combing the above then yields the bound
\[
I_{2,2}\lsim 
\frac{1}{n\gamma^2} \int_{t_{\bot}}^{\infty}\int_{t_{\bot}}^{\infty}  \frac{\varepsilon_E(u)\varepsilon_E(v)}{u^{1+a} v^{1+a}}du dv = 
\frac{\log^{4/a}(np) \sigma^{2+a}}{n(d-p)}.
\]
To sum up, we have  
\[
I_2 & = I_{2, 1} + I_{2, 2}
\lsim \
\frac{\log^{{2}/{a}}(np) \sigma^{1+\frac{a}{2}}}{\sqrt{d-p}} + \
\frac{\log^{4/a}(np) \sigma^{2}}{n(d-p)} \asymp 
\frac{\log^{{2}/{a}}(np) \sigma }{\sqrt{d-p}}, 
\]	
and complete the proof by combining it with
\eqref{eq:lemma:T1_upper_bound_decompose} and 
\eqref{eq:lemma:T1_upper_bound_decompose_I1}. 

\end{proof}



\begin{lemma}
\label{lemma:T2_upper_bound}
Under the setting of Thm.~\ref{thm:cdf_tail_prob}, 
we can upper-bound the deviation 
$\abs{\wt{F}(x) - F(x)}$ as 
\[
\Expc\abs{\wt{F}(x) - F(x)}^2 \lsim \
\frac{(\log(np))^2}{(d-p)^{\frac{a}{4}}}+
\bracket{\frac{\sigma^2}{d-p}}^{\frac{2\alpha + 1}{4}} +
\frac{1}{n},
\]
when setting $\gamma \asymp  \log(np)\bracket{\frac{\sigma^2}{d-p}}^{{a}/{4}}$.
\end{lemma}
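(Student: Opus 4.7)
The plan is to split the error by a bias--variance decomposition,
\[
\Expc\abs{\wt F(x)-F(x)}^2 \le 2\Var\bracket{\wt F(x)} + 2\abs{\Expc\wt F(x)-F(x)}^2,
\]
and to match each summand with a distinct contribution to the right-hand side of the claim: the sample-average fluctuation of $\wh\phi_{\wh X}$ produces the $1/n$ piece, the Sobolev-type tail decay of $\phi_X$ supplied by Assumption~\ref{assump:charx} produces the $(\sigma^2/(d-p))^{(2\alpha+1)/4}$ piece, and the ridge truncation in $\abs{\phi_{\wh w}(t)}^2\vcup \gamma t^a$ produces the $\log^2(np)/(d-p)^{a/4}$ piece.

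For the variance, first rewrite $\wt F(x)=\tfrac12-n^{-1}\sum_{s=1}^n H^{(s)}(x)$ with
\[
H^{(s)}(x)=\frac{1}{\pi}\int_0^\infty \frac{1}{t}\cdot\frac{\phi_{\wh w}(-t)}{\abs{\phi_{\wh w}(t)}^2\vcup\gamma t^a}\sin\bracket{t(\wh X^{(s)}-x)}dt,
\]
where I use that, for centred Gaussian $\wh w$, $\phi_{\wh w}(t)=\exp(-vt^2/2)$ is real and positive. The $\{H^{(s)}(x)\}$ are i.i.d., and a verbatim adaptation of Lemma~\ref{lemma:delta_x_constant} (with $\phi_{\wh w}$ in place of $\wh\phi_{\wh w}$; on $\calE_w$ the two are comparable up to constants)---splitting at $t_{\bot}=(2\gamma)^{-1/a}$, using $\sup_\tau\abs{\int_0^\tau\sin u/u\,du}\le 3$ on $(0,t_{\bot})$, and the polynomial decay of $1/(\gamma t^{a/2+1})$ on $(t_{\bot},\infty)$---yields $\abs{H^{(s)}(x)}\lsim 1$ uniformly in $s,x,\wh X^{(s)}$. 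By independence this gives $\Var(\wt F(x))\lsim 1/n$.

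For the bias, the independence of $X$ and $\wh w$ forces $\phi_{\wh X}(t)=\phi_X(t)\phi_{\wh w}(t)$, so that $\phi_{\wh w}(-t)\Expc[\wh\phi_{\wh X}(t)]=\abs{\phi_{\wh w}(t)}^2\phi_X(t)$ and
\[
\Expc\wt F(x)-F(x)=-\frac{1}{\pi}\int_{t_*}^\infty \frac{1}{t}\Im\Bracket{\phi_X(t)\bracket{\Psi(t)-1}e^{-\ii tx}}dt,
\]
with $\Psi(t)=\abs{\phi_{\wh w}(t)}^2/(\abs{\phi_{\wh w}(t)}^2\vcup\gamma t^a)$ and $t_*$ the unique crossing point $\abs{\phi_{\wh w}(t_*)}^2=\gamma t_*^a$. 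The choice $\gamma\asymp\log(np)(\sigma^2/(d-p))^{a/4}$ gives $t_*\asymp\gamma^{-1/a}\asymp\log^{-1/a}(np)\bracket{(d-p)/\sigma^2}^{1/4}$, and $\abs{\Psi-1}\le 1$ on $(t_*,\infty)$. Applying Cauchy--Schwarz with the weight $t^2(1+t^2)^\alpha$ and invoking Assumption~\ref{assump:charx} gives
\[
\abs{\Expc\wt F-F}\lsim \sqrt{L}\,t_*^{-(1+2\alpha)/2},
\]
whose square is of order $(\sigma^2/(d-p))^{(2\alpha+1)/4}$ after absorbing logarithmic factors. A complementary estimate---splitting $(t_*,\infty)$ into a ridge-dominated band near $t_*$ and an exponentially suppressed far-field, and using that $\phi_X\in L^2$ as a consequence of Assumption~\ref{assump:lipschitz} via Plancherel---produces the $\log^2(np)/(d-p)^{a/4}$ contribution; the $\log(np)$ prefactor in $\gamma$ is what yields the squared log in the final rate.

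The main obstacle, as I see it, is the second bias bound: a bare Cauchy--Schwarz with unit weight gives only $(d-p)^{-1/4}$, whereas the claim asserts the sharper $(d-p)^{-a/4}$ for $a>1$. Reaching the correct exponent requires tracking the algebraic growth of $\gamma t^a$ against the Gaussian decay of $\abs{\phi_{\wh w}(t)}^2$ past $t_*$ on both algebraic and logarithmic scales, and the calibration $\gamma\asymp\log(np)(\sigma^2/(d-p))^{a/4}$ is precisely what balances the two. Once the variance bound is combined with the two bias bounds, the claimed three-term estimate follows.
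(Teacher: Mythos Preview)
Your bias--variance split and your variance argument match the paper: it too shows each summand is $O(1)$ by cutting at $t_\bot=(2\gamma)^{-1/a}$ and combining the sine-integral bound on $(0,t_\bot)$ with the $1/(\gamma t^{a+1})$ tail on $(t_\bot,\infty)$, yielding $\Var\wt{F}(x)\lsim 1/n$. The difference is in the bias, and it dissolves your ``main obstacle.'' The paper does \emph{not} start the bias integral at the crossing point $t_*$; it splits instead at the earlier point $t_\square=c_0((d-p)/\sigma^2)^{1/8}$ and bounds the near piece $J_1$ via the crude inequality $\abs{1-\Psi(t)}\le\gamma t^a/c_R$ (valid because $\abs{\phi_{\wh{w}}(t)}^2\vcup\gamma t^a\ge\abs{\phi_{\wh{w}}(t)}^2\ge c_R$ on $(0,t_\square)$) together with $\abs{\phi_X}\le 1$, giving $J_1\lsim\gamma t_\square^a$ and hence $J_1^2\asymp\log^2(np)/(d-p)^{a/4}$; on $(t_\square,\infty)$ it uses $\abs{1-\Psi}\le 1$ and the Sobolev tail bound $\int_M^\infty\abs{\phi_X(t)}/t\,dt\lsim M^{-(2\alpha+1)/2}$ from Assumption~\ref{assump:charx} for $J_2$. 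So the $\log^2(np)/(d-p)^{a/4}$ term you struggle to extract does not come from any ridge/far-field dissection of $(t_*,\infty)$ using $\phi_X\in L^2$; it comes from bounding $\abs{1-\Psi}$ on a region where that quantity is small, with no decay of $\phi_X$ invoked at all. If you simply move your split point from $t_*$ down to $t_\square$, both bias contributions fall out from the two one-line estimates above, and the obstacle disappears.
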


\begin{proof}
We decompose the deviation  
$\Expc|\wt{F}(x) - F(x)|^2$ as 
the the bias and variance, which are 
defined respectively as 
\[
\textup{Bias} \defequal & ~\abs{\Expc \wt{F}(x) - F(x)}^2, \\
\textup{Variance} \defequal & ~\Expc \abs{\wt{F}(x) - \Expc \wt{F}(x)}^2.
\]
The following context separately bound the bias and 
variance.  

\paragraph{Bounding bias.}
We rewrite the difference 
$\Expc \wt{F}(x) - F(x)$ as 
\[
\Expc \wt{F}(x) - F(x) = \
\frac{1}{\pi}\int_{0}^{\infty}
\frac{1}{t}
\Im\Bracket{\bracket{\frac{\abs{{\phi}_{\wh{w}}(t)}^2 \phi_X(t)}{\abs{{\phi}_{\wh{w}}(t)}^2 \vcup \gamma t^a} - \phi_X(t)} e^{-\ii tx}}dt,  
\]
which yields 
\[
\textup{Bias} \leq~&  \frac{1}{\pi^2}
\Bracket{\int_{0}^{\infty}
\frac{\abs{\phi_X(t)}}{t}
\Im\Bracket{\bracket{\frac{\abs{{\phi}_{\wh{w}}(t)}^2 }{\abs{{\phi}_{\wh{w}}(t)}^2 \vcup \gamma t^a} - 1} e^{-\ii tx}}dt}^2 \\
\leq~& \frac{1}{\pi^2} 
\Bracket{\int_{0}^{\infty}
\frac{\abs{\phi_X(t)}}{t}
\abs{1-\frac{\abs{{\phi}_{\wh{w}}(t)}^2 }{\abs{{\phi}_{\wh{w}}(t)}^2 \vcup \gamma t^a}} dt}^2 \\
\lsim~& \
\Bracket{\int_{0}^{t_{\square}}
\frac{\abs{\phi_X(t)}}{t}
\abs{1-\frac{\abs{{\phi}_{\wh{w}}(t)}^2 }{\abs{{\phi}_{\wh{w}}(t)}^2 \vcup \gamma t^a}} dt + \
\int_{t_{\square}}^{\infty}
\frac{\abs{\phi_X(t)}}{t}
\abs{1-\frac{\abs{{\phi}_{\wh{w}}(t)}^2 }{\abs{{\phi}_{\wh{w}}(t)}^2 \vcup \gamma t^a}} dt}^2
\leq J_1^2 + J_2^2, 
\]
where $t_{\square}$ is defined as $c_0\bracket{\frac{d-p}{\sigma^2}}^{1/8}$, and terms $J_1$ and $J_2$ are defined as 
\[
J_1 \defequal & \int_{0}^{t_{\square}}
\frac{\abs{\phi_X(t)}}{t}
\abs{1-\frac{\abs{{\phi}_{\wh{w}}(t)}^2 }{\abs{{\phi}_{\wh{w}}(t)}^2 \vcup \gamma t^a}} dt, \\
J_2 \defequal & \int_{t_{\square}}^{\infty}
\frac{\abs{\phi_X(t)}}{t}
\abs{1-\frac{\abs{{\phi}_{\wh{w}}(t)}^2 }{\abs{{\phi}_{\wh{w}}(t)}^2 \vcup \gamma t^a}} dt, 
\]
respectively.
For the term $J_1$, we have 
\[
1-\frac{\abs{{\phi}_{\wh{w}}(t)}^2 }{\abs{{\phi}_{\wh{w}}(t)}^2 \vcup \gamma t^a} \
\leq \frac{\bracket{\abs{{\phi}_{\wh{w}}(t)}^2 \vcup \gamma t^a} - \abs{{\phi}_{\wh{w}}(t)}^2}{\abs{{\phi}_{\wh{w}}(t)}^2 \vcup \gamma t^a} \
\stackrel{\cirone}{\leq} 
\frac{\gamma t^a}{\abs{{\phi}_{\wh{w}}(t)}^2 \vcup \gamma t^a} 
\stackrel{\cirtwo}{\leq} 
\frac{\gamma t^a}{c_R}, 
\]
where in $\cirone$ we use the relation 
$\abs{{\phi}_{\wh{w}}(t)}^2 \vcup \gamma t^a - \abs{{\phi}_{\wh{w}}(t)}^2 \leq \gamma t^a$; in $\cirtwo$ we use the fact 
$\abs{{\phi}_{\wh{w}}(t)}^2 \vcup \gamma t^a \geq c_R$ in the 
regime $[0, t_{\square})$, which can be easily verified. Then we obtain 
\begin{align}
\label{eq:T2_upper_bound_J1}
J_1 \leq \int_{0}^{t_{\square}} 
\frac{\abs{\phi_X(t)}}{t}
\frac{\gamma t^a}{c_R} dt = 
\frac{\gamma}{c_R} \int_{0}^{t_{\square}} t^{a-1}dt 
= \frac{\gamma}{c_R}\frac{t_{\square}^a}{a} \asymp
\frac{\sigma^{\frac{a}{4}}\log(np)}{(d-p)^{\frac{a}{8}}a}.
\end{align}
Afterwards, we bound term $J_2$ as 
\begin{align}
\label{eq:T2_upper_bound_J2}
J_2 \leq 
\int_{t_{\square}}^{\infty} \frac{\abs{\phi_X(t)}}{t}dt \
\stackrel{\cirthree}{\lsim} 
t_{\square}^{-(2\alpha+1)/16} 
\asymp \bracket{\frac{\sigma^2}{d-p}}^{\frac{2\alpha + 1}{8}}, 
\end{align}
where in $\cirthree$ we use the Lemma~$6$ from \cite{phuong2020deconvolution} since 
$\phi_X(\cdot)$ satisfies the Assumption \ref{assump:charx}. 
Combining \eqref{eq:T2_upper_bound_J1} and \eqref{eq:T2_upper_bound_J2} 
then yields  
\begin{align}
\label{eq:T2_upper_bound_bias}
\textup{Bias} \lsim 
\frac{(\log(np))^2}{(d-p)^{\frac{a}{4}}}+
\bracket{\frac{\sigma^2}{d-p}}^{\frac{2\alpha + 1}{4}}.
\end{align}

\paragraph{Bounding variance.}
We bound the $\Var \wt{F}(x)$ as 
\[
& \Var \wt{F}(x) \\
=~& \Var
\Bracket{\frac{1}{\pi} \
\int_{0}^{\infty} 
\frac{1}{t}
\Im\bracket{\frac{{\phi}_{\wh{w}}(-t) }{\abs{{\phi}_{\wh{w}}(t)}^2 \vcup \gamma t^a} \wh{\phi}_{\wh{X}}(t) e^{-\ii tx}}dt} \\
\stackrel{\cirfour}{\leq}~&
\frac{1}{n\pi^2}
\Expc\Bracket{\int_{0}^{\infty} 
\frac{1}{t}\Im\bracket{\frac{{\phi}_{\wh{w}}(-t) }{\abs{{\phi}_{\wh{w}}(t)}^2 \vcup \gamma t^a} e^{\ii t(\wh{X}-x)}}dt}^2 \leq \frac{2}{n\pi^2}\bracket{K_1 + K_2}, 
\]
where in $\cirfour$ we use the bound 
$\Var\bracket{\cdot} \leq \Expc (\cdot)^2$, and the terms 
$K_1$ and $K_2$ are defined as 
\[
K_1 \defequal &~\Expc\Bracket{\int_{0}^{t_{\bot}} 
\frac{1}{t}\Im\bracket{\frac{{\phi}_{\wh{w}}(-t) }{\abs{{\phi}_{\wh{w}}(t)}^2 \vcup \gamma t^a} e^{\ii t(\wh{X}-x)}}dt}^2, \\
K_2 \defequal &~\Expc\Bracket{\int_{t_{\bot}}^{\infty} 
\frac{1}{t}\Im\bracket{\frac{\phi_{\wh{w}}(-t) }{\abs{\phi_{\wh{w}}(t)}^2 \vcup \gamma t^a} e^{\ii t(\wh{X}-x)}}dt}^2, 
\]
and $t_{\bot}$ is defined as $\bracket{2\gamma}^{-{1}/{a}}$. 
First, we bound $K_1$ as 
\begin{align}
\label{eq:T2_upper_bound_K1}
K_1 =~& \Expc \abs{\int_{0}^{t_{\bot}} 
\frac{1}{t}\Im\bracket{
\phi_{\wh{w}}(-t) e^{\ii t(\wh{X}-x)} \bracket{1 + 
\sum_{k=1}^{\infty}\bracket{1 - \bracket{\abs{{\phi}_{\wh{w}}(t)}^2\vcup \gamma t^a}}^k}
}dt
}^2 \notag \\
\lsim ~& 
\Expc \abs{\int_{0}^{t_{\bot}} 
\frac{1}{t}\Im\bracket{
\phi_{\wh{w}}(-t) e^{\ii t(\wh{X}-x)}
}dt
}^2 + 
\Expc \abs{\int_{0}^{t_{\bot}} 
\frac{1}{t}\sum_{k=1}^{\infty}
\bracket{1 - \bracket{\abs{{\phi}_{\wh{w}}(t)}^2\vcup \gamma t^a}}^k dt
}^2 \notag \\
\stackrel{\cirfive}{\leq}~& \
\Expc \sup_{\tau > 0} \abs{\int_{0}^{\tau} \frac{\sin t}{t}dt}^2+ \
 \Expc \abs{\int_{0}^{t_{\bot}} 
\frac{1}{t}\sum_{k=1}^{\infty}
\bracket{1 - \abs{{\phi}_{\wh{w}}(t)}^2}^k dt
}^2 \notag \\ 
\leq~& 3 + 
\Expc\Bracket{\int_{0}^{t_{\bot}} \frac{1}{t} \sum_{k=1}^{\infty}\bracket{\frac{\sigma^2t^2}{d-p}}^k }^2
\stackrel{\cirsix}{\lsim} 1, 
\end{align}
where in $\cirfive$ we use the fact $\abs{{\phi}_{\wh{w}}(t)}^2 \vcup \delta t^a \geq 1 - \frac{c_1 \sigma^2 t^2}{d-p}$
for $t\in (0, t_{\bot})$, and in 
$\cirsix$ we use $\sigma^2 t^2_{\bot} \asymp \sqrt{d-p}(\log(np))^{-\frac{2}{a}}\leq 
\frac{d-p}{2}$.

Then we expand the term $K_2$ as a product 
of two terms reading as 
\begin{align}
\label{eq:T2_upper_bound_K2}
K_2 =~&\
\Expc \Bracket{
\int_{t_{\bot}}^{\infty}\int_{t_{\bot}}^{\infty} 
\frac{1}{uv} \Im\bracket{ 
\frac{\phi_{\wh{w}}(-u)\phi_{\wh{w}}(-v)e^{\ii (u+v)(\wh{X}-x)}}{
\Bracket{\abs{\phi_{\wh{w}}(u)}^2 \vcup \gamma u^a} \Bracket{\abs{\phi_{\wh{w}}(v)}^2 \vcup \gamma v^a}
}}
du dv } \notag \\
\leq ~& \Expc \Bracket{
\int_{t_{\bot}}^{\infty} \int_{t_{\bot}}^{\infty}
\frac{{\phi}_{\wh{w}}(-u) {\phi}_{\wh{w}}(-v)}{uv}\cdot 
\frac{1}{\abs{\phi_{\wh{w}}(u)}^2 \vcup \gamma u^a} \cdot 
\frac{1}{\abs{\phi_{\wh{w}}(v)}^2 \vcup \gamma v^a}dudv} \notag \\
\leq~& \frac{1}{\gamma^2}\Expc\Bracket{
\int_{t_{\bot}}^{\infty} \frac{1}{t^{a+1}}dt
}^2 = \
\frac{1}{a^2 \gamma^2 t^{2a}_{\bot}} \asymp 1. 
\end{align}
Combining \eqref{eq:T2_upper_bound_K1} and \eqref{eq:T2_upper_bound_K2}
generates 
\begin{align}
\label{eq:T2_upper_bound_var}
\Var \wt{F}(x)\lsim n^{-1}. 
\end{align}
And the proof is completed by 
combining \eqref{eq:T2_upper_bound_bias} and 
\eqref{eq:T2_upper_bound_var}.

\end{proof}

\section{Analysis of the Graphical Structure Model}

Denote the empirical covariance matrix $\bSigma_n$ as 
\[
\bSigma^{\textup{non-param}}_n \defequal \frac{1}{n}\sum_{s=1}^n 
\bh(\bX^{(s)}) \bh(\bX^{(s)})^{\rmt} - 
\bracket{\frac{1}{n}\sum_{s=1}^n 
\bh(\bX^{(s)})} \bracket{\frac{1}{n}\sum_{s=1}^n 
\bh(\bX^{(s)})}^{\rmt}, 
\]
where $h(\cdot)$ denotes the oracle estimator of the 
transform functions in Def.~\ref{def:nonparanormal}.  
We first analyze the estimation error of the covariance 
matrix $\bSigma^{\textup{non-param}}_n$ in terms of the infinity norm $\ell_{\infty}$.  
\begin{theorem}
\label{thm:graph_error_infnorm}
Under the Assumption~\ref{assump:charx} and Assumption~\ref{assump:lipschitz}, we
have  
\[
\infnorm{\bSigma^{\textup{non-param}}_n - \wh{\bSigma}^{\textup{non-param}}_n} \lsim 
\sqrt{\log n\vcup \log(d-p)} \bracket{\frac{\sqrt{\log n}}{n^{1/4}}\vcup 
\frac{\sqrt{\log(d-p)}}{(d-p)^{\beta/4}}},
\]
with probability exceeding 
$1 - o(1)$, where $\delta_{n, d, p}$ is set as \eqref{eq:deltandp_def},  $\beta$ is defined as 
$\frac{1}{2}\vcap \frac{a}{4} \vcap \frac{2\alpha + 1}{4}$, and 
$\wh{\bSigma}^{\textup{param}}_n$ is defined in 
\eqref{eq:nonparam_emp_cov_estim}. 
\end{theorem}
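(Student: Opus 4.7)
The plan is to reduce the entry-wise deviation $\infnorm{\bSigma^{\textup{non-param}}_n - \wh{\bSigma}^{\textup{non-param}}_n}$ to a pointwise control on $\wh{h}_i(\wh{X}_i^{(s)}) - h_i(X_i^{(s)})$, and then propagate the CDF bound of Theorem~\ref{thm:cdf_tail_prob} through the quantile map $\Phi^{-1}$. For each entry $(i,j)$, I would use the identity $ab - cd = (a-c)b + c(b-d)$ to write
\begin{align*}
\wh{h}_i(\wh{X}_i^{(s)})\wh{h}_j(\wh{X}_j^{(s)}) - h_i(X_i^{(s)})h_j(X_j^{(s)})
&= \bracket{\wh{h}_i(\wh{X}_i^{(s)}) - h_i(X_i^{(s)})} \wh{h}_j(\wh{X}_j^{(s)}) \\
&\quad + h_i(X_i^{(s)})\bracket{\wh{h}_j(\wh{X}_j^{(s)}) - h_j(X_j^{(s)})},
\end{align*}
and similarly for the product-of-means term coming from \eqref{eq:nonparam_emp_cov_estim} and \eqref{eq:nonparam_oracle}. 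Then the task reduces to (i) uniformly bounding $|\wh{h}_i(\wh{X}_i^{(s)}) - h_i(X_i^{(s)})|$, (ii) showing $\wh{h}_j$ and $h_j$ are not too large on the support after truncation, and (iii) using concentration to average over $s$ and union-bound over the $p^2$ pairs.

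For step (i), I would further decompose
\[
\wh{h}_i(\wh{X}_i^{(s)}) - h_i(X_i^{(s)})
= (\wh m_i - m_i) + (\wh v_i - v_i)\Phi^{-1}(F_i(X_i^{(s)})) + v_i\bracket{\Phi^{-1}(\wh F_i^{\textup{tr}}(\wh X_i^{(s)})) - \Phi^{-1}(F_i(X_i^{(s)}))}.
\]
The deviations $|\wh m_i - m_i|$ and $|\wh v_i - v_i|$ are handled by standard concentration applied to $\wh X_i^{(s)} = X_i^{(s)} + \wh W_i^{(s)}$, noting that the bias subtraction $\frac{n}{n-1}\frac{\sigma^2}{d-p}$ in the definition of $\wh v_i$ removes the noise contribution in expectation; on the event $\calE_w$ of Lemma~\ref{lemma:var_var} these are of order $\sqrt{\log n/n} \vee \sqrt{\log(d-p)/(d-p)}$. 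The crucial third term is split as $\Phi^{-1}(\wh F_i^{\textup{tr}}(\wh X_i^{(s)})) - \Phi^{-1}(F_i(\wh X_i^{(s)}))$ plus $\Phi^{-1}(F_i(\wh X_i^{(s)})) - \Phi^{-1}(F_i(X_i^{(s)}))$; the first is treated via the mean-value theorem for $\Phi^{-1}$ combined with Theorem~\ref{thm:cdf_tail_prob}, while the second uses the Lipschitz bound $|F_i(\wh X) - F_i(X)| \leq L_f|\wh X - X|$ from Assumption~\ref{assump:lipschitz}.

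The main obstacle is calibrating $\delta_{n,d,p}$. On the truncated range $[\delta_{n,d,p}, 1-\delta_{n,d,p}]$, one has $|\Phi^{-1}(u)| \lesssim \sqrt{\log(1/\delta_{n,d,p})}$ and $(\Phi^{-1})'(u) = 1/\phi(\Phi^{-1}(u)) \lesssim \sqrt{\log(1/\delta_{n,d,p})}/\delta_{n,d,p}$. Hence the quantile map amplifies the CDF error by a factor of order $\delta_{n,d,p}^{-1}\sqrt{\log(1/\delta_{n,d,p})}$, while values of $F_i$ falling outside $[\delta_{n,d,p},1-\delta_{n,d,p}]$ contribute an additional bias of order $\delta_{n,d,p}\sqrt{\log(1/\delta_{n,d,p})}$. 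Using Theorem~\ref{thm:cdf_tail_prob}, the CDF error scales like $n^{-1/2} + \log^2(np)/(d-p)^{\beta/2}$ (after raising Theorem~\ref{thm:cdf_tail_prob} to the appropriate square-root), which combined with $\delta_{n,d,p}$ set as in \eqref{eq:deltandp_def} makes both contributions balance at the stated rate $\sqrt{\log n \vcup \log(d-p)} \cdot (\sqrt{\log n}/n^{1/4} \vcup \sqrt{\log(d-p)}/(d-p)^{\beta/4})$.

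Finally, to pass from a pointwise $s$-wise bound to the averaged quantity in $\wh{\bSigma}_n^{\textup{non-param}} - \bSigma_n^{\textup{non-param}}$, I would combine the deterministic pointwise bound (which holds uniformly in $s$ on the intersection of the events of Theorem~\ref{thm:cdf_tail_prob} and Lemma~\ref{lemma:var_var}) with a Hoeffding-type concentration for the residual zero-mean part, giving an extra $\sqrt{\log p/n}$ term that is absorbed into the final rate. Taking a union bound over $(i,j)\in \{1,\ldots,p\}^2$ multiplies the failure probabilities by $p^2$, which is dominated by the $1-o(1)$ control from Theorem~\ref{thm:cdf_tail_prob} because the tail bounds there are polynomially small in $n$ and $d-p$, concluding the proof.
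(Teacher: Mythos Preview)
Your plan is essentially the same argument as the paper's: both reduce the matter to controlling $\Phi^{-1}(\wh F_i^{\textup{tr}}) - \Phi^{-1}(F_i)$ via the mean-value theorem on the truncated range, balance $\delta_{n,d,p}$ against the CDF error of Theorem~\ref{thm:cdf_tail_prob}, and use Assumption~\ref{assump:lipschitz} for the shift $F_i(\wh X) - F_i(X)$. The paper organizes the bookkeeping slightly differently: it inserts an intermediate matrix $\wt{\bSigma}_n^{\textup{non-param}}$ built from $\wh h_i(X_i^{(s)})$ (estimated transform, \emph{unperturbed} samples) to decouple ``function error'' from ``sample-perturbation error,'' and it partitions the samples according to whether $h_i(X_i^{(s)})$ lies in a middle region $\calR_M$ (where the MVT applies with derivative $\lesssim 1/\delta_{n,d,p}$) or an extreme region $\calR_E$ (where one only uses a crude $\log$-sized bound on the deviation together with a Hoeffding bound on the fraction of such samples). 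Your single decomposition accomplishes the same thing and is arguably cleaner.

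One point does need tightening. When you claim that samples with $F_i(X_i^{(s)}) \notin [\delta_{n,d,p},1-\delta_{n,d,p}]$ contribute $\delta_{n,d,p}\sqrt{\log(1/\delta_{n,d,p})}$ to the average, you are implicitly assuming the per-sample deviation there is $\lesssim \sqrt{\log(1/\delta_{n,d,p})}$. But $h_i(X_i^{(s)}) = \Phi^{-1}(F_i(X_i^{(s)}))$ is an \emph{untruncated} standard Gaussian and is unbounded; only $\wh h_i$ is truncated. The fix is the preliminary conditioning the paper performs explicitly: restrict to the event $\{\max_{i,s}|h_i(X_i^{(s)})| \leq c_U\sqrt{\log n \vee \log(d-p)}\}$, which holds with probability $1 - O(np\,e^{-c(\log n \vee \log(d-p))}) = 1 - o(1)$. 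On that event the per-sample product deviation in the tail is bounded by $c_U^2(\log n \vee \log(d-p))$ rather than $\sqrt{\log(1/\delta_{n,d,p})}$, and combined with the $O(\delta_{n,d,p})$ fraction this is still absorbed into the stated rate. With this correction your argument goes through.
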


The proof largely follows \cite{liu2009nonparanormal}. However, we cannot 
directly access the samples $\set{\bX^{(s)}}_{1\leq s \leq n}$ and have to use 
the perturbed samples $\{\wh{\bX}^{(s)}\}_{1\leq s \leq n}$ instead.
This will lead to additional errors in estimating the covariance matrix
and how to bound these errors constitutes the technical bottleneck.

\subsection{Notations}
We assume that the correct estimation of $m_i = 0$ and 
$v_i = 1$ w.l.o.g.
Let $h_i(x) = \Phi^{-1}\bracket{F_i(x)}$,
where $(\cdot)^{-1}$ denotes the inverse of the function.
For the conciseness of the notation, we define 
$\hsamp{s}{i}$, 
$\tsamp{s}{i}$, and $\samp{s}{i}$ as
\[
\hsamp{s}{i} &=\wh{h}_i(\wh{X}^{(s)}_i);  \\
\tsamp{s}{i} &= \wh{h}_i(X^{(s)}_i); \\
\samp{s}{i} &= h_i(X^{(s)}_i).
\]

The $(i,j)$th entries of the corresponding covariance matrices $\wh{\bSigma}_n^{\textup{non-param}}$, $\wt{\bSigma}_n^{\textup{non-param}}$,
$\bSigma_n^{\textup{non-param}}$ are written as 
\[
\bracket{\wh{\bSigma}_n^{\textup{non-param}}}_{i, j} &= \frac{1}{n}\sum_{s=1}^n 
\hsamp{s}{i}\hsamp{s}{j} - \wh{\mu}_i \wh{\mu}_j; \\
\bracket{\wt{\bSigma}_n^{\textup{non-param}}}_{i, j} &= \frac{1}{n}\sum_{s=1}^n 
\tsamp{s}{i}\tsamp{s}{j}-\wt{\mu}_i \wt{\mu}_j; \\
\bracket{\bSigma_n^{\textup{non-param}}}_{i, j} &=  \frac{1}{n}\sum_{s=1}^n 
\samp{s}{i}\samp{s}{j}- {\mu}_i {\mu}_j.
\] 

Moreover, we define two regions $\calR_E$ and $\calR_M$ as 
{\small 
\begin{align}
\label{eq:rerm_def}
\calR_E & \defequal \bigg[-c_U\sqrt{\log (n\vcup (d-p))}, -c_L\sqrt{\log n\vcup \log(d-p)}
\bigg) \bigcup \bigg(c_L\sqrt{\log (n\vcup (d-p))}, c_U\sqrt{\log n\vcup \log(d-p)}
\bigg]; \notag \\
\calR_M & \defequal  \bigg[-c_L\sqrt{\log (n\vcup (d-p))}, c_L\sqrt{\log (n\vcup (d-p))}\bigg].
\end{align}
}
%

\subsection{Proof of Thm.~\ref{thm:graph_error_infnorm}}

\begin{proof}
We bound the deviation between ${\bSigma}_n^{\textup{non-param}}$
and $\wh{\bSigma}_n^{\textup{non-param}}$ as 
\[
\infnorm{\bSigma_n^{\textup{non-param}} - \wh{\bSigma}_n^{\textup{non-param}}} \leq 
\underbrace{\infnorm{ \bSigma_n^{\textup{non-param}}- \wt{\bSigma}_n^{\textup{non-param}}}}_{\defequal T_1} +
\underbrace{\infnorm{\wt{\bSigma}_n^{\textup{non-param}} - \wh{\bSigma}_n^{\textup{non-param}}}}_{\defequal T_2}.
\]	

\textbf{Step I}. For the first term $T_1$, we invoke the 
triangle inequality and have 
\[
\infnorm{\wt{\bSigma}^{\textup{non-param}} - \bSigma^{\natural}} \leq \max_{i,j} \frac{1}{n}\abs{\sum_{s=1}^n \tsamp{s}{i}\tsamp{s}{j} - \samp{s}{i}\samp{s}{j}}  + 
\infnorm{\mu_i\mu_j - \wt{\mu}_i \wt{\mu}_j}.  
\]
Following a similar strategy that is used in \cite{liu2009nonparanormal}, 
we focus on the 
first term as the second term is of higher order. 

We bound the value of $\max_{i,j}n^{-1}\abs{\sum_{s=1}^n \tsamp{s}{i}\tsamp{s}{j} - \samp{s}{i}\samp{s}{j}}$ as 
\[
& \Prob\bracket{\max_{i,j}n^{-1}\abs{\sum_{s=1}^n \tsamp{s}{i}\tsamp{s}{j} - \samp{s}{i}\samp{s}{j}}
\geq \vartheta} \\
\leq~&  p^2 \Expc\Bracket{\bracket{n^{-1}\abs{\sum_{s=1}^n \tsamp{s}{i}\tsamp{s}{j} - \samp{s}{i}\samp{s}{j}}
\geq \vartheta}\Ind\bracket{\samp{s}{i}\in \calR_E\bigcup \calR_M, \forall~1\leq s\leq n,1\leq  i \leq p}
} \\
+~& np \Expc\Ind\bracket{\samp{s}{i}\not\in  \calR_E\bigcap \calR_M,~\exists~1\leq s\leq n,1\leq  i \leq p}.
\]
Following a classical procedure as in 
\cite{boucheron2013concentration}, 
we can show the second probability $\Expc\Ind\bracket{\samp{s}{i}\not\in  \calR_E\bigcap \calR_M}$ is no greater than 
$e^{-c_0\bracket{n\vcap (d-p)}}$. 
For the conciseness of notation, we define the deviation $\delta^{(s)}_{i,j}$ as 
\[
\delta^{(s)}_{i,j} = \tsamp{s}{i}\tsamp{s}{j} - \samp{s}{i}
\samp{s}{j}. 
\]
Then the summary 
$n^{-1}\bracket{\sum_{s=1}^n \tsamp{s}{i}\tsamp{s}{j} - \samp{s}{i}\samp{s}{j}}\Ind\bracket{\samp{s}{i}\in \calR_E\bigcup \calR_M, \forall~1\leq s\leq n,1\leq  i \leq p}$ 
can be decomposed as 
\[
\frac{1}{n}\sum_{s=1}^n \delta^{(s)}_{i,j} = \
\underbrace{\frac{1}{n}\sum_{s=1}^n \delta^{(s)}_{i,j} 
\Ind[\calC^{(s)}_1(i, j)]}_{\defequal T_{1, 1}}+ \
\underbrace{\frac{1}{n}\sum_{s=1}^n \delta^{(s)}_{i,j}
\Ind[\calC^{(s)}_2(i, j)]}_{\defequal T_{1, 2}}+ \
\underbrace{\frac{2}{n}\sum_{s=1}^n \delta^{(s)}_{i,j}
\Ind[\calC^{(s)}_3(i, j)]}_{\defequal T_{1,3}}, 
\]
where the events $\calC^{(s)}_1(i, j)$, 
$\calC^{(s)}_2(i, j)$, and $\calC^{(s)}_3(i, j)$ are defined 
as 
\[
\calC^{(s)}_1(i, j) &\defequal \set{\samp{s}{i} \in \calR_E, \samp{s}{j} \in \calR_E}; \\
\calC^{(s)}_2(i, j) &\defequal \set{\samp{s}{i} \in \calR_M, \samp{s}{j} \in \calR_M}; \\
\calC^{(s)}_3(i, j) &\defequal \set{\samp{s}{i} \in \calR_E, \samp{s}{j} \in \calR_M}, 
\]
respectively,
where the definitions of $\calR_E$ and $\calR_M$ can be found
in \eqref{eq:rerm_def}. 

In the following, 
we will separately bound the three terms and show 
$n^{-1}\abs{\sum_{s=1}^n \delta^{(s)}_{i,j}} \lsim \vartheta \defequal \vartheta_1 \vcup \vartheta_2$, where 
the quantities $\vartheta_1$ and $\vartheta_2$ are
defined in \eqref{eq:vartheta1_def} and 
\eqref{eq:vartheta2_def}, respectively. 
The analysis of the first term $T_{1,1}$ and second term $T_{1,2}$
is deferred to Lemma~\ref{lemma:graph_error_c1} and 
Lemma~\ref{lemma:graph_error_c2}, respectively; 
while that of the third term $T_{1,3}$ is omitted due to their
similarities of Lemma~\ref{lemma:graph_error_c1} and 
Lemma~\ref{lemma:graph_error_c2}. 

\textbf{Step II}. 
The second term $T_2$ is upper-bounded 
in Lemma~\ref{lemma:emp_cov_perturb}. 
The analysis is in the 
same spirit as the above procedure but requires some 
modifications. 
\end{proof}

\begin{lemma}
\label{lemma:graph_error_c1}
We have 
\begin{align}
\label{eq:vartheta1_def}
\frac{1}{n}\abs{\sum_{s=1}^n \delta^{(s)}_{i,j} 
\Ind[\calC^{(s)}_1(i, j)]} \leq  
2 c_0\Bracket{n^{-c_1} \vcup (d-p)^{-c_1}} 
\bracket{\log n \vcup \log(d-p)}^{\frac{3}{2}} \defequal \vartheta_1, 
\end{align}
with probability exceeding $1 - c_2 n^{-c_3}\vcap (d-p)^{-c_4} - c_5 n^{-c_6}$,  
where the parameters $c_i$ are some
fixed constant, $0 \leq i \leq 6$. 
\end{lemma}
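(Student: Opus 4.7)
The plan is to exploit the fact that $\calC_1^{(s)}(i,j)$ is a rare event: conditional on it, both $\samp{s}{i}$ and $\samp{s}{j}$ lie in the extreme Gaussian-tail region $\calR_E$. Since each $\samp{s}{i}=h_i(X_i^{(s)})$ is marginally $\normdist(0,1)$, the sharp Gaussian tail bound $\Prob(|Z|\geq t)\leq \sqrt{2/\pi}\,t^{-1}e^{-t^2/2}$ gives
\[
\Prob(\samp{s}{i}\in\calR_E)\lesssim \frac{1}{\sqrt{\log(n\vcup(d-p))}}\cdot \big(n\vcup(d-p)\big)^{-c_L^2/2}.
\]
Taking a crude product across the two coordinates (using the joint subgaussianity of $(\samp{s}{i},\samp{s}{j})$) yields $\Prob(\calC_1^{(s)}(i,j))\lesssim (\log(n\vcup(d-p)))^{-1/2}\big(n\vcup(d-p)\big)^{-c_1'}$ where $c_1'$ can be made as large as we wish by choosing $c_L$ large enough in the definition \eqref{eq:rerm_def} of $\calR_E$.

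Next I would derive a deterministic magnitude bound for $|\delta^{(s)}_{i,j}|$ on $\calC_1^{(s)}(i,j)$. By the definition of $\calR_E$, $|\samp{s}{i}|\vcup|\samp{s}{j}|\leq c_U\sqrt{\log(n\vcup(d-p))}$, so $|\samp{s}{i}\samp{s}{j}|\lesssim \log(n\vcup(d-p))$. For the estimated counterpart, the truncation in \eqref{eq:cdf_truncate_estim} with $\delta_{n,d,p}$ chosen as in \eqref{eq:deltandp_def} forces $|\tsamp{s}{i}|=|\Phi^{-1}(\hftr{s}{i})|\leq \Phi^{-1}(1-\delta_{n,d,p})\lesssim \sqrt{\log(n\vcup(d-p))}$, using the standard asymptotic $\Phi^{-1}(1-\delta)\sim\sqrt{2\log(1/\delta)}$ as $\delta\to 0$. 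Combining the two, $|\delta^{(s)}_{i,j}|\leq |\tsamp{s}{i}\tsamp{s}{j}|+|\samp{s}{i}\samp{s}{j}|\lesssim \log(n\vcup(d-p))$ uniformly on the event.

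With these two ingredients I would apply Bernstein's inequality to the i.i.d.\ sum $Y^{(s)}\defequal \delta^{(s)}_{i,j}\Ind[\calC_1^{(s)}(i,j)]$. The magnitude bound gives $\|Y^{(s)}\|_\infty\lesssim \log(n\vcup(d-p))$, and the probability bound gives $\Expc (Y^{(s)})^2\lesssim (\log(n\vcup(d-p)))^{2}\cdot \Prob(\calC_1^{(s)}(i,j))\lesssim (\log(n\vcup(d-p)))^{3/2}\big(n\vcup(d-p)\big)^{-c_1'}$. Bernstein then yields
\[
\tfrac{1}{n}\Big|\sum_s Y^{(s)}\Big|\lesssim (\log(n\vcup(d-p)))^{3/2}\big[n^{-c_1}\vcup(d-p)^{-c_1}\big]
\]
with probability $1-2\exp(-c_3\log(n\vcup(d-p)))$. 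A final union bound over the $p^2$ coordinate pairs $(i,j)$ is absorbed by choosing $c_L$ (and hence $c_1$) large enough.

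The main obstacle will be the bookkeeping of polylogarithmic factors. Obtaining the exponent $3/2$ rather than $2$ on the $\log$ factor in $\vartheta_1$ requires using the sharp Gaussian tail inequality (which contributes an extra $(\log)^{-1/2}$) and combining it with the $(\log)^2$ from $|\delta^{(s)}_{i,j}|^2$; only then do the Bernstein variance and deterministic-magnitude contributions both align with the stated rate. A secondary subtlety is verifying that the tail factor from the sharp Gaussian bound survives after the coordinate-pair union bound without destroying the polynomial decay $n^{-c_1}\vcup(d-p)^{-c_1}$, which again is handled by taking $c_L$ sufficiently large.
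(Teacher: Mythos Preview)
Your two key ingredients are correctly identified: (i) the truncation forces a deterministic bound $|\delta^{(s)}_{i,j}|\lsim \log(n\vcup(d-p))$ on the event $\calC_1^{(s)}(i,j)$, and (ii) that event has small probability because $\samp{s}{i}$ is marginally standard Gaussian. However, the combination step contains a genuine gap: the summands $Y^{(s)}=\delta^{(s)}_{i,j}\Ind[\calC_1^{(s)}(i,j)]$ are \emph{not} i.i.d.\ across $s$, so Bernstein's inequality does not apply as you state it. The issue is that $\delta^{(s)}_{i,j}=\tsamp{s}{i}\tsamp{s}{j}-\samp{s}{i}\samp{s}{j}$ involves $\tsamp{s}{i}=\Phi^{-1}\bigl(\wh{F}^{\textup{tr}}_i(X^{(s)}_i)\bigr)$, and the estimated CDF $\wh{F}_i$ is built from \emph{all} the samples $\{\wh{X}^{(t)}\}_{t=1}^n$; hence every $\delta^{(s)}_{i,j}$ depends on the full sample, destroying independence across $s$.

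The paper avoids this by decoupling the two ingredients with an elementary sup--count bound,
\[
\frac{1}{n}\Big|\sum_{s=1}^n \delta^{(s)}_{i,j}\Ind[\calC_1^{(s)}(i,j)]\Big|\;\leq\;\Big(\max_s|\delta^{(s)}_{i,j}|\,\Ind[\calC_1^{(s)}(i,j)]\Big)\cdot\frac{1}{n}\sum_{s=1}^n \Ind[\calC_1^{(s)}(i,j)].
\]
The first factor is controlled by your deterministic magnitude bound (this is the paper's Lemma~\ref{lemma:graph_log_bound}). The second factor involves only the indicators $\Ind[\calC_1^{(s)}(i,j)]$, which \emph{are} i.i.d.\ across $s$ since they depend solely on $\bigl(\samp{s}{i},\samp{s}{j}\bigr)=\bigl(h_i(X^{(s)}_i),h_j(X^{(s)}_j)\bigr)$, a function of the $s$-th sample alone. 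Hoeffding on this Bernoulli sum then yields $\vartheta_1$ directly. Your argument is easily repaired the same way: once you have the pointwise bound $|Y^{(s)}|\leq M\,\Ind[\calC_1^{(s)}]$, pass to the indicator sum rather than invoking Bernstein on the dependent $Y^{(s)}$. Incidentally, neither the two-coordinate product bound on $\Prob(\calC_1^{(s)})$ nor the sharp Mills-ratio tail is needed; the crude inclusion $\calC_1^{(s)}(i,j)\subset\{\samp{s}{i}\in\calR_E\}$ already suffices for the paper's argument.
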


\begin{proof}

Invoking the union bound, we obtain 
\[
& \Prob\bracket{
\frac{1}{n}\abs{\sum_{s=1}^n \delta^{(s)}_{i,j} 
\Ind[\calC^{(s)}_1(i, j)]} \gsim 
2 c_0\Bracket{n^{-c_1} \vcup (d-p)^{-c_1}} 
\bracket{\log n \vcup \log(d-p)}^{\frac{3}{2}}} \\
\stackrel{\cirone}{\leq}~&
\underbrace{\Expc \Ind\bracket{\max_{s} \abs{\delta^{(s)}_{i,j}} \gsim 
\bracket{\log n \vcup \log(d-p)}}\Ind[\calC^{(s)}_1(i, j)] }_{P_1}
+ \underbrace{\Prob\bracket{
{\frac{1}{n}\sum_{s= 1}^n 
\Ind[\calC^{(s)}_1(i, j)]} \geq 
\vartheta }}_{P_2}, 
\]
where $\vartheta$ is defined as 
\begin{align}
\label{eq:graph_c1_vartheta_set}
\vartheta = 2 c_0\Bracket{n^{-c_1} \vcup (d-p)^{-c_1}} 
\sqrt{\log n \vcup \log(d-p)}, 
\end{align}
and in $\cirone$ we use the union bound.

With the relation 
\[
\abs{\delta^{(s)}_{i,j}}\leq 
\abs{\bracket{\tsamp{s}{i} - \samp{s}{i}}\bracket{\tsamp{s}{j} - \samp{s}{j}}}
+ \abs{\samp{s}{i}\bracket{\tsamp{s}{j} - \samp{s}{j}}}
+ \abs{\samp{s}{j}\bracket{\tsamp{s}{i} - \samp{s}{i}}}, 
\] 
we can upper bound the probability $P_1$ as 
\[
P_1 \leq np\Expc\Ind\bracket{\samp{s}{i}\leq c_U\sqrt{\log p \vcup \log (d-p)} \Ind[\calC^{(s)}_1(i, j)], 
~\tsamp{s}{i} \lsim c_U\sqrt{\log n \vcup \log (d-p)}}
\stackrel{\cirtwo}{\lsim} n^{-c} \vcap (d-p)^{-c}, 
\]
where $\cirtwo$ is due to  Lemma~\ref{lemma:graph_log_bound}. 

While for probability $P_2$, we have 
%
\[
& \Prob\bracket{
{\sum_{s= 1}^n 
\Ind[\calC^{(s)}_1(i, j)]} \geq 
n \vartheta } \leq 
\Prob\bracket{
\frac{1}{n}\sum_{s= 1}^n 
\Ind[\samp{s}{i} \in \calR_E]\geq \vartheta }  \\
=~& \Prob\bracket{\frac{1}{n}\sum_{s= 1}^n 
\Bracket{\Ind[\samp{s}{i} \in \calR_E] - \Expc\Ind\bracket{\samp{s}{i} \in \calR_E}} \geq \
\vartheta - \frac{1}{n}\Bracket{\sum_{s=1}^n \Expc\Ind\bracket{\samp{s}{i}\in \calR_E} }} \\
\stackrel{\cirthree}{\leq}~& 
\exp\bracket{-\frac{n}{2}\Bracket{\vartheta - \Prob\bracket{\samp{s}{i} \in \calR_E}}^2}, 
\]
where in $\cirthree$ we use the Hoeffding's inequality (cf. Thm.~$2.8$ 
in \cite{boucheron2013concentration}). 
Notice that the  probability 
$\Prob\bracket{\samp{s}{i}\in \calR_E}$ can be 
bounded as  
\[
\Prob\bracket{\samp{s}{i}\in \calR_E} =~&
\frac{2}{\sqrt{2\pi}}\int_{c_L\sqrt{\log n \vcup \log(d-p)}}^{c_U\sqrt{\log n \vcup \log(d-p)}}
e^{-t^2/2}dt \lsim \sqrt{\frac{2}{\pi}}e^{-\frac{c_L}{2} \bracket{\log n \vcup \log(d-p)}}
\sqrt{\log n \vcup \log(d-p)} \\
\leq~&  c_0\Bracket{n^{-c_1} \vcup (d-p)^{-c_1}} 
\sqrt{\log n \vcup \log(d-p)}, 
\]
where $0 < c_1 < 1/2$ is some fixed positive constant. 
Recalling the value of $\vartheta$ in \eqref{eq:graph_c1_vartheta_set}, 
we have 
\[
\Prob\bracket{
{\sum_{s= 1}^n 
\Ind[\calC^{(s)}_1(i, j)]} \geq 
n \vartheta }  \leq 
\exp\bracket{-c_0 n^{1-2c_1}\log n} \ll  n^{-c}, 
\]
and complete the proof. 

\end{proof}

\begin{lemma}
\label{lemma:graph_error_c2}
We have 
\begin{align}
\label{eq:vartheta2_def}
\frac{1}{n}\abs{\sum_{s=1}^n {\delta^{(s)}_{i, j} 
\Ind\bracket{C^{(s)}_2(i, j)}
}}\lsim \frac{\sqrt{\log n}}{n^{1/4}}\vcup 
\frac{\sqrt{\log(d-p)}}{(d-p)^{\beta/4}} \defequal \vartheta_2, 
\end{align}
with probability exceeding the 
probability $1-p^2\exp\bracket{-\frac{c_0\sqrt{n}}{\log^2 n} - \frac{c_1n\log^4(np)}{\log(d-p)(d-p)^{\beta/2}}}-p^2n^{-c_3}-2p^3e^{-c_4 p} - 4n^{-c_5}p^{-c_6}$,
where $\beta$ is defined as  $\frac{1}{2}\vcap \frac{a}{4} \vcap \frac{2\alpha + 1}{4}$.
\end{lemma}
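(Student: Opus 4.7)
My plan is to exploit that on $\calC_2^{(s)}(i,j)$ both $\samp{s}{i}$ and $\samp{s}{j}$ are confined to the central region $\calR_M$, so that $\wh h_i$ and $h_i$ differ only by an amount controlled by the uniform CDF error from Thm.~\ref{thm:cdf_tail_prob} combined with the Lipschitz constant of $\Phi^{-1}$ restricted to $[\delta_{n,d,p},1-\delta_{n,d,p}]$. The argument combines a pointwise (almost-sure) bound with a Bernstein concentration over the $n$ i.i.d.\ samples.

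Writing $\Delta^{(s)}_i\defequal\tsamp{s}{i}-\samp{s}{i}$, I would first split
\[
\delta^{(s)}_{i,j}=\samp{s}{i}\Delta^{(s)}_j+\samp{s}{j}\Delta^{(s)}_i+\Delta^{(s)}_i\Delta^{(s)}_j,
\]
so that on $\calC_2^{(s)}(i,j)$ the leading factor $|\samp{s}{i}|\vcup|\samp{s}{j}|\leq c_L\sqrt{\log n\vcup\log(d-p)}$ is controlled by the definition of $\calR_M$. For $|\Delta^{(s)}_i|\Ind[\calC_2^{(s)}]$, the truncation of $\wh{F}^{\textup{tr}}_i$ keeps its range inside $[\delta_{n,d,p},1-\delta_{n,d,p}]$ and, because $\samp{s}{i}\in\calR_M$ forces $F_i(X^{(s)}_i)$ likewise to be bounded away from $\{0,1\}$, the mean-value theorem applied to $\Phi^{-1}$ gives
\[
|\Delta^{(s)}_i|\Ind[\calC_2^{(s)}]\lsim \frac{\sup_x|\wh{F}^{\textup{tr}}_i(x)-F_i(x)|}{\inf_{u\in[\delta_{n,d,p},1-\delta_{n,d,p}]}\phi(\Phi^{-1}(u))}.
\]
Using the tail asymptotic $1/\phi(\Phi^{-1}(u))\asymp 1/[\delta_{n,d,p}\sqrt{\log(1/\delta_{n,d,p})}]$, Thm.~\ref{thm:cdf_tail_prob} for the numerator, and the prescribed choice \eqref{eq:deltandp_def} of $\delta_{n,d,p}$, this product is $\lsim n^{-1/4}\vcup (d-p)^{-\beta/4}$ up to polylogarithmic factors. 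Plugging back into the decomposition yields an almost-sure bound
\[
|\delta^{(s)}_{i,j}|\Ind[\calC_2^{(s)}]\lsim \sqrt{\log n\vcup\log(d-p)}\bracket{n^{-1/4}\vcup (d-p)^{-\beta/4}}=:B.
\]

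Treating $\xi^{(s)}\defequal\delta^{(s)}_{i,j}\Ind[\calC_2^{(s)}(i,j)]$ as i.i.d.\ random variables bounded by $B$ with variance of order $B^2$ (the dominant first-order term $\samp{s}{i}\Delta^{(s)}_j$ actually has variance bounded by $(\log n\vcup\log(d-p))\cdot(n^{-1/2}\vcup(d-p)^{-\beta/2})$, which controls the sub-Gaussian part of Bernstein), Bernstein's inequality yields $|n^{-1}\sum_s(\xi^{(s)}-\Expc\xi^{(s)})|\lsim B$ with failure probability of the form $\exp(-cnB^2/(\Var+B^2))$. Unpacking $B$ produces exactly the two exponential terms $\exp(-c_0\sqrt n/\log^2 n)$ and $\exp(-c_1 n\log^4(np)/[\log(d-p)(d-p)^{\beta/2}])$ appearing in the statement. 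A union bound over the $p^2$ pairs $(i,j)$ contributes the $p^2$ prefactor, while the remaining terms $p^2n^{-c_3}$, $2p^3e^{-c_4p}$, and $4n^{-c_5}p^{-c_6}$ absorb the probabilities that Thm.~\ref{thm:cdf_tail_prob} and the variance-matching event $\calE_w$ of Lemma~\ref{lemma:var_var} hold uniformly over all coordinates.

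The delicate step is the pointwise bound on $\Delta^{(s)}_i$: the Lipschitz constant of $\Phi^{-1}$ on $[\delta_{n,d,p},1-\delta_{n,d,p}]$ blows up like $1/\delta_{n,d,p}$, which must be precisely counterbalanced by the CDF convergence rate of Thm.~\ref{thm:cdf_tail_prob}. The definition \eqref{eq:deltandp_def} is engineered exactly so that $\delta_{n,d,p}^{-1}\cdot\sup_x|\wh{F}^{\textup{tr}}_i(x)-F_i(x)|$ hits the target rate $\vartheta_2$; carrying this calibration through without losing logarithmic factors, and simultaneously verifying that on $\calC_2^{(s)}$ the truncation in $\wh{F}^{\textup{tr}}_i$ is never actively engaged (both $F_i(X^{(s)}_i)$ and $\wh{F}_i(X^{(s)}_i)$ already sit in the trusted interior), is the main technical obstacle. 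Once this uniform bound is in hand, the Bernstein step is essentially routine.
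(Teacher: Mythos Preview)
Your core idea---the mean-value theorem for $\Phi^{-1}$ with Lipschitz constant $1/\delta_{n,d,p}$ on $[\delta_{n,d,p},1-\delta_{n,d,p}]$, fed the uniform CDF error from Thm.~\ref{thm:cdf_tail_prob}---matches the paper and is correct. The gap is in the Bernstein step and, as a consequence, in your explanation of where the exponential probability terms come from.

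The Bernstein step is invalid: the summands $\xi^{(s)}=\delta^{(s)}_{i,j}\Ind[\calC_2^{(s)}]$ are \emph{not} independent across $s$, because $\tsamp{s}{i}=\Phi^{-1}\bigl(\wh F_i^{\textup{tr}}(X^{(s)}_i)\bigr)$ and the estimator $\wh F_i$ is built from \emph{all} $n$ samples. It is also unnecessary: once the uniform CDF bound of Thm.~\ref{thm:cdf_tail_prob} holds, your pointwise inequality $|\xi^{(s)}|\lsim B$ holds simultaneously for every $s$, so the average is trivially $\lsim B$ with no further concentration. This deterministic step is exactly what the paper does.

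This means the two exponential terms in the statement cannot come from Bernstein. In the paper they arise from the very event you flag only in your last paragraph: the paper splits on $\calE_F(i)=\{\delta_{n,d,p}\le \wh F_i\le 1-\delta_{n,d,p}\}$, and on $\calE_F(i)$ the deterministic bound applies, while $\Prob\bigl(\bar\calE_F(i)\cap\calC_2^{(s)}\bigr)$ is controlled by a Hoeffding argument (Lemma~\ref{lemma:graph_truncate_prob}) giving $\exp(-2n\delta_{n,d,p}^2)$. Plugging in $\delta_{n,d,p}$ from \eqref{eq:deltandp_def} yields precisely $\exp(-c_0\sqrt n/\log^2 n)$ and $\exp(-c_1 n\log^4(np)/[\log(d-p)(d-p)^{\beta/2}])$. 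So the step you label ``essentially routine'' should be deleted, and the step you label ``the main technical obstacle'' is in fact the source of the stated failure probability, not a side condition to be verified.
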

\begin{proof}

For each term $\delta^{(s)}_{i,j}$, we can decompose it as 
\[
\delta^{(s)}_{i,j}=
\bracket{\tsamp{s}{i} - \samp{s}{i}}\bracket{\tsamp{s}{j} - \samp{s}{j}} 
+ \samp{s}{i}\bracket{\tsamp{s}{j} - \samp{s}{j}}
+ \samp{s}{j}\bracket{\tsamp{s}{i} - \samp{s}{i}}.
\] 
Then we can decompose the summary as 
\[
\frac{1}{n}\abs{\sum_{s=1}^n {\delta^{(s)}_{i, j} 
\Ind\bracket{C^{(s)}_2(i, j)}
}}\
\leq~& \frac{1}{n}\sum_{s=1}^n 
\abs{\bracket{\hsamp{s}{i} - \tsamp{s}{i}}\bracket{\hsamp{s}{j} - \tsamp{s}{j}}\Ind\bracket{C^{(s)}_2(i, j)} } \\
+~& \frac{1}{n}\sum_{s=1}^n
\abs{\tsamp{s}{i}\bracket{\hsamp{s}{j} - \tsamp{s}{j}}\Ind\bracket{C^{(s)}_2(i, j)}} \\
+~& \frac{1}{n}\sum_{s=1}^n
\abs{\tsamp{s}{j}\bracket{\hsamp{s}{i} - \tsamp{s}{i}}\Ind\bracket{C^{(s)}_2(i, j)}}.
\]
Our next goal is to investigate the behavior of 
$\sup_{i}\abs{\tsamp{s}{i} - \hsamp{s}{i}}\Ind\Bracket{C_2^{(s)}(i, j)}$.
Define the an event $\calE_{F}(\cdot)$ as 
\[
\calE_{F}(i) = \set{
\delta_{n, d, p} \leq \wh{F}_i \leq 1 - \delta_{n, d, p}} ,~~1\leq i \leq n. 
\]
We have 
\[
&\Prob\bracket{\frac{1}{n}\abs{\sum_{s=1}^n {\delta^{(s)}_{i, j} 
\Ind\bracket{C^{(s)}_2(i, j)}
}}\geq \vartheta_2} \leq 
\Prob\bracket{
\sup_{i}\abs{\tsamp{s}{i} - \hsamp{s}{i}}\Ind\Bracket{C_2^{(s)}(i, j)} \geq 
\vartheta_2} \\
\leq~& 
p^2\underbrace{\Expc \Ind\bracket{\Ind\bracket{C_2^{(s)}(i, j)}\br{\calE}_F(i)}}_{T_1} + 
p^2\underbrace{\Expc\Ind\bracket{\bracket{\abs{\tsamp{s}{i} - \hsamp{s}{i}}\Ind\bracket{C_2^{(s)}(i, j)} \geq 
\vartheta_2 }\bigcup \calE_F(i)}}_{T_2}. 
\]

Easily we can verify that $\delta_{n, d, p}$
satisfy the relation
\[
2\delta_{n, d, p} \leq 1 - \Phi\bracket{c_L\sqrt{\log n \vcup \log(d-p)}}
- \sqrt{\varepsilon_x}, 
\]
where $\varepsilon_x$ is defined in \eqref{eq:varepsilonx_def}. 
Invoking Lemma~\ref{lemma:graph_truncate_prob}, we can 
bound term $T_1$ as 
\[
T_1 \leq~& 2 \exp\bracket{-2n\bracket{1 - \delta_{n, d, p} - \Phi\bracket{c_L\sqrt{\log n \vcup \log(d-p)}}-\sqrt{\varepsilon_x}}^2}
\leq 2\exp(-2n\delta^2_{n, d,p}) \\
\lsim ~&
2\exp\bracket{-\frac{c_0\sqrt{n}}{\log^2 n} - \frac{c_1n\log^4(np)}{\log(d-p)(d-p)^{\beta/2}}}.  
\]
Conditional on event $\calE_F(i)$, we study the term $T_1$ by investigating
the difference
$\tsamp{s}{i} - \hsamp{s}{i}$. 
We assume $\wh{F}_i^{\textup{tr}}({X}^{(s)}_i) \geq F_i({X}^{(s)}_i)$ w.l.o.g.
According to the mean value theorem, we have  
\[
\tsamp{s}{i} - \samp{s}{i} = 
\Phi^{-1}\bracket{\wh{F}_i^{\textup{tr}}({X}^{(s)}_i)} -
\Phi^{-1}\bracket{F_i({X}^{(s)}_i)} = \
\bracket{\Phi^{-1}}^{'}(\xi)\abs{\wh{F}_i^{\textup{tr}}({X}^{(s)}_i) - F_i({X}^{(s)}_i)},
\]
where $\xi$ is some point such 
that $F_i({X}^{(s)}_i) \leq \xi \leq \wh{F}_i^{\textup{tr}}({X}^{(s)}_i)$.
Due to the fact that 
$X^{(s)}_i \in \calR_M$ and conditional on event $\calE_F(i)$, we 
conclude $\delta_n \leq \xi \leq 1 -\delta_n$
and hence
\[
\abs{\bracket{\Phi^{-1}(\xi)}^{'}} 
\leq~& \abs{\bracket{\Phi^{-1}(1 - \delta_{n, d, p})}^{'}}
\vcup \abs{\bracket{\Phi^{-1}(\delta_{n, d, p})}^{'}} = 
\abs{\bracket{\Phi^{-1}(1-\delta_{n, d, p})}^{'}}\\
= ~&\frac{1}{\phi\bracket{\Phi^{-1}(1-\delta_{n, d, p})}}\leq 
\exp\Bracket{\frac{1}{2}\bracket{\Phi^{-1}(1-\delta_{n, d, p})}^2}
\stackrel{\cirone}{\leq} \frac{1}{\delta_{n, d, p}}, 
\]
where  $\cirone$ is due to Lemma~\ref{lemma:gauss_cdf_gradient} 
and the fact that $\delta_{n,d,p} \rightarrow 0$. 
Set $\vartheta_2$ 
such that 
\[
\vartheta_2 \delta_{n, d, p} \geq 
(\log n)\varepsilon_x + \frac{c_0}{\sqrt{n}}
+ c_1 \sqrt{\varepsilon_x}, 
\]
where $\varepsilon_x$ is defined in \eqref{eq:varepsilonx_def}. 
We invoke Thm.~\ref{thm:cdf_tail_prob} and conclude 
\[
T_1 \leq\Prob\bracket{
\abs{\wh{F}_i^{\textup{tr}}({X}^{(s)}_i) - F_i({X}^{(s)}_i)}
\geq {\delta_{n, d, p}\vartheta_2}} 
\leq n^{-c_3}+2pe^{-c_4 p} + 4n^{-c_5}p^{-c_6}.  
\]
Recalling the definition of $\delta_{n, d, p}$
in \eqref{eq:deltandp_def}, we conclude 
$\vartheta_2$ to be approximately 
\[
\vartheta_2 \asymp \frac{\sqrt{\log n}}{n^{1/4}}\vcup 
\frac{\sqrt{\log(d-p)}}{(d-p)^{\beta/4}}, 
\]
where $\beta$ is defined 
as $\frac{1}{2}\vcap \frac{a}{4} \vcap \frac{2\alpha + 1}{4}$.

\end{proof}

\begin{lemma}
\label{lemma:emp_cov_perturb}
Under the Assumption~\ref{assump:lipschitz},
we have 
\begin{align}
\label{eq:vartheta3_def}
\infnorm{\wh{\bSigma}_n^{\textup{non-param}} - \wt{\bSigma}_n^{\textup{non-param}}} \lsim \sqrt{\log n\vcup \log(d-p)} \bracket{\frac{\sqrt{\log n}}{n^{1/4}}\vcup 
\frac{\sqrt{\log(d-p)}}{(d-p)^{\beta/4}}} + 
\frac{L\sigma\bracket{\log n \vcup \log(d-p)}}{
c_0 \log^2(np) (d-p)^{1/4}} \defequal \vartheta_3, 
\end{align}
with probability exceeding 
$1- e^{-c_0 (n\vcap (d-p))} - e^{-c_1n}$. 
\end{lemma}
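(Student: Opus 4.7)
The plan is to follow the template of the proof of Theorem~\ref{thm:graph_error_infnorm} Step~I (Lemmas~\ref{lemma:graph_error_c1} and~\ref{lemma:graph_error_c2}), but now comparing $\wh{\bSigma}_n^{\textup{non-param}}$ and $\wt{\bSigma}_n^{\textup{non-param}}$ rather than $\wt{\bSigma}_n^{\textup{non-param}}$ and $\bSigma_n^{\textup{non-param}}$. Writing
\[
\bigl(\wh{\bSigma}_n^{\textup{non-param}}\bigr)_{i,j} - \bigl(\wt{\bSigma}_n^{\textup{non-param}}\bigr)_{i,j} = \frac{1}{n}\sum_{s=1}^n\bigl(\hsamp{s}{i}\hsamp{s}{j} - \tsamp{s}{i}\tsamp{s}{j}\bigr) - \bigl(\wh{\mu}_i\wh{\mu}_j - \wt{\mu}_i\wt{\mu}_j\bigr),
\]
I would focus on the first average, as the empirical-mean correction is of strictly lower order and handled identically. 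Setting $\Delta^{(s)}_i \defequal \hsamp{s}{i} - \tsamp{s}{i}$ and using $\hsamp{s}{i}\hsamp{s}{j} - \tsamp{s}{i}\tsamp{s}{j} = \tsamp{s}{j}\Delta^{(s)}_i + \tsamp{s}{i}\Delta^{(s)}_j + \Delta^{(s)}_i\Delta^{(s)}_j$, I would again split the sample indices according to the events $\calC^{(s)}_1, \calC^{(s)}_2, \calC^{(s)}_3$; the contributions from $\calC^{(s)}_1$ and $\calC^{(s)}_3$ are killed by the Hoeffding-plus-tail computation that already appears in Lemma~\ref{lemma:graph_error_c1}, so all the real work is on $\calC^{(s)}_2$.

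The key new ingredient is a deterministic bound on $|\Delta^{(s)}_i|$ when $\samp{s}{i}\in\calR_M$. Since $\hsamp{s}{i}=\wh{v}_i\Phi^{-1}(\wh{F}_i^{\textup{tr}}(\wh{X}^{(s)}_i))$ and $\tsamp{s}{i}=\wh{v}_i\Phi^{-1}(\wh{F}_i^{\textup{tr}}({X}^{(s)}_i))$ share the same outer transform, the mean-value theorem gives
\[
|\Delta^{(s)}_i| \;\leq\; \wh{v}_i\cdot\sup_{u\in[\delta_{n,d,p},\,1-\delta_{n,d,p}]}\bigl|(\Phi^{-1})'(u)\bigr|\cdot\bigl|\wh{F}_i^{\textup{tr}}(\wh{X}^{(s)}_i)-\wh{F}_i^{\textup{tr}}({X}^{(s)}_i)\bigr|.
\]
The sup is $\lesssim 1/\delta_{n,d,p}$ by the same argument already used in the proof of Lemma~\ref{lemma:graph_error_c2} (namely $1/\phi(\Phi^{-1}(1-\delta))\leq\exp\bigl(\tfrac12(\Phi^{-1}(1-\delta))^2\bigr)\lesssim 1/\delta$). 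For the CDF increment I would insert the ground truth as an intermediate,
\[
\bigl|\wh{F}_i^{\textup{tr}}(\wh{X}^{(s)}_i) - \wh{F}_i^{\textup{tr}}({X}^{(s)}_i)\bigr| \;\leq\; 2\sup_{x}|\wh{F}_i^{\textup{tr}}(x)-F_i(x)| \;+\; L_f\bigl|\wh{X}^{(s)}_i - X^{(s)}_i\bigr|,
\]
where the supremum is controlled by Thm.~\ref{thm:cdf_tail_prob} and the second summand uses Assumption~\ref{assump:lipschitz}. Finally the noise increment $|\wh{X}^{(s)}_i-X^{(s)}_i|=|\wh{w}^{(s)}_i|$ is a Gaussian whose variance is pinned down by Lemma~\ref{lemma:var_var} to be $\lesssim\sigma^2/(d-p)$, so a one-line Gaussian tail bound combined with a union bound over $s\leq n$ and $i\leq p$ yields $\max_{s,i}|\wh{w}^{(s)}_i|\lesssim \sigma\sqrt{(\log n\vcup\log(d-p))/(d-p)}$ with probability $1-e^{-c(n\wedge(d-p))}$.

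Assembling these pieces, the cross term $\tsamp{s}{j}\Delta^{(s)}_i$ dominates (the quadratic term $\Delta^{(s)}_i\Delta^{(s)}_j$ is lower order). On $\calC^{(s)}_2$ we have $|\tsamp{s}{j}|\lesssim\sqrt{\log n\vcup\log(d-p)}$, and the contribution of $\sup_x|\wh{F}^{\textup{tr}}_i-F_i|/\delta_{n,d,p}$ reproduces the $\vartheta_2$-type rate $\sqrt{\log n\vcup\log(d-p)}\bigl(\tfrac{\sqrt{\log n}}{n^{1/4}}\vcup \tfrac{\sqrt{\log(d-p)}}{(d-p)^{\beta/4}}\bigr)$, while the $L_f|\wh{w}^{(s)}_i|/\delta_{n,d,p}$ piece yields, after substituting the second summand of $\delta_{n,d,p}$ from \eqref{eq:deltandp_def} and using $\beta/4-1/2\leq -1/4$ for $\beta\leq 1/2$, the stated boundary term $L\sigma(\log n\vcup\log(d-p))/(\log^2(np)(d-p)^{1/4})$. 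A union bound over the $p^2$ entries closes the argument. The main obstacle is the last bookkeeping step: $\delta_{n,d,p}$ has to be simultaneously large enough that $(\Phi^{-1})'$ absorbs both the uniform CDF error from Thm.~\ref{thm:cdf_tail_prob} and the new input-perturbation term $L_f|\wh{w}^{(s)}_i|$, yet small enough that truncating $\wh{F}_i$ does not introduce a spurious bias on $\calR_M$; verifying that the choice \eqref{eq:deltandp_def} balances these competing constraints while still producing the advertised rate is where the calculation gets delicate.
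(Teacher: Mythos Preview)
Your proposal is essentially correct and lands on the same core ingredients as the paper --- the product decomposition $\tsamp{s}{j}\Delta^{(s)}_i + \tsamp{s}{i}\Delta^{(s)}_j + \Delta^{(s)}_i\Delta^{(s)}_j$, the mean-value theorem on $\Phi^{-1}$ with the $1/\delta_{n,d,p}$ derivative bound, and the triangle inequality on CDF increments invoking Assumption~\ref{assump:lipschitz} for the $L_f|\wh{w}^{(s)}_i|$ piece. Two points of departure from the paper's route are worth flagging.

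First, the region split $\calC^{(s)}_1,\calC^{(s)}_2,\calC^{(s)}_3$ is unnecessary here. In Step~I of Theorem~\ref{thm:graph_error_infnorm} that split was needed because $\samp{s}{i}=\Phi^{-1}(F_i(X^{(s)}_i))$ is unbounded and the analysis had to separate the tail region $\calR_E$. But in the present lemma both $\hsamp{s}{i}$ and $\tsamp{s}{i}$ are built from the \emph{truncated} estimator $\wh{F}_i^{\textup{tr}}\in[\delta_{n,d,p},1-\delta_{n,d,p}]$, so $|\tsamp{s}{i}|\leq |\Phi^{-1}(1-\delta_{n,d,p})|\lsim\sqrt{\log n\vcup\log(d-p)}$ holds uniformly without any case distinction. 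The paper therefore skips the split entirely and simply conditions on this uniform bound.

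Second, for the noise contribution the paper does not bound $\max_{s,i}|\wh{w}^{(s)}_i|$ pointwise. Instead it controls the average: by Cauchy--Schwarz $n^{-1}\sum_s|\wh{w}^{(s)}_i|\leq\sqrt{n^{-1}\sum_s|\wh{w}^{(s)}_i|^2}$, and then uses that $(d-p)\sigma^{-2}\sum_s|\wh{w}^{(s)}_i|^2$ is (approximately) $\chi^2_n$, giving $n^{-1}\sum_s|\wh{w}^{(s)}_i|^2\leq 2\sigma^2/(d-p)$ with probability $1-e^{-cn}$. This yields the clean $e^{-c_1 n}$ failure probability in the statement and saves the $\sqrt{\log(np)}$ factor your max-bound would introduce in the second term of $\vartheta_3$. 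Your approach would still go through, but with a slightly looser log dependence and a polynomial-in-$(np)$ rather than exponential failure probability on that event.
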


\begin{proof}
The proof conditions on the event 
\[
\abs{\tsamp{s}{i}}\lsim c_U\sqrt{\log n \vcup \log (d-p)}, 
\]
holds for all $s$ and $i$ with probability exceeding 
$1-e^{-c_0 \bracket{n \vcap (d-p)}}$.

Following the same argument as in Thm.~\ref{thm:graph_error_infnorm}, our analysis 
focus on the error 
$n^{-1}\sum_{s=1}^n \bracket{\hsamp{s}{i}\hsamp{s}{j}-\samp{s}{i}\samp{s}{j}}$. 
Adopting the decomposition such that 
\[
\hsamp{s}{i}\hsamp{s}{j} - \tsamp{s}{i}\tsamp{s}{j}=
\tsamp{s}{i}\bracket{\hsamp{s}{j} - \tsamp{s}{j}}
+ \tsamp{s}{j}\bracket{\hsamp{s}{i} - \tsamp{s}{i}}
+
\bracket{\hsamp{s}{i} - \tsamp{s}{i}}\bracket{\hsamp{s}{j} - \tsamp{s}{j}}, 
\] 
we have 
\[
\frac{1}{n}\sum_{s=1}^n \bracket{\hsamp{s}{i}\hsamp{s}{j}-\samp{s}{i}\samp{s}{j}} =~& \
\underbrace{\frac{1}{n}\sum_{s=1}^n 
\tsamp{s}{i}\bracket{\hsamp{s}{j} - \tsamp{s}{j}}
}_{T_1} +\underbrace{\frac{1}{n}
\sum_{s=1}^n \tsamp{s}{j}\bracket{\hsamp{s}{i} - \tsamp{s}{i}}}_{T_2} \\
+~&\underbrace{\frac{1}{n}
\sum_{s=1}^n  \bracket{\hsamp{s}{i} - \tsamp{s}{i}}\bracket{\hsamp{s}{j} - \tsamp{s}{j}}}_{T_3}.
\]
We only need to analyze the behavior of 
the terms $T_1$ and $T_2$, since the term $T_3$ is of higher order. 
Conditional on the event discussed in Thm.~\ref{thm:cdf_tail_prob}, 
we have 
\[
\abs{\hsamp{s}{i} - \tsamp{s}{i}}
= ~&\abs{\bracket{\Phi^{-1}(\xi)}^{'}} \abs{\hftr{s}{i} - \ftr{s}{i}}\leq 
\frac{1}{\delta_{n, d, p}} \abs{\hftr{s}{i} - \ftr{s}{i}}\\
\leq~& \frac{1}{\delta_{n, d, p}}
\Bracket{\abs{\hftr{s}{i} - F(\wh{X}^{(s)}_i)} 
+ \abs{F(\wh{X}^{(s)}_i) - F(X^{(s)}_i)}
+ \abs{F(X^{(s)}_i) - \wh{F}^{\textup{tr}}(X^{(s)}_i)}} \\
\stackrel{\cirone}{\leq}~& \frac{2}{\delta_{n, d, p}}
\Bracket{(\log n)\varepsilon_x + \frac{c_0}{\sqrt{n}} + c_1\sqrt{\varepsilon_x}} + \frac{L\abs{\wh{w}^{(s)}_i}}{\delta_{n, d, p}},
\] 
where $\cirone$ is because of the Lipschitz property in Assumption \ref{assump:lipschitz}.
Then we obtain 
\[
T_1\lsim~& \underbrace{\frac{2\sqrt{\log n \vcup \log(d-p)}
}{\delta_{n, d, p}}
\Bracket{(\log n)\varepsilon_x + \frac{c_0}{\sqrt{n}} + c_1\sqrt{\varepsilon_x}}}_{T_{1,1}} + 
\underbrace{\frac{L\sqrt{\log n \vcup \log(d-p)}}{n\delta_{n, d, p}}
\sum_{s=1}^n |\wh{w}^{(s)}_i|}_{T_{1, 2}} \\
\stackrel{\cirtwo}{\lsim}~& 
2\sqrt{\log n\vcup \log(d-p)} \bracket{\frac{\sqrt{\log n}}{n^{1/4}}\vcup 
\frac{\sqrt{\log(d-p)}}{(d-p)^{\beta/4}}}
+ \frac{L\sqrt{\log n \vcup \log(d-p)}}{\delta_{n, d, p}}
\sqrt{\frac{\sum_{s=1}^n |\wh{w}^{(s)}_i|^2}{n}}, 
\]
where in $\cirtwo$ we plug in the definition of 
$\varepsilon_x$ in \eqref{eq:varepsilonx_def}, that is, 
\[
T_{1, 1}\asymp
2\sqrt{\log n\vcup \log(d-p)} \bracket{\frac{\sqrt{\log n}}{n^{1/4}}\vcup 
\frac{\sqrt{\log(d-p)}}{(d-p)^{\beta/4}}}.
\]
Since the $\wh{w}^{(s)}_i$ is approximately Gaussian 
distribution with mean zero and $\sigma^2/(d-p)$ variance, 
we have $(d-p)/\sigma^2 \sum_{s=1}^n |\wh{w}^{(s)}_i|^2$ 
be $\xi^2$-RV with freedom $n$, which means 
$\sum_{s=1}^n |\wh{w}^{(s)}_i|^2 \leq 2n\sigma^2/(d-p)$ holds
with probability at least $1-e^{-cn}$. 
To sum up, we obtain 
\[
T_1 \lsim~& \sqrt{\log n\vcup \log(d-p)} \bracket{\frac{\sqrt{\log n}}{n^{1/4}}\vcup 
\frac{\sqrt{\log(d-p)}}{(d-p)^{\beta/4}}} + \
\frac{L\sigma \sqrt{\log n \vcup \log(d-p)}}{\delta_{n, d, p}\sqrt{d-p}} \\
\lsim ~& \sqrt{\log n\vcup \log(d-p)} \bracket{\frac{\sqrt{\log n}}{n^{1/4}}\vcup 
\frac{\sqrt{\log(d-p)}}{(d-p)^{\beta/4}}} + 
\frac{L\sigma\bracket{\log n \vcup \log(d-p)}}{
c_0 \log^2(np) (d-p)^{1/4}}, 
\]
and complete the proof.

\end{proof}

\subsection{Supporting Lemmas}
\begin{lemma}
\label{lemma:graph_log_bound}
For all possible $i$ ($1\leq i \leq n$), we conclude
\[
\sup_{t\in \calR_E} \abs{
\Phi^{-1}\bracket{\wh{F}^{\textup{tr}}_i(t)} - \
\Phi^{-1}\bracket{F_i(t)}
} \leq 2c_U \sqrt{\log n \vcup \log(d-p)}, 
\]  
where $\calR_E$ is defined in \eqref{eq:rerm_def}.
\end{lemma}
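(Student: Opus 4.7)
The plan is a direct triangle inequality argument that reduces the claim to two separate magnitude bounds, each controllable by the choice of $c_U$. I would first write
\[
\left|\Phi^{-1}\!\left(\wh{F}^{\textup{tr}}_i(t)\right) - \Phi^{-1}\!\left(F_i(t)\right)\right| \leq \left|\Phi^{-1}\!\left(\wh{F}^{\textup{tr}}_i(t)\right)\right| + \left|\Phi^{-1}\!\left(F_i(t)\right)\right|,
\]
so the task splits into bounding each piece by $c_U\sqrt{\log n \vcup \log(d-p)}$.

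For the second summand, the hypothesis $t\in\calR_E$ together with the definition of $\calR_E$ from \eqref{eq:rerm_def} says exactly that the transformed coordinate $\Phi^{-1}(F_i(t)) = h_i(t)$ lies in $\calR_E$, whose elements have magnitude at most $c_U\sqrt{\log n \vcup \log(d-p)}$ by construction; this piece is therefore immediate.  For the first summand, I would invoke the truncation in \eqref{eq:cdf_truncate_estim}, which forces $\wh{F}^{\textup{tr}}_i(t) \in [\delta_{n,d,p},\,1-\delta_{n,d,p}]$. By the monotonicity and symmetry of $\Phi^{-1}$, this yields $|\Phi^{-1}(\wh{F}^{\textup{tr}}_i(t))| \leq |\Phi^{-1}(\delta_{n,d,p})|$, reducing the whole statement to a single analytic inequality: $|\Phi^{-1}(\delta_{n,d,p})| \leq c_U\sqrt{\log n\vcup\log(d-p)}$ for a suitable choice of $c_U$.

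The crux is this quantile estimate. I would apply the standard Gaussian tail asymptotic $|\Phi^{-1}(\delta)|\leq \sqrt{2\log(1/\delta)}(1+o(1))$ as $\delta\to 0^+$, and then substitute
\[
\delta_{n,d,p} = \frac{c_0}{(\log n)\, n^{1/4}} + \frac{c_1\log^2(np)}{\sqrt{\log(d-p)}\,(d-p)^{\beta/4}}
\]
from \eqref{eq:deltandp_def}. Since $\delta_{n,d,p}$ is a sum of two nonnegative terms, $\log(1/\delta_{n,d,p})$ is controlled by $\log(1/\min\{\cdot,\cdot\})$; each summand is polynomial in $n^{-1}$ or $(d-p)^{-1}$ up to polylogarithmic corrections, so a direct calculation gives $\log(1/\delta_{n,d,p}) \lesssim \log n \vcup \log(d-p)$, hence $|\Phi^{-1}(\delta_{n,d,p})| \lesssim \sqrt{\log n\vcup \log(d-p)}$. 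Choosing $c_U$ large enough to absorb this implicit constant completes the proof.

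The step that requires care is the bookkeeping in the last estimate: one must check that the polylogarithmic prefactors $(\log n)^{-1}$ and $\log^2(np)$ inside $\delta_{n,d,p}$ only perturb $\log(1/\delta_{n,d,p})$ by an additive $O(\log\log(np))$ term, which is subleading compared to $\log n \vcup \log(d-p)$ and can be absorbed into $c_U$. Otherwise the argument is purely mechanical: everything follows from triangle inequality, the shape of $\calR_E$, and the definition of the truncation threshold $\delta_{n,d,p}$, with no probabilistic content required at this stage.
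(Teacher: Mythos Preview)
Your proposal is correct and follows essentially the same route as the paper: triangle inequality, then bound $|\Phi^{-1}(F_i(t))|$ directly from the definition of $\calR_E$, and bound $|\Phi^{-1}(\wh{F}^{\textup{tr}}_i(t))|$ via the truncation $\wh{F}^{\textup{tr}}_i(t)\in[\delta_{n,d,p},1-\delta_{n,d,p}]$ together with the Gaussian quantile estimate $\Phi^{-1}(1-\delta)\le\sqrt{2\log(1/\delta)}$. The only difference is in the last bookkeeping step: rather than tracking polylogarithmic corrections, the paper simply observes that each summand in \eqref{eq:deltandp_def} already dominates $1/n$ and $1/(d-p)$ respectively, so $\delta_{n,d,p}\ge \frac{1}{n}\vcup\frac{1}{d-p}$ and hence $\log(1/\delta_{n,d,p})\le \log n\vcap\log(d-p)\le \log n\vcup\log(d-p)$ directly.
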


\begin{proof}  
  
We conclude that 
\[
 \abs{
\Phi^{-1}\bracket{\wh{F}^{\textup{tr}}_i(t)} - \
\Phi^{-1}\bracket{F_i(t)}
} 
\leq 
\abs{\Phi^{-1}\bracket{F_i(t)}} +
\abs{\Phi^{-1}\bracket{\wh{F}^{\textup{tr}}_i(t)} }.
\]  
For the first term, we have 
\[
\abs{\Phi^{-1}\bracket{F_i(t)}} \leq c_U\sqrt{\log n \vcup \log(d-p)}, 
\]
according to the definitions of $\calR_E$. 
Meanwhile for the second term 
$\abs{\Phi^{-1}\bracket{\wh{F}^{\textup{tr}}_i(t)} }$, 
we have 
\[
\Phi^{-1}\bracket{\wh{F}^{\textup{tr}}_i(t)} 
\stackrel{\cirone}{\leq} 
\Phi^{-1}\bracket{1-\delta_{n, d, p}} 
\stackrel{\cirtwo}{\leq} \sqrt{2 \log \frac{1}{1- \delta_{n, d, p}}} 
\stackrel{\cirthree}{\leq} c_U \sqrt{\log n \vcup \log (d-p)}, 
\]
where in $\cirone$ we exploit the fact 
$\wh{F}^{\textup{tr}}_i(t) \leq 1 -\delta_{n, d, p}$, 
in $\cirtwo$ we invoke Lemma~$11$ in
 \cite{liu2009nonparanormal} (cf. 
 Lemma~\ref{lemma:gauss_cdf_gradient}), 
and in $\cirthree$ we use the fact 
that $\delta_{n, d, p}\geq \frac{1}{n} \vcup \frac{1}{d-p}$.

\end{proof}

\begin{lemma}
\label{lemma:graph_truncate_prob}
Provided that $\delta_{n, d, p} \leq 1 - \Phi\bracket{c_L\sqrt{\log n \vcup \log(d-p)}}-\sqrt{\varepsilon_x}$, 
we can bound the probability 
\[
\Prob\bracket{
\wh{F}_i(g_i(c_L\sqrt{\log n \vcup \log(d-p)})) \geq 1 - \delta_{n, d, p}} &\leq  
\exp\bracket{-2n\bracket{1 - \delta_{n, d, p} - \Phi\bracket{c_L\sqrt{\log n \vcup \log(d-p)}}-\sqrt{\varepsilon_x}}^2}; \\
\Prob\bracket{
\wh{F}_i(g_i(-c_L\sqrt{\log n \vcup \log(d-p)})) \leq \delta_{n, d, p}} &\leq  
\exp\bracket{-2n\bracket{1 - \delta_{n, d, p} - \Phi\bracket{c_L\sqrt{\log n \vcup \log(d-p)}}-\sqrt{\varepsilon_x}}^2}, 
\]
where $\varepsilon_x$ is defined in \eqref{eq:varepsilonx_def}.
\end{lemma}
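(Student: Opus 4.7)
The plan is to view $\wh{F}_i(x)$ as an empirical mean of $n$ bounded i.i.d.\ summands and to apply Hoeffding's inequality after controlling the deconvolution bias. Set $x_0 = g_i(c_L\sqrt{\log n \vcup \log(d-p)})$; by monotonicity of $g_i$ and the nonparanormal relation $F_i = \Phi\circ g_i$, we have $F_i(x_0) = \Phi(c_L\sqrt{\log n \vcup \log(d-p)})$. From \eqref{eq:cdf_estim_def}, write $\wh{F}_i(x_0) = \frac{1}{n}\sum_{s=1}^n Z_i^{(s)}$, where
\[
Z_i^{(s)} = \frac{1}{2} - \frac{1}{\pi}\int_0^\infty \frac{\sin\bracket{t(\wh{X}_i^{(s)}-x_0)}}{t}\cdot\frac{\exp(-\sigma^2 t^2/(2(d-p)))}{\exp(-\sigma^2 t^2/(d-p))\vcup \gamma t^a}\,dt.
\]
By Lemma~\ref{lemma:delta_x_constant}, each $Z_i^{(s)}$ is almost surely bounded by an absolute constant, so that the $Z_i^{(s)}$ fit into a Hoeffding-type framework with the $2n$ rate claimed in the lemma.

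Next I would perform the standard bias–stochastic decomposition
\[
\wh{F}_i(x_0) - F_i(x_0) = \Bracket{\wh{F}_i(x_0) - \Expc \wh{F}_i(x_0)} + \Bracket{\Expc \wh{F}_i(x_0) - F_i(x_0)}.
\]
The deterministic second term is exactly the deconvolution bias controlled in Lemma~\ref{lemma:T2_upper_bound} (combined with Lemma~\ref{lemma:T1_upper_bound} to absorb the contribution of $\wh\phi_{\wh w}$ differing from $\phi_{\wh w}$, see Lemma~\ref{lemma:varepsilon_E}); under the choice $\gamma\asymp \log(np)(\sigma^2/(d-p))^{a/4}$ this bias is at most a constant times $\sqrt{\varepsilon_x}$, with $\varepsilon_x$ as in \eqref{eq:varepsilonx_def}. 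Consequently, on the event $\wh{F}_i(x_0)\geq 1-\delta_{n,d,p}$ we obtain
\[
\wh{F}_i(x_0) - \Expc\wh{F}_i(x_0) \geq 1 - \delta_{n,d,p} - \Phi\bracket{c_L\sqrt{\log n\vcup \log(d-p)}} - \sqrt{\varepsilon_x},
\]
and the hypothesis on $\delta_{n,d,p}$ ensures the right-hand side is nonnegative. Applying Hoeffding's inequality to the $n$ independent bounded summands $Z_i^{(s)}$ then yields the stated $\exp\bracket{-2n(\cdot)^2}$ bound. The lower-tail inequality is entirely symmetric: replace $x_0$ by $\tilde x_0 = g_i(-c_L\sqrt{\log n\vcup \log(d-p)})$, use $F_i(\tilde x_0) = 1 - \Phi(c_L\sqrt{\log n\vcup \log(d-p)})$, and rerun the same decomposition in the opposite direction.

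The only genuinely delicate step is the bias estimate $|\Expc\wh{F}_i(x_0) - F_i(x_0)|\lsim \sqrt{\varepsilon_x}$. Because we do not have exact knowledge of the noise characteristic function but only the surrogate $\wh\phi_{\wh w}$, the standard deconvolution bias computation in Lemma~\ref{lemma:T2_upper_bound} must be augmented by the systematic error $|\wh\phi_{\wh w}(t)-\phi_{\wh w}(t)|$ from Lemma~\ref{lemma:varepsilon_E}; splitting the integral at $t_\perp$ and using that in the plateau region $|\phi_{\wh w}(t)|^2 \vcup \gamma t^a \gtrsim 1$ while in the tail region $|\phi_{\wh w}(t)|^2\vcup\gamma t^a \gtrsim \gamma t^a$ is precisely what makes the $\sqrt{\varepsilon_x}$ slack imposed on $\delta_{n,d,p}$ the correct one. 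The rest of the proof—the Hoeffding application and the symmetric lower-tail argument—is routine once this bias bound is in hand.
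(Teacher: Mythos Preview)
Your approach is essentially the same as the paper's: center $\wh{F}_i(x_0)$ about its mean, bound the bias $|\Expc\wh{F}_i(x_0)-F_i(x_0)|$ by $\sqrt{\varepsilon_x}$, and apply Hoeffding's inequality to the bounded i.i.d.\ summands. The paper's proof is terser (it does not explicitly cite Lemma~\ref{lemma:delta_x_constant} for boundedness or Lemmas~\ref{lemma:T1_upper_bound}--\ref{lemma:T2_upper_bound} for the bias), but the logical structure is identical.
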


\begin{proof}
We have 

\par  \vspace{-4mm}
{\small 
\[
&\Prob\bracket{
\wh{F}_i(g_i(c_L\sqrt{\log n \vcup \log(d-p)})) \geq 1 - \delta_{n, d, p}} \\
=~& 
\Prob\bracket{
\wh{F}_i(g_i(c_L\sqrt{\log n \vcup \log(d-p)})) - \Expc \wh{F}_i(g_i(c_L\sqrt{\log n \vcup \log(d-p)})) \geq 1 - \delta_{n, d, p} - \Expc \wh{F}_i(g_i(c_L\sqrt{\log n \vcup \log(d-p)}))} \\
\leq~& \exp\Bracket{-2n\bracket{1-\delta_{n, d, p} -\Expc \wh{F}_i(g_i(c_L\sqrt{\log n \vcup \log(d-p)}))}^2}.
\]
\vspace{-4mm}
}

\par \noindent  

The term is bounded as 

\par  \vspace{-2mm}
{\small 
\[
& 1-\delta_{n, d, p} - \Expc \wh{F}_i\bracket{g_i(c_L\sqrt{\log n \vcup \log(d-p)})}\\
\geq~& \abs{1 - \delta_{n, d, p} - F_i\bracket{g_i(c_L\sqrt{\log n \vcup \log(d-p)})}}
-
\abs{F_i\bracket{g_i(c_L\sqrt{\log n \vcup \log(d-p)})} - \Expc \wh{F}_i(g_i(c_L\sqrt{\log n \vcup \log(d-p)}))} \\
=~& \abs{1 - \delta_{n, d, p} - \Phi\bracket{c_L\sqrt{\log n \vcup \log(d-p)}}} - 
\sqrt{\varepsilon_x}. 
\]\vspace{-2mm}
}

\par \noindent 
Similarly, we can prove 
\[
\Prob\bracket{
\wh{F}_i(g_i(-c_L\sqrt{\log n \vcup \log(d-p)})) \leq \delta_{n, d, p}} 
\leq
\exp\bracket{-2n\bracket{1 - \delta_{n, d, p} - \Phi\bracket{c_L\sqrt{\log n \vcup \log(d-p)}}-\sqrt{\varepsilon_x}}^2}.
\]
	
\end{proof}

\section{Additional Simulations}
This section presents additional simulation results with 
synthetic data. Here we adopt different underlying structures of 
$\bTheta^{\natural}$ and study the performance of our algorithm.

\subsection{Synthetic data with parametric method}
We adopt the same graph construction in the main context. 

\begin{figure}[!h]

\centering
\mbox{
\includegraphics[width = 2.5in]{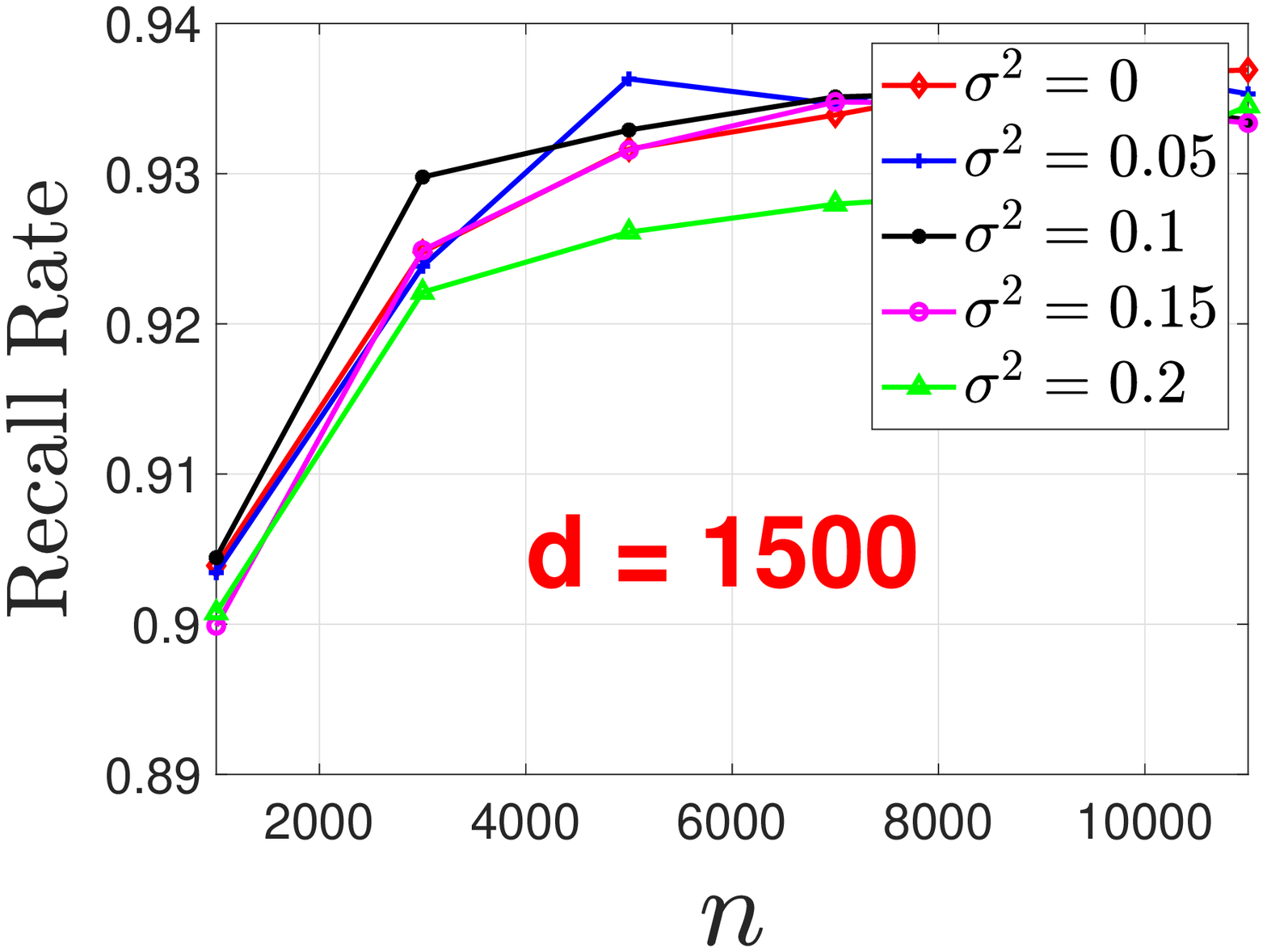}
\includegraphics[width = 2.5in]{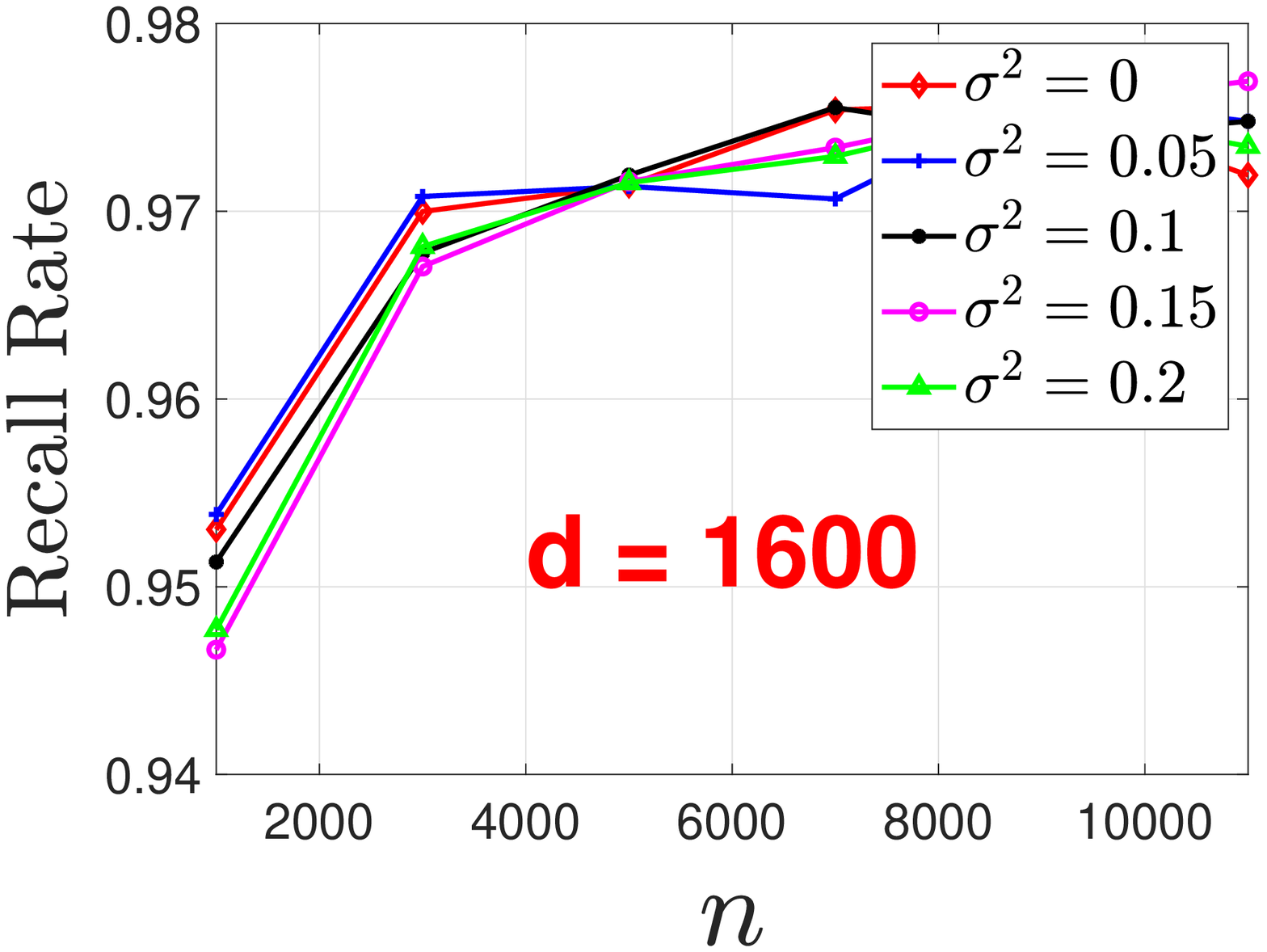}
}
\mbox{
\includegraphics[width = 2.5in]{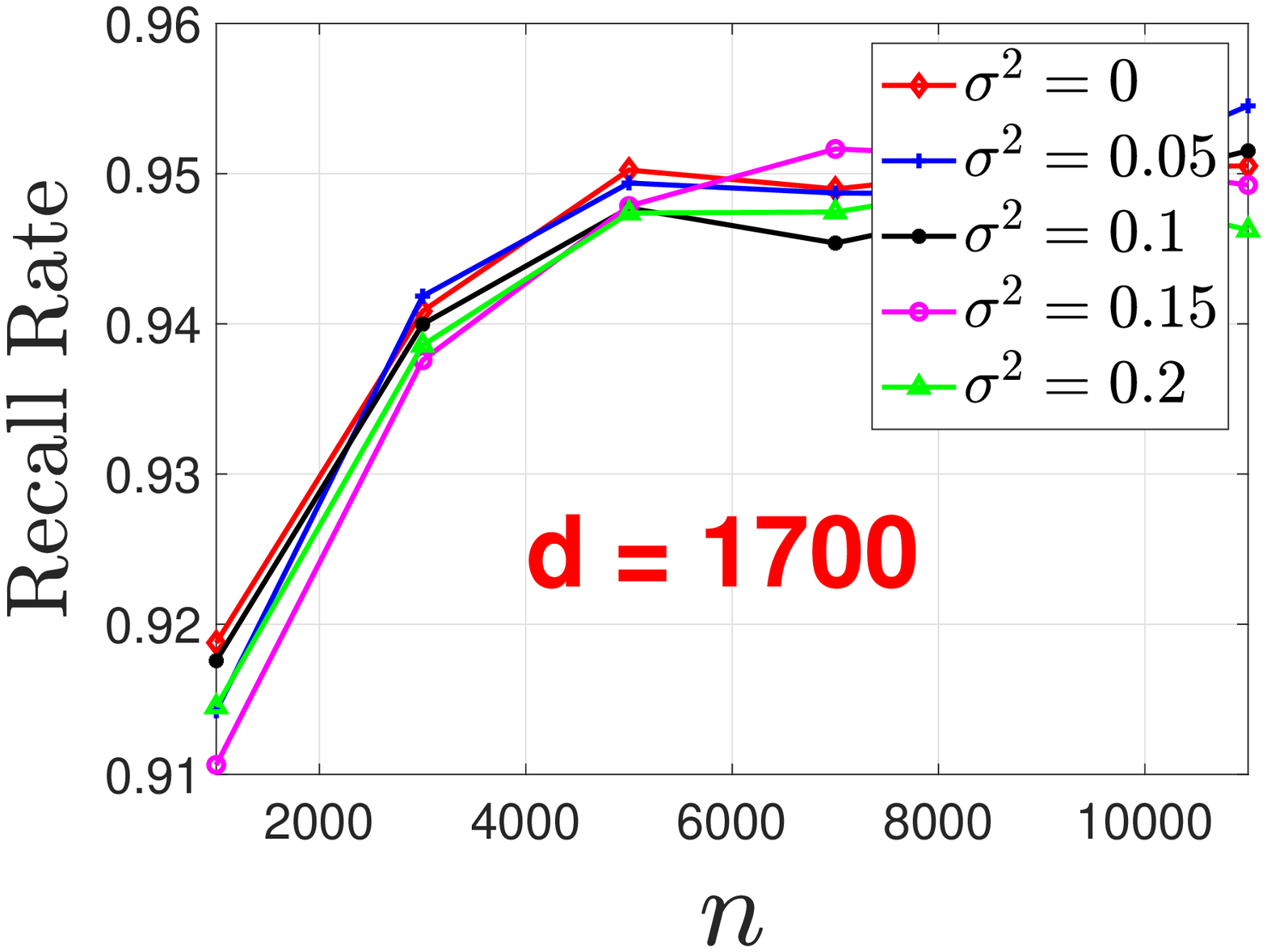}
\includegraphics[width = 2.5in]{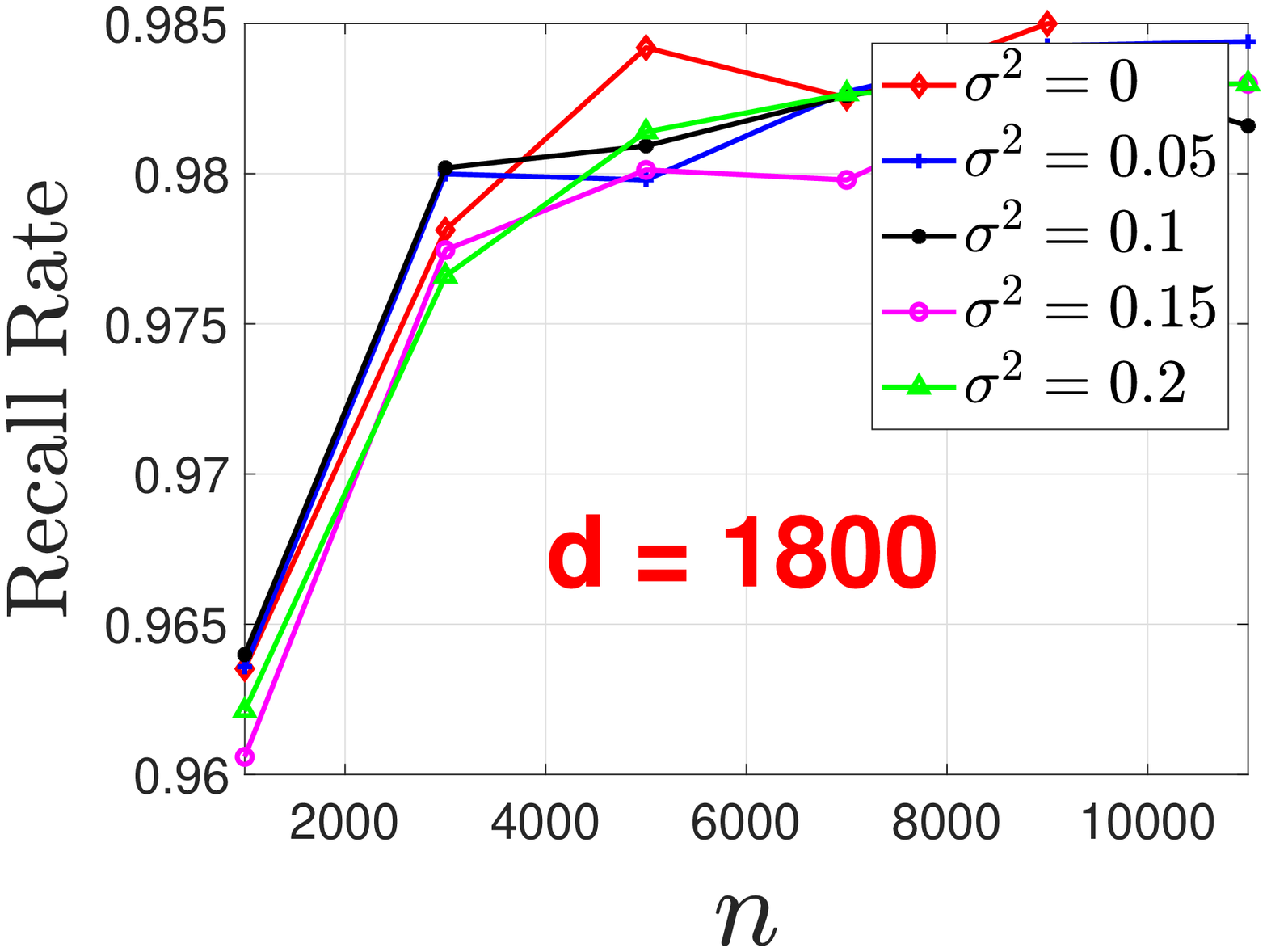}
}
\caption{We study the  
impact of sample size $n$ 
on the recall rate. 
The signal dimension $p$ is fixed as $2000$.}
\label{fig:synthetic_band_recall}
\end{figure}

\begin{figure}[!h]

\centering
\mbox{
\includegraphics[width = 2.5in]{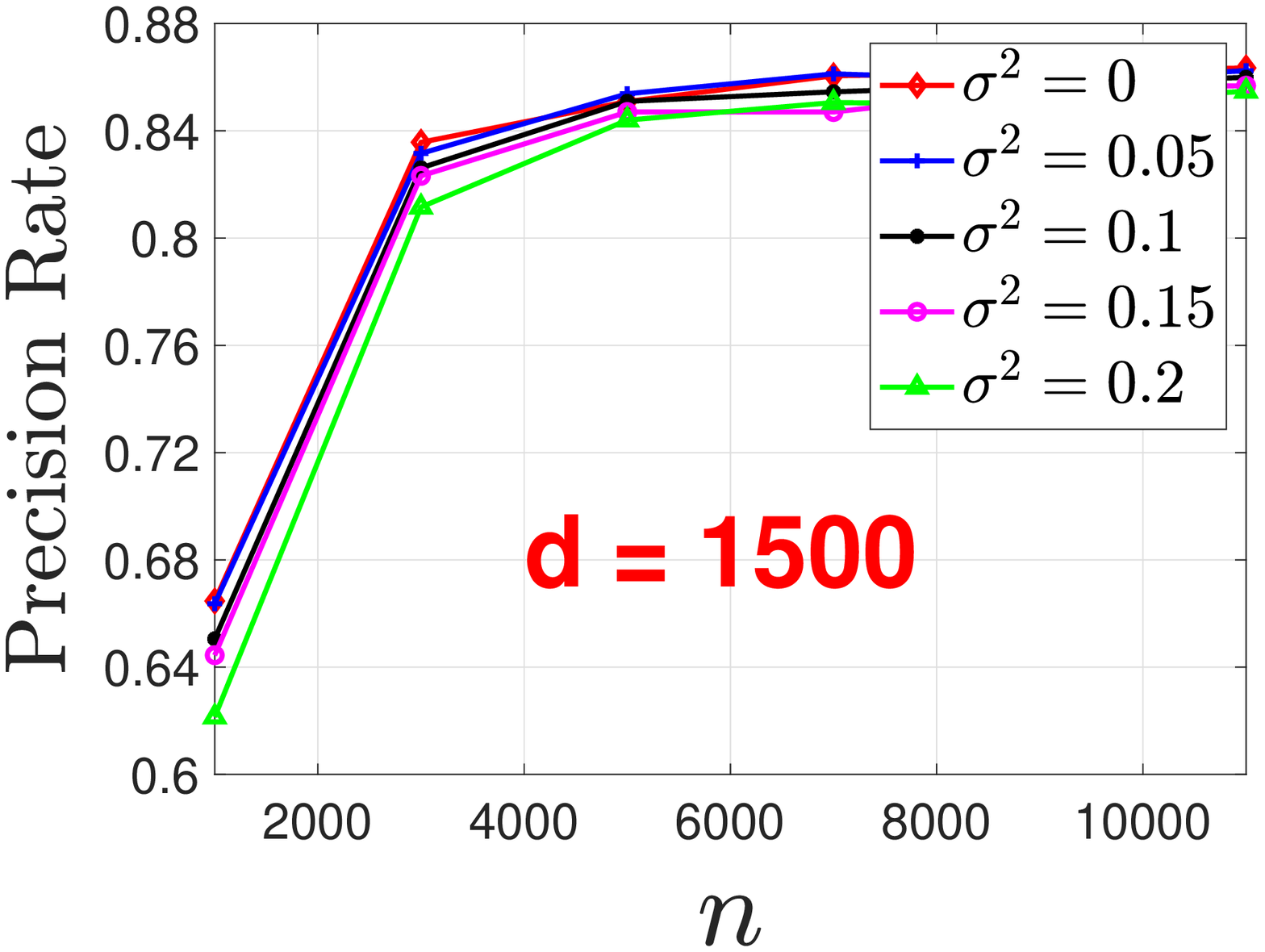}
\includegraphics[width = 2.5in]{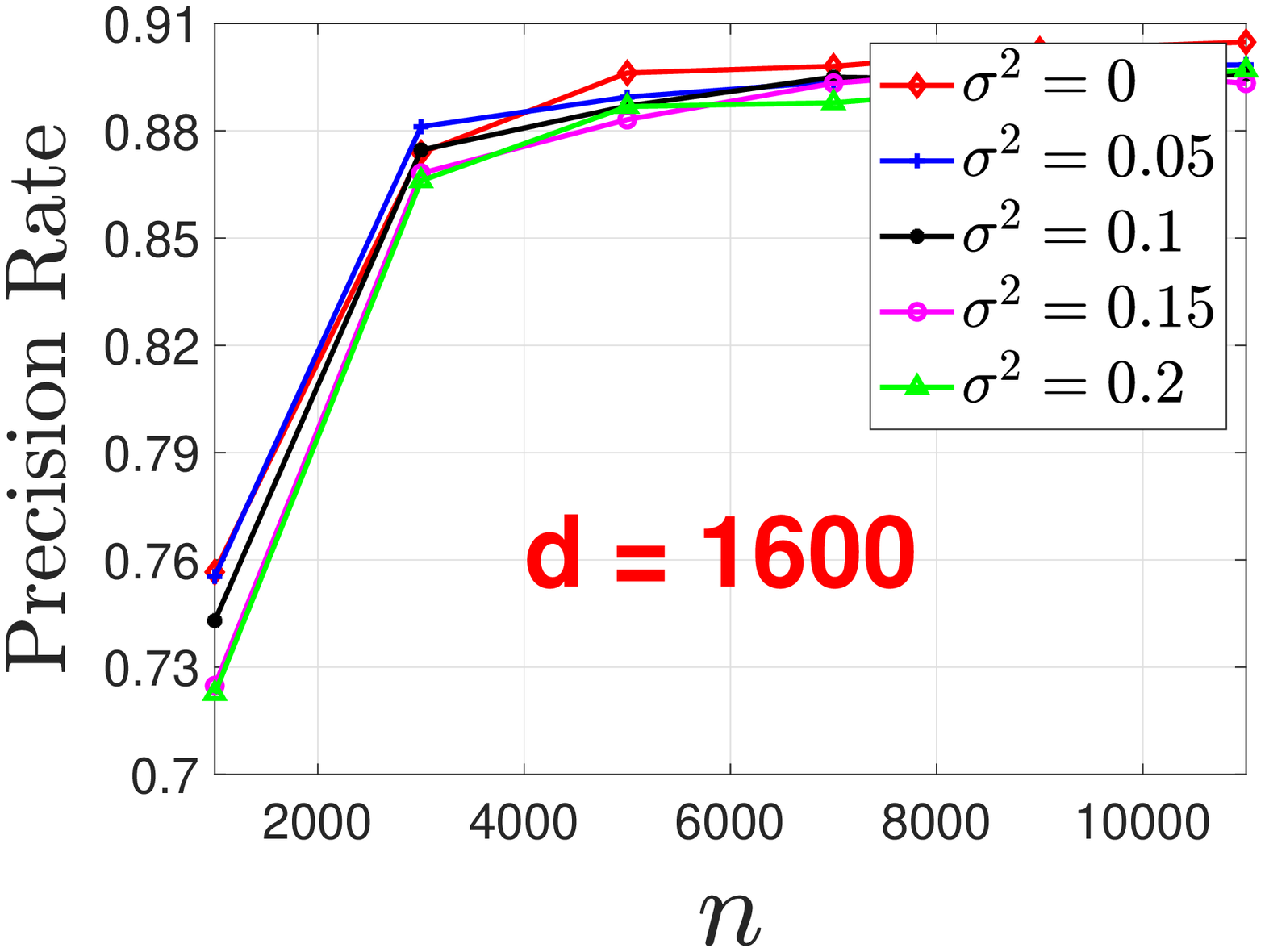}
}
\mbox{
\includegraphics[width = 2.5in]{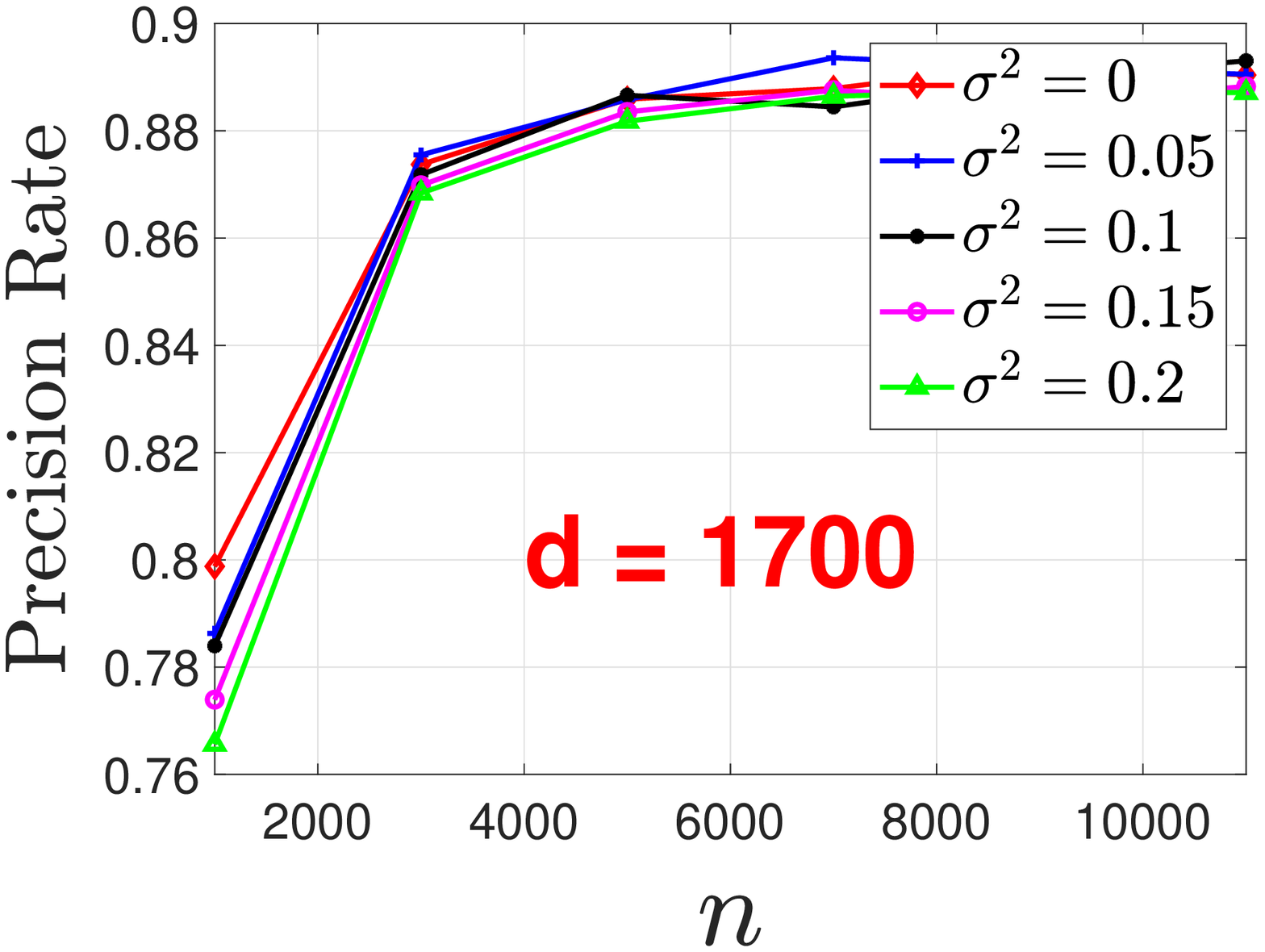}
\includegraphics[width = 2.5in]{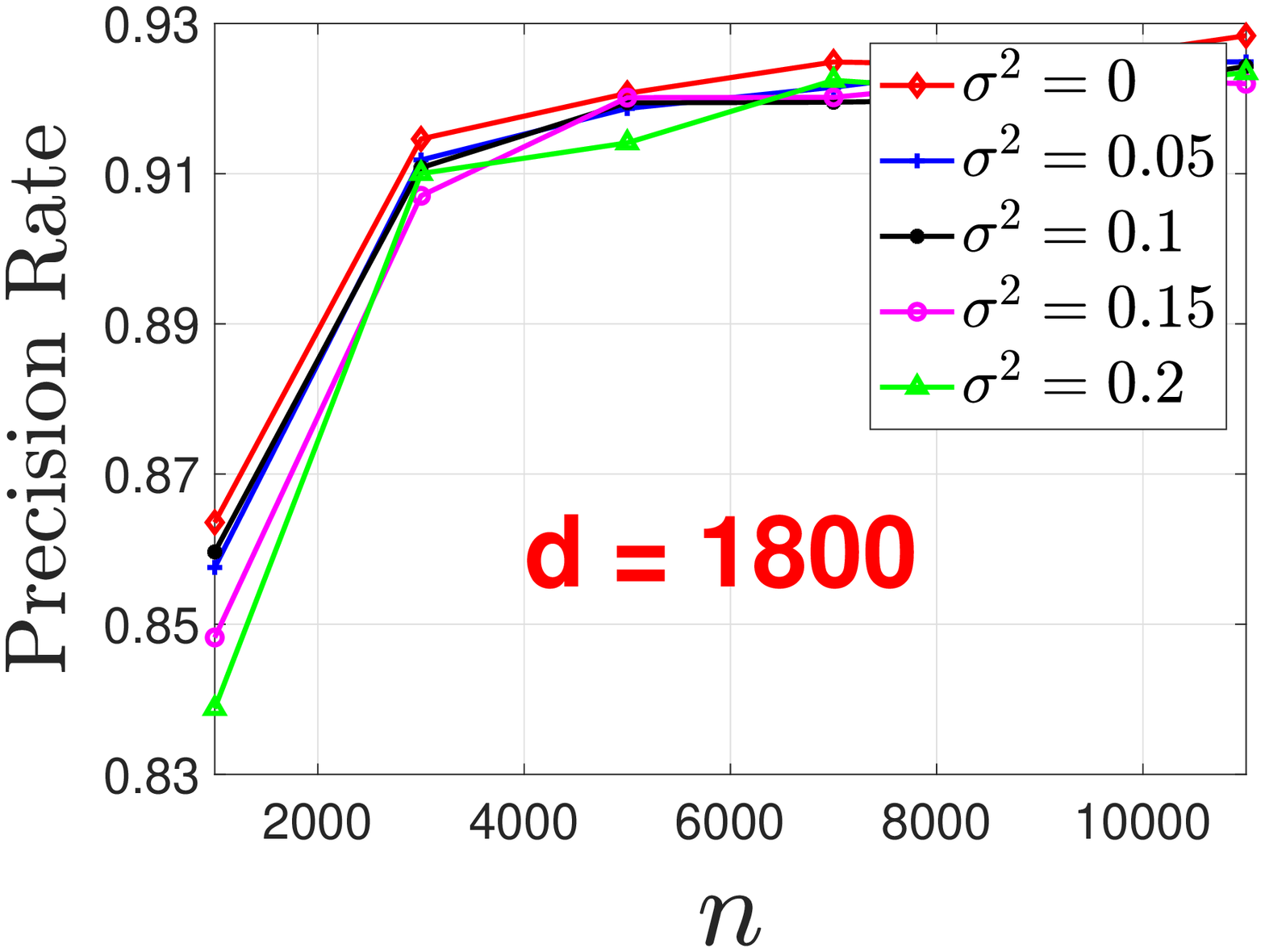}
}
\caption{We study the  
impact of sample size $n$ 
on the precision rate. 
The signal dimension $p$ is fixed as $2000$.}
\label{fig:synthetic_band_precision}
\end{figure}

\subsection{Synthetic data with non-parametric method}
This subsection uses the same setting as that in the main
context but evaluates the 
performance of our estimator via the recall rate and the precision rate
of the 
edge selection, which is shown in 
Tab.~\ref{tab:nonparam_synthetic_recall_prec}.

\begin{table}[ht]
\centering
{\footnotesize
\begin{tabular}{@{}ccccccccc@{}}\toprule
& \multicolumn{2}{c}{\makecell{\textbf{Uniform}}} & \phantom{ab}& \multicolumn{2}{c}{\makecell{\textbf{Exponential}}} &
\phantom{ab} & \multicolumn{2}{c}{\makecell{\textbf{Gauss Mixture}}}\\
\cmidrule{2-3} \cmidrule{5-6} \cmidrule{8-9}
$n$ & \makecell{Recall Rate} & \makecell{Precision Rate} && \makecell{Recall Rate} & \makecell{Precision Rate}  && \makecell{Recall Rate} & \makecell{Precision Rate} \\ \midrule
$d=200$\\  
$100$ & $0.9315$ & $0.9257$ && $0.9839$ & $0.9029$ && $0.9732$ & $0.9114$\\
$115$ & $0.9342$ & $0.9527$ && $0.9866$ & $0.9043$ && $0.9758$ & $0.9331$\\
$130$ & $0.9369$ & $0.9555$ && $0.9906$ & $0.9323$ && $0.9826$ & $0.9363$\\
$145$ & $0.9450$ & $0.9655$ && $0.9946$ & $0.9375$ && $0.9866$ & $0.9534$\\ 
$160$ & $0.9490$ & $0.9756$ && $0.9946$ & $0.9527$ && $0.9866$ & $0.9638$\\
$175$ & $0.9490$ & $0.9793$ && $0.9946$ & $0.9504$ && $0.9879$ & $0.9712$\\
\\ $d = 300$ \\ 
$100$ & $0.9584$ & $0.9347$ && $0.9852$ & $0.9074$ && $0.9852$ & $0.9176$\\
$115$ & $0.9651$ & $0.9400$ && $0.9799$ & $0.9070$ && $0.9906$ & $0.9191$\\
$130$ & $0.9812$ & $0.9545$ && $0.9919$ & $0.9276$ && $0.9933$ & $0.9290$\\
$145$ & $0.9812$ & $0.9549$ && $0.9933$ & $0.9333$ && $0.9933$ & $0.9298$\\
$160$ & $0.9852$ & $0.9597$ && $0.9973$ & $0.9411$ && $0.9946$ & $0.9376$\\
$175$ & $0.9839$ & $0.9656$ && $0.9973$ & $0.9458$ && $0.9960$ & $0.9513$\\
\\ $d = 500$ \\ 
$100$ & $0.9745$ & $0.9455$ && $0.9812$ & $0.9097$ && $0.9879$ & $0.9126$\\
$115$ & $0.9758$ & $0.9758$ && $0.9866$ & $0.9320$ && $0.9893$ & $0.9320$\\
$130$ & $0.9758$ & $0.9813$ && $0.9919$ & $0.9467$ && $0.9906$ & $0.9343$\\
$145$ & $0.9826$ & $0.9826$ && $0.9946$ & $0.9470$ && $0.9933$ & $0.9477$\\
$160$ & $0.9879$ & $0.9879$ && $0.9946$ & $0.9502$ && $0.9946$ & $0.9537$\\
$175$ & $0.9893$ & $0.9893$ && $0.9973$ & $0.9619$ && $0.9960$ & $0.9604$\\
\midrule 
\makecell{Baseline} & $0.3356$ & $0.1834$  && $0.3765$ & $0.0497$ && $0.8188$ & $0.8538$  \\
\bottomrule
\end{tabular}}
\caption{
Recall rate and precision rate. 
The signal length $p$ is  
fixed as $100$ and $\sigma^2$ is fixed as 
$1.0$. The sample number corresponding to the baseline 
is set as $175$, the maximum sample number.
}
\label{tab:nonparam_synthetic_recall_prec}
\end{table}

We construct the sparse precision matrix $\bTheta$ as 
\[
\Theta_{i,j} = \left\{
\begin{aligned}
&\rho_1,~&& \textup{if}~i = j; \\
& \rho_2,~&& \textup{if}~\abs{i-j} = 1; \\
&0.~&& \textup{otherwise}, 
\end{aligned}
\right.
\]
which is previously adopted in \cite{ravikumar2011high}.
The corresponding edge set of is denoted as 
$E$. 
We fix the signal length $p$ as $100$ and 
evaluate the performance with the following three types of 
marginal distribution for the random vector $\bX$: 
\begin{itemize}
\item 
uniform distribution within the 
region $[0, 1]$;
\item 
exponential distribution, i.e., 
$e^{-z}$ for $z\geq 0$;
\item 
 Gaussian mixture, 
i.e., $0.25\sum_{i=1}^4 \normdist(\mu_i, 10^{-2})$, where $\mu_i \in \set{\pm 0.25, \pm 0.5}$.
\end{itemize} 

We set $\rho_1$ and $\rho_2$ as $1$ and $0.4$, respectively. 
For the baseline in \ref{tab:nonparam_synthetic}, we assume the underlying distribution of 
$\bX$ to be jointly Gaussian. With direct observations, 
we learn the 
graphical structure with graphical lasso. The recall rate and precision 
rate is shown in \ref{tab:nonparam_synthetic}. 
For the uniform and exponential distribution, our algorithm 
has a significant improvement in terms of both the recall rate and precision rate. While for mixture Gaussian, the
improvement is modest. This phenomenon may justify the approximation of 
mixture Gaussian with Gaussian distribution, which is widely 
used in the field of coding theory, machine learning, etc. 

\begin{table}[ht]
\caption{
Recall rate and precision rate. 
The signal length $p$ is  
fixed as $100$ and $\sigma^2$ is fixed as 
$1.0$. The sample number corresponding to the baseline 
is set as $175$, the maximum sample number.
}
\vspace{2mm}
\label{tab:nonparam_synthetic}
\centering
{\footnotesize
\begin{tabular}{@{}ccccccccc@{}}\toprule
& \multicolumn{2}{c}{\makecell{\textbf{Uniform}}} & \phantom{ab}& \multicolumn{2}{c}{\makecell{\textbf{Exponential}}} &
\phantom{ab} & \multicolumn{2}{c}{\makecell{\textbf{Gauss Mixture}}}\\
\cmidrule{2-3} \cmidrule{5-6} \cmidrule{8-9}
$n$ & \makecell{Recall Rate} & \makecell{Precision Rate} && \makecell{Recall Rate} & \makecell{Precision Rate}  && \makecell{Recall Rate} & \makecell{Precision Rate} \\ \midrule
$d=200$\\  
$100$ & $0.9315$ & $0.9257$ && $0.9839$ & $0.9029$ && $0.9732$ & $0.9114$\\
$115$ & $0.9342$ & $0.9527$ && $0.9866$ & $0.9043$ && $0.9758$ & $0.9331$\\
$130$ & $0.9369$ & $0.9555$ && $0.9906$ & $0.9323$ && $0.9826$ & $0.9363$\\
$145$ & $0.9450$ & $0.9655$ && $0.9946$ & $0.9375$ && $0.9866$ & $0.9534$\\ 
$160$ & $0.9490$ & $0.9756$ && $0.9946$ & $0.9527$ && $0.9866$ & $0.9638$\\
$175$ & $0.9490$ & $0.9793$ && $0.9946$ & $0.9504$ && $0.9879$ & $0.9712$\\
\\ $d = 300$ \\ 
$100$ & $0.9584$ & $0.9347$ && $0.9852$ & $0.9074$ && $0.9852$ & $0.9176$\\
$115$ & $0.9651$ & $0.9400$ && $0.9799$ & $0.9070$ && $0.9906$ & $0.9191$\\
$130$ & $0.9812$ & $0.9545$ && $0.9919$ & $0.9276$ && $0.9933$ & $0.9290$\\
$145$ & $0.9812$ & $0.9549$ && $0.9933$ & $0.9333$ && $0.9933$ & $0.9298$\\
$160$ & $0.9852$ & $0.9597$ && $0.9973$ & $0.9411$ && $0.9946$ & $0.9376$\\
$175$ & $0.9839$ & $0.9656$ && $0.9973$ & $0.9458$ && $0.9960$ & $0.9513$\\
\\ $d = 500$ \\ 
$100$ & $0.9745$ & $0.9455$ && $0.9812$ & $0.9097$ && $0.9879$ & $0.9126$\\
$115$ & $0.9758$ & $0.9758$ && $0.9866$ & $0.9320$ && $0.9893$ & $0.9320$\\
$130$ & $0.9758$ & $0.9813$ && $0.9919$ & $0.9467$ && $0.9906$ & $0.9343$\\
$145$ & $0.9826$ & $0.9826$ && $0.9946$ & $0.9470$ && $0.9933$ & $0.9477$\\
$160$ & $0.9879$ & $0.9879$ && $0.9946$ & $0.9502$ && $0.9946$ & $0.9537$\\
$175$ & $0.9893$ & $0.9893$ && $0.9973$ & $0.9619$ && $0.9960$ & $0.9604$\\
\midrule 
\makecell{Baseline} & $0.3356$ & $0.1834$  && $0.3765$ & $0.0497$ && $0.8188$ & $0.8538$  \\
\bottomrule
\end{tabular}}
\end{table}


\subsection{Real-World Data}
We now consider the real-world databases, which 
consists of $5$ databases:
Carolina Breast Cancer (GSE$148426$)  with $2497$ samples
(patients) \cite{Bhattacharya2020.08.14.250902}, 
Lung Cancer (GSE$137140$) with $3924$ samples \cite{asakura2020mirna}, 
Ovarian Cancer (GSE$106817$) with $4046$ samples \cite{yokoi2018integrated},
Colorectal Cancer (GSE$115513$) with $1513$ samples  \cite{slattery2016microrna}, 
and Esophageal Squamous Cell Carcinoma (GSE$122497$)  with $5531$ samples \cite{sudo2019development}. 
Each database is divided into two categories, i.e., \textbf{Healthy group} and \textbf{Unhealthy group},
where the measurements 
are given as the concentration of miRNAs. 
The miRNAs are known to have dependency among each other,
i.e., a non-diagonal precision matrix, and hence 
there is an underlying graphical model describing these
dependency structure based on the associated precision 
matrix.

{ 
\begin{table*}[!h]
\centering
\caption{
Recall rate of edge selection on real-world databases, 
namely, GSE$148426$ \cite{Bhattacharya2020.08.14.250902}, 
GSE$137140$ \cite{asakura2020mirna}, 
GSE$106817$ \cite{yokoi2018integrated},
GSE$115513$ \cite{slattery2016microrna}, 
and GSE$122497$ \cite{sudo2019development}.
The precision matrix $\bTheta$ 
 learned by direct observations is assumed to be the ground-truth. 
}
{\footnotesize
\label{tab:real_data_recall}
\begin{tabular}{@{}cccccrlccccc@{}}\toprule
 & \multicolumn{5}{c}{Healthy group} & \phantom{abc}& \multicolumn{5}{c}{Unhealthy group} 
\\
\cmidrule{2-6} \cmidrule{8-12} 
\textbf{$d/p$} & \makecell{ GSE \\$148426$} & 
\makecell{GSE\\$137140$} & \makecell{GSE\\$106817$} & 
\makecell{GSE\\$115513$} & \makecell{GSE\\$122497$} &&\makecell{ GSE \\$148426$} & 
\makecell{GSE\\$137140$} & \makecell{GSE\\$106817$} & 
\makecell{GSE\\$115513$} & \makecell{GSE\\$122497$} \\ \midrule
$2$  & $0.9494$ & $0.8892$ & $0.9$ & $0.9856$ & $0.7659$
&&   $0.9424$ & $0.9692$ & $1$ & $0.9856$ & $0.8379$
\\
$5$ & $1$ & $0.9950$ & $1$ & $1$ & $0.9220$
&&  $1$ & $0.9692$ & $1$ & $1$ & $0.9632$
\\
$10$ & $1$ & $1$ & $1$ & $1$ & $0.9707$
&&   $1$ & $1$ & $1$ & $1$ & $0.9963$
\\
$12$ & $1$ & $1$ & $1$ & $1$ & $0.9805$
&&   $1$ & $1$ & $1$ & $1$ & $1$
\\
$15$ & $1$ & $1$ & $1$ & $1$ & $0.9902$
&&   $1$ & $1$ & $1$ & $1$ & $1$
\\ 
$20$ & $1$ & $1$ & $1$ & $1$ & $1$
&&   $1$ & $1$ & $1$ & $1$ & $1$
\\
\bottomrule
\end{tabular}
}
\end{table*}
}

{ 
\begin{table}[!h]
\centering
\caption{
Precision rate of edge selection on real-world databases, 
namely, GSE$148426$ \cite{Bhattacharya2020.08.14.250902}, 
GSE$137140$ \cite{asakura2020mirna}, 
GSE$106817$ \cite{yokoi2018integrated},
GSE$115513$ \cite{slattery2016microrna}, 
and GSE$122497$ \cite{sudo2019development}.
The precision matrix $\bTheta$ 
 learned by direct observations is assumed to be the ground-truth. 
}
{\footnotesize
\label{tab:real_data_prec}
\begin{tabular}{@{}cccccrlccccc@{}}\toprule
 & \multicolumn{5}{c}{Healthy group} & \phantom{abcde}& \multicolumn{5}{c}{Unhealthy group} 
\\
\cmidrule{2-6} \cmidrule{8-12} 
\textbf{$d/p$} & \makecell{ GSE \\$148426$} & 
\makecell{GSE\\$137140$} & \makecell{GSE\\$106817$} & 
\makecell{GSE\\$115513$} & \makecell{GSE\\$122497$} &&\makecell{ GSE \\$148426$} & 
\makecell{GSE\\$137140$} & \makecell{GSE\\$106817$} & 
\makecell{GSE\\$115513$} & \makecell{GSE\\$122497$} \\ \midrule
$2$  & $1$ & $1$ & $1$ & $0.9548$ & $1$
&&   $0.9704$ & $0.9692$ & $1$ & $0.9581$ & $1$
\\
$5$ & $0.9080$ & $0.9900$ & $1$ & $0.9176$ & $1$
&&  $0.9205$ & $0.9692$ & $1$ & $0.9207$ & $0.9924$
\\
$10$ & $0.9080$ & $0.9341$ & $0.9091$ & $0.9207$ & $0.9900$
&&   $0.9205$ & $0.9420$ & $1$ & $0.9268$ & $0.9854$
\\
$12$ & $0.9080$ & $0.9475$ & $0.9259$ & $0.9237$ & $0.9901$
&&   $0.9329$ & $0.9420$ & $1$ & $0.9268$ & $0.9645$
\\
$15$ & $0.9080$ & $0.9566$ & $0.9434$ & $0.9299$ & $0.9621$
&&   $0.9456$ & $0.9420$ & $1$ & $0.9299$ & $0.9645$
\\ 
$20$ & $0.9518$ & $0.9613$ & $0.9615$ & $0.9393$ & $0.9535$
&&   $0.9586$ & $0.9420$ & $1$ & $0.9457$ & $0.9784$
\\
\bottomrule
\end{tabular}
}
\end{table}
}

\vspace{-1mm}
\noindent
\textbf{Preprocess} 
For each database, there are multiple types of 
data, namely, healthy vs non-healthy, benign cancer vs 
non-benign cancer, etc. Based on the labels, 
we first divide the whole dataset into 
two types of data and
separately preprocess them.
For each type, we split the 
data into the training and testing sets
$\set{\calD^i_{\textup{train}}, \calD^i_{\textup{test}}}$, $i\in \set{1, 2}$. 
The sensing matrix $\bA$ is assumed to be 
$A_{ij}\iid \normdist(0, 1)$ and the measurement 
noise $\bw$ is set to be zero.  

\noindent
\textbf{Training}
Using the training set $\calD^i_{\textup{train}}$,
which are directly observed from the desired signal (miRNAs),  
we 
first select the penalty coefficients in graphical lasso 
via cross-validation. Then estimate 
the precision matrix via the graphical Lasso 
and use it as the baseline $\bTheta_i$, $i\in \set{1, 2}$
as they are obtained through direct observation. We 
denote
$\bTheta_1$ for $\calD^1$ 
while $\bTheta_2$ for $\calD^2$. 
Then we mask the training set with the synthetic
sensing matrix $\bA$ and create indirect observation 
of data. Last we estimate 
the precision matrix $\wh{\bTheta}_i$, from the 
indirect observations, i.e., $\bA\calD^i_{\textup{train}}$
using  M-gLasso, $i\in \set{1, 2}$.

\noindent
\textbf{Testing}
Notice that the ground-truth precision matrix $\bTheta^{\natural}_i$, $i\in \set{1, 2}$ 
cannot be obtained in the real-world
applications, even 
with the direct observations. Hence a direct comparison  
 between $\bTheta^{\natural}_i$ and $\hat{\bTheta}_i$ cannot 
 be performed. 
To evaluate the performance of the algorithm, 
we take an indirect way by using \emph{quadratic discriminant analysis}
\cite{hastie2009elements} to perform classification in the 
 testing set 
$\set{\calD^1_{\textup{test}} \bigcup \calD^2_{\textup{test}}}$,
with the estimated matrix $\wh{\bTheta}_i$, $i\in \set{1, 2}$. Then we compare with the 
classification accuracy when using $\bTheta_i$, $i\in \set{1, 2}$. 
The summary of the classification
rates are shown in 
Tab.~\ref{tab:real_data}. 

\noindent
\textbf{Discussion}
From the tables we conclude that our 
estimated precision matrix 
$\wh{\bTheta}_i$ achieves almost the 
same classification accuracy 
with the baseline 
$\bTheta_i$ when $d = p$, 
and is only a slightly worse when 
$d = 0.5 p$, i.e., the dimension of the projection
space under the indirect observation is half of the 
dimension of the signal space.
In some special cases, we even see some 
improvements with indirect observations. One possible 
reason  is that the features are mixed
by sensing matrix $\bA$, which lead to better quantities
for the classification.

 \section{Useful Facts about Probability Inequalities and Random Matrices}
For the self-containing of this paper, we list some useful facts
about probability inequalities and random matrices in this section. 

\begin{lemma}[\citep{wainwright_2019} (Example 2.11, P29)]
\label{lemma:chi_square_dev}
For a $\chi^2$-RV $Z$ with $\ell$ degrees of freedom, we have 
\begin{align*}
\Prob\bracket{|Z - \ell| \geq t} \leq 
2\exp\bracket{-\bracket{\frac{t^2}{8\ell} \vcap \frac{t}{8}}},
~~\forall~t\geq 0. 
\end{align*}
\end{lemma}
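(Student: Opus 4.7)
The plan is to invoke the Cramér--Chernoff method on the additive representation $Z = \sum_{i=1}^{\ell} X_i^2$ with $X_i$ i.i.d.\ standard normal. Writing $Z - \ell = \sum_{i=1}^{\ell}(X_i^2 - 1)$ produces a sum of independent, mean-zero, sub-exponential summands, and a direct Gaussian integral yields the moment generating function $\Expc\exp(\lambda(X_1^2-1)) = (1-2\lambda)^{-1/2}e^{-\lambda}$ for all $\lambda < 1/2$. By independence the log-MGF of $Z-\ell$ is $\ell$ times the log-MGF of $X_1^2 - 1$, so the whole problem reduces to controlling a single scalar cumulant generating function.

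Next I would control this log-MGF on a neighborhood of the origin. Since $-\lambda - \tfrac{1}{2}\log(1-2\lambda) = \lambda^2 + \tfrac{4}{3}\lambda^3 + \cdots$, a routine bound (for instance, $-\log(1-u) \leq u + u^2$ for $|u|\leq 1/2$) gives $\log\Expc\exp(\lambda(X_1^2-1)) \leq 2\lambda^2$ on $|\lambda|\leq 1/4$. Multiplying by $\ell$, the log-MGF of $Z-\ell$ is bounded by $2\ell\lambda^2$ on the same interval, which is the essential Bernstein-type sub-exponential estimate that the remainder of the argument consumes.

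Finally, I would apply the one-sided Chernoff inequality $\Prob(Z-\ell \geq t) \leq \exp(-\lambda t + 2\ell\lambda^2)$ for $\lambda \in [0, 1/4]$ and optimize over $\lambda$. The unconstrained optimizer $\lambda^\star = t/(4\ell)$ lies in $[0,1/4]$ precisely when $t \leq \ell$; in that regime one recovers the sub-Gaussian tail $\exp(-t^2/(8\ell))$, whereas for $t > \ell$ the constraint binds at $\lambda = 1/4$ and yields the sub-exponential tail $\exp(-t/8)$. Combining the two regimes produces the factor $\exp(-(t^2/(8\ell) \vcap t/8))$, and the prefactor $2$ arises from a union bound with the symmetric lower-tail estimate obtained by replacing $\lambda$ with $-\lambda$ and repeating the same derivation (the MGF bound is symmetric on $|\lambda|\leq 1/4$). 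The only mildly delicate step is tracking the Bernstein constants so that the final exponent matches $1/8$ exactly; any convenient polynomial envelope for $-\log(1-2\lambda)$ on $[-1/4,1/4]$ would change those constants but not the qualitative form of the bound, which is why this lemma is merely cited from \cite{wainwright_2019} rather than reproved in the paper.
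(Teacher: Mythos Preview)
Your argument is correct and is exactly the standard Cram\'er--Chernoff derivation that Wainwright carries out in Example~2.11: MGF of $X_1^2-1$, the quadratic envelope $\log\Expc e^{\lambda(X_1^2-1)}\le 2\lambda^2$ on $|\lambda|\le 1/4$, optimization over $\lambda$ with the two regimes $t\le\ell$ and $t>\ell$, and a union bound for the two tails. The paper itself offers no proof of this lemma---it is listed verbatim in the ``Useful Facts'' appendix with only the citation---so there is nothing to compare against; your proposal simply fills in the reference.
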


\begin{theorem}[Theorem $2.35$ in \cite{tulino2004random}]
\label{lemma:marcenko}
For a $d\times p$ matrix $\bA$ whose entries 
are independent zero-mean real RVs with variance 
$d^{-1}$ and fourth moment of order $O(d^{-2})$, 
we have the empirical distribution of the eigenvalues of
$\bA^{\rmt}\bA$ converge to the distribution with 
density 
\[
f_{\tau}(x)= \Bracket{0 \vcup \bracket{1-\tau^{-1}}}
\Ind(x) + \frac{\sqrt{\Bracket{\bracket{\bracket{1+\sqrt{\tau}}^2 - x} \vcup 0} \times \Bracket{\bracket{x - \bracket{1-\sqrt{\tau}}^2} \vcup 0}}}{2\pi \tau x},
\]
as 
$d$ and $p$ approaches to infinity with $p/d\rightarrow \tau$.
\end{theorem}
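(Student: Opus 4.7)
I would prove this via the \emph{Stieltjes transform method}, which is the standard route for Marchenko--Pastur-type results. Let $\bW = \bA^{\rmt}\bA \in \RR^{p\times p}$ with eigenvalues $\lambda_1,\dots,\lambda_p$, and define the empirical spectral distribution $\mu_d = p^{-1}\sum_{i=1}^p \delta_{\lambda_i}$ together with its Stieltjes transform
\begin{align*}
s_d(z) \defequal \int \frac{1}{x-z}\, d\mu_d(x) = \frac{1}{p}\tr\bracket{(\bW - z\bI_p)^{-1}}, \qquad z\in \CC^+.
\end{align*}
The target is to show that, as $d,p\to \infty$ with $p/d\to \tau$, the random function $s_d(z)$ converges in probability on $\CC^+$ to the unique analytic solution $s(z)$, with $\Im s(z)\geq 0$, of the Marchenko--Pastur fixed-point equation
\begin{align*}
\tau\, z\, s(z)^2 + (z - 1 + \tau)\, s(z) + 1 = 0.
\end{align*}
Weak convergence of $\mu_d$ to $f_\tau$ then follows from the Stieltjes inversion formula and the continuity theorem for Stieltjes transforms.

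\textbf{Deriving the fixed-point equation.} The workhorse is the Schur-complement expression for the diagonal entries of the resolvent $\bG(z)=(\bW - z\bI)^{-1}$. Writing $\bA = [\bA_1,\dots,\bA_p]$ and letting $\bA_{(i)}$ denote $\bA$ with column $i$ removed,
\begin{align*}
G_{ii}(z) = \Bracket{\bA_i^{\rmt}\bA_i - z - \bA_i^{\rmt}\bA_{(i)}\bracket{\bA_{(i)}^{\rmt}\bA_{(i)} - z\bI}^{-1}\bA_{(i)}^{\rmt}\bA_i}^{-1}.
\end{align*}
Since $\bA_i$ is independent of $\bA_{(i)}$ and has entries with variance $d^{-1}$ and fourth moment $O(d^{-2})$, the quadratic form $\bA_i^{\rmt}\bM\bA_i$ concentrates around $d^{-1}\tr(\bM)$ for any $\bM$ measurable with respect to $\bA_{(i)}$. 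Combining this with a rank-one perturbation argument (Woodbury / interlacing) to pass from $\bA_{(i)}^{\rmt}\bA_{(i)}$ to $\bW$, and averaging over $i$, produces the approximate relation $s_d(z) \approx \bracket{-z + (1 - \tau - \tau z s_d(z))}^{-1}$ up to an additive $o_{\Prob}(1)$ term, which rearranges to the quadratic above. Selecting the branch with $\Im s(z)\geq 0$ on $\CC^+$ and applying Stieltjes inversion $f_\tau(x) = \pi^{-1}\lim_{y\downarrow 0}\Im s(x + \ii y)$ recovers the bulk density on $\bracket{(1-\sqrt{\tau})^2,(1+\sqrt{\tau})^2}$, while the atom $(1-\tau^{-1})_+\delta_0$ arises exactly when $\tau>1$ from the forced zero eigenvalues of $\bW$ (since $\rank\bW\leq d<p$).

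\textbf{Main obstacle.} The delicate step is the concentration of $\bA_i^{\rmt}\bM\bA_i - d^{-1}\tr(\bM)$ when $\bM = \bA_{(i)}\bracket{\bA_{(i)}^{\rmt}\bA_{(i)} - z\bI}^{-1}\bA_{(i)}^{\rmt}$, because $\|\bM\|_{\textup{op}}$ is only bounded in terms of $|\Im z|^{-1}$ and $\tr(\bM^2)$ needs to be controlled uniformly in $d$. Under only the fourth-moment hypothesis (rather than sub-Gaussianity), one obtains convergence in probability via Chebyshev applied to $\Expc|\bA_i^{\rmt}\bM\bA_i - d^{-1}\tr(\bM)|^2 \lsim d^{-2}\tr(\bM\bM^{\rmt})$, which is enough for the weak-convergence conclusion but requires the leave-one-out decoupling to be executed carefully to preserve independence between $\bA_i$ and $\bM$. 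A secondary subtle point is uniqueness of the limit: any subsequential limit of $s_d$ is analytic on $\CC^+$ with nonnegative imaginary part and satisfies the fixed-point equation, so a Montel / normal-families argument forces the limit to coincide with the Marchenko--Pastur Stieltjes transform, completing the proof.
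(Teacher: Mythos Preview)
Your outline is the standard and correct Stieltjes-transform route to the Marchenko--Pastur law, and the obstacles you flag (quadratic-form concentration under only a fourth-moment assumption, leave-one-out decoupling, uniqueness via normal families) are exactly the right ones. However, there is nothing to compare against: the paper does \emph{not} prove this statement. It appears in the appendix under ``Useful Facts about Probability Inequalities and Random Matrices'' and is quoted verbatim as Theorem~2.35 of Tulino and Verd\'u, purely for self-containment; the paper only \emph{applies} it (inside Lemma~\ref{lemma:var_var}, to evaluate $\fnorm{\bS^{-1}}^2 \to p/(d-p)$). So your proposal is not an alternative to the paper's argument---it is simply a sketch of the classical proof that the cited reference contains, and as such it is fine.
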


\begin{theorem}[Corol.~$13.2$ in \cite{boucheron2013concentration}]
\label{thm:dudley_integral}
Consider a totally bounded pseudo-metric space $\bracket{\calT, \textup{dist}(\cdot, \cdot)}$. 
Provided a collection of RVs $\set{Z_u}_{u\in \calT}$ satisfying 
\[
\Expc e^{\lambda\bracket{Z_{u} - Z_{v}}} \leq \frac{\nu \lambda^2 \dist^2(u, v)}{2}, ~~\forall~\lambda \geq 0, 
\]
we have
\[
\Expc \sup_{u} \bracket{ Z_{u} - Z_{u_0}} \leq 
12 \sqrt{\nu} \int_{0}^{\textup{diam}(T)/2} 
\sqrt{\calH\bracket{\delta, \calT}}d\delta, 
\]
where $\calH\bracket{t, \calT}$ denotes the $\delta$-covering 
entropy with pseudo-metric $\dist(\cdot, \cdot)$, and $\textup{diam}(\cdot) \defequal \sup\dist(u, u_0)$ is the diameter of the 
set $\calT$. 
\end{theorem}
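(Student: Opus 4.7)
The plan is to prove the bound by a generic chaining (successive approximation) argument. First I would introduce a geometric sequence of scales $\varepsilon_k = 2^{-k}\,\textup{diam}(\calT)$ for $k\geq 0$, and at each scale pick a minimal $\varepsilon_k$-net $T_k \subset \calT$ with respect to $\textup{dist}(\cdot,\cdot)$. By definition of the covering entropy, $\log|T_k| \leq \calH(\varepsilon_k,\calT)$, and I may choose $T_0=\{u_0\}$. For every $u\in\calT$ let $\pi_k(u)\in T_k$ be a nearest point at scale $k$, so $\textup{dist}(u,\pi_k(u)) \leq \varepsilon_k$ and, by the triangle inequality, $\textup{dist}(\pi_k(u),\pi_{k-1}(u)) \leq \varepsilon_k + \varepsilon_{k-1} \leq 3\varepsilon_k$.

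Assuming the index set is separable (the standard qualification under which the sup is measurable), the telescoping identity
\[
Z_u - Z_{u_0} \;=\; \sum_{k\geq 1}\bigl(Z_{\pi_k(u)} - Z_{\pi_{k-1}(u)}\bigr)
\]
holds in the sense that the partial sums converge to $Z_u - Z_{u_0}$. Taking a supremum over $u$ and then expectation gives
\[
\Expc\sup_u (Z_u - Z_{u_0}) \;\leq\; \sum_{k\geq 1} \Expc\max_{u\in\calT}\bigl|Z_{\pi_k(u)} - Z_{\pi_{k-1}(u)}\bigr|.
\]
The key point is that as $u$ ranges over $\calT$, the pair $(\pi_k(u),\pi_{k-1}(u))$ takes at most $|T_k|\cdot|T_{k-1}|\leq |T_k|^2$ distinct values.

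Next I would use the sub-Gaussian assumption on the increments: each $Z_{\pi_k(u)} - Z_{\pi_{k-1}(u)}$ is sub-Gaussian with variance proxy $\nu\,\textup{dist}^2(\pi_k(u),\pi_{k-1}(u)) \leq 9\nu\varepsilon_k^2$. The standard maximal inequality for $N$ centered sub-Gaussians with variance proxy $\tau^2$ yields $\Expc\max_i|X_i| \leq \tau\sqrt{2\log(2N)}$, so
\[
\Expc\max_u|Z_{\pi_k(u)} - Z_{\pi_{k-1}(u)}| \;\leq\; 3\varepsilon_k\sqrt{\nu}\cdot\sqrt{2\log(2|T_k|^2)} \;\leq\; C\,\varepsilon_k\sqrt{\nu\,\calH(\varepsilon_k,\calT)}
\]
for an absolute constant $C$. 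Summing over $k$ and using the comparison $\varepsilon_k \leq 2(\varepsilon_{k-1}-\varepsilon_k)$ together with the monotonicity of $\calH(\cdot,\calT)$, the series is majorized by a Riemann sum for $\int_0^{\textup{diam}(\calT)/2}\sqrt{\calH(\delta,\calT)}\,d\delta$, giving the stated inequality.

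The routine but delicate part is tracking constants carefully enough to arrive at the specific factor $12$: one must be precise about the bound $\textup{dist}(\pi_k(u),\pi_{k-1}(u)) \leq 3\varepsilon_k$ (it can be tightened to $\varepsilon_{k-1}+\varepsilon_k$), about the $\sqrt{2\log(2N)}$ maximal-inequality constant, and about the discrete-to-integral conversion; the main conceptual obstacle, however, is simply making the chaining telescoping rigorous under separability (or passing to a countable dense skeleton), after which the proof is essentially mechanical.
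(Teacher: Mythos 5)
The paper offers no proof of this statement: it is imported verbatim as Corollary~13.2 of the cited reference and used as a black box, so there is nothing internal to compare against. Your chaining argument is the standard textbook proof of Dudley's entropy bound and is sound in outline; two small repairs are worth recording. First, the hypothesis as printed is missing an exponential on the right-hand side (it should read $\Expc e^{\lambda(Z_u-Z_v)} \leq e^{\nu\lambda^2\dist^2(u,v)/2}$, since the printed version fails already at $\lambda=0$), and your argument implicitly and correctly uses the repaired sub-Gaussian form. Second, to land on the upper limit $\textup{diam}(\calT)/2$ rather than $\textup{diam}(\calT)$, majorize $\varepsilon_k\sqrt{\calH(\varepsilon_k,\calT)}=2(\varepsilon_k-\varepsilon_{k+1})\sqrt{\calH(\varepsilon_k,\calT)}\leq 2\int_{\varepsilon_{k+1}}^{\varepsilon_k}\sqrt{\calH(\delta,\calT)}\,d\delta$ (i.e., push the mass onto the interval below $\varepsilon_k$, not above it), so that summing over $k\geq 1$ covers exactly $(0,\varepsilon_1]=(0,\textup{diam}(\calT)/2]$.
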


\begin{lemma}[Lemma~$11$ in \cite{liu2009nonparanormal}]
\label{lemma:gauss_cdf_gradient}
Denote the distribution function and density 
function of the standard normal RV as 
$\Phi(\cdot)$ and $\phi(\cdot)$, respectively. 
We have 
\[
\bracket{\Phi^{-1}}^{'}(\eta) = \
\frac{1}{\phi\bracket{\Phi^{-1}(\eta)}}. 
\]
Furthermore, we have 
\[
\Phi^{-1}(\eta) \leq \sqrt{2\log\frac{1}{1-\eta}}, 
\]
for $\eta \geq 0.99$.	
\end{lemma}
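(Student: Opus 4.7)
The first identity is an immediate consequence of the inverse function theorem. Since $\Phi\colon \mathbb{R}\to(0,1)$ is $C^\infty$ and strictly increasing with derivative $\phi$ strictly positive everywhere, its inverse $\Phi^{-1}$ exists on $(0,1)$ and is differentiable, and one reads off $\bracket{\Phi^{-1}}'(\eta)=1/\Phi'\bracket{\Phi^{-1}(\eta)}=1/\phi\bracket{\Phi^{-1}(\eta)}$. No further work is needed for this part.

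For the second claim, my plan is to reduce the inverse-function inequality to a standard upper tail bound for $\Phi$. Setting $x=\Phi^{-1}(\eta)>0$, the assertion $\Phi^{-1}(\eta)\leq \sqrt{2\log\tfrac{1}{1-\eta}}$ is equivalent to $\tfrac{1}{2}x^{2}\leq -\log(1-\Phi(x))$, i.e.\ $1-\Phi(x)\leq e^{-x^{2}/2}$. I would establish this Gaussian tail bound by the classical Mills-ratio trick: using $\phi'(t)=-t\phi(t)$ together with the non-negativity of the integrand,
\[
1-\Phi(x)=\int_{x}^{\infty}\phi(t)\,dt\leq \int_{x}^{\infty}\frac{t}{x}\phi(t)\,dt=\frac{\phi(x)}{x}=\frac{1}{x\sqrt{2\pi}}\,e^{-x^{2}/2},\qquad x>0.
\]

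For $\eta\geq 0.99$ the corresponding $x=\Phi^{-1}(\eta)\geq \Phi^{-1}(0.99)\approx 2.326$, so in particular $x\sqrt{2\pi}>1$; the Mills estimate therefore collapses to $1-\Phi(x)\leq e^{-x^{2}/2}$, and taking logarithms and rearranging yields the advertised bound $x\leq \sqrt{2\log\tfrac{1}{1-\eta}}$.

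There is no real obstacle in this argument---both parts are textbook material. The only point worth flagging is that the numerical threshold $\eta\geq 0.99$ is conservative: the same argument is valid whenever $\Phi^{-1}(\eta)\geq 1/\sqrt{2\pi}$, i.e.\ roughly for $\eta\gtrsim 0.66$. The choice $0.99$ in the statement simply ensures a comfortable safety margin on the prefactor $1/(x\sqrt{2\pi})$, which is convenient for the downstream truncation-type arguments such as those in Lemma~\ref{lemma:graph_truncate_prob}.
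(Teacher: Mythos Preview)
Your argument is correct and complete: the inverse-function identity is immediate, and the Mills-ratio bound $1-\Phi(x)\le \phi(x)/x$ together with $x\sqrt{2\pi}>1$ for $x=\Phi^{-1}(0.99)\approx 2.326$ gives exactly the stated inequality. The paper does not supply its own proof of this lemma---it is quoted verbatim from \cite{liu2009nonparanormal} in the ``Useful Facts'' appendix---so there is nothing to compare against; your write-up is a clean self-contained justification.
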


\begin{theorem}[Wick's theorem, Thm.~$1.28$ (P$11$) in \cite{janson1997gaussian}]
\label{thm:wick}
Considering
the centered jointly normal variables
$\xi_1, \xi_2, \cdots, \xi_n$, we 
conclude 
\[
\Expc\bracket{\xi_1 \xi_2 \cdots \xi_n}
= \sum_{\substack{\textup{all possible disjoint}\\ \textup{pairs } (i_k, j_k)
\textup{ of } [n] }}\prod_{k}\Expc\bracket{\xi_{i_k}\xi_{j_k}}.
\] 
\end{theorem}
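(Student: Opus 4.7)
The statement is the non-parametric analogue of Theorem~\ref{thm:param_glasso}, so my plan is to reuse the primal-dual witness (PDW) machinery of Ravikumar et al. verbatim, with the role of $\tau_{\infty}$ played by the $\ell_{\infty}$ deviation $\infnorm{\wh{\bSigma}^{\textup{non-param}}_n - \covmatnonparam}$. The key technical input is already available to me: Theorem~\ref{thm:graph_error_infnorm} controls how far our estimator is from the oracle empirical covariance $\bSigma^{\textup{non-param}}_n$, and the oracle, by Definition~\ref{def:nonparanormal}, is an i.i.d.~average of sub-Gaussian outer products with mean $\covmatnonparam$. So the first step is to combine these two sources of error into a single $\ell_{\infty}$ deviation bound of the right order.

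First I would write
\[
\infnorm{\wh{\bSigma}^{\textup{non-param}}_n - \covmatnonparam}
\le \infnorm{\wh{\bSigma}^{\textup{non-param}}_n - \bSigma^{\textup{non-param}}_n} + \infnorm{\bSigma^{\textup{non-param}}_n - \covmatnonparam}.
\]
Theorem~\ref{thm:graph_error_infnorm} handles the first term with the bound $\sqrt{\log n\vcup\log(d-p)}\bigl(\tfrac{\sqrt{\log n}}{n^{1/4}}\vcup\tfrac{\sqrt{\log(d-p)}}{(d-p)^{\beta/4}}\bigr)$ with probability $1-o(1)$. For the second term, since $\bh(\bX^{(s)})\sim\normdist(\bmu,\covmatnonparam)$ by Definition~\ref{def:nonparanormal} and has unit variance on the diagonal, the standard $\ell_\infty$ concentration for sample covariances of Gaussian data (as invoked in \cite{cai2011constrained} and in event $\calE_3$ of the parametric proof) gives $\infnorm{\bSigma^{\textup{non-param}}_n - \covmatnonparam}\lsim\sqrt{\log p/n}$ with probability $1-o(1)$. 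This second term is of strictly lower order than the first under the sample size regime $n\gsim(\textup{deg})^4\log^4 n$ and is therefore absorbed. I will call the resulting overall bound $\tau^{\textup{np}}_\infty$, which equals (up to constants) the left-hand side of \eqref{eq:glasso_condition}.

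Next, with $\lambda_{\textup{non-param}} = 8\tau^{\textup{np}}_\infty/\theta$ (matching the asymptotic prescription in the theorem), I would invoke the three supporting lemmas from the parametric proof---Lemma~\ref{lemma:ravi1} (bounding $\infnorm{\wt\bTheta-\covmatnonparam^{-1}}$ via the deviation), Lemma~\ref{lemma:ravi2} (controlling the Taylor remainder $\infnorm{\wt\bTheta^{-1}-\covmatnonparam+\covmatnonparam(\wt\bTheta-\covmatnonparam^{-1})\covmatnonparam}$ by $\tfrac32(\textup{deg})\kappa_{\bSigma}^3\tau_\infty^2$), and Lemma~\ref{lemma:ravi3} (showing strict feasibility $|\wt G_{ij}|<1$ on $S^c$). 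These lemmas are purely deterministic statements about $(\covmatnonparam,\bTheta^\natural,\bGamma,S)$ and do not care whether the input covariance estimate comes from direct Gaussian samples or from the deconvolution-plus-transformation pipeline of Algorithm~\ref{alg:nonparam_graph_estim}; they only need (a) the irrepresentable condition (Assumption~\ref{assump:param_irrepresentable_condition}), re-interpreted for the nonparanormal $\covmatnonparam$, and (b) the smallness condition $\tau^{\textup{np}}_\infty\lsim\theta/((\theta+8)(\textup{deg})\kappa_{\bSigma}^3\kappa_{\bGamma})$, which is exactly the hypothesis \eqref{eq:glasso_condition}. Verifying (b) and chaining the three lemmas gives $(\wh\bTheta^{\textup{non-param}})_{i,j}=0$ on $S^c$; the sign-consistency statement then follows from the $\ell_\infty$ estimate $\infnorm{\wh\bTheta^{\textup{non-param}}-\precmatnonparam}\le 2\kappa_{\bGamma}(1+8\theta^{-1})\tau^{\textup{np}}_\infty$ combined with the assumed lower bound on $\min_{(i,j)\in S}|\precmatnonparam_{i,j}|$, exactly as in Step~IV of the proof of Theorem~\ref{thm:param_glasso}.

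\paragraph{Main obstacle.}
The heavy lifting---controlling the non-standard, deconvolution-based estimator $\wh{\bSigma}^{\textup{non-param}}_n$ in $\ell_\infty$---has already been done in Theorem~\ref{thm:graph_error_infnorm}, so the remaining proof is essentially bookkeeping. The only genuine subtlety I expect is ensuring that the residual Gaussian-sample term $\sqrt{\log p/n}$ and the lower-order pieces inside Theorem~\ref{thm:graph_error_infnorm} really are dominated by the stated rate under the regime implied by \eqref{eq:glasso_condition}; in particular one must check that the $\log p$ factor coming from a union bound over the $p^2$ off-diagonal entries is absorbed into $\log n\vcup\log(d-p)$, which holds because $n$ is forced to be polynomial in $p$ by the sample-size discussion following the theorem. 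Once this bookkeeping is settled, the PDW argument transfers without modification.
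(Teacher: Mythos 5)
Your proposal does not address the statement you were asked to prove. The statement is \emph{Wick's theorem}: for centered jointly normal variables $\xi_1,\dots,\xi_n$, the moment $\Expc(\xi_1\cdots\xi_n)$ equals the sum over all perfect pairings of $[n]$ of the products of pairwise covariances. What you have written is instead a proof plan for Theorem~\ref{thm:glasso} (support recovery for the non-parametric graphical lasso via the primal-dual witness argument). Nothing in your text --- the decomposition of $\infnorm{\wh{\bSigma}^{\textup{non-param}}_n-\covmatnonparam}$, the invocation of Lemmas~\ref{lemma:ravi1}--\ref{lemma:ravi3}, the irrepresentable condition --- bears on the combinatorial moment identity at hand, so the attempt cannot be salvaged by patching; it is simply a proof of a different theorem.

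For the record, the paper does not prove Wick's theorem either: it cites it (Thm.~1.28 in \cite{janson1997gaussian}) and uses it as a black box in the computation of $\Expc_{\wt{\bA}}[\wh{\bSigma}_{n,1}]$ and of the variance in Lemma~\ref{lemma:cov_mat_var}. If you did want to prove it, the standard routes are: (i) expand both sides of the moment generating function identity $\Expc\exp(\sum_i t_i\xi_i)=\exp\bigl(\tfrac12\sum_{i,j}t_it_j\Expc(\xi_i\xi_j)\bigr)$ in powers of $t$ and match the coefficient of $t_1\cdots t_n$, which directly produces the sum over pairings; or (ii) induct on $n$ using Gaussian integration by parts, $\Expc\bigl(\xi_1 f(\xi_2,\dots,\xi_n)\bigr)=\sum_{j\ge 2}\Expc(\xi_1\xi_j)\,\Expc\bigl(\partial_j f\bigr)$, applied with $f=\xi_2\cdots\xi_n$. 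Either argument is short; neither resembles what you wrote.
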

Notice that the variables $\set{\xi_i}_{1\leq i \leq n}$
are not necessarily different nor independent. To illustrate 
this theorem, we consider two special cases. 
First we let $\xi_1 = \xi_2 = \xi_3 = \xi_4 = \xi \sim \normdist(0, 1)$,
then we have 
\[
\Expc\xi^4 = \Expc\bracket{\xi_1\xi_2 \xi_3\xi_4}
= \Expc(\xi_1\xi_2)\Expc(\xi_3\xi_4) + \
\Expc(\xi_1\xi_3)\Expc(\xi_2\xi_4) + \
\Expc(\xi_1\xi_4)\Expc(\xi_2\xi_3) = 
3.
\]
Second we consider the case where $n$ is odd. 
Since we cannot partition $\set{1, 2, \cdots, n}$ into disjoint 
pairs $(\xi_{i_k}, \xi_{j_k})$,
we always have $\Expc(\xi_1 \cdots \xi_n) = 0$.

\end{document}